\newcommand{\pmt}[1]{\tiny{$\pm$ #1}}
\newcommand{\LineComment}[1]{$\triangleright$ #1}
\newtheorem{theorem}{Theorem}
\newtheorem{lemma}{Lemma}
\newtheorem{definition}{Definition}
\newtheorem{assumption}{Assumption}
\title{Mirror Descent Policy Optimisation for Robust Constrained Markov Decision Processes}
\newcommand{\argmax}{\mathrm{arg}\max}
\newcommand{\argmin}{\mathrm{arg}\min}
\newcommand{\arginf}{\mathrm{arg}\inf}
\newcommand{\argsup}{\mathrm{arg}\sup}
\newcommand{\norm}[2]{\left\lVert#1\right\rVert_{#2}}
\def\pd<#1>{\left\langle #1 \right\rangle}
\def\floor[#1]{\left\lfloor #1 \right\rfloor}
\def\ceil[#1]{\left\lceil #1 \right\rceil}
\newcommand{\rmd}{\mathrm{d}}
\newcommand{\bE}{\mathbb{E}}
\newcommand{\bG}{\mathbb{G}}
\newcommand{\bI}{\mathbb{I}}
\newcommand{\bP}{\mathbb{P}}
\newcommand{\bR}{\mathbb{R}}
\newcommand{\cA}{\mathcal{A}}
\newcommand{\cF}{\mathcal{F}}
\newcommand{\cN}{\mathcal{N}}
\newcommand{\cO}{\mathcal{O}}
\newcommand{\cP}{\mathcal{P}}
\newcommand{\cS}{\mathcal{S}}
\newcommand{\cU}{\mathcal{U}}
\author{\name David M. Bossens \email david\_bossens@a-star.edu.sg \\
      \addr Centre for Frontier AI Research, Agency for Science, Technology and Research (A*STAR), Singapore\\
      Institute of High Performance Computing, Agency for Science, Technology and Research (A*STAR), Singapore
     \AND
      \name Atsushi Nitanda \email atsushi\_nitanda@a-star.edu.sg \\
      \addr Centre for Frontier AI Research, Agency for Science, Technology and Research (A*STAR), Singapore\\
      Institute of High Performance Computing, Agency for Science, Technology and Research (A*STAR), Singapore \\
      College of Computing and Data Science, Nanyang Technological University, Singapore}
\begin{document}

\maketitle

\begin{abstract}
Safety is an essential requirement for reinforcement learning systems.  The newly emerging framework of robust constrained Markov decision processes allows learning policies that satisfy long-term constraints while providing guarantees under epistemic uncertainty. This paper presents mirror descent policy optimisation for robust constrained Markov decision processes, making use of policy gradient techniques to optimise both the policy (as a maximiser) and the transition kernel (as an adversarial minimiser) on the Lagrangian representing a constrained Markov decision process. Our proposed algorithm obtains an $\tilde{\cO}\left(1/T^{1/3}\right)$ convergence rate in the sample-based robust constrained Markov decision process setting. The paper also contributes an algorithm for approximate gradient descent in the space of transition kernels, which is of independent interest for designing adversarial environments in general Markov decision processes. Experiments confirm the benefits of mirror descent policy optimisation in constrained and unconstrained optimisation, and significant improvements are observed in robustness tests when compared to baseline policy optimisation algorithms.
\end{abstract}
\section{Introduction}
Reinforcement learning (RL) traditionally forms policies that maximise the total reward, within the Markov decision process (MDP) framework. Two important aspects, often overlooked by RL algorithms, are the safety of the policy, in terms of satisfying behavioral constraints, or the robustness of the policy to environment mismatches. In continuous control applications, for instance, there are often constraints on the spaces the agent may go and on the objects it may not touch, and there is the presence of the simulation-reality gap. 

Safe RL techniques have been primarily proposed within the constrained Markov decision process (CMDP) formalism \citep{Altman1999}. In infinite CMDPs, which are characterised by large or continuous state and action spaces, traditional solution techniques include linear programming and dynamic programming with Lagrangian relaxation. Following their empirical performance in high-dimensional control tasks \citep{Lillicrap2015,Mnih2016a}, policy gradient algorithms have become a particularly popular option for safe RL in control applications \citep{Achiam2017,Yang2020,Ding2020,Liu2021a,Xu2021,Paternain2023,Ying2023,Wu2024}. 

The robustness to mismatches in training and test transition dynamics has been the main subject of interest in the robust MDP (RMDP) framework \citep{Iyengar2005,Nilim2005}, which formulates robust policies by training them on the worst-case dynamics model within a plausible set called the uncertainty set. The framework has led to a variety of theoretically sound policy gradient approaches \citep{Ho2021,Zhang2021d,Kumar2023,Kuang2022,Wangd,Li2023,Zhou2023,Wang2023}. Recently, the approach has also branched out into the robust constrained MDP (RCMDP) framework \citep{Russel2020,Russel2021,Wang2022,Bossens2024,Zhang2024,Sun2024}, which seeks to provide robustness guarantees to CMDPs. 

The framework of RCMDPs is newly emerging and hence the theoretical development lags behind compared to CMDPs and RMDPs. At present, a few challenges have hampered the progress of RCMDP policy optimisation. First, since the framework is based on a worst-case cost function and one or more additional worst-case constraint-cost functions, it is challenging to implement in a single simulator since each of the robust cost functions may turn out to be based on a different transition kernel. This has lead to practical objectives that do not meet the original objective \citep{Russel2020} or which only work for a discrete set of transition kernels \citep{Mankowitz2020a}. Second, the state-of-the-art CMDP techniques for policy optimisation and RMDP techniques for adversarially optimising the transition dynamics have not yet widely made their way into the RCMDP literature, which has lead to a lack of theoretical guarantees. In RMDPs, state-of-the-art results are obtained for mirror descent based policy gradient algorithm \citep{Wangd}, which applies mirror descent in the policy space and mirror ascent in the transition kernel space. In CMDPs, the policy mirror descent primal-dual (PMD-PD) algorithm is a related mirror descent based policy gradient algorithm which provides $\cO(\log(T)/T)$ convergence rate is obtained in the oracle-based setting and an $\tilde{\cO}(1/T^{1/3})$ convergence rate in the sample-based setting \citep{Liu2021a}. We then pose the question: using mirror descent based policy optimisation, is it intrinsically more difficult to optimise RCMDPs, or is it possible to obtain similar convergence rates?

With the aim of solving these challenges and providing an affirmative answer to the above question, we draw inspiration from the robust Lagrangian objective \citep{Bossens2024} and the success of mirror descent optimisation in RMDPs and CMDPs \citep{Wangd,Liu2021a}. We show that it is indeed possible to obtain convergence rates similar to Sample-based PMD-PD in the sample-based RCMDP setting. For a theoretical analysis, we contribute Robust Sample-based PMD-PD, which extends the theory of \cite{Liu2021a} and \cite{Wangd} to the RCMDP setting, while focusing on the softmax parametrisation within the sample-based setting. Doing so, we  achieve the $\tilde{\cO}(1/T^{1/3})$ convergence rate as also obtained by (non-robust) Sample-based PMD-PD. We then also extend the analysis to non-rectangular RCMDPs, albeit with a weaker sample-based convergence rate. For a practical implementation, we formulate MDPO-Robust-Lagrangian, which introduces a robust Lagrangian to Mirror Descent Policy Optimisation (MDPO) \citep{Tomar2022}. We evaluate MDPO-Robust-Lagrangian empirically by comparing it to robust-constrained variants of PPO-Lagrangian \citep{Ray2019} and RMCPMD \citep{Wangd}, and find significant improvements in the penalised return in worst-case and average-case test performance on dynamics in the uncertainty set.

In addition to techniques drawn from \cite{Liu2021a}, \cite{Wangd}, and \cite{Tomar2022}, our work designs novel techniques for theoretical analysis and contributes tools and results for practical experiments. In theoretical development, our work designs the following techniques:
\begin{itemize}
\item \textbf{Bounding the error due to sub-optimal dual variables:} the proof of Theorem~4.2 of \cite{Wangd} provides a result for bounding the robust regret in RMDPs by adding the tolerance of TMA and the regret on the current transition kernel. To extend this reasoning to RCMDPs, we provide an upper bound on the robust Lagrangian regret (Lemma~\ref{lem: robust-objective regret}) which in addition to these two terms also considers the suboptimality of the dual variables. This additional error is then shown to vanish in an average regret sense due to the constraint-costs averaging to $\cO(\epsilon)$ across iterations, where $\epsilon > 0$ is the error tolerance (Lemma~\ref{lem: dual variable induced error}).
\item \textbf{Bounding errors due to transition kernel changes:} a challenge in applying the telescoping analysis of \cite{Liu2021a} to RCMDPs is that some of the terms no longer cancel since the transition kernel changes after the policy update. To ensure that this analysis is still possible, Lemma~\ref{lem: analysis of the regret difference} and Lemma~\ref{lem: bregman term} bound the error introduced by changes in the transition kernel using different telescoping sums.
\item \textbf{Approximate TMA:} previously, the TMA algorithm of \cite{Wangd} relied on oracle information. To extend the algorithm to the sample-based setting, we have designed the Approximate TMA algorithm, which approximates the value function to perform mirror descent on transition kernels, and we derive its sample complexity as $\tilde{\cO}(1/\epsilon^2)$.
\item \textbf{Continuous pseudo-KL divergence of occupancy:} we present a pseudo-KL divergence for continuous state-action spaces and show it is a Bregman divergence, making PMD-PD techniques applicable to continuous state-action spaces.
\item \textbf{Non-rectangular RCMDPs:} while \cite{Wangd} provides an analysis of TMA only for $s$-rectangular RMDPs and derives oracle-based convergence rates for the policy optimiser, our work applies the Appproximate TMA algorithm to non-rectangular RCMDPs and then derives a full sample-based analysis of the full algorithm optimising all three parameters $(\pi,p,\lambda)$.
\end{itemize}
From an experimental perspective, we highlight the following novelties:
\begin{itemize}
    \item \textbf{Comprehensive ablation study:} our study compares MDPO, PPO, and MCPMD algorithms for MDPs, CMDPs, RMDPs, and RCMDPs across three problem domains, including tests on the uncertainty set where the return and different penalised returns are evaluated.
    \item \textbf{Scalable implementation:} the practical algorithm is highly scalable, making use of function approximators and other deep RL tools, as opposed to the pure Monte Carlo considered in \cite{Wangd}.
    \item \textbf{Learning rate schedules:} we also present an empirical comparison of learning rate schedules, including traditional schedules fixed an linear schedules \citep{Tomar2022}, geometric schedules \citep{Xiao2022,Wangd}, and a newly designed restart schedule. We show that for RCMDPs, the best performance is obtained by the MDPO-Robust-Augmented-Lag variant under batch-based restart schedules.
\end{itemize}

\section{Related work}
\subsection{Robust constrained objectives}
\cite{Russel2020} proposed the robust constrained policy gradient (RCPG) algorithm, which solves the inner problem based on either the robust cost or the robust constraint-cost, which ignores at least one part of the robustness problem in RCMDPs. Solving the RCMDP objective directly, the R3C objective \citep{Mankowitz2020a} combines the worst-case costs and worst-case constraint-costs by running distinct simulators with different parametrisations, each being run for one step from the current state. This unfortunately cannot be realised within a single simulator and is only applicable for discrete parametrisations. Additionally, it is also computationally expensive, even more so when there are multiple constraints. Alternatively,  the robust Lagrangian objective trades off the cost and different constraint-cost according to whether the constraints are active while requiring only a single simulator with differentiable parameters \citep{Bossens2024}. 

In this paper, we use the robust Lagrangian objective with multiple constraints and rather than performing a single-step transition kernel update, we perform multi-step updates to the transition kernel at each macro-iteration, in a double-loop manner, to ensure that a near-optimal solution to the inner problem has been found, which in turn allows to provide guarantees on the near-optimality of the policy in solving the robust Lagrangian objective. Assuming that there exists at least one policy that is strictly feasible for all the constraints under all dynamics in the uncertainty set, the policy that optimises the robust Lagrangian will satisfy the worst-case constraint-costs and will, due to complementary slackness, optimise the worst-case value.

\subsection{Adversarial transition kernels}
Techniques for the design of adversarial transition kernels are of general interest for reinforcement learning. They represent different trade-offs between realism, scalability, computational cost, and sample efficiency. In the context of RMDPs and RCMDPs, they are used in the inner problem, which computes the worst-case transition kernel for the current policy. In general, convex uncertainty and rectangular uncertainty sets are required to make the problem tractable, since non-convex sets and non-rectangular set are in general NP-Hard \citep{Wiesemann2013a}.

The scalability of these techniques is a key challenge. Linear programming based techniques \citep{Russel2020} and techniques based on a small subset of simulators \citep{Mankowitz2020a} are not scalable. Instead, gradient-based methods can potentially be performed at scale. In RMDPs, gradient descent techniques for computing the inner problem have been proposed. Unfortunately, many previous techniques do not have known guarantees \citep{Abdullah2019,Bossens2024}. Recent work has proposed transition mirror ascent (TMA), which comes with an  $\cO(1/T)$ convergence rate in the oracle-based setting under convex rectangular uncertainty sets when using fixed learning rates, and $\cO((1-\frac{1-\gamma}{M})^{T})$ convergence rates when using geometric schedules for the learning rate \citep{Wangd}. A similar guarantee was earlier provided with a projected subgradient ascent algorithm, under slightly less general conditions, i.e. when the uncertainty set is a rectangular ball \citep{Kumar2023}. 

In RCMDPs, the Adversarial RCPG algorithm \citep{Bossens2024} provides a neural representation for transition kernels. Unfortunately, the algorithm does not have known theoretical guarantees. Our work takes note of the oracle-based convergence guarantees of the TMA algorithm of \cite{Wangd}, and extends the approach to design an algorithm for approximate TMA with $\cO(1/T^{1/2})$ convergence rate in the sample-based setting. The result corresponds to a slightly improved oracle-based rate $\cO((1-\frac{1}{M})^{T})$ compared to the analysis of TMA in \cite{Wangd} and further provides the additional sample complexity  and estimation error from the estimation of the Q-values.

Despite the NP-Hardness result, recent work has also explored non-rectangular sets albeit with much weaker convergence guarantees \citep{Li2023,Kumar2025}. In addition to a non-tractable but globally optimal solution,  \cite{Li2023} provides a conservative policy iteration algorithm for transition kernels that converges to a sub-optimal solution of an RMDP, with an error increasing based on the degree of non-rectangularity. While the rates in the paper hide the oracle for the value or gradient computation, it is the first to provide an approximate solution to general non-rectangular uncertainty sets. We exploit a similar technique and note its relation to the TMA algorithm, and then further analyse the overall algorithm with policy optimisation in the context of the non-rectangular RCMDP setting.

Obtaining tight and statistically based uncertainty sets is important for RMDP and RCMDP algorithms. The $\ell_1$ norm over the probability space is arguably the most widely used uncertainty set in the RMDP literature. For our purposes, it is convenient to use and arises naturally in our regret bounds. First, many other distance measures and statistics can be directly related to the $\ell_1$ norm, including $\ell_q$, total variation distances, Chi-squared sets, student-t statistics, Hoeffding or Chebyshev inequalities, the total variation distance, and KL-divergences. Beyond their statistical derivation and simplicity, $\ell_q$ sets are also fairly easy to project back onto the feasible set. The KL divergence is another distance metric with known analytical form for the transition kernel (see Table~\ref{tab: bregman transitions} in Appendix\ref{app: bregman divergence}) and with a direct connection to the norm. Second, as shown in Lemma~\ref{lem: analysis of the regret difference}, general uncertainty sets can be related to the $\ell_1$ norm, which appears naturally in the cumulative regret computation, and in this sense our main theorem is flexible to different divergences. In our convergence rate analysis, $\ell_q$ sets with small uncertainty budgets can potentially reduce one of the constants, although otherwise leading to the same convergence rate as in general uncertainty sets. Specialised algorithms have also been designed for the construction of tight weighted $\ell_q$ uncertainty sets while maintaining Bayesian confidence guarantees (e.g. \cite{Behzadian2019}). 

Alternative models of uncertainty, suitable for indirect parametrisation and compatible with continuous state-action spaces, include examples such as the double sampling uncertainty sets, integral probability metrics uncertainty sets, and Wasserstein uncertainty sets \citep{Zhou2023,Abdullah2019,Hou2007}. Similarly, $\ell_q$ sets can also be applied to indirect parametrisations, which is compatible with the TMA algorithm \citep{Wangd}. As supported by our experiments, Lagrangian TMA methods can similarly be applied with indirect parametrisations, such as the Gaussian Mixture PTK and the Entropy PTK \citep{Wangd}.

\subsection{Policy optimisation algorithms}
Different policy opimisation algorithms have been proposed for RCMDPs and the related framework of CMDPs. As summarised in Table~\ref{tab: related work}, the algorithms proposed in the literature work in the primal space, dual space, or a combination thereof, and they provide different guarantees under varying parametrisations. 

In the CMDP literature on policy optimisation, many of the algorithms do not come with formal guarantees apart from local optima or improvement in the feasible space \citep{Achiam2017,Tessler2019,Yang2020}. \cite{Ding2020} present a primal-dual technique, NPG-PD, which obtains an $\cO(1/T^{1/2})$ rate in the oracle-based setting for the softmax parametrisation, which extrapolates to an $\tilde{\cO}(1/T^{1/4})$ bound in the sample-based setting. The work also presents a rate for a general parametrisation although it is considerably worse for the constraint-cost. CRPO applies NPG in the primal space with backtracking steps to ensure the constraint-cost remains satisfied, and is able to achieve comparable rates to NPG-PD \citep{Xu2021}. State-of-the-art results of $\tilde{\cO}(1/T^{1/3})$ in the sample-based setting and $\tilde{\cO}(1/T)$ in the oracle-based setting are obtained by PMD-PD, which applies a primal-dual mirror descent algorithm specifically with KL-divergence as a Bregman divergence \citep{Liu2021a}. Unlike other techniques using regularisation \citep{Cen2022}, the algorithm provides unbiased convergence results (i.e. they are expressed as the solution to the unregularised Lagrangian). PMD-PD does not account for variability in the transition kernel, and therefore we extend it to the RCMDP setting, obtaining comparable rates in the sample-based setting.

From the RMDP literature, the techniques by \cite{Wangd} apply to the oracle-based rectangular RMDP setting, where for some $b \in (0,1)$ they achieve a state-of-the-art $\cO(b^{T})$ convergence to the global optimum under a direct parametrisation of the policy and the transition kernel. For the softmax policy parametrisation, they report an $\cO(1/T)$ oracle-based convergence rate in Theorem~4.5, a theorem which formulates the regret on the RMDP of interest based on the regret of an MDP optimiser passing through the same policy at the same iteration and with the transition kernel fixed to the nominal transition kernel at that iteration. However, their supporting Lemma~B.5 incorrectly relies on a specific, favourable policy initialisation, which cannot be done in practice, rather than a general policy initialisation. The lemma also holds for any policy parameters at any iteration, such that no inference can be made about the regret of a particular policy in the RMDP optimisation simply by virtue of passing through the same parameters. \cite{Li2023} provides an $\cO(1/T^{1/4})$ oracle-based result for non-rectangular RMDPs, the first convergence result for the global optimum of non-rectangular RMDPs considering the full algorithm solving both policy optimisation and the inner problem, while only requiring an approximate oracle for the gradient. The challenge of non-rectangularity shows up not only in the added complexity but also in additional error based on the degree of non-rectangularity. These works do not account for constraints or the samples to estimate the gradient; our work derives convergence results for the more challenging sample-based rectangular and non-rectangular RCMDP settings while maintaining similar global optimality properties.

In the RCMDP literature, policy optimisers either come with no policy convergence guarantees \citep{Russel2020,Mankowitz2020a,Bossens2024}, basic feasibility and improvement results \citep{Sun2024}, or sub-optimal convergence points \citep{Russel2021,Wang2022}. Our sample-based theory for rectangular uncertainty sets provides results on par with CMDPs, namely an $\tilde{\cO}(1/T^{1/3})$ convergence rate to the robust Lagrangian optimum, a result which is on par with the state-of-the-art PMD-PD algorithm \citep{Liu2021a}. This compares favourably to the $\cO(1/T^{1/4})$ oracle-based rate of \cite{Wang2022} which comes only with local convergence guarantees. We also provide a sample-based theory for non-rectangular RCMDPs, where an $\tilde{\cO}(1/T^{1/5})$ rate is achieved. Both of our mentioned convergence rates account for all the samples required, including those for gradient estimation, transition kernel updates, policy updates, and dual variable updates.

Our algorithm is also practical in the sense that it can be implemented with deep RL techniques. In particular, we use MDPO \citep{Tomar2022} as the backbone of our algorithm's implementation. The technique has been previously studied in traditional MDPs as a highly performing alternative to trust region based deep policy optimisers (e.g. PPO, \cite{Schulman15}, and TRPO, \cite{schulman2017}). While the clipping implemented in PPO also allows a KL-constrained property, it often loses this property since due to averaging effects the policy is updated even if its clipping ratio is out of bound for some state-action pairs. TRPO computes the Fisher information, which can be expensive, and performs a KL constraint with a reversed order (i.e. $D_{\text{KL}}(\pi_k,\pi)$ instead of $D_{\text{KL}}(\pi,\pi_k)$ as used in mirror descent). The MDPO backbone leads to a novel algorithm for RCMDPs as well as for CMDPs, which turns out to be highly performant as an alternative to PPO-Lagrangian \citep{Ray2019}. 

\begin{table}[]
\centering
   \caption{Overview of theoretical guarantees for policy optimisation algorithms in CMDPs and RCMDPs. \textbf{MDP} indicates the type of MDP, with prefixes C, R, and RC having their obvious meanings, while rR indicates rectangular robust and nR indicates non-rectangular robust. \textbf{Param} indicates the parametrisation. \textbf{Oracle rate} indicates the convergence rate in the oracle setting. \textbf{Sample rate} indicates the convergence rate in the sample-based setting. Other notations: $\tilde{\cO}$ hides logarithmic factors from the big O notation; $b \in (0,1)$.}
   \label{tab: related work}
   \resizebox{0.99\textwidth}{!}{
   \begin{tabular}{p{3.2cm}| p{8cm} p{1.0cm} p{1.5cm} p{3.0cm} p{2.5cm} p{2.5cm}}
   \toprule 
   \textbf{Algorithm} & \textbf{Algorithm type} & \textbf{MDP} & \textbf{Param} & \textbf{Convergence point}  & \textbf{Oracle rate} & \textbf{Sample rate} \\
   \midrule 
   \citep{Achiam2017}  & trust region dual program with backtracking  & C & general &  feasible improvement & / & /  \\ 
   \citep{Tessler2019} & primal-dual with projection & C & general & local optimum &  / & /  \\
   \citep{Yang2020} & trust region with projection  & C & general & local optimum &  / & /  \\  
   \citep{Ding2020} & primal-dual NPG with projected subgradient descent & C & softmax & global optimum & $\cO(1/T^{1/2})$ & $\tilde{\cO}(1/T^{1/4})$\\
   \citep{Xu2021} & primal unconstrained NPG with backtracking & C & softmax & global optimum & $\cO(1/T^{1/2})$ &  $\tilde{\cO}(1/T^{1/4})$ \\ 
   \citep{Liu2021a} & primal-dual mirror descent & C   & softmax &  global optimum & $\tilde{\cO}(1/T)$ & $\tilde{\cO}(1/T^{1/3})$ \\
   \citep{Wangd} & double-loop mirror descent/ascent & rR  & direct &  global optimum & $\cO(b^T)$ & / \\  
   \citep{Li2023} & projected descent and CPI ascent & nR  & direct &  global with error $\delta_{\cP}$ & $\cO(1/T^{1/4})$ & / \\
   \citep{Russel2021} &   primal-dual, linear programming for inner problem & rRC  & general & local optimum &  /   & / \\
   \citep{Wang2022} & primal-dual, projected descent with robust TD & rRC & general  & stationary point   & $\cO(1/T^{1/4})$ & / \\
   \citep{Sun2024}& trust region with projection &  rRC & general  &  feasible improvement  & / & /  \\  
   \textbf{Ours} & double-loop primal-dual mirror descent/ascent  & rRC & softmax & global optimum &  /  & $\tilde{\cO}(1/T^{1/3})$ \\   
   \textbf{Ours} & double-loop primal-dual mirror descent/ascent  & nRC & softmax & global with error $\delta_{\cP}$ &  /  & $\tilde{\cO}(1/T^{1/5})$ \\   
   \bottomrule 
\end{tabular}
}
\end{table}
\section{Preliminaries}
The setting of the paper is based on RCMDPs. Formally, an RCMDP is given by a tuple $(\cS, \cA, \rho, c_0, c_{1:m}, \cP, \gamma)$, where $\cS$ is the state space, $\cA$ is the action space, $\rho \in \Delta(\cS)$ is the starting distribution, $c_0 : \cS \times \cA \to \bR$ is the cost function,  $c_i : \cS \times \cA \to \bR \,, i \in [m]$ are constraint-cost functions,  $\cP$ is the uncertainty set, and $\gamma \in [0,1)$ is a discount factor. To denote the cardinality of sets $\cS$ and $\cA$, we use the convention $S := \vert \cS \vert$ and $A := \vert \cA \vert$. In RCMDPs, at any time step $l=0,\dots,\infty$,
an agent observes a state $s_l \in \cS$, takes an action $a_l \in \cA$ based on its policy $\pi \in \Pi \subseteq (\Delta(\cA))^{\cS}$, and receives cost $c_0(s_l,a_l)$ and constraint-cost signals $c_j(s_l,a_l)$ for all $j \in [m] = \{1,\dots,m\}$. Following the action, the next state is sampled from a transition dynamics model $p \in \cP$ according to $s_{l+1} \sim p(\cdot \vert s_l, a_l)$. The transition dynamics model $p$ is chosen from the uncertainty set $\cP$ to solve a minimax problem. In particular, denoting 
\begin{equation}
 V_{\pi,p}^{j}(\rho)  := \bE_{s_0 \sim \rho}\left[\sum_{l=0}^{\infty} \gamma^{l} c_j(s_l,a_l) \vert \pi, p \right]
\end{equation}
and 
\begin{equation}
 V_{\pi,p}(\rho) = \bE_{s_0 \sim \rho}\left[\sum_{l=0}^{\infty} \gamma^{l} c_0(s_l,a_l) \vert \pi, p\right]  \,,
\end{equation}
the goal of the agent in a traditional RCMDP is to optimise a minimax objective of the form
\begin{equation}
\label{eq: RCMDP}
\min_{\pi}  \sup_{p \in \cP} V_{\pi,p}(\rho) \quad \text{s.t.} \quad  \sup_{p_j \in \cP} V_{\pi,p_j}^{j}(\rho) \leq 0,~\forall j \in [m]  \,.
\end{equation}

The objective in Eq.~\ref{eq: RCMDP} does not represent a single plausible transition kernel for the CMDP since the different suprema are usually met by different transition kernels. In practice, a choice is often made to take the supremum of the traditional cost only \citep{Russel2020}. Since this ignores the constraint-costs, one may instead optimise a related problem, namely the robust Lagrangian RCMDP objective \citep{Bossens2024},
\begin{equation}
\label{eq: RCMDP Lagrangian}
\min_{\pi} \left\{ \Phi(\pi) :=\sup_{p \in \cP} \max_{\lambda \geq 0} \left( V_{\pi,p}(\rho)  + \sum_{j=1}^{m} \lambda_{j} V_{\pi,p}^{j}(\rho) \right)\right\}  \,,
\end{equation}
which only requires a single simulator and will allow optimisation to focus on constraint-costs if they are violated and on the traditional cost otherwise. If the inner optimisation problem is solved as $p_k$ at iteration $k$, this in turn corresponds to an equivalent value function over a traditional MDP \citep{Taleghan2018,Bossens2024} according to the Lagrangian value function, 
\begin{equation}
\label{eq: Lagrangian value function}
\mathbf{V}_{\pi,p_k}(\rho) = \bE\left[\sum_{l=0}^{\infty} \gamma^{l} \left( c_0(s_l,a_l) + \sum_{j=1}^{m} \lambda_{k,j} c_j(s_l,a_l) \right)  \Big\vert s_0 \sim \rho, \pi , p_k, \right] \,,
\end{equation}
which has a single cost function defined by $\mathbf{c}_k(s,a) = c_0(s,a) + \sum_{j=1}^{m} \lambda_{k,j} c_j(s,a)$ and no constraint-costs. The multiplier $\lambda_k$ denotes the vector of dual variables at iteration $k$ and $\lambda_{k,j}$ denotes the multiplier for constraint $j \in [m]$ at iteration $k$. Occasionally, in proofs where the iteration is not needed, the iteration index will be omitted. In the context of dynamic updates to $\lambda$, we often emphasise the value of $\lambda$ by including it as a pararameter in the notation $\mathbf{V}_{\pi,p_k}(\rho; \lambda)$.

A convenient definition used in the following is the discounted state visitation distribution
\begin{align}
d_{\rho}^{\pi_k,p_k}(s) = (1-\gamma) \bE_{s_0 \sim \rho}\left[\sum_{l=0}^{\infty} \gamma^l \bP(s_l = s \vert s_0, \pi_k, p_k)\right] \,,
\end{align}
and the discounted state-action visitation distribution
\begin{align}
d_{\rho}^{\pi_k,p_k}(s,a) = (1-\gamma) \bE_{s_0 \sim \rho}\left[\sum_{l=0}^{\infty} \gamma^l \bP(s_l = s, a_l = a \vert s_0, \pi_k, p_k)\right] \,.
\end{align}

For any optimisation problem $\min_{x \in X} f(x)$, we define three notions of regret:
\begin{align*}
&f(x_k) - f(x^*)  \tag{regret} \\
&\sum_{k=1}^{K} f(x_k) - f(x^*)  \tag{cumulative regret}  \\
&\frac{1}{K} \sum_{k=1}^{K} f(x_k) - f(x^*)  \tag{average regret}
\end{align*}
where $\{x_k\}_{k \geq 1}$ represent the iterates of the algorithm and $x^*$ is the optimum. Based on these regret notions, the paper provides a sample-based analysis in terms of how many samples are required until the regret is below some small threshold $\epsilon > 0$. 

A first objective of interest is the maximisation over transition kernels within Eq.~\ref{eq: RCMDP}, i.e. 
\begin{equation}
\sup_{p \in \cP} \mathbf{V}_{\pi,p}(\rho; \lambda) \,,
\end{equation}
where $\lambda \in \bR^m$ and $\pi \in \Pi$ are fixed. Our analysis aims to show that it is possible to obtain, within a limited number of samples using only approximate evaluations of the Lagrangian w.r.t. the current parameters, a near-optimal maximiser, i.e. a transition kernel with a limited regret, 
\begin{equation}
\sup_{p \in \cP} \mathbf{V}_{\pi,p}(\rho; \lambda) - \mathbf{V}_{\pi,p^t}(\rho; \lambda) \leq \epsilon \,,
\end{equation}
where $p^t$ is the transition kernel being optimised at iteration $t \geq 1$. 

A second objective of interest is the minimax optimisation problem in Eq.~\ref{eq: RCMDP} itself. To design an optimal algorithm for the full RCMDP problem, the main goal of the sample-based analysis is to minimise the average regret w.r.t. Eq.~\ref{eq: RCMDP Lagrangian},
\begin{equation}
\frac{1}{K} \sum_{k=1}^{K} \Phi(\pi_k) - \Phi(\pi^*) \leq \epsilon \,,
\end{equation}
within a limited number of samples. To count the samples, we account for those used for the evaluation and optimisation of the policy and the transition kernel, for a full sample-based analysis.

\section{PMD-PD for CMDPs}
We briefly review the PMD-PD algorithm for CMDPs \citep{Liu2021a}. PMD-PD
uses natural policy gradient with parametrised softmax policies, which is equivalent to the mirror descent. 

At iteration $k \in \{0,\dots,K-1\}$ and policy update $t \in \{0,\dots,t_k\}$ within the iteration, PMD-PD defines the regularised state value function,
\begin{equation}
\label{eq: reg V}
\tilde{\mathbf{V}}_{\pi_k^t,p}^{\alpha}(s) = \bE\left[\sum_{l=0}^{\infty} \gamma^{l} \left( \tilde{\mathbf{c}}_k(s_l,a_l) + \alpha \log\left( \frac{\pi_k^t(a_l \vert s_l)}{\pi_k(a_l \vert s_l)}  \right) \ \right) \Big\vert s_0 = s, \pi_k^t, p \right] \,,
\end{equation}
where $p$ is a fixed transition kernel, $\tilde{\mathbf{c}}_k(s_l,a_l) = c_0(s_l,a_l) + \sum_{j=1}^{m} \left(\lambda_{k,j} + \eta_{\lambda} V_{\pi_k,p_k}^{j}\right) c_j(s_l,a_l)$ and $\alpha \geq 0$ is the regularisation coeffient. The regularisation implements a weighted Bregman divergence which is similar to the KL-divergence. The augmented Lagrangian multiplier $\tilde{\lambda}_{k,j} = \lambda_{k,j} + \eta_{\lambda} V_{\pi_k,p_k}^{j}$ is used to improve convergence results by providing a more smooth Lagrangian compared to techniques that clip the Lagrangian multiplier. At the end of each iteration $\lambda_{k,j}$ is updated according to $\lambda_{k,j} = \max \left\{ \lambda_{k,j} + \eta_{\lambda} V_{\pi_{k+1},p_{k+1}}^{j} , - \eta_{\lambda} V_{\pi_{k+1},p_{k+1}}^{j}   \right\}$. 

Analogous to \ref{eq: reg V}, the regularised state-action value function is given by
\begin{equation}
\label{eq: reg Q}
\tilde{\mathbf{Q}}_{\pi_k^t,p}^{\alpha}(s,a) = \tilde{\mathbf{c}}_k(s,a) + \alpha \log\left(\frac{1}{\pi_k(a \vert s)}\right) + \gamma \bE_{s' \sim p(\cdot \vert s,a)} \left[ \mathbf{V}_{\pi_k^t,p}^{\alpha}(s') \right] \,.
\end{equation}

While the bold indicates the summation over $1+m$ terms (i.e. the costs and constraint-costs) as in Eq.~\ref{eq: Lagrangian value function}, the tilde notations will be used throughout the text to indicate the use of the augmented Lagrangian multipliers $\tilde{\lambda}_k$. Note that since all the costs and constraint-costs range in $[-1,1]$, 
\begin{equation}
\label{eq: bound on tilde-c}
\vert \tilde{\mathbf{c}}_k(s,a) \vert \leq 1 + \sum_{j=1}^{m} \lambda_{k,j} + \frac{m \eta_{\lambda}}{1-\gamma} \,.
\end{equation}

Consequently, for any set of multipliers $\lambda \in \bR^m$, the Lagrangian is bounded by $[-F_{\lambda}/(1-\gamma),F_{\lambda} / (1-\gamma)$ based on the constant
 \begin{equation}
 \label{eq: factor for bound on augmented Lagrangian multipliers}
F_{\lambda} := 1 + \sum_{j=1}^{m} \lambda_{j} + \frac{\eta_{\lambda} m}{(1-\gamma)}  
 \end{equation}
for the augmented Lagrangian, and according to
 \begin{equation}
  \label{eq: factor for bound on Lagrangian multipliers}
F_{\lambda} := 1 + \sum_{j=1}^{m} \lambda_{j} 
 \end{equation}
for the traditional Lagrangian.

Note that this can be equivalently written based on the weighted Bregman divergence (equivalent to a pseudo KL-divergence over occupancy distributions),
\begin{align}
\label{eq: weighted bregman}
B_{d_{\rho}^{\pi_k^t,p}}(\pi_k^t,\pi_k) = \sum_{s \in \cS} d_{\rho}^{\pi_k^t,p}(s) \sum_{a \in \cA} \pi_k^t(a \vert s) \frac{\log(\pi_k^t(a\vert s))}{\log(\pi_k(a\vert s))} \,,
\end{align}
according to 
\begin{align}
\tilde{\mathbf{Q}}_{\pi_k^t,p}^{\alpha}(s,a) = \tilde{\mathbf{Q}}_{\pi_k^t}(s,a) + \frac{\alpha}{1-\gamma} B_{d_{\rho}^{\pi_k^t}}(\pi_k^t,\pi_k) \,.
\end{align}

In this way, PMD-PD applies entropy regularisation with respect to the previous policy as opposed to the uniformly randomized policy used in \cite{Cen2022}, allowing PMD-PD to converge to the optimal unregularised policy as opposed to the optimal regularised (and sub-optimal unregularised) policy. 

With softmax parametrisation, the policy is defined as
\begin{equation}
\label{eq: softmax parametrisation}
\pi_{\theta}(a \vert s) = \frac{e^{\theta_{s,a}}}{\sum_{a'} e^{\theta_{s,a'}}} \,.
\end{equation}
Under this parametrisation, the mirror descent update over the unregularised augmented Lagrangian with the KL-divergence as the Bregman divergence can be formulated for each state independently according to
\begin{align}
\label{eq: mirror descent softmax policy}
\pi_{k}^{t+1}(\cdot \vert s) &= \argmin_{\pi \in \Pi} \left\{ \left\langle \frac{\partial \tilde{\mathbf{V}}_{\pi_k^t,p}(\rho)}{\partial \theta(s,\cdot)}, \pi(\cdot \vert s)\right\rangle + \frac{1}{\eta'} D_{\text{KL}}(\pi(\cdot \vert s), \pi_k^t(\cdot \vert s)) \right\} \nonumber \\
                         &= \argmin_{\pi \in \Pi} \left\{ \frac{1}{1-\gamma} d_{\rho}^{\pi_k^t,p}(s) \left\langle \tilde{\mathbf{Q}}_{\pi_k^t,p}(s,\cdot), \pi(\cdot \vert s) \right\rangle + \frac{1}{\eta'} D_{\text{KL}}(\pi(\cdot \vert s), \pi_k^t(\cdot \vert s)) \right\} \nonumber \\
                         &= \argmin_{\pi \in \Pi} \left\{ \left\langle \tilde{\mathbf{Q}}_{\pi_k^t,p}(s,\cdot), \pi(\cdot \vert s) \right\rangle + \frac{1}{\eta} D_{\text{KL}}(\pi(\cdot \vert s), \pi_k^t(\cdot \vert s)) \right\} \,,
\end{align}
where the last step defines $\eta = \eta' (1-\gamma) / d_{\rho}^{\pi_k,p}(s)$ and subsequently removes the constant $\frac{1}{1-\gamma} d_{\rho}^{\pi_k^t,p}(s)$ from the minimisation problem. Treating $\tilde{\mathbf{Q}}_{\pi_k^t,p}$ as any other value function, Eq.~\ref{eq: mirror descent softmax policy} is shown to be equivalent to the natural policy gradient update, as discussed by works in traditional MDPs \citep{Zhan2023,Cen2022}. In particular, based on Lemma~6 in \cite{Cen2022}, the update rule derives from the properties of the softmax parametrisation in Eq.~\ref{eq: softmax parametrisation},
\begin{equation}
\pi_{k}^{t+1}(a \vert s) = \frac{1}{Z_k^t(s)} (\pi_{k}^{t}(a \vert s))^{1 - \frac{\eta \alpha}{1 - \gamma}} e^{ \frac{- \eta  \tilde{\mathbf{Q}}_{\pi_k^t,p}^{\alpha}(s,a)}{1-\gamma}} \,,
\end{equation}
where $\pi_{k}^{t+1}$ is the $t+1$'th policy at the $k$'th iteration of the algorithm and $Z_k^t(s)$ is the normalisation constant. 

Denoting $T = \sum_{k=1}^{K} t_k$, the oracle version (with perfect gradient info) converges at rate $\cO(\log(T)/T)$ in value and constraint violation. PMP-PD Zero (oracle version aiming for zero constraint violation) achieves $0$ violation with the same convergence rate for the value. Finally, the sample-based version, which has imperfect gradient information, was shown to converge at a rate $\tilde{\cO}(1/T^{1/3})$, where $T$ denotes the total number of samples.

\section{Robust Sample-based PMD-PD for RCMDPs}
Having reviewed the performance guarantees of PMD-PD, we now derive the Robust Sample-based PMD-PD algorithm for RCMDPs. The algorithm performs robust training based on Transition Mirror Ascent (TMA) \citep{Wangd} over the robust Lagrangian objective \citep{Bossens2024}. Using TMA provides a principled way to compute adversarial dynamics while maintaining the transition dynamics within the bounds of the uncertainty sets. Such uncertainty sets allow for rich parametrisation such as Gaussian mixtures, which can estimate rich probability distributions, and entropy-based parametrisations, which are motivated by the optimal form of KL-divergence based uncertainty sets (see Table~\ref{tab: bregman transitions} for these examples). The algorithm takes place in the sample-based setting, where the state(-action) values are approximated from a limited number of finite trajectories. With an average regret bound $\tilde{\cO}(1/T^{1/3})$, results are in line with the Sample-based PMD-PD algorithm. The resulting algorithm is given in Algorithm~\ref{alg: discrete}.  While the first subsection develops the theory of TMA in the oracle setting (Section~\ref{sec: LTMA}), where exact values and policy gradients are given, the subsequent subsections derive convergence results for the sample-based setting, where values and policy gradients are estimates. In particular, Section~\ref{sec: approximate TMA} proposes and analyses Approximate TMA, an algorithm for TMA within the sample-based setting, and Section~\ref{sec: discrete sample-based}) proposes and analyses Robust Sample-based PMD-PD in the full sample-based RCMDP setting. Further extensions are then considered, namely non-rectangular uncertainty sets  (Section~\ref{sec: non-rectangular}) and continuous state-action spaces (Section~\ref{sec: KL continuous}). The last subsection (Section~\ref{sec: practical implementation}) then proposes a practical implementation.

\subsection{General assumptions}
As shown below, the assumptions of our analysis are common and straightforward.
\begin{assumption}[Bounded costs and constraint-costs]
Costs and constraint-costs are bounded in $[-1,1]$.
\end{assumption}
The boundedness assumption is ubiquitous in the MDP literature although some authors prefer the range $[0,1]$ or some other constant. For our purposes, it implies that all $1+m$ value functions in the Lagrangian are bounded by $\frac{1}{1-\gamma}$.

\begin{assumption}[Bounded range of the dual variables]
\label{ass: bounded dual variable}
The dual variables are upper bounded by $B_{\lambda} > 0$.
\end{assumption}
We assume that the dual variables are bounded by a constant, such that also $F_{\lambda}=\cO(1)$, which allows bounding the regret of a policy with respect to the optimal Lagrangian in Eq.~\ref{eq: RCMDP Lagrangian} to be bounded. The variables can take a wider range compared to other primal-dual algorithms due to the use of the augmented Lagrangian instead of clipping. The parameter is expected to be large for domains with highly active, restrictive constraints. The assumption is not particularly strong since it may be that $B_{\lambda} \gg \frac{2}{(1-\gamma) \zeta}$, the clipping bound usually taken for primal-dual variables \citep{Ding2020}. While with the modified Lagrangian update, the iterates obtained in the algorithm are always bounder, Assumption~\ref{ass: bounded dual variable} additionally allows bounding the maximising dual variables of each iterate. In the sample complexity analysis of Robust Sample-based PMD-PD, the assumption is only required to limit the average regret induced by a mismatch of the dual variable with the optimal ones for that macro-iteration (see Lemma~\ref{lem: dual variable induced error}. There, the assumption could be removed when the dual variables are, on average, close to their optima of their respective inner problems. 

\begin{assumption}[Sufficient exploration]
\label{ass: exploration}
The initial state distribution satisfies $\mu(s) > 0$ for all $s \in \cS$. 
\end{assumption}
This assumption is common in MDPs. It ensures that when sampling from some $\mu$ instead of the initial state distribution $\rho$, any importance sampling corrections with $\mu$ and $d_{\mu}^{\pi}$ are finite. In particular, it ensures that for any transition kernel $p$ and any policy $\pi$ that the mismatch coefficient $M_p(\pi) := \norm{\frac{d_{\mu}^{\pi,p}}{\mu}}{\infty}$ is finite \citep{Mei2020}. This is also useful for RMDPs \citep{Wangd} since 
\begin{equation}
\label{eq: mismatch}
M := \sup_{p \in \cP,\pi \in \Pi} M_p(\pi)    
\end{equation}
is also finite. Similarly, via Lemma~9 of \cite{Mei2020}, it ensures $U_p = \inf_{s\in \cS,k\geq 1} \pi_k(a_p^*(s) \vert s) > 0$ when following softmax policy gradient, where $a_p^*(s)$ is the action for state $s$ under the optimal deterministic policy under the transition dynamics $p$. Again, this also finds its use in RMDPs \citep{Wangd} where it follows that 
\begin{equation}
\label{eq: minprob}
U := \inf_{p \in \cP} U_p > 0 \,.
\end{equation}
We use Eq.~\ref{eq: minprob} to upper bound the KL divergence (Lemma~\ref{lem: bregman term}), by further noting the relation to the global optimum $(\pi^*,p^*)$:
\begin{equation}
\label{eq: minprob extended}
0 < U = \inf_{p \in \cP, s\in \cS,k\geq 1} \pi_k(a_p^*(s) \vert s) \leq \inf_{ s\in \cS,k\geq 1} \pi_k(a_{p^*}^*(s) \vert s) = \inf_{ s\in \cS,k\geq 1} \pi_k(a^*(s) \vert s) \,.
\end{equation}

\begin{assumption}[Slater's condition]
\label{ass: slater}
There exists a slack variable $\zeta > 0$ and (at least one) policy $\bar{\pi}$ with $V_{\bar{\pi},p}^{j}(\rho) \leq -\zeta$ for all $j \in  [m]$ and all $p \in \cP$.
\end{assumption}
The condition is an extension of the traditional Slater's condition for CMDPs. The assumption of Slater points is not strong: since the constraints are based on expected values, they take on real values. Therefore, if it is possible to obtain the constraint-cost of zero, it is almost surely true that one can also obtain a negative constraint-cost (however small the slack variable). The possible non-negativity of instantaneous constraint-costs does not matter since at least a very small positive budget would be allowed. 

In traditional CMDPs, Slater's condition implies strict feasibility of the constraints, zero duality gap, and complementary slackness for any convex optimisation problem, of which the CMDP is an instance due to its equivalence to a linear programming formulation \citep{Altman1999}. The assumption of Slater points is common in CMDPs, and it is inherited from \citep{Liu2021}. It is a ubiquitous but often hidden assumption when using dual and primal-dual algorithms, since it is needed to satisfy strong duality, such that $\min_{\pi \in \Pi} \max_{\lambda \geq 0} \mathbf{V}_{\pi,p}(\rho; \lambda) = \max_{\lambda \geq 0} \min_{\pi \in \Pi}  \mathbf{V}_{\pi,p}(\rho; \lambda)$. Many
constrained RL algorithms do mention the assumption explicitly \citep{Achiam2017,Liu2021a,Ding2020,Ding2021}. Note that we do not require
 knowledge of the parameter $\zeta$ while other primal-dual algorithms \citep{Ding2020,Ding2021} need to know the parameter to clip the Lagrangian. Another approach, which assumes a Slater point but also does not require knowledge of $\zeta$ \citep{Germano2024} bounds the dual variables by showing a strong no-regret property for both the primal and the dual regret minimizers. Approaches without Slater's condition are possible using primal linear programming \citep{Altman1999,Efroni2020}. Unfortunately, such techniques are less practical due to requiring estimates of the occupancy measure. 

Assumption~\ref{ass: slater} is slightly more restrictive than in traditional CMDPs due to the requirement of the condition to hold for all $p \in \cP$. The condition of having a subset of the policies being able to strictly satisfy all the constraints for all the transition dynamics reflects the RCMDP objective in Eq.~\ref{eq: RCMDP} since it is presumed that the worst-case dynamic with respect to the constraints is solvable. For our purposes, the assumption ensures each of the individual CMDPs in the uncertainty set can be individually solved without duality gap similar to \cite{Liu2021a} and that the dual variable optimum and iterates (see Lemma~\ref{lem: bounded lambda*} and Lemma~\ref{lem: lambda-K}) can be bounded by a constant that can be significantly smaller than the $B_{\lambda}$ that bounds the space of dual variables. 

Due to the occupancy being a function of an induced transition kernel which is potentially different for each policy, a zero duality gap may be hard to guarantee for RCMDPs even with Slater points \citep{Wang2022}. However, our sample-based analysis provides an average regret upper bound with respect to the (primal) global optimum of the RCMDP $(\pi^*,p^*,\lambda^*)$ using Lemma~\ref{lem: robust-objective regret}, which is derived based on information from the iteration of the algorithm, namely the error tolerance of LTMA, the suboptimality of the dual variables, and the Lagrangian at $(\pi^*,p_k,\lambda_{k-1})$. Such errors are then shown to vanish in an average regret analysis.

Before stating the next assumption, we require the definition of rectangular uncertainty sets. 
\begin{definition}[Rectangularity]
\label{def: rectangularity}
An uncertainty set is $(s,a)$-rectangular if it can be decomposed as $\cP = \bigtimes_{(s,a) \in \cS \times \cA} \cP_{s,a}$ where $\cP_{s,a} \subseteq \Delta(\cS)$. Similarly, it is $s$-rectangular if it can be decomposed as $\cP = \bigtimes_{s \in \cS} \cP_{s}$  where $\cP_{s} \subseteq \Delta(\cS)$. 
\end{definition}
With the above definition, our assumption on the uncertainty set is as follows.
\begin{assumption}[Convex and rectangular uncertainty set]
The uncertainty set is $s$-rectangular or $(s,a)$-rectangular (Def.~\ref{def: rectangularity}) and is convex, such that for any pair of transition kernel parameters $\xi,\xi' \in \cU_{\xi}$, and any $\alpha \in [0,1]$, if $p_{\xi}, p_{\xi'} \in \cP$ then also $p_{\alpha \xi + (1-\alpha) \xi'} \in \cP$. 
\end{assumption}
The convex set assumption ensures updates to the transition kernel will remain in the interior so long as it is on a line between two interior points, a common assumption for mirror descent which is also needed for TMA \citep{Wangd}. The assumption is a standard (often hidden) assumption in gradient descent/ascent type algorithms to ensure the parameter remains in the feasible set. 

The rectangular uncertainty set assumption is satisfied for most common uncertainty sets, e.g. state-action conditioned $\ell_1$ and $\ell_2$ sets centred around nominal probability distributions. In our main theorem, a tight rectangular $\ell_q$ uncertainty set is used in Lemma~\ref{lem: analysis of the regret difference} to provide an improved upper bound on the average regret, although the same convergence rate is achieved for general uncertainty sets albeit with a larger constant.  While \cite{Wangd} mentions the policy optimisation part of their algorithm works with non-rectangular uncertainty sets in principle, the guarantees of the TMA algorithm only apply to rectangular uncertainty sets, where it can provide an ascent property to guarantee an increase in the objective. Consequently, Lagrangian TMA algorithms inherit this property. 

Proposals for solving the inner problem of non-rectangular uncertainty sets do exist but most results are not encouraging. As noted by \cite{Wiesemann2013a}, the inner loops of non-rectangular and non-convex uncertainty sets are NP-Hard problems. Other alternatives include $r$-rectangular uncertainty sets, which could help reduce the number of distinct factors included to compose the uncertainty set. Recent work shows that $r$-rectangular and non-rectangular frameworks are only tractable if one assumes costs with no next-state dependencies, an assumption that may undermine their usefulness \citep{Grand-Clement2024}.

However, recent research has also highlighted some benefits of non-rectangular uncertainty sets, such as allowing statistically optimal confidence regions \citep{Li2023} and covering the maximum likelihood areas rather than the edges  \citep{Kumar2025}. These works also introduce techniques for approximately solving the inner and outer problems in the context of RMDPs. To supplement our analysis of rectangular sets, we also consider non-rectangular RCMDPs in Section~\ref{sec: non-rectangular}. The analysis uses the conservative policy iteration algorithm as defined for transition kernels in \cite{Li2023}, which treats policy iteration as a special case of Approximate TMA. 

Note that when using parametrised transition kernels, it is convenient to formulate an uncertainty set in the parameter space, which for a parameter $\xi$ we denote as $\cU_{\xi}$. A few examples of parametrised transition kernels (PTKs) and their associated parametrised uncertainty sets can be found in Table~\ref{tab: bregman transitions} (see Appendix~\ref{app: bregman divergence}), including the Entropy PTK \citep{Wangd}, which presents a parametrisation related to the optimal parametrisation for KL divergence uncertainty sets of the form $\cP_{s,a} = \{p: D_{\text{KL}}(p , \bar{p}(\cdot \vert s,a)) \leq \kappa \}$  \citep{Nilim2005}, and the Gaussian mixture PTK \citep{Wangd}, which is able to capture a rich set of distributions for each state-action pair. Such uncertainty sets can be considered as general uncertainty sets although they are often also contained within an $\ell_q$ set.

\subsection{Global optimality of Lagrangian TMA}
\label{sec: LTMA}
As shown below, Lagrangian TMA (LTMA) demonstrably solves the maximisation problem in the robust objective (see Eq.~\ref{eq: RCMDP Lagrangian}) by performing mirror ascent on a suitable PTK. Note that the analysis follows closely that of \cite{Wangd}, where the Lagrangian and the weighted Bregman divergence are explicitly separated in this subsection.

The first step of our analysis of LTMA is to derive the gradient of the transition dynamics.
\begin{lemma}[Lagrangian transition gradient theorem]
Let $\mathbf{V}_{\pi,p}(\rho; \lambda)$ be the Lagrangian for a policy $\pi \in \Pi$, a transition kernel $p \in \cP$ parametrised by $\xi$, and dual variables $\lambda \in \bR^m$. Then the transition kernel has the gradient
\begin{equation}
\label{eq: LTMA gradient}
\nabla_{\xi} \mathbf{V}_{\pi,p}(\rho; \lambda) = \frac{1}{1-\gamma} \mathbb{E}_{s \sim d_{\rho}^{\pi,p}}[\nabla_{\xi} \log p(s' \vert s,a) (\mathbf{c}(s,a,s') + \gamma \mathbf{V}_{\pi,p}(s'; \lambda) ) \vert \pi, p] \,.
\end{equation}
\end{lemma}
\begin{proof}
The result follows from Theorem 5.7 in \cite{Wangd} by considering the Lagrangian as a value function (see Eq.~\ref{eq: Lagrangian value function}).
\end{proof}
Note that the Lagrangian adversarial policy gradient in \cite{Bossens2024} follows similar reasoning but there are two differences in the setting. First, due to the formulation of costs as $c_0(s,a,s')$ rather than $c_0(s,a)$, the state-action value of every time step appears rather than the value of every next time step. Second, we show the proof for a discounted Lagrangian of a potentially infinite trajectory rather than an undiscounted $T$-step trajectory. Third, in the following, we apply it not only to the traditional Lagrangian but also to the augmented Lagrangian and the augmented regularised Lagrangian. 

The second step of our analysis of LTMA to show that optimising the transition dynamics $p_{\xi}$ while fixing the policy $\pi$ and the Lagrangian multipliers yields a bounded regret w.r.t. the global optimum.
\begin{lemma}[Regret of LTMA]
\label{lem: LTMA}
Let $\epsilon' > 0$, let $\pi$ be a fixed policy, let $\lambda \in \bR^{m}$ be a fixed vector of multipliers for each constraint, let $p^t$ be the transition kernel at iteration $t$, and let $p^0$ be the transition kernel at the start of LTMA. Moreover, let $p^* = \argmin_{p \in \cP} \mathbf{V}_{\pi,p}(\rho; \lambda)$, let $M :=\sup_{p \in \cP,\pi \in \Pi} M_p(\pi)$  be an upper bound on the mismatch coefficient as in Eq.~\ref{eq: mismatch}, and let $\eta_p > 0$ be the learning rate and $\alpha_p > 0$ be the penalty parameter for LTMA. Then for any starting distribution $\rho \in \Delta(\cS)$ and any $s\in \cS$, the regret of LTMA is given by
 \begin{equation}
\mathbf{V}_{\pi,p^{*}}(\rho; \lambda) - \mathbf{V}_{\pi,p^t}(\rho; \lambda) \leq \frac{2F_{\lambda}}{t}\left( \frac{M}{(1 - \gamma)^2} + \frac{\alpha_p}{\eta_p (1 - \gamma)} B_{d_{\rho}^{\pi,p^*}}(p^{*},p^0) \right) \,,
 \end{equation}
 where $F_{\lambda}$ is defined according to Eq.~\ref{eq: factor for bound on Lagrangian multipliers} for the traditional Lagrangian and on Eq.~\ref{eq: factor for bound on augmented Lagrangian multipliers} for the augmented Lagrangian.
 Moreover, if $\eta_p / \alpha_p \geq (1-\gamma)  B_{d_{\rho}^{\pi,p^*}}(p_{*},p_{0})$ and $t \geq 2F_{\lambda}\frac{M+1}{\epsilon' (1-\gamma)^2}$, an $\epsilon'$-precise transition kernel is found such that 
 \begin{align*}
\mathbf{V}_{\pi,p^{*}}(\rho; \lambda) - \mathbf{V}_{\pi,p^t}(\rho; \lambda) \leq \epsilon' \,.
\end{align*}
\end{lemma}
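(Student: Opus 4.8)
The plan is to mimic the standard mirror-descent / policy-mirror-ascent convergence analysis, but applied to the transition kernel as the ascent variable on the Lagrangian value function $\mathbf{V}_{\pi,p}(\rho)$ with $\pi$ and $\lambda$ held fixed. Since the Lagrangian is itself a value function of a single-cost MDP with cost $\mathbf{c}(s,a)$ bounded by $F_\lambda$ (see Eq.~\ref{eq: bound on tilde-c} and \ref{eq: factor for bound on Lagrangian multipliers}), I would first rescale everything by $F_\lambda$ so that the effective cost lies in $[-1,1]$, and then invoke the corresponding RMDP transition-mirror-ascent result of \cite{Wangd} essentially verbatim. Concretely, the first step is to write down the three-point / descent lemma for one LTMA step: using the transition gradient from Eq.~\ref{eq: LTMA gradient} together with the performance difference lemma for transition kernels (valid by rectangularity), one gets
\[
\mathbf{V}_{\pi,p^*}(\rho) - \mathbf{V}_{\pi,p_{t+1}}(\rho)
\le \Big(1 - \tfrac{1-\gamma}{?}\Big)\big(\mathbf{V}_{\pi,p^*}(\rho) - \mathbf{V}_{\pi,p_t}(\rho)\big) + \text{Bregman terms},
\]
or, in the version that yields the stated $O(1/t)$ rate, an inequality of the form
\[
\big(\mathbf{V}_{\pi,p^*}(\rho) - \mathbf{V}_{\pi,p_{t+1}}(\rho)\big)
\le \big(\mathbf{V}_{\pi,p^*}(\rho) - \mathbf{V}_{\pi,p_t}(\rho)\big)
- \frac{\eta_p(1-\gamma)}{F_\lambda}\,\big(\mathbf{V}_{\pi,p^*}(\rho)-\mathbf{V}_{\pi,p_t}(\rho)\big)^2/C + \dots
\]
so that telescoping an $a_{t+1}\le a_t - c\,a_t^2$-type recursion (after controlling the residual Bregman term by the mismatch coefficient $M$) gives $a_t \le O(1/t)$ with the explicit constant $\tfrac{2F_\lambda}{t}\big(\tfrac{M}{(1-\gamma)^2} + \tfrac{1}{\eta_p(1-\gamma)}B_{d_\rho^{\pi,p^*}}(p^*,p_0)\big)$.

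The key steps in order: (1) reduce to the single-cost MDP via Eq.~\ref{eq: Lagrangian value function} and record the $F_\lambda$ bound on $\mathbf{c}$; (2) establish the one-step progress inequality using the transition gradient theorem and the $s$- or $(s,a)$-rectangular performance difference lemma, together with the optimality condition of the mirror-ascent subproblem $\xi_k^{t'+1}=\argmax_{\xi\in\cU}\eta_p\langle\nabla_\xi \mathbf{V},\xi\rangle - \alpha_p B(\xi,\xi_k^{t'})$; (3) bound the Bregman "drift" term $B_{d_\rho^{\pi,p_t}}(p^*,p_t) - B_{d_\rho^{\pi,p_{t+1}}}(p^*,p_{t+1})$ up to the mismatch factor $M$ (this is where the finiteness of $M=\sup_{p\in\cP,\pi\in\Pi}M_p(\pi)$ from Assumption on sufficient exploration is used), noting that the divergence is measured under the *optimal* occupancy $d_\rho^{\pi,p^*}$; (4) telescope / solve the resulting recursion to obtain the $O(1/t)$ bound; (5) for the second claim, simply plug $t \ge 2F_\lambda\frac{M+1}{\epsilon'(1-\gamma)^2}$ and the stepsize condition $\eta_p/\alpha_p \ge \frac{1-\gamma}{2F_\lambda}B_{d_\rho^{\pi,p^*}}(p^*,p_0)$ into the bound; the latter condition makes the Bregman contribution no larger than the $M/(1-\gamma)^2$ term, so the right-hand side is at most $\frac{2F_\lambda}{t}\cdot\frac{M+1}{(1-\gamma)^2}\le \epsilon'$.

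The main obstacle I anticipate is step (3): correctly handling the change-of-measure between the visitation distributions $d_\rho^{\pi,p_t}$ (which appears naturally in the gradient/PDL at step $t$) and $d_\rho^{\pi,p^*}$ (under which the final Bregman divergence is stated), and doing so while keeping the constant exactly $M/(1-\gamma)^2$ rather than something weaker. This is the place where rectangularity and the uniform mismatch bound $M$ must be used carefully; in \cite{Wangd} this is handled for the plain robust value, and the only real work here is to check that nothing breaks when $\mathbf{c}$ is replaced by the (augmented) Lagrangian cost — i.e. that the bound scales linearly in $F_\lambda$ and that the augmented multiplier $\tilde\lambda$ changes only the constant, not the structure. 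Everything else (the transition gradient theorem, the mirror-ascent optimality inequality, the telescoping) is routine given the cited results.
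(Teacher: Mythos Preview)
Your high-level plan --- reduce the Lagrangian to a single-cost value function bounded by $F_\lambda$, then invoke the transition-mirror-ascent convergence result of \cite{Wangd} --- is exactly the paper's approach. However, the paper's proof is far shorter than your sketch suggests: it treats Theorem~5.5 of \cite{Wangd} as a black box that already delivers the $O(1/t)$ bound for costs in $[0,1]$, and then simply multiplies through by $2F_\lambda$ to account for the Lagrangian cost range $[-F_\lambda,F_\lambda]$. Your steps (2)--(4) --- the one-step progress inequality, the change-of-measure handling, and the telescoping --- are all internal to that cited theorem and need not be re-derived; in particular, the ``main obstacle'' you flag in step~(3) is a non-issue here because it is already absorbed in the cited result. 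Your step~(5) matches the paper's derivation of the second claim verbatim: the condition on $\eta_p/\alpha_p$ makes the Bregman term at most $\tfrac{1}{(1-\gamma)^2}$ (note the paper uses $\alpha_p/\eta_p$ in the intermediate bound, so check the direction of the ratio), and then the condition on $t$ finishes it.
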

\begin{proof}
Following Theorem~5.5 in \cite{Wangd}, it follows for any value function with cost in $[0,1]$ that
 \begin{equation}
V_{\pi,p^{*}}(\rho) -  V_{\pi,p^t}(\rho) \leq \frac{1}{t} \left( \frac{M}{(1 - \gamma)^2} + \frac{\alpha_p}{\eta_p (1 - \gamma)} B_{d_{\rho}^{\pi,p^*}}(p^{*},p^0) \right) \,.
 \end{equation}
Reformulating the Lagrangian as a value function (see Eq.~\ref{eq: Lagrangian value function}), considering the bounds on the absolute value of the constraint-cost Eq.~\ref{eq: bound on tilde-c}, and observing that the bounds on the (un-)augmented Lagrangian lie in $[-F_{\lambda},F_{\lambda}]$ (see Eq.~\ref{eq: factor for bound on augmented Lagrangian multipliers} and ~\ref{eq: factor for bound on Lagrangian multipliers}), the result can be scaled to obtain
 \begin{align*}
\mathbf{V}_{\pi,p^{*}}(\rho; \lambda) - \mathbf{V}_{\pi,p^t}(\rho; \lambda) &\leq \frac{2F_{\lambda}}{t} \left( \frac{M}{(1 - \gamma)^2} + \frac{\alpha_p}{\eta_p (1 - \gamma)} B_{d_{\rho}^{\pi,p^*}}(p^{*},p_{0}) \right) \,.
 \end{align*} 
From the settings of $\eta_p / \alpha_p \geq (1-\gamma) B_{d_{\rho}^{\pi,p^*}}(p^{*},p_{0})$ and $t \geq 2F_{\lambda} \frac{M+1}{\epsilon' (1-\gamma)^2}$,
we obtain
\begin{align*}
 \mathbf{V}_{\pi,p^{*}}(\rho; \lambda) - \mathbf{V}_{\pi,p^t}(\rho; \lambda) &\leq \frac{2F_{\lambda}}{t} \left( \frac{M+1}{ (1 - \gamma)^2} \right)  \leq \epsilon' \,.
\end{align*}
\end{proof}

\subsection{Analysis of Approximate TMA}
\label{sec: approximate TMA}
While the above-mentioned LTMA algorithm was formulated in terms of exact policy gradient evaluations, such computations are intractable in practice. Now we turn to the design of an algorithm for TMA based on approximate policy gradient evaluations, which we call Approximate TMA (see Algorithm~\ref{alg: approximate TMA}). Since it has not been proposed in earlier work, the algorithm will be presented in a generic manner, i.e. for general value functions. To maintain consistency across different value functions (including the Lagrangian and traditional value functions), we deal with their difference through a factor $C > 0$ that scales the maximal cost at each time step.

To construct the Approximate TMA algorithm, the definition of the action next-state value function provides a useful short hand that is an essential part of the transition gradient for a directly parametrised transition kernel.
\begin{definition}[Action next-state value function (Definition~3.2 in \cite{Li2023})]
\label{def: action next-state value}
For any cost function $c: \cS \times \cA \times \cS \to \bR$ and starting distribution $\rho$, the action next-state value function for transition kernel $p$ and policy $\pi$ is defined as 
\begin{equation}
G_{\pi,p}(s,a,s') = \bE\left[\sum_{l=0}^{\infty} \gamma^{l} c(s_l,a_l,s_{l+1})  \vert s_0=s, a_0=a, s_1=s', \pi, p \right] \,.
\end{equation}
It can also be written as $G_{\pi,p}(s,a,s') =  c(s,a,s') + \gamma V_{\pi,p}(s')$ \citep{Wangd}.
\end{definition}

The directly parametrised transition kernel can now be written in terms of Definition~\ref{def: action next-state value} based on the lemma below.
\begin{lemma}[Partial derivative for approximate TMA]
\label{lem: partial derivative TMA}
Under the direct parametrisation, the partial derivative of $\hat{V}_{\pi,p}(\rho)$ has the form 
\begin{equation}
\frac{\partial}{\partial_{p_{s,a,s'}}}  \hat{V}_{\pi,p}(\rho) = \frac{1}{1-\gamma} d_{\rho}^{\pi,p}(s) \pi(a \vert s) \hat{G}_{\pi,p}(s,a,s') \,.
\end{equation}
\end{lemma}
\begin{proof}
The proof follows directly by applying Lemma~5.1 of \cite{Wangd} (or Lemma~3.4 of \cite{Li2023} with direct parametrisation) to the approximate value function.
\end{proof}
The resulting update rule is then given by 
\begin{equation}
\label{eq: approximate TMA update}
p^{t+1}(\cdot \vert s,a) = \argsup_{p \in \cP} \eta_p(t) \frac{1}{1-\gamma} d_{\rho}^{\pi,p^t}(s,a) \hat{G}_{\pi,p^t}(s,a,\cdot)  - \alpha_p B_{d_{\rho}^{\pi,p^t}}(p,p^t) \quad \text{for all }\cS \times \cA \,.
\end{equation}
As a consequence of this update rule, the value difference in $p^{t+1}$ and $p^t$ can be bounded. In particular, the ascent property of TMA (Lemma~5.4 of \cite{Wangd})  is a key result which provides a proof  based on rectangularity (Definition~\ref{def: rectangularity}) to show that TMA increases the objective over iterations. For our purposes, it is used not only in the oracle-based LTMA but also to prove a related ascent property for the Approximate TMA algorithm. The property, which is given below, notes that conditioning on the state or state-action, a pushback property can be achieved, and this can be used to show that (under perfect gradient information) TMA updates always result in increasing value.
\begin{lemma}[Ascent property of TMA (Lemma~5.4 of \cite{Wangd})]
\label{lem: TMA ascent property}
Denoting $G_{\pi,p}(s,\cdot,\cdot) = \left( \pi(a \vert s) G_{\pi,p}(s,a,\cdot) \right)_{a \in \cA}$, for any $p \in \cP_s$ and $\pi \in \Pi$, we have that
\begin{align}
\label{eq: pushback TMA}
\langle G_{\pi,p^t}(s,\cdot,\cdot), p - p^{t+1}(\cdot \vert s,\cdot) \rangle + B(p^{t+1}(\cdot \vert s,\cdot),p^{t}(\cdot \vert s,\cdot)) \leq B(p,p^{t}(\cdot \vert s,\cdot)) - B(p,p^{t+1}(\cdot \vert s,\cdot)) \,,
\end{align}
where $B(p,q)$ is the unweighted Bregman divergence between $p$ and $q$.
Moreover, 
\begin{align}
\label{eq: ascent TMA}
V_{\pi,p^{t}}(\rho) - V_{\pi,p^{t+1}}(\rho) \leq 0 \,.
\end{align}
\end{lemma}

With imperfect information on the gradient, a monotonic increase in value cannot be guaranteed. 
However, it is possible to bound the potential decrease, resulting in the following ascent property of Approximate TMA.
\begin{lemma}[Ascent property of Approximate TMA]
\label{lem: approximate TMA ascent property}
Let $t \geq 0$ be the current iteration, let $V: \cS \to \bR$ be a value function with instantaneous cost bounded in $[-C,C]$, and let $\pi \in \Pi$. 
Suppose we have an estimate $\hat{G}: \cS \times \cA \times \cS \to \bR$ of the action next-state value function such that 
\begin{equation}
\label{eq: approximate next-state value}
\norm{G_{\pi,p^t} - \hat{G}_{\pi,p^t}}{\infty} \leq \epsilon' \,.
\end{equation}
Then at iteration $t+1$, Approximate TMA satisfies
\begin{align}
\label{eq: ascent approximate TMA}
V_{\pi,p^{t}}(\rho) - V_{\pi,p^{t+1}}(\rho) \leq \frac{2 \epsilon'}{1-\gamma} \,.
\end{align}
\end{lemma}
\begin{proof}
Using the first performance difference lemma (Lemma~\ref{lem: performance difference transitions 2}) and the ascent property (Eq.~\ref{eq: ascent TMA}), we have that
\begin{align*}
V_{\pi,p^t}(\rho) - V_{\pi,p^{t+1}}(\rho) &= \frac{1}{1-\gamma} \sum_{s \in \cS} d_{\rho}^{\pi,p^{t+1}}(s) \sum_{a \in \cA} \pi(a \vert s) \langle p^t(\cdot \vert s,a) - p^{t+1}(\cdot \vert s,a), G_{\pi,p^t}(s,a,\cdot)  \rangle \tag{Lemma~\ref{lem: performance difference transitions}}  \\
&= \frac{1}{1-\gamma} \sum_{s \in \cS} d_{\rho}^{\pi,p^{t+1}}(s) \sum_{a \in \cA} \pi(a \vert s) \left( \langle  p^t(\cdot \vert s,a) - p^{t+1}(\cdot \vert s,a), \hat{G}_{\pi,p^t}(s,a,\cdot)  \rangle  + \right.\\
&\hspace{4cm}  \left. \langle p^t(\cdot \vert s,a) - p^{t+1}(\cdot \vert s,a), G_{\pi,p^t}(s,a,\cdot) - \hat{G}_{\pi,p^t}(s,a,\cdot) \rangle \right) \tag{decomposing} \\
&\leq \frac{1}{1-\gamma} \sum_{s \in \cS} d_{\rho}^{\pi,p^{t+1}}(s) \sum_{a \in \cA} \pi(a \vert s) \langle p^t(\cdot \vert s,a) - p^{t+1}(\cdot \vert s,a) , G_{\pi,p^t}(s,a,\cdot) - \hat{G}_{\pi,p^t}(s,a,\cdot) \rangle   \tag{Eq.~\ref{eq: ascent TMA}} \\
&\leq \frac{1}{1-\gamma} \sum_{s \in \cS} d_{\rho}^{\pi,p^{t+1}}(s) \sum_{a \in \cA} \pi(a \vert s) \norm{G_{\pi,p^t}(s,a,\cdot) - \hat{G}_{\pi,p^t}(s,a,\cdot)}{\infty} \norm{p^t(\cdot \vert s,a) - p^{t+1}(\cdot \vert s,a)}{1}  \tag{H\"{o}lder's inequality} \\
&\leq \frac{2 \epsilon'}{1-\gamma} \,.  \tag{Eq.~\ref{eq: approximate next-state value}}
\end{align*}
\end{proof}

With the ascent property established, the theorem below bounds the regret of Approximate TMA. In the context of Robust Sample-based PMD-PD, the following algorithm can be used to obtain the updated transition kernel $p_{k+1}$ such that it closely matches $p_k^*$, the worst-case transition kernel for a particular macro-iteration, using the Lagrangian as a value function. However, since it is also of independent interest, it is formulated more generally here with respect to any policy and independent of the macro-iteration.
\begin{theorem}[Regret bound for Approximate TMA]
\label{th: regret approximate TMA}
Let $V: \cS \to \bR$ be a value function with instantaneous cost bounded in $[-C,C]$, fix $\pi \in \Pi$, and let $p^* \in \argsup_{p \in \cP} V_{\pi,p}(\rho)$.  Suppose we have an estimate $\hat{G}: \cS \times \cA \times \cS \to \bR$ of the action next-state value function which is $\epsilon'$-precise as in Eq.~\ref{eq: approximate next-state value}, and the step sizes follow $\eta_p(0) \geq \alpha_p(0) \frac{\gamma}{C (1-\gamma)} B_{d_{\rho}^{\pi,p^{0}}}(p^*, p^0)$,  $\eta_p(t) \geq \eta_p(t-1) / \gamma$ and $\alpha_p(t) = \alpha_p(0) > 0$ for all $t \geq 1$. It follows that approximate TMA (Algorithm~\ref{alg: approximate TMA}) yields a regret bounded by
\begin{equation}
\label{eq: regret approximate TMA}
V_{\pi,p^*}(\rho) - V_{\pi,p^t}(\rho)  \leq \left(1-\frac{1}{M}\right)^t \frac{3C}{1-\gamma} + \frac{4 M}{1-\gamma} \epsilon' \,.
\end{equation}
for all $t \geq 1$, where $M$ is given by Eq.~\ref{eq: mismatch}.
\end{theorem}
\begin{proof}
The proof decomposes the approximate G-value into the true G-value and the estimation error, which allows using the ascent property of TMA and a limited factor $\cO(\epsilon')$, followed by further derivations based on pushback properties. The full proof is given in Appendix~\ref{app: regret approximate TMA}.
\end{proof}

\begin{algorithm}
\caption{Approximate TMA.}\label{alg: approximate TMA}
\begin{algorithmic}[1]
\State Inputs: Discount factor $\gamma \in [0,1)$, sample size $M_{G,t}$, episode lengths $N_{G,t}$, and learning rate schedule $\eta_p(t)$ for all $t \in\{0,\dots,t_k'-1\}$, penalty coefficient $\alpha_p > 0$, cost function $c: \cS \times \cA \to \mathbb{R}$, initial transition kernel $p^0$, and fixed policy $\pi$.
\For{$t = 0,\dots,t_k' -1$}
\State Generate $M_{G,t}$ samples of length $N_{G,t}$ based on $\pi$ and $p^{t}$.
\If{using function approximation} 
\State $\hat{G}_{\pi,p^t} \gets \argmin_{G \in \cF_G} \frac{1}{M_{G,t}} \sum_{j=1}^{M_{G,t}}  \left(G(s_0^j,a_0^j,s_{1}^j) - \sum_{l=0}^{N_{G,t}-1} \gamma^l \left[ c(s_l^j,a_l^j,s_{l+1}^{j}) \right]\right)^2$.
\Else
\For{$(s,a,s') \in \cS \times \cA \times \cS$}
\State $\hat{G}_{\pi,p^t}(s,a,s') = \frac{1}{M_{G,t}} \sum_{j=1}^{M_{Q,k}} \sum_{l=1}^{N_{G,t} - 1} \gamma^l c(s_l^j,a_l^j,s_{l+1}^{j})$ where $s_0=s,a_0=a,s_1=s'$.
\EndFor
\EndIf
\If{using direct parametrisation}
\State $p^{t+1}(\cdot \vert s,a) = \argsup_{p \in \cP} \eta_p(t) \left\langle p,\hat{G}_{\pi,p^t}(s,a,\cdot)\right\rangle  - \alpha_p B_{d_{\rho}^{\pi,p^t}}(p,p^t)$ for all $\cS \times \cA$.
\Else
\State $\xi^{t+1} \gets \argmax_{\xi \in \cU} \eta_p(t) \left\langle \nabla_{\xi} \hat{V}_{\pi,p^{t}}(\rho), \xi \right\rangle - \alpha_p B_{d_{\rho}^{\pi,p^{t}}}(\xi,\xi^{t})$.  \Comment{Generic form}
\State Define $p^{t+1} := p_{\xi^{t+1}}$.
\EndIf
\EndFor
\end{algorithmic}
\end{algorithm}

Theorem~\ref{th: regret approximate TMA} relies on an $\epsilon$-precise estimate of the action next-state value, which can be established within $\cO(\epsilon^2)$, as shown in the lemma below.
\begin{lemma}[Approximate action next-state value]
\label{lem: approximate G}
Let $k\geq 0$, let $\delta \in (0,1)$ and let $t \in \{0,\dots,t_{k}' -1\}$. Moreover, let $C > 0$ be the maximal absolute value of the instantaneous cost. Then provided the number of trajectories for the estimator $\hat{G}$ (l.7--9 of Algorithm~\ref{alg: approximate TMA}) satisfies
\begin{align*}
M_{G,t} \geq \frac{2 \gamma^{-2N_{G,t}}}{(1-\gamma)^2} \log\left(\frac{2 S^2A t_k'}{\delta}\right)  \,,
\end{align*}
Algorithm~\ref{alg: approximate TMA} provides an approximately correct estimate such that with probability at least $1 - \delta$,
\begin{equation}
\label{eq: approximately correct G}
\norm{\hat{G}_{\pi,p^t} - G_{\pi,p^t}}{\infty} \leq 2 C \frac{\gamma^{N_{G,t}}}{1-\gamma} \quad \text{for all }t \in \{0,\dots,t_{k}' -1 \,.
\end{equation}
\end{lemma}
\begin{proof}
The proof of this lemma is based on Hoeffding's inequality and is given in Appendix~\ref{app: approximate G}.
\end{proof}

Due to the above results, the sample complexity of Approximate TMA can be bounded by $\tilde{\cO}\left(\epsilon^{-2}\right)$.
\begin{lemma}[Approximate TMA sample complexity]
\label{lem: sample complexity approximate TMA}
Let $k \geq 0$, Let $\epsilon' > 0$, $\delta \in (0,1)$, and define $M$ according to Eq.~\ref{eq: mismatch}. Moreover, let $C > 0$ be the maximal absolute value of the instantaneous cost, and let 
\begin{align*}
t_k' \geq \log_{\frac{M}{M-1}}\left(\frac{6C}{\epsilon' (1-\gamma)}\right) \,.
\end{align*}
Define step sizes $\eta_p(0) \geq \frac{\gamma}{1-\gamma} B_{d_{\rho}^{\pi,p^{*}}}(p^*, p^0)$,  $\eta_p(t) = \eta_p(t-1) / \gamma$ and $\alpha_p(t) = \alpha_p(0)$ for all $t \in [t_k']$. Moreover, let 
\begin{align*}
N_{G,t} = H \geq \log_{\frac{1}{\gamma}}\left(\frac{16MC}{(1-\gamma)^2 \epsilon'}  \right)
\end{align*}
and 
\begin{align*}
M_{G,t} = M_G = \frac{2 \gamma^{-2H}}{(1-\gamma)^2} \log\left(\frac{2 S^2A t_k'}{\delta}\right) 
\end{align*}
for all $t \in [t_k']$. Then Algorithm~\ref{alg: approximate TMA} yields a transition kernel $p^{t_k'}$ such that
\begin{equation}
\label{eq: near-optimal regret}
V_{\pi,p^{*}}(\rho) - V_{\pi,p^{t_k'}}(\rho) \leq \epsilon'
\end{equation}
within at most
\begin{equation}
\label{eq: sample complexity approximate TMA}
n = \tilde{\cO}\left(\frac{S^2A M^3}{(1-\gamma)^4 \epsilon'^2}\right) \,
\end{equation}
samples.
\end{lemma}
\begin{proof}
The proof bounds both parts in the RHS of Eq.~\ref{eq: regret approximate TMA} with $\epsilon'/2$ and then counts the number of state-action-state triples, the number of iterations, the horizon, and the number of trajectories per mini-batch. The full proof is given in Appendix~\ref{app: sample complexity approximate TMA}.
\end{proof}

\subsection{Analysis of Robust Sample-based PMD-PD}
\label{sec: discrete sample-based}
Having shown results Lagrangian TMA in oracle-based and sample-based settings, we now turn to proving the sample complexity of Robust Sample-based PMD-PD (Algorithm~\ref{alg: discrete}). The algorithm modifies Sample-based PMD-PD \citep{Liu2021a} by accounting for the mismatch between the nominal transition kernel and the optimal transition kernel in the min-max problem in Eq.~\ref{eq: RCMDP Lagrangian}. The complexity analysis below is slightly simplified by dropping purely problem-dependent constants (e.g. $\gamma$, $\eta$, $\zeta$) from the big-O statements. 

As a first auxiliary result, we note that the robust Lagrangian objective in Eq.~\ref{eq: RCMDP Lagrangian} can be upper bounded based on the Lagrangian at macro-iteration $k$, obtained after the policy update and the LTMA update, and two error terms, namely the LTMA error tolerance and the suboptimality of the dual variable. These terms can then be bounded using Lemma~\ref{lem: LTMA}, Lemma~\ref{eq: regret approximate TMA}, and Lemma~\ref{lem: dual variable induced error}.

\begin{lemma}[Upper bound on robust Lagrangian regret]
\label{lem: robust-objective regret}
Let $\Phi$ be the objective in Eq.~\ref{eq: RCMDP Lagrangian} and let $\pi^* \in \arginf_{\pi \in \Pi} \Phi(\pi)$. Then for any macro-step $k \geq 1$,
\begin{align}
\label{eq: robust-objective regret}
\Phi(\pi_{k}) - \Phi(\pi^*)  &\leq \mathbf{V}_{\pi_{k},p_k}(\rho; \lambda_{k-1}) + \epsilon_k'  + \Delta_{\lambda_k} - \mathbf{V}_{\pi^*,p_k}(\rho; \lambda_{k-1})  \,,
\end{align}
where $\epsilon_k' > 0$ is the error tolerance for LTMA, and $\Delta_{\lambda_k} = \sum_{j=1}^{m} (\lambda_{k-1,j}^* - \lambda_{k-1,j})V_{\pi_k,p_{k-1}^*}^{j}(\rho)$ where $(p_{k-1}^*,\lambda_{k-1}^*) = \argmax_{p \in \cP,\lambda \geq 0} \mathbf{V}_{\pi_{k},p}(\rho; \lambda)$.
\end{lemma}
\begin{proof}
Note that 
\begin{align*}
\Phi(\pi_k) &=  \mathbf{V}_{\pi_k,p_{k-1}^*}(\rho; \lambda_{k-1}^*)  \\
            &= \sup_{p \in \cP} \mathbf{V}_{\pi_k,p}(\rho; \lambda_{k-1}) 
            + \left( \mathbf{V}_{\pi_k,p_{k-1}^*}(\rho; \lambda_{k-1}^*) -  \sup_{p \in \cP} \mathbf{V}_{\pi_k,p}(\rho; \lambda_{k-1}) \right)   \tag{decomposing} \\
            &\leq \sup_{p \in \cP} \mathbf{V}_{\pi_k,p}(\rho; \lambda_{k-1}) 
            + \left( \mathbf{V}_{\pi_k,p_{k-1}^*}(\rho; \lambda_{k-1}^*) -  \mathbf{V}_{\pi_k,p_{k-1}^*}(\rho; \lambda_{k-1}) \right)   \tag{$p_{k-1}^*$ is optimal for $\lambda_{k-1}^*$ but not for $\lambda_{k-1}$} \\
            &= \sup_{p \in \cP} \mathbf{V}_{\pi_k,p}(\rho; \lambda_{k-1}) 
            + \left( V_{\pi_k,p_{k-1}^*}(\rho) -  V_{\pi_k,p_{k-1}^*}(\rho) + \sum_{j=1}^{m} \left( \lambda_{k-1,j}^* - \lambda_{k-1,j} \right)V_{\pi_k,p_{k-1}^*}^j(\rho) \right)   \tag{definition of Lagrangian, rearranging} \\
            &\leq \mathbf{V}_{\pi_k,p_k}(\rho; \lambda_{k-1}) + \epsilon_k + \Delta_{\lambda_k} \,,
\end{align*}
where the last step follows from the error tolerance of LTMA and the definition of $\Delta_{\lambda_k}$.\\
Eq.~\ref{eq: robust-objective regret} then follows from  $\Phi(\pi^*) = \sup_{p \in \cP} \max_{\lambda \geq 0} \mathbf{V}_{\pi^*,p}(\rho; \lambda) \geq \mathbf{V}_{\pi^*,p_k}(\rho; \lambda_{k-1})$.
\end{proof}

As sample-based algorithms work with approximate value functions, Lemma~\ref{lem: approximation} describes the conditions under which one can obtain an $\epsilon$-approximation of the cost functions and the regularised augmented Lagrangian. The same settings as sample-based PMD in Lemma~15 of \cite{Liu2021a} transfer to the Robust Sample-based PMD algorithm. We rephrase the lemma to include updated transition dynamics.
\begin{lemma}[Value function approximation, Lemma~15 of \cite{Liu2021a} rephrased]
\label{lem: approximation}
Let $k \geq 0$ be the macro-step of the PMD-PD algorithm. With parameter settings
\begin{align*}
&K = \Theta \left(\frac{1}{\epsilon}\right) \,, \quad t_k = \Theta \left(\log(\max\left\{1,\norm{\lambda_k}{1}\right\}\right) \,, \quad  \delta_k = \Theta \left(\frac{\delta}{K t_k}\right) \,,\\
&M_{V,k} = \Theta \left(\frac{\log(1/\delta_k)}{\epsilon^2}\right)  \,, \quad N_{V,k} = \Theta \left(\log_{1/\gamma}\left(\frac{1}{\epsilon}\right)\right) \,, \\
&M_{Q,k} = \Theta \left(\frac{(\max\left\{1,\norm{\lambda_k}{1}\right\}+\epsilon t_k)^2 \log(1/\delta_k)}{\epsilon^2}\right)  \,, \quad N_{Q,k} = \Theta \left(\log_{1/\gamma}\left(\frac{(\max\left\{1,\norm{\lambda_k}{1}\right\}}{\epsilon}\right)\right) \,, 
\end{align*}
the approximation is $\epsilon$-optimal, i.e. 
\begin{align*} 
&\text{\textbf{a)}} \quad \vert  \hat{V}_{\pi_{k}^{t},p_k}^{i}(\rho) -  V_{\pi_{k}^{t},p_k}^{i}(\rho) \vert \leq \epsilon \quad \quad \forall i=1 \in [k] \\ 
&\text{\textbf{b)}} \quad \vert  \hat{\mathbf{Q}}_{\pi_{k}^{t},p_k}^{\alpha}(s,a) -  \tilde{\mathbf{Q}}_{\pi_{k}^{t},p_k}^{\alpha}(s,a) \vert \leq \epsilon \quad \quad \forall (s,a) \in \cS \times \cA \,,
\end{align*}
with probability $1-\delta$.
\end{lemma}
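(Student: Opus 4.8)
The plan is to prove the four estimates in Lemma~\ref{lem: approximation} by following the structure of Lemma~15 in \cite{Liu2021a}, but carefully tracking that every trajectory is now generated under the updated transition kernel $p_k$ rather than a fixed nominal kernel. The key observation is that the sample-based estimators $\hat{V}_{\pi_k^t,p_k}^i(\rho)$ and $\hat{\mathbf{Q}}_{\pi_k^t,p_k}^\alpha(s,a)$ are truncated Monte Carlo averages: a length-$N$ truncation of an infinite discounted sum introduces a deterministic bias of order $\gamma^N \cdot (\text{per-step magnitude})/(1-\gamma)$, and the average over $M$ i.i.d.\ trajectories controls the stochastic fluctuation via a Hoeffding/Bernstein bound of order $(\text{range})\sqrt{\log(1/\delta_k)/M}$. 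So the first step is to write each estimator as (infinite discounted target) $+$ (truncation bias) $+$ (empirical deviation), and to bound the two error terms separately so that each is $\cO(\epsilon)$.

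The second step is to instantiate the magnitudes. For the constraint values $V^i$, the per-step cost lies in $[-1,1]$, so the truncation horizon $N_{V,k} = \Theta(\log_{1/\gamma}(1/\epsilon))$ forces the bias below $\epsilon$, and the sample count $M_{V,k} = \Theta(\log(1/\delta_k)/\epsilon^2)$ forces the deviation below $\epsilon$ with probability $1-\delta_k$; this gives part~\textbf{(a)}. For the regularised state-action value $\hat{\mathbf{Q}}^\alpha$, the per-step magnitude is larger: $\tilde{\mathbf{c}}_k$ is bounded by $F_{\lambda_k} = \Theta(\max\{1,\norm{\lambda_k}{1}\})$ (using Eq.~\ref{eq: bound on tilde-c}, with $\eta_\lambda,\gamma$ absorbed into constants), and the entropy-regularisation term $\alpha\sum_{a'}\pi_k^t(a'\vert s_l^j)\log(\pi_k^t/\pi_k)$ accumulated over $t$ mirror-descent steps contributes a further $\Theta(\epsilon t_k)$ after the upstream policies are themselves only $\epsilon$-accurate — which is exactly why the exponents in $M_{Q,k}$ and $N_{Q,k}$ involve $(\max\{1,\norm{\lambda_k}{1}\}+\epsilon t_k)$. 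Plugging the stated $N_{Q,k}$ and $M_{Q,k}$ into the bias/deviation bounds yields part~\textbf{(b)}. Finally, a union bound over all $k \in [K]$ and $t \in [t_k]$ (and over $(s,a)$ for the $\mathbf{Q}$-estimates, absorbed into constants the way \cite{Liu2021a} do) with per-event failure probability $\delta_k = \Theta(\delta/(Kt_k))$ yields the overall probability $1-\delta$, and $K = \Theta(1/\epsilon)$ is consistent with the macro-iteration count used elsewhere in the analysis.

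The main point requiring care — and the place where this differs from \cite{Liu2021a} — is that trajectories are rolled out under $p_k$, the transition kernel produced by LTMA, rather than a single fixed kernel. The resolution is that none of the concentration arguments actually care \emph{which} kernel is used: for any fixed $\pi_k^t$ and any fixed $p_k \in \cP$, the rolled-out trajectories are i.i.d.\ with the claimed per-step bounds (Assumption~1 and the uniform mismatch-coefficient bound $M < \infty$ from the sufficient-exploration assumption guarantee these hold uniformly over $\cP$), so the estimators concentrate around $V_{\pi_k^t,p_k}^i(\rho)$ and $\tilde{\mathbf{Q}}_{\pi_k^t,p_k}^\alpha(s,a)$ exactly as in the non-robust case. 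One must be slightly careful that $p_k$ is itself random (it depends on earlier samples), but since the estimation at macro-step $k$ is performed with fresh trajectories conditionally on the realised $(\pi_k^t, p_k)$, the conditional concentration bound holds for every realisation and hence unconditionally after the union bound. I expect writing out this conditioning argument cleanly — so that the dependence of $p_k$ on the history does not break the independence used in Hoeffding — to be the only genuinely delicate part; the rest is a transcription of the bias/variance bookkeeping in \cite{Liu2021a} with $F_{\lambda_k}$ in place of their constraint-count constant.
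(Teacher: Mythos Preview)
Your proposal is correct and follows the same bias-plus-Hoeffding decomposition as the paper. One remark on emphasis: the $p_k$-conditioning you flag as the delicate part is in fact treated as a non-issue in the paper (it is not even mentioned, since for fixed $(\pi_k^t,p_k)$ the trajectories are i.i.d.\ regardless of which $p_k\in\cP$ is used); the step you gloss over --- why the KL regularisation term is bounded by $\cO(\max\{1,\|\lambda_k\|_1\}+\epsilon t_k)$ --- is where the actual work lies, and the paper obtains it by iterating the performance-difference lemma (Lemma~\ref{lem: performance difference}) to get $\tilde{\mathbf{V}}^\alpha_{\pi_k^t}(s) \leq \tilde{\mathbf{V}}^\alpha_{\pi_k}(s) + \sum_{t'<t}\tfrac{2}{1-\gamma}\|\hat{\mathbf{Q}}^\alpha_{\pi_k^{t'}} - \tilde{\mathbf{Q}}^\alpha_{\pi_k^{t'}}\|_\infty$, not from the policies themselves being ``$\epsilon$-accurate.''
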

\begin{proof}
The full proof can be found in Appendix~\ref{app: value function approximation}.
\end{proof}

Denoting $\tilde{\mathbf{V}}$ as the unregularised augmented Lagrangian, we can show a relation between two otherwise unrelated policies based on their divergence to two consecutive policy iterates.
\begin{lemma}[Bound on regularised augmented Lagrangian under approximate entropy-regularised NPG, Lemma~18 in \cite{Liu2021a}]
\label{lem: bound on regularised value NPG}
Assume the same settings as in Lemma~\ref{lem: approximation}. Then the regularised augmented Lagrangian is bounded by
\begin{align}
\label{eq: bound on regularised value NPG}
\tilde{\mathbf{V}}_{\pi_{k+1},p_k}(s) +  \frac{\alpha}{1-\gamma} B_{d_{\rho}^{\pi_{k+1},p_k}}(\pi_{k+1}, \pi_k) \leq \tilde{\mathbf{V}}_{\pi,p_k}(s)+ \frac{\alpha}{1-\gamma} \left( B_{d_{\rho}^{\pi,p_k}}(\pi, \pi_k) - B_{d_{\rho}^{\pi,p_k}}(\pi, \pi_{k+1}) \right) + \Theta(\epsilon)
\end{align}
\end{lemma}
\begin{proof}
The proof makes use of Lemma~\ref{lem: innerloop iterations}. In particular, it notes that $K = \Theta(1/\epsilon)$ and the value difference to $\pi_k^*$ is bounded by $1/K$ after $t_k$ steps. After applying the pushback property (Lemma~\ref{lem: pushback for softmax}) to $\pi$, $\pi_k$, and $\pi_k^*$, and derives an upper bound on the Bregman divergence w.r.t. $\pi_{k}^{*}$ based on $B_{d_{\rho}^{\pi,p_k}}(\pi,\pi_{k+1}) + \norm{\log(\pi_{k}^*) - \log(\pi_{k}^{t+1})}{\infty}$, where the second term is $\Theta(1/K)$.  We refer to Lemma~18 of \cite{Liu2021a} for the detailed proof.  
\end{proof}

The theorem below proves the sample complexity of Robust Sample-based PMD-PD, using Lemma~\ref{lem: approximation} to compute estimates of the Q-values for policy updates and using Lemma~\ref{lem: approximate G} to compute estimates of the G-values for transition kernel updates. This then allows to demonstrate the overall sample complexity for obtaining an average regret in the value, constraint-cost, and Lagrangian bounded by $\cO(\epsilon)$. 
\begin{theorem}[Sample complexity of Robust Sample-based PMD-PD]
\label{th: sample complexity}
Choose parameter settings according to Lemma~\ref{lem: approximation}, let $\alpha = \frac{2\gamma^2 m \eta_{\lambda}}{(1-\gamma)^3}$, $\eta_{\lambda} = 1$, and $\eta = \frac{1 - \gamma}{\alpha}$. Moreover, let $\pi^* \in \arginf_{\pi \in \Pi} \Phi(\pi)$, let $\epsilon_k' > 0$ be the upper bound on the LTMA error tolerance at macro-iteration $k$ such that $\epsilon_k' = \Theta(\epsilon)$, and let $\cP$ be a rectangular uncertainty set according to Lemma~\ref{lem: analysis of the regret difference}\textbf{a)} or \textbf{b)}. Then within a total number of queries to the generative model equal to $T = \tilde{\cO}( \epsilon^{-3})$, Robust Sample-based PMD-PD provides the following guarantees:\\
\textbf{(a) average regret upper bound:} 
\begin{align}
\label{eq: regret bound}
\frac{1}{K}\sum_{k=1}^{K} \left(   V_{\pi_k,p_k}(\rho) - V_{\pi^*,p_k}(\rho)   \right) = \cO(\epsilon) \,.
\end{align}
\textbf{(b) average constraint-cost upper bound:} 
\begin{align}
\label{eq: constraint-cost bound}
\max_{j \in [m]} \frac{1}{K}\sum_{k=1}^{K}  V_{\pi_k,p_k}^j(\rho) =  \cO(\epsilon) \,.
\end{align}
\textbf{c) robust Lagrangian average regret upper bound under (oracle-based) LTMA:}  under the conditions of Lemma~\ref{lem: LTMA}, with a number of LTMA iterations $t_k' = \Theta\left( \frac{F_{\lambda_k} M}{\epsilon_k' (1-\gamma} \right)$, and with $\cP$ defined according to the preconditions of Lemma~\ref{lem: analysis of the regret difference}\textbf{b)}, we have
\begin{align*}
\frac{1}{K} \sum_{k=1}^{K} \Phi(\pi_{k}) - \Phi(\pi^*) &= \cO(\epsilon) \,.
\end{align*}
\textbf{d) robust Lagrangian average regret upper bound under Approximate LTMA:}  under the conditions of Lemma~\ref{lem: sample complexity approximate TMA} as applied to the Lagrangian at each macro-iteration, with $C = F_{\lambda}$, and with $\cP$ defined according to the preconditions of Lemma~\ref{lem: analysis of the regret difference}\textbf{b)}, we have
\begin{align*}
\frac{1}{K} \sum_{k=1}^{K} \Phi(\pi_{k}) - \Phi(\pi^*) &= \cO(\epsilon) \,.
\end{align*}
\end{theorem}
\begin{proof}
A proof sketch is given below for convenience while the full deriviation is given in Appendix~\ref{app: sample complexity}. \\
\textbf{(a)} The proof uses Lemma~\ref{lem: bound on regularised value NPG} and Lemma~\ref{lem: lower_bound inner product} to obtain a telescoping sum similar to the analysis of Theorem~3 in \cite{Liu2021}. To account for transition kernel changes, it then upper bounds the cumulative regret difference using Lemma~\ref{lem: analysis of the regret difference}) and the Bregman divergence terms in the RHS of Eq.~\ref{eq: bound on regularised value NPG} using Lemma~\ref{lem: bregman term}. This then results in a telescoping sum.\\ 
\textbf{(b):} The proof uses the update rule of the dual variable and its relation to the approximate constraint-cost to obtain a telescoping sum and an additional term $\cO(\epsilon)$.\\
\textbf{c):} The proof uses Lemma~\ref{lem: robust-objective regret}  to upper bound the robust Lagrangian regret with respect to $(\pi^*,p^*,\lambda^*)$ by using the regret bounds with respect obtained in \textbf{a)} and \textbf{b)}. Noting that the regret from oracle-based LTMA (Lemma~\ref{lem: LTMA}) is at most $\epsilon_k' = \cO(\epsilon)$ for each $k$, and that the average regret induced by the dual variable is bounded by $\cO(\epsilon)$ (via Lemma~\ref{lem: dual variable induced error}). \\
\textbf{d):} The proof is analogous to part \textbf{c)} but now applies Approximate TMA using the settings of Lemma~\ref{lem: sample complexity approximate TMA}. \\
\textbf{Sample complexity:} To obtain the sample complexity under Algorithm~\ref{alg: discrete} with Approximate TMA, plug in the settings from Lemma~\ref{lem: approximation} and Lemma~\ref{lem: sample complexity approximate TMA} to obtain
\begin{equation}
\label{eq: sample complexity}
T = \sum_{k=1}^{K} \left(M_{V,k} N_{V,k} + \sum_{t=0}^{t_k -1} M_{Q,k} N_{Q,k} + \sum_{t=0}^{t_k' -1} M_{G,t} N_{G,t} \right) = \tilde{\cO}(\epsilon^{-3}) \,.
\end{equation}
\end{proof}

\subsection{Extension to non-rectangular uncertainty sets}
\label{sec: non-rectangular}
Similar to arguments made by \cite{Xiao2022} for Policy Mirror Descent, oracle-based TMA has a direct relation to the policy iteration algorithm.
\begin{lemma}[Relation between TMA and Policy Iteration]
Let $\eta_p(t) \to \infty$ for all $t \in \{0,\dots,t_k' -1\}$. Then under the direct parametrisation, TMA is equivalent to Policy Iteration as defined for transition kernels in \cite{Li2023}.
\end{lemma}
\begin{proof}
Note that for $\eta_p(t) \to \infty$, applying the TMA update rule, analogous to Eq.~\ref{eq: approximate TMA update},
\begin{align*}
p^{t+1}(\cdot \vert s,a) &= \argsup_{p \in \cP} \eta_p(t) \frac{1}{1-\gamma} d_{\rho}^{\pi,p^t}(s,a) G_{\pi,p^t}(s,a,\cdot)  - \alpha_p B_{d_{\rho}^{\pi,p^t}}(p,p^t) \\
                        &= \argsup_{p \in \cP} \left\langle p,G_{\pi,p^t}(s,a,\cdot)\right\rangle  - \frac{\alpha_p}{\eta_p(t)} B(p,p^t) \tag{reweighting similar to Eq.~\ref{eq: mirror descent softmax policy}} \\
                        &= \argsup_{p \in \cP} \left\langle p,G_{\pi,p^t}(s,a,\cdot)\right\rangle \,,
\end{align*}
which is equivalent to policy iteration. The latter can also be written as $\argsup_{p \in \cP} \left\langle p - p^t,G_{\pi,p^t}(s,a,\cdot)\right\rangle $, which based on Lemma~\ref{lem: partial derivative TMA} is equivalent to 
\begin{equation}
\label{eq: direction finding}
\argsup_{p \in \cP} \left\langle p - p^t,\nabla V_{\pi,p^t}(\rho)\right\rangle    
\end{equation}
as in Eq.~3.4 of \cite{Li2023}.
\end{proof}

\begin{algorithm}
\caption{Conservative Policy Iteration over transition kernels using Approximate TMA}\label{alg: CPI}
\begin{algorithmic}[1]
\State Inputs: $\epsilon' > 0$, policy $\pi \in \Pi$, value function $V: \cS \to \bR$, initial transition kernel $p^0$, compact and convex uncertainty set $\cP$, learning rate $\eta_p(t) \to \infty$ for all $t \geq 0$.
\While{True}
\State One step of Approximate TMA with learning rate $\eta_p(t)$ to obtain $p_{\epsilon'}$ as $\epsilon'$-optimal solution to Eq.~\ref{eq: direction finding}.
\State Compute Frank-Wolfe gap $\bG_t = \langle \nabla V_{\pi,p^t}(\rho), p_{\epsilon'} - p^t \rangle$.
\If{$\bG_t \leq \epsilon'$}
\State \textbf{return} $\hat{p} = p^{t}$.
\Else
\State Let $\beta_t = \bG_t \frac{(1-\gamma)^3}{4\gamma^2}$
\State Let $p^{t+1} = (1 - \beta_t) p^{t} + \beta_t p_{\epsilon'}$.
\State $t \gets t + 1$.
\EndIf
\EndWhile
\end{algorithmic}
\end{algorithm}

Exploiting the above relation, and using Conservative Policy Iteration to transition kernels according to Algorithm~\ref{alg: CPI}, we can obtain the sample complexity under non-rectangular RCMDPs using arguments by \mbox{\cite{Li2023}}. In particular, while \cite{Wangd} provide an analysis of TMA only for $s$-rectangular RMDPs and derives oracle-based convergence rates for the policy optimiser, it is possible to extend the analysis to non-rectangular uncertainty set by relating it to the smallest $s$-rectangular uncertainty set that contains it. With this reasoning, the result below considers the Appproximate TMA algorithm for non-rectangular RCMDPs and then derives a full sample-based analysis of the full algorithm optimising all three parameters $(\pi,p,\lambda)$. The Algorithm~\ref{alg: CPI} requires $t_k' = \cO(\epsilon^{-2})$ iterations of the Approximate TMA algorithm. Therefore, with derivations as in Theorem~\ref{th: sample complexity}, a sample complexity of $\tilde{\cO}(\epsilon^{-5})$ is obtained. In addition to the added complexity, the non-rectangularity also comes with an additional error term based on the degree of non-rectangularity $\delta_{\cP}$, defined as 
\begin{equation}
\label{eq: degree}
\delta_{\cP} = \max_{p' \in \cP} \left( \max_{p_s \in \cP_s} \langle \nabla \mathbf{V}_{\pi,p'}, p_s - p' \rangle - \max_{p \in \cP} \langle \nabla \mathbf{V}_{\pi,p'} , p - p' \rangle \right) \,,
\end{equation}
where $\cP_s$ is the smallest $s$-rectangular uncertainty set that contains $\cP$. In the algorithm, the supremum and maximum are equivalent since $\cP$ and $\cP_s$ are assumed to be compact. Using compactness and convexity of these sets, Remark 3 of \cite{Li2023} shows that $\delta_{\cP}$ is finite using Berge's maximum theorem.

\begin{theorem}[Sample complexity for non-rectangular RCMDPs]
\label{cor: sample complexity non-rectangular}
Let $\cP$ be a non-rectangular uncertainty set according to Lemma~\ref{lem: analysis of the regret difference} \textbf{a)},\textbf{b)} or \textbf{c)}. Further, assume the preconditions of Theorem~\ref{th: sample complexity} and Lemma~\ref{lem: CPI}. Then applying Algorithm~\ref{alg: CPI} to solve the inner problem on the Lagrangian, Robust Sample-based PMD-PD obtains an average regret of
\begin{align*}
\frac{1}{K} \sum_{k=1}^{K} \Phi(\pi_{k}) - \Phi(\pi^*) &= \cO(\epsilon + \delta_{\cP}) \,
\end{align*}
within $\tilde{\cO}(1/\epsilon^5)$ samples.
\end{theorem}
\begin{proof}
Let $k \in [K]$. Applying Lemma~\ref{lem: CPI} to $\pi_k$, Algorithm~\ref{alg: CPI} obtains a solution $p_k = \hat{p}$ such that
\begin{align*}
\mathbf{V}_{\pi_k,p^*}(\rho; \lambda_{k-1}) - \mathbf{V}_{\pi_k,p_k}(\rho; \lambda_{k-1}) \leq M (2 \epsilon_k' + \delta_{\cP}) =  \cO(\epsilon + \delta_{\cP}) 
\end{align*}
 within $t_k' = \tilde{\cO}(n/\epsilon^2)$ samples, where $n$ indicates the number of samples to evaluate Eq.~\ref{eq: direction finding}. 
Following the same argumentation as in Theorem~\ref{th: sample complexity}\textbf{d)} and adding the additional term $\cO(\delta_{\cP})$,
\begin{align*}
\frac{1}{K} \sum_{k=1}^{K} \left( \Phi(\pi_{k}) - \Phi(\pi^*) \right) \leq \frac{1}{K} \sum_{k=1}^{K} \left( \mathbf{V}_{\pi_{k},p_k}(\rho; \lambda_{k-1}) + \cO(\epsilon + \delta_{\cP}) +  \Delta_{\lambda_k} - \mathbf{V}_{\pi^*,p_k; \lambda_{k-1}}(\rho) \right) = \cO(\epsilon + \delta_{\cP})\,.
\end{align*}
Since $n = \tilde{\cO}(1/\epsilon^2)$ in Approximate TMA, substituting $\sum_{k=1}^{K} t_k' n = \tilde{\cO}(1/\epsilon^5)$ into Eq.~\ref{eq: sample complexity}, the term dominates the sum, resulting in a $T = \tilde{\cO}(1/\epsilon^5)$ sample complexity.
\end{proof}

\subsection{KL-based Mirror Descent in continuous state-action spaces}
\label{sec: KL continuous}
To generalise the algorithm to continuous state-action spaces, we follow a similar algorithm but replace the tabular approximation with function approximation (see Algorithm~\ref{alg: continuous}). A further required change is related to the weighted Bregman divergence term formulated in Eq.~\ref{eq: weighted bregman}. While the weighted Bregman divergence is not a proper Bregman divergence over policies, it is over the occupancy distribution, i.e. $B_{d_{\rho}^{\pi,p}}(\pi,\pi') = B(d_{\rho}^{\pi,p},d_{\rho}^{\pi',p})$, as shown in the pseudo-KL divergence in Lemma~10 of \cite{Liu2021a}. To maintain similar theoretical results in continuous state-action spaces, we extend the discrete pseudo-KL divergence of occupancy to a continuous pseudo-KL divergence of occupancy. 
\begin{definition}
\textbf{Continuous pseudo-KL divergence of occupancy.} Define the generating function $h: \Delta^A \to \mathbb{R}$ according to 
\begin{align*}
h(d_{\rho}^{\pi,p}) = \int_{\cS \times \cA} d_{\rho}^{\pi,p}(s,a) \log(d_{\rho}^{\pi,p}(s,a))  \rmd s \rmd a    -  \int_{\cS}     d_{\rho}^{\pi,p}(s) \log(d_{\rho}^{\pi,p}(s)) \rmd s \,.
\end{align*}
Then its Bregman divergence is given by the continuous pseudo KL-divergence, defined as
\begin{equation}
\label{eq: weighted bregman continuous}
B(d_{\rho}^{\pi,p}, d_{\rho}^{\pi',p} ; h) := \int_{\cS \times \cA} d_{\rho}^{\pi,p}(s,a) \log\left(\frac{d_{\rho}^{\pi,p}(s,a)/d_{\rho}^{\pi,p}(s)}{d_{\rho}^{\pi',p}(s,a)/d_{\rho}^{\pi',p}(s)}\right) \rmd s \rm \,.
\end{equation}
Moreover, it is equivalent to a weighted Bregman divergence over policies $B_{d_{\rho}^{\pi,p}}(\pi,\pi')$ over continuous state-action spaces.
\end{definition}
\begin{proof}
From the definition in Eq.\ref{eq: weighted bregman continuous} and $\nabla h(d_{\rho}^{\pi,p})\vert_{s,a} = \log(d_{\rho}^{\pi,p}(s,a)) -\log(d_{\rho}^{\pi,p}(s)) $ it follows that
\begin{align*}
B( d_{\rho}^{\pi,p}, d_{\rho}^{\pi',p} ; h) &= h(d_{\rho}^{\pi,p}) - h(d_{\rho}^{\pi'}) - \left\langle \nabla h(d_{\rho}^{\pi',p}), d_{\rho}^{\pi,p} - d_{\rho}^{\pi',p} \right\rangle   \\   
                        &= \int_{\cS \times \cA} d_{\rho}^{\pi,p}(s,a) \log(d_{\rho}^{\pi,p}(s,a))  \rmd s \rmd a    -  \int_{\cS}     d_{\rho}^{\pi,p}(s) \log(d_{\rho}^{\pi,p}(s)) \rmd s \\
                        & - \int_{\cS \times \cA} d_{\rho}^{\pi'}(s,a) \log(d_{\rho}^{\pi',p}(s,a))  \rmd s \rmd a    +  \int_{\cS}     d_{\rho}^{\pi',p}(s) \log(d_{\rho}^{\pi',p}(s)) \rmd s \\
                      & - \int \left(d_{\rho}^{\pi,p}(s,a) - d_{\rho}^{\pi'}(s,a)\right) \left(\log(d_{\rho}^{\pi',p}(s,a)) -\log(d_{\rho}^{\pi',p}(s)) \right) \rmd s \rmd a \\
                      & = \int_{\cS \times \cA} d_{\rho}^{\pi,p}(s,a) \log(d_{\rho}^{\pi,p}(s,a)/d_{\rho}^{\pi',p}(s,a)) \rmd s \rmd a  -  \int_{\cS}     d_{\rho}^{\pi,p}(s) \log(d_{\rho}^{\pi,p}(s)/d_{\rho}^{\pi',p}(s)) \rmd s \\
                    &= \int_{\cS \times \cA} d_{\rho}^{\pi,p}(s,a) \log\left(\frac{d_{\rho}^{\pi,p}(s,a)/d_{\rho}^{\pi,p}(s)}{d_{\rho}^{\pi',p}(s,a)/d_{\rho}^{\pi',p}(s)}\right) \rmd s \rmd a
\end{align*}    
The equivalence to the weighted Bregman divergence over policies is shown by extending Eq.~\ref{eq: weighted bregman} to continuous state-action spaces and noting that $d_{\rho}^{\pi,p}(s,a) = d_{\rho}^{\pi,p}(s) \pi(a \vert s)$.
\end{proof}

\subsection{MDPO-Robust-Lagrangian: a practical implementation}
\label{sec: practical implementation}
To implement the algorithm in practice, we use MDPO \citep{Tomar2022} as the policy optimiser. On-policy MDPO optimises the objective 
\begin{align}
\pi_{k+1} \gets \argmax_{\theta} J_{\text{MDPO}} := \bE_{s\sim d_{\rho}^{\pi_k,p_k}(s)}\left[\bE_{a\sim \pi_{\theta}}[\hat{A}_{\pi_{k},p_k}(s, a)] - \alpha D_{\text{KL}}(\pi_{\theta}(\cdot \vert s), \pi_{k}(\cdot \vert s)) \right]
\end{align}
based on $t_k$ SGD steps per batch $k$, which corresponds to the macro-iteration. For macro-iteration $k$, it defines the policy gradient for $\theta = \theta_k^t$ at any iteration $t \in [t_k]$ as
\begin{align}
\nabla J_{\text{MDPO}}(\theta, \theta_{k}) =  \bE_{s\sim d_{\rho}^{\pi_{\theta_k},p_k}(s)}\left[ \bE_{a\sim \pi_{\theta_k}}\left[\frac{\pi_{\theta}(a \vert s)}{\pi_{\theta_k}(a \vert s)} \nabla_{\theta} \log(\pi_{\theta}(a \vert s))\hat{A}_{\pi_{\theta_k},p_k}(s, a)\right] - \alpha \nabla_{\theta} D_{\text{KL}}(\pi_{\theta}(\cdot \vert s), \pi_{\theta_k}(\cdot \vert s)) \right] \,,
\end{align}
where in our case the advantage $\hat{A}_{\pi_{\theta_k},p_k}(s, a)$ is implemented based on the Lagrangian. The use of the importance weight $\frac{\pi_{\theta}}{\pi_{\theta_k}}$ is used in MDPO to correct for the deviation from mirror descent theory, where it is assumed that the data at each epoch is generated from $\pi_{\theta}$ rather than $\pi_{\theta_k}$. In line with our theory, our implementation performs the critic updates together with policy updates,  which deviates from the original MDPO implementation. There still remains some gap of the practical implementation to our theoretical algorithm in that the state occupancy of MDPO is given by the old policy whereas in theory it is based on the current. The implementation of advantage values is based on generalised advantage estimation \citep{schulman2018} and the critic estimates the value and the constraint-costs based on an MLP architecture with $1+m$ outputs. The update of the multipliers is based on the observed constraint-costs in separate samples. 

To form a robust-constrained variant of MDPO, we use Lagrangian relaxation similar to the above (i.e. Algorithm~\ref{alg: discrete}). The algorithm considers multiple constraints and preserves the structure of the PMD-PD algorithm \citep{Liu2021a}. The LTMA algorithm implementation follows the TMA implementation from \cite{Wangd}, which applies a projected gradient descent based on clipping to the uncertainty set bounds. Due to its similarity to PPO-Lagrangian (or PPO-Lag for short) \citep{Ray2019} and the use of updates in the robust Lagrangian, we name the algorithm \textbf{MDPO-Robust-Lagrangian} (or MDPO-Robust-Lag for short). While MDPO-Robust-Lag is based on traditionally clipping the multiplier, we also implement \textbf{MDPO-Robust-Augmented-Lag}, which uses the augmented Lagrangian according to the theory (as summarised in Algorithm~\ref{alg: discrete} and ~\ref{alg: continuous}). The practical implementation of the KL estimate follows the standard implementation in StableBaselines3, i.e. $D_{\text{KL}}(\pi(\cdot \vert s),\pi'(\cdot \vert s)) \approx \bE\left[ e^{\log(\pi(a \vert s)) - \log(\pi'(a \vert s))} - 1 - (\log(\pi(a \vert s)) - \log(\pi'(a \vert s)))\right]$.

\section{Experiments}
\label{sec: experiments}
To demonstrate the use of MDPO for RCMDPs, we assess MDPO-Robust-Lag on three RCMDP domains. We compare MDPO-Robust-Lag to robust-constrained algorithms from related algorithm families, namely Monte Carlo based mirror descent (\textbf{RMCPMD} \citep{Wangd} in particular) and proximal policy optimisation with function approximation (\textbf{PPO-Lag} \citep{Achiam2017} in particular). All the involved algorithm families (MDPO, MCPMD, and PPO) are included with four variants, namely with and without Lagrangian (indicated by the -Lag suffix) and with and without robustness training (indicated by the R- prefix or the -Robust- infix). Additionally, the MDPO-Lag implementations also have Augmented-Lag variants which use the augmented Lagrangian as proposed in the theory section. The implementation details of the algorithms included in the study can be found in Appendix~\ref{app: implementation}. The code for all experiments can be found at  \url{https://github.com/bossdm/MDPO-RCMDP}.

After training with the above-mentioned algorithms, the agents are subjected to a test which presents transition dynamics based on the distortion levels setup in \cite{Wangd}. The test procedure follows the same general procedure for all domains. For distortion level $x \in [0,1]$, the agent is subjected to a perturbation such that if the nominal model is $\xi_c$ and the uncertainty set varies the parameter in $[\underline{\xi},\overline{\xi}]$, the transition dynamics kernel parameter of the test with distortion level $x$ is given for each dimension by either $\xi_i = \xi_c + x^2 (\overline{\xi} - \xi_c)$ or $\xi_i = \xi_c + x^2 (\underline{\xi} - \xi_c)$. We slightly improve the test evaluation by considering all the $\pm$ directions rather than just a single one, amounting to a total number of test evaluations $N = n_{\text{test}} \times 2^{n}$, where $n$ is the dimensionality of the uncertainty set parametrisation and $n_{\text{test}}$ is the number of episodes per perturbation. 

To summarise the results, we use the penalised return, a popular summary statistic for robust constrained RL \citep{Mankowitz2020a,Bossens2024}. The statistic is defined for a maximisation problem as $R_{\text{pen}}= V(\rho) - \lambda_{\text{max}} \sum_{j=1}^{m} \max(0,C_j(\rho))$ where $V(\rho)$ denotes the value (negative of the cost) from the starting distribution and $C_j(\rho)$ denotes the $j$'th constraint-cost from the starting distribution. In addition, we also formulate the \textit{signed penalised return} $R_{\text{pen}}^{\pm}= V(\rho) - \sum_{j=1}^{m} \lambda_{\text{max}} C_j(\rho)$. We note that the penalised return matches the objective of the constrained methods and the return matches the objective of the unconstrained methods. The signed penalised return might be especially useful for robust problems; as even our extensive testing procedure may not find the worst case in the uncertainty set, and it is also of interest to generalise even beyond the uncertainty set, a solution that has negative cost can be evaluated more positively since it is more likely not to have a positive cost in the worst-case transition kernel. The summary statistics reported included both the mean and the minimum since the minimum indicates the effectiveness in dealing with the minimax problem (the worst-case robustness). However, as a note of caution, the minimum is challenging to establish accurately because even though we test many environments it may still not contain the worst-case environment.

We report the results based on two distinct sample budgets, one with a limited number of time steps (200--500 times the maximal number of time steps in the episode) and one with a large number of time steps (2000--5000 times the maximal number of time steps in the episode). This setup allows to assess the sample-efficiency and convergence properties more clearly.

An additional set of experiments is to establish which kind of schedule for $\alpha_k$ works best for MDPO algorithms, and the robust(-constrained) variants in particular (see Apppendix~\ref{app: schedules}). From this set of experiments, it is confirmed that a fixed setting works reasonably well across problems. The best overall performance for robust-constrained RL is obtained by MDPO-Robust-Augmented-Lag using a batch-based linear schedule with restarts, and a geometric schedule achieves high performance with MDPO-Robust-Lag. For simplicity, the fixed schedule is used for all MDPO based algorithms throughout the remaining experiments.

\subsection{Cartpole}
To assess our algorithm on an RCMDP, we introduce the robust constrained variant of the well-known Cartpole problem \citep{Barto1983,Brockman2016} by modifying the RMDP from \cite{Wangd}.
The problem involves a mechanical simulation of a frictionless cart moving back and forth to maintain a pole, which is attached to the cart as an un-actuated joint, in the upright position. The agent observes $x, \dot{x},\theta,\dot{\theta}$ and takes actions in $\{\text{left},\text{right}\}$, which apply a small force moving either left or right. The agent receives a reward of $+1$ until either the cart goes out of the $[-2.4,2.4]$m bounds or the pole has an angle outside the range $[-12^{\circ},12^{\circ}]$. In contrast to the traditional problem, the mechanics are not deterministic, and following \cite{Wangd}, transitions dynamics models are multi-variate Gaussians of the form
\begin{equation}
p(s' \vert s,a) = \frac{1}{(2\pi)^{2} \vert \Sigma \vert^{1/2}} e^{- \frac{1}{2} (s' -  \mu_c(s,a))^{\intercal} \Sigma^{-1} (s' -  \mu_c(s,a))} \,,
\end{equation}
where $\mu_c(s,a)$ is the deterministic next state given $(s,a) \in \cS \times \cA$ based on the mechanics of the original Cartpole problem. We first  train and evaluate algorithms in a CMDP problem for their ability to adhere to safety constraints. We then also show the benefits of robustness training and evaluate algorithms in an RCMDP problem. 

To formulate safety constraints into the above Cartpole problem, we define the instantanous constraint-cost as $c_t = \vert \dot{x}_t \vert - d$, where $d$ is a constant, set to $d=0.15$ in the experiments, and $\dot{x}$ is the velocity of the cart. The safety constraint of the agent is to maintain constraint $C = \bE[\sum_t \gamma^t c_t] \leq 0$. 

To incorporate robustness into our setting, we introduce delta-perturbations as in \cite{Wangd}, using an uncertainty set with parametrisation
\begin{equation}
p(s' \vert s,a) = \frac{1}{(2\pi)^{n/2} \vert \Sigma \vert^{1/2}} e^{- \frac{1}{2} (s' - (1+\delta) \mu_c(s,a))^{\intercal} \Sigma^{-1} (s' - (1+\delta) \mu_c(s,a))} \,,
\end{equation}
where $\delta \in [-\kappa_i,\kappa_i]_{i=1}^{n}$. Our parameter settings differ in that we use a larger uncertainty set, with $\kappa_i$ being increased fivefold (making the problem more challenging) while the number of time steps is reduced to make the constraint satisfiable. The robust algorithms are trained on this uncertainty set, where the transition dynamics are adjusted based on the transition mirror ascent algorithm of \cite{Wangd}, i.e. by using Monte Carlo and mirror ascent. In the experiment, this amounts to a projected update, where the projection restricts the update to lie within the uncertainty set.

\paragraph{Training performance} 
The training performance of non-robust algorithms can be found in Appendix~\ref{app: training Cartpole} (Figure~\ref{fig: CMDP training}). The training performance of robust methods, which use LTMA, can be found in Figure~\ref{fig: RCMDP training}. A trade-off in reward vs constraint-cost can be observed; that is, the constrained algorithms put emphasis on reducing the constraint-cost while the unconstrained only maximise the reward. The plots demonstrate the rapid convergence of the MDPO-based algorithms.

\begin{figure}[htbp!]
    \centering
     \subfloat[Reward]{\includegraphics[width=0.24\linewidth]{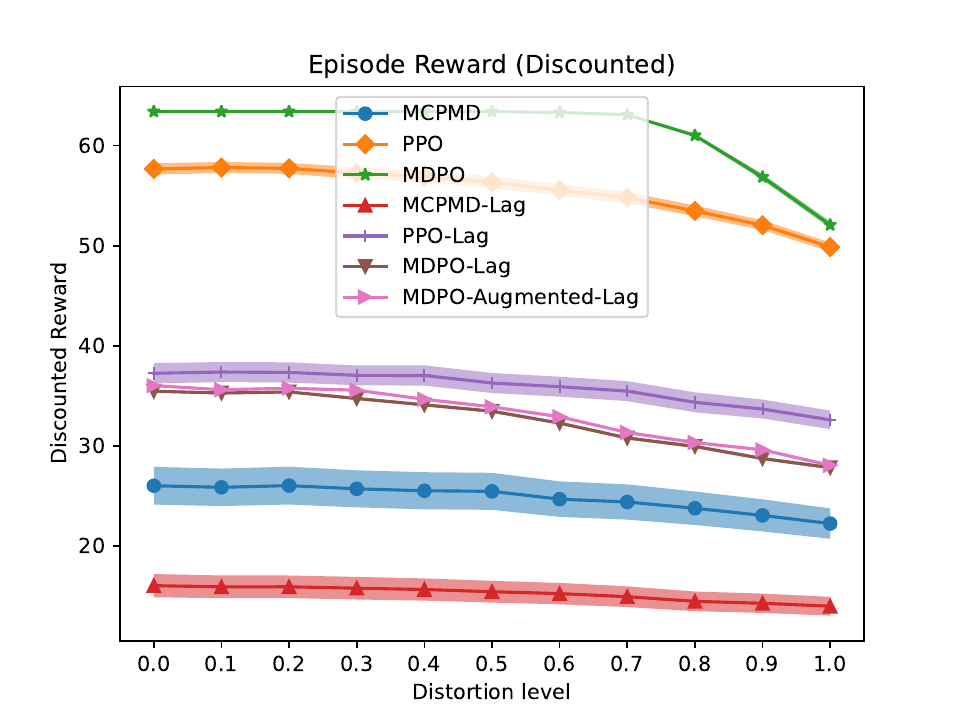}}
       \subfloat[Constraint-cost]{ \includegraphics[width=0.24\linewidth]{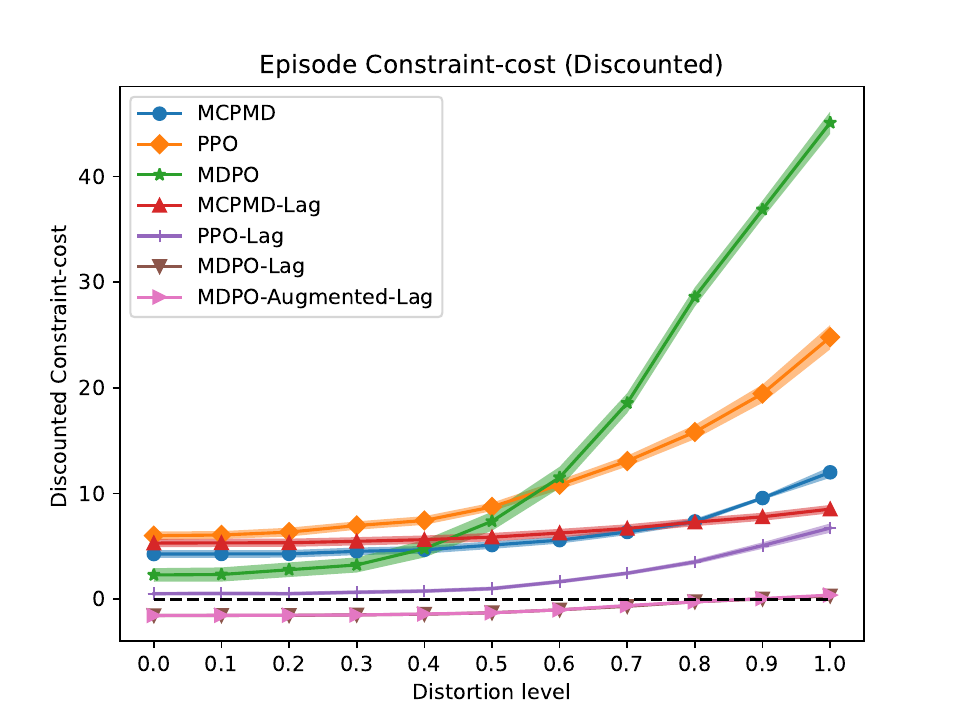}}
   \caption{Test performance of MDP and CMDP algorithms obtained by applying the learned determinstic policy from the Cartpole domain after 20,000 time steps of training. The line and shaded area represent the mean and standard error across the perturbations for the particular distortion level.}
    \label{fig: non-robust test}
\end{figure}

\paragraph{Test performance}
Figure~\ref{fig: non-robust test} and ~\ref{fig: robust test} show the test performance after 20,000 time steps depending on the distortion level. MDPO-Robust and PPO achieve the highest performance on the reward. However, since they do not optimise the Lagrangian, they perform poorly on the constraint-cost. MDPO-Robust-Augmented-Lag achieves the best overall performance by providing the lowest constraint-cost by far across all distortion levels. In the test performance after 200,000 time steps (see Figure~\ref{fig: non-robust test long run} and ~\ref{fig: robust test long run} of Appendix~\ref{app: test Cartpole}), the MCPMD algorithms are able to get to similar performance levels, and all three baselearners are competitive.

\begin{figure}[htbp!]
    \centering
     \subfloat[Reward]{\includegraphics[width=0.24\linewidth]{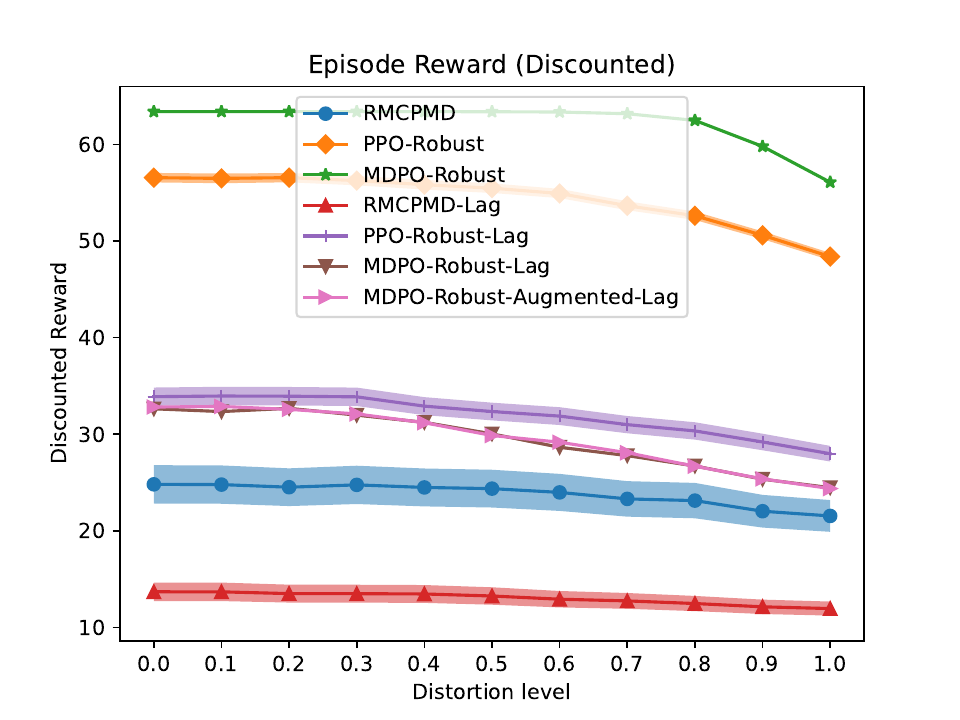}}
       \subfloat[Constraint-cost]{ \includegraphics[width=0.24\linewidth]{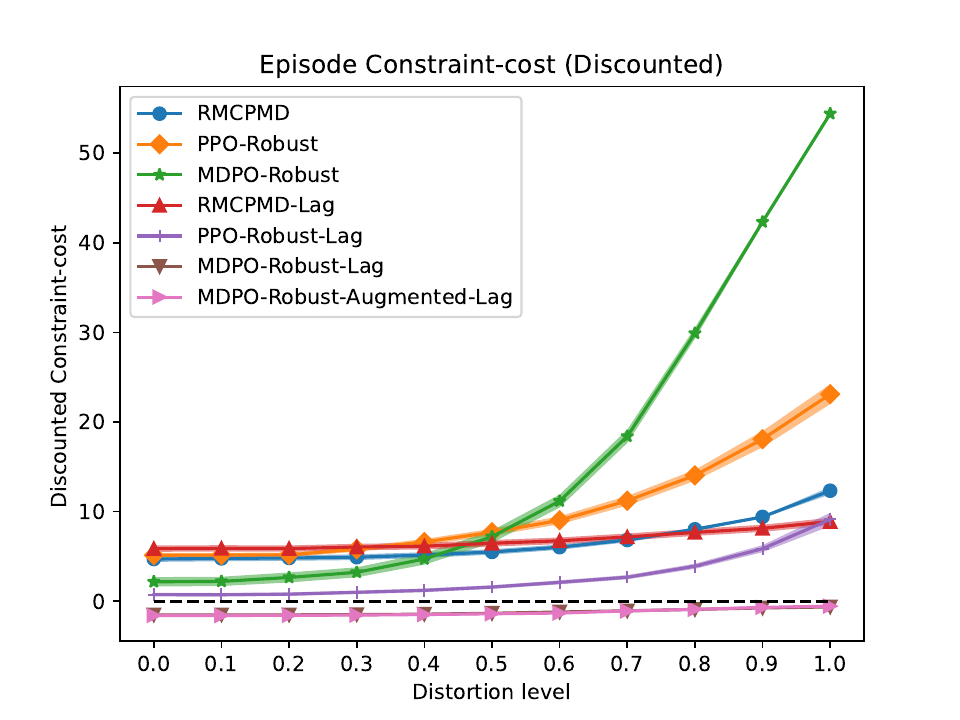}}
   \caption{Test performance of RMDP and RCMDP algorithms obtained by applying the learned determinstic policy from the Cartpole domain after 20,000 time steps of training. The line and shaded area represent the mean and standard error across the perturbations for the particular distortion level.}
    \label{fig: robust test}
\end{figure}

To summarise the overall performance quantitatively, Table~\ref{tab: Rpen Cartpole} shows that MDPO-Robust is the top performer on the return after 20,000 time steps. MDPO-Robust-Lag and MDPO-Lag algorithms obtain similar levels of performance on the mean penalised return statistics, although the former obtain an improved minimum score, and both are far ahead of PPO-Lag and MCPMD-Lag algorithms. In Table~\ref{tab: Rpen Cartpole long run}, the performance after 200,000 time steps can be observed. MDPO-Robust-Lag, with or without augmentation, is the top performer on the mean penalised return metrics, followed by non-robust MDPO-Lag. MCPMD-Lag and RMCPMD-Lag perform highest on the minimum performance, and are followed by MDPO-Robust-Lag algorithms. In summary, the solutions converge to relatively similar test performance levels but MDPO-based algorithms are superior in terms of sample efficiency.

\begin{table}[htbp!]
    \centering
\caption{Test performance on the return and penalised return statistics in the Cartpole domain based on 11 distortion levels, 16 perturbations per level, 10 seeds, and 100 evaluations per test. The mean score is the grand average over distortion levels, perturbations, seeds, and evaluations. The standard error and minimum are computed across the different environments (i.e. distortion levels and perturbations), indicating the robustness of the solution to changes in the environment. Bold indicates the top performance and any additional algorithms that are within one pooled standard error.}
\begin{subtable}[h]{0.49\textwidth}
\caption{After 20,000 time steps of training}
\label{tab: Rpen Cartpole}
\resizebox{1.0\linewidth}{!}{%
\begin{tabular}{l l l l l l l}
    \toprule
& Return &  & $R_{\text{pen}}^{\pm}$ (signed) &   &  $R_{\text{pen}}$ (positive) &  \\ 
\midrule
& Mean $\pm$ SE & Min  & Mean $\pm$ SE & Min  & Mean $\pm$ SE & Min \\ 
\hline
MCPMD &  26.7 \pmt{0.1} & 23.3 &  -196.6 \pmt{22.3} &  -523.9 &  -209.6 \pmt{20.9} &  -525.9 \\ 
PPO &  58.1 \pmt{0.1} & 51.3 &  -262.1 \pmt{76.8} &  -1121.3 &  -290.4 \pmt{72.7} &  -1122.3 \\ 
MDPO &  62.9 \pmt{0.1} & 52.4 &  -210.6 \pmt{152.3} &  -2015.7 &  -270.1 \pmt{144.3} &  -2016.0 \\ 
\hline 
RMCPMD &  25.4 \pmt{0.1} & 22.3 &  -216.8 \pmt{22.0} &  -538.5 &  -228.2 \pmt{20.4} &  -539.0 \\ 
PPO-Robust &  57.1 \pmt{0.2} & 50.3 &  -225.3 \pmt{70.5} &  -985.3 &  -250.8 \pmt{66.9} &  -987.0 \\ 
MDPO-Robust &  \textbf{63.1} \pmt{0.1} & \textbf{56.4} &  -229.7 \pmt{179.6} &  -2610.6 &  -294.1 \pmt{171.4} &  -2610.8 \\ 
\hline 
MCPMD-Lag &  16.6 \pmt{0.1} & 14.0 &  -247.3 \pmt{11.2} &  -383.7 &  -259.3 \pmt{10.9} &  -392.5 \\ 
PPO-Lag &  39.1 \pmt{0.1} & 33.9 &  2.4 \pmt{17.2} &  -217.4 &  -23.6 \pmt{15.1} &  -229.3 \\ 
MDPO-Lag &  37.0 \pmt{0.2} & 28.0 &  \textbf{106.1} \pmt{6.9} &  23.2 &  \textbf{35.7} \pmt{1.6} &  11.9 \\ 
MDPO-Augmented-Lag &  37.5 \pmt{0.2} & 28.3 &  \textbf{105.6} \pmt{7.2} &  18.4 &  \textbf{35.9} \pmt{1.7} &  7.2 \\ 
\hline 
RMCPMD-Lag &  14.2 \pmt{0.1} & 12.0 &  -276.7 \pmt{10.8} &  -408.6 &  -287.7 \pmt{10.9} &  -419.5 \\ 
PPO-Robust-Lag &  35.5 \pmt{0.2} & 28.8 &  -12.2 \pmt{22.0} &  -376.9 &  -39.4 \pmt{19.8} &  -387.5 \\ 
MDPO-Robust-Lag &  34.4 \pmt{0.3} & 24.9 &  \textbf{108.4} \pmt{4.4} &  \textbf{65.9} &  34.2 \pmt{1.2} &  \textbf{24.9} \\ 
MDPO-Robust-Augmented-Lag &  34.6 \pmt{0.3} & 24.7 &  \textbf{110.8} \pmt{4.9} & \textbf{ 61.6} &  \textbf{34.6} \pmt{1.2} &  \textbf{24.7} \\ 
\bottomrule
\end{tabular}
}
\end{subtable}
\begin{subtable}[h]{0.49\textwidth}
\caption{After 200,000 time steps of training}
\label{tab: Rpen Cartpole long run}
\resizebox{1.0\linewidth}{!}{%
    \begin{tabular}{l l l l l l l}
    \toprule
& Return &  & $R_{\text{pen}}^{\pm}$ (signed) &   &  $R_{\text{pen}}$ (positive) &  \\ 
\midrule
& Mean $\pm$ SE & Min  & Mean $\pm$ SE & Min  & Mean $\pm$ SE & Min \\ 
\hline
MCPMD &  63.0 \pmt{0.1} & 54.1 &  -316.3 \pmt{167.8} &  -2372.3 &  -364.0 \pmt{161.2} &  -2372.5 \\ 
PPO &  \textbf{63.1} \pmt{0.1} & \textbf{56.4} &  -252.2 \pmt{178.6} &  -2474.0 &  -307.2 \pmt{171.1} &  -2474.2 \\ 
MDPO &  62.7 \pmt{0.2} & 46.3 &  -96.2 \pmt{120.7} &  -1594.5 &  -159.5 \pmt{112.4} &  -1594.5 \\ 
\hline 
RMCPMD &  \textbf{63.1} \pmt{0.1} & \textbf{56.5} &  -175.2 \pmt{181.9} &  -2672.5 &  -241.0 \pmt{173.6} &  -2672.9 \\ 
PPO-Robust &  62.8 \pmt{0.1} & 55.2 &  -191.0 \pmt{148.9} &  -2036.3 &  -251.0 \pmt{140.9} &  -2037.4 \\ 
MDPO-Robust &  \textbf{63.2} \pmt{0.1} & 56.1 &  -109.7 \pmt{176.0} &  -2647.1 &  -185.5 \pmt{167.1} &  -2648.3 \\ 
\hline 
MCPMD-Lag &  29.7 \pmt{0.3} & 21.6 &  97.4 \pmt{3.9} &  \textbf{72.0} &  29.7 \pmt{1.2} &  \textbf{21.6} \\ 
PPO-Lag &  41.0 \pmt{0.2} & 33.8 &  78.4 \pmt{10.3} &  -55.5 &  32.0 \pmt{5.8} &  -58.2 \\ 
MDPO-Lag &  44.5 \pmt{0.2} & 37.8 &  90.9 \pmt{11.7} &  -38.9 &  \textbf{36.0} \pmt{5.5} &  -42.4 \\ 
MDPO-Augmented-Lag &  45.2 \pmt{0.2} & 38.4 &  86.0 \pmt{12.5} &  -54.4 &  \textbf{35.1} \pmt{6.6} &  -56.3 \\ 
\hline 
RMCPMD-Lag &  29.4 \pmt{0.3} & 22.0 &  89.4 \pmt{3.7} &  \textbf{70.8} &  29.1 \pmt{1.1} & \textbf{21.9} \\ 
PPO-Robust-Lag &  39.4 \pmt{0.2} & 32.1 &  84.2 \pmt{9.0} &  -28.9 &  32.1 \pmt{4.5} &  -35.6 \\ 
MDPO-Robust-Lag &  41.6 \pmt{0.2} & 32.1 &  \textbf{105.1} \pmt{9.5} &  9.6 &  \textbf{38.6} \pmt{2.5} &  2.6 \\ 
MDPO-Robust-Augmented-Lag &  41.2 \pmt{0.2} & 31.4 &  \textbf{106.2} \pmt{9.1} &  13.7 &  \textbf{38.6} \pmt{2.2} &  5.0 \\ 
\bottomrule
\end{tabular}
}
\end{subtable}
\end{table}

\subsection{Inventory Management}
A second domain is the Inventory Management domain from \cite{Wangd}, which involves maintaining an inventory of resources. The resources induce a period-wise cost and the goal is to purchase and sell resources such that minimal cost is incurred over time. To form a constrained variant of this benchmark, we introduce the constraint that the discounted sum of actions should not average to higher than zero (i.e. the rate of selling should not exceed that of purchasing). We use the same radial features and clipped uncertainty set as in the implementation on github \url{https://github.com/JerrisonWang/JMLR-DRPMD}. The implementation uses Gaussian parametric transition dynamics
\begin{equation}
p(s' \vert s,a) =  \frac{1}{(2\pi)^{1/2} \sigma} e^{- \frac{1}{2\sigma} (s' - \eta(s,a)^{\intercal} \zeta(s,a) )^2} \,,
\end{equation}
with $\sigma=1$, and the feature vector for $i=1,2$ is given by
\begin{equation}
\label{eq: feature vector}
\zeta_i(s,a) = e^{ - \frac{\norm{s - \mu_{\zeta_i,s}}{}^2 + \norm{a - \mu_{\zeta_i,a}}{}^2 }{2\sigma_{\zeta_i}^2} } \,,
\end{equation}
where $\mu_{\zeta_1} = (-4,5)$, $\mu_{\zeta_2} = (-2,8)$. The uncertainty set is given by $\{ \eta : \norm{\eta - \eta_c}{\infty} \leq \kappa \}$ where $\eta_c = (-2,3.5)$. With the exception of the constraint, and the number of steps per episode for the adversary, the setting matches \cite{Wangd}. The constraint at time $t$ is given by
\begin{equation}
c(s_t,a_t) = K_{s,a} (a_t^2 - s_t) - d \,,
\end{equation}
with constants set as $K_{s,a} = 0.01$ and $d = 0.0$ in the experiments. The constraint indicates a preference for actions with modest squared transaction magnitudes and a preference for having a positive inventory.

\paragraph{Training performance}
The training performance can be found in Figure~\ref{fig: CMDP training IM} of Appendix~\ref{app: training IM}  for non-robust algorithms and in Figure~\ref{fig: RCMDP training IM} of Appendix~\ref{app: training IM} for robust algorithms. The plots confirm the effectiveness of the constrained optimisation algorithms and the robust algorithms can also find a good solution despite the environment being adversarial. It is also clear that the MDPO based algorithms converge more rapidly to solutions of the highest quality in constrained and robust-constrained optimisation, although there appears to be a small benefit of PPO in non-robust unconstrained optimisation.

\begin{figure}[htbp!]
    \centering
     \subfloat[Reward]{\includegraphics[width=0.24\linewidth]{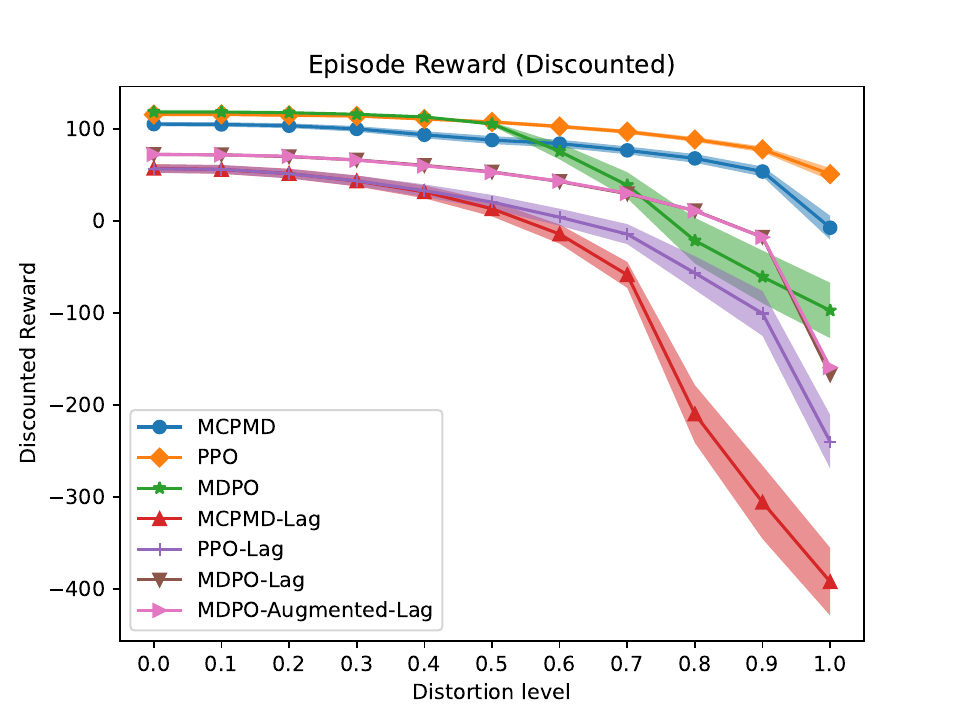}}
       \subfloat[Constraint-cost]{ \includegraphics[width=0.24\linewidth]{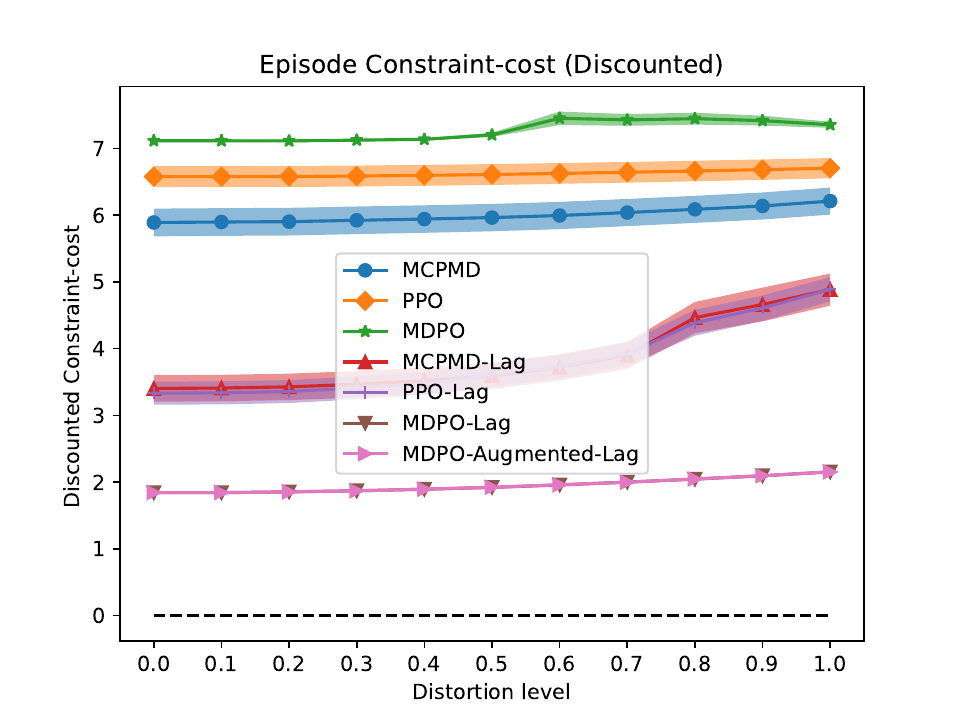}}
   \caption{Test performance of MDP and CMDP algorithms in the Inventory Management domain using the deterministic policy on the test distortions.}
    \label{fig: non-robust test IM}
\end{figure}

\paragraph{Test performance}
Figure~\ref{fig: non-robust test 3d-IM} and Figure~\ref{fig: robust test 3d-IM} visualise the results for the test performance after 16,000 time steps. MDPO-Robust obtains the highest performance on the test reward, followed closely by MDPO and PPO. The constraint is challenging to satisfy for all algorithms, but MDPO-Lag based algorithms obtain the best solutions with cost between 2 and 3 across the distortion levels. Algorithms based on PPO-Lag and MCPMD-Lag perform poorly with worst starting points and stronger performance degrading across distortion levels. After 400,000 time steps, the differences between the algorithms is relatively small in the test cases (see Figure~\ref{fig: non-robust test IM long run} and ~\ref{fig: robust test IM long run} of Appendix~\ref{app: test IM})

\begin{figure}[htbp!]
    \centering
     \subfloat[Reward]{\includegraphics[width=0.24\linewidth]{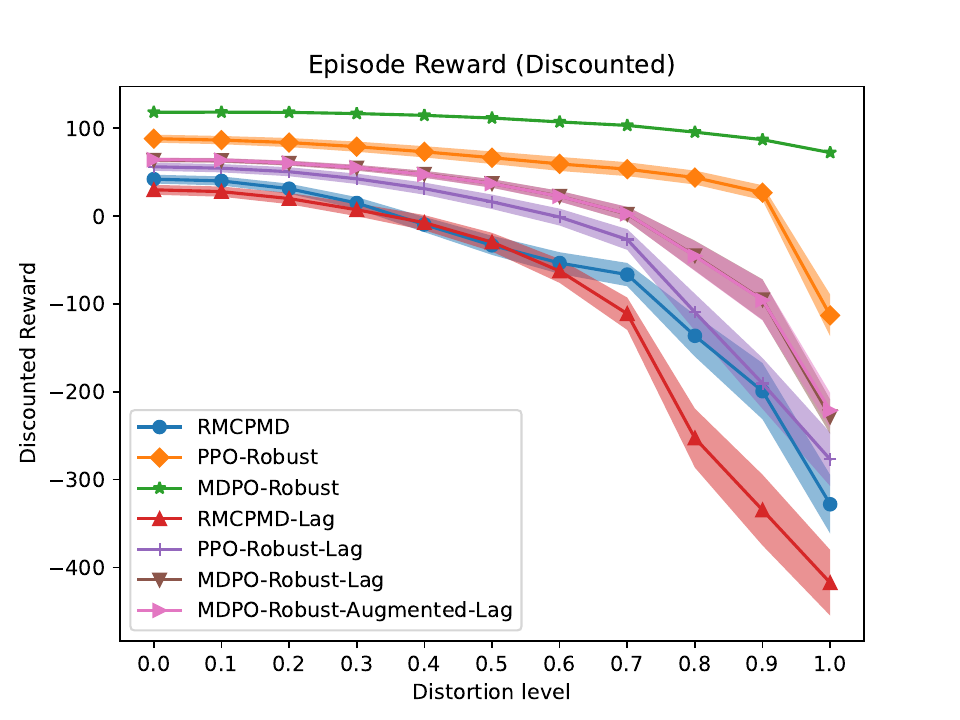}}
       \subfloat[Constraint-cost]{ \includegraphics[width=0.24\linewidth]{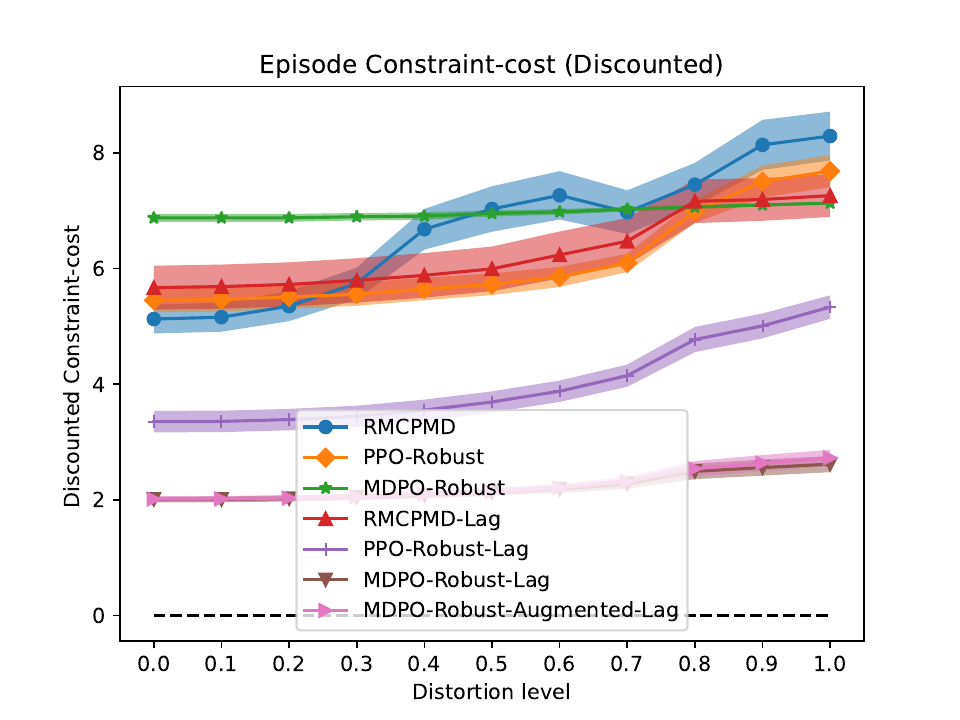}}
   \caption{Test performance of RMDP and RCMDP algorithms in the Inventory Management domain using the deterministic policy on the test distortions.}
    \label{fig: robust test IM}
\end{figure}

As can be observed in  Table~\ref{tab: Rpen IM}, MDPO-Robust is the top performer on the return after 16,000 time steps. All other statistics, including the mean and minimum of the signed and positive penalised return are maximised by MDPO-Lag based algorithms. In Table~\ref{tab: Rpen IM long run}, the performance after the full 400,000 time steps can be observed. With the exception of PPO-Robust-Lag, all constrained algorithms converge to a similar optimum, indicating that the nominal environment and the other environments in the uncertainty set have large overlap. Similarly, the unconstrained algorithms reach a similar optimum. In summary, the solutions converge to relatively similar test performance levels but MDPO-based algorithms are superior in terms of sample efficiency.

\begin{table}[htbp!]
    \centering
\caption{Test performance on the return and penalised return statistics in the Inventory Management domain based on 11 distortion levels, 4 perturbations per level, 10 seeds, and 50 evaluations per test. The mean score is the grand average over distortion levels, perturbations, seeds, and evaluations. The standard error and minimum are computed across the different environments (i.e. distortion levels and perturbations), indicating the robustness of the solution to changes in the environment. Bold indicates the top performance and any additional algorithms that are within one pooled standard error.}
\begin{subtable}[h]{0.49\textwidth}
\caption{After 16,000 time steps of training}
\label{tab: Rpen IM}
\resizebox{1.0\linewidth}{!}{%
    \begin{tabular}{l l l l l l l}
    \toprule
& Return &  & $R_{\text{pen}}^{\pm}$ (signed) &   &  $R_{\text{pen}}$ (positive) &  \\ 
\midrule
& Mean $\pm$ SE & Min  & Mean $\pm$ SE & Min  & Mean $\pm$ SE & Min \\  
\hline
MCPMD &  103.6 \pmt{4.0} & -7.5 &  -2838.3 \pmt{12.7} &  -2946.2 &  -2838.4 \pmt{12.6} &  -2946.2 \\ 
PPO &  114.7 \pmt{2.3} &50.7 &  -3166.9 \pmt{6.5} &  -3208.0 &  -3166.9 \pmt{6.5} &  -3208.0 \\ 
MDPO &  102.6 \pmt{8.0} & -97.4 &  -3430.3 \pmt{21.9} &  -3557.6 &  -3430.3 \pmt{21.9} &  -3557.6 \\ 
\hline
RMCPMD &  63.6 \pmt{23.1} & -328.0 &  -2646.9 \pmt{124.6} &  -3548.1 &  -2648.0 \pmt{123.9} &  -3548.1 \\ 
PPO-Robust &  100.7 \pmt{9.9} & -113.0 &  -2673.0 \pmt{74.2} &  -3397.5 &  -2674.4 \pmt{73.4} &  -3397.5 \\ 
MDPO-Robust &  \textbf{118.0} \pmt{1.8} & \textbf{72.3} &  -3306.4 \pmt{9.7} &  -3350.0 &  -3306.4 \pmt{9.7} &  -3350.0 \\ 
\hline
MCPMD-Lag &  50.1 \pmt{20.2} & -392.3 &  -1684.0 \pmt{35.8} &  -2003.1 &  -1686.4 \pmt{36.0} &  -2003.1 \\ 
PPO-Lag &  64.7 \pmt{13.9} & -240.1 &  -1593.3 \pmt{43.0} &  -1878.9 &  -1597.2 \pmt{39.5} &  -1878.9 \\ 
MDPO-Lag &  66.8 \pmt{7.1} & -167.2 &  \textbf{-837.3} \pmt{21.4} &  \textbf{-939.0} &  \textbf{-837.9} \pmt{20.4} &  \textbf{-939.0} \\ 
MDPO-Augmented-Lag &  66.9 \pmt{7.0} & -159.4 &  \textbf{-837.5} \pmt{21.1} & \textbf{ -938.9} &  \textbf{-838.0} \pmt{20.4} &  \textbf{-938.9} \\ 
\hline 
RMCPMD-Lag &  50.3 \pmt{27.4} & -417.4 &  -2635.6 \pmt{113.0} &  -3159.5 &  -2638.3 \pmt{111.1} &  -3159.5 \\ 
PPO-Robust-Lag &  63.2 \pmt{17.1} & -277.0 &  -1655.0 \pmt{67.5} &  -2190.2 &  -1660.0 \pmt{65.2} &  -2190.2 \\ 
MDPO-Robust-Lag &  60.7 \pmt{10.7} & -228.5 &  -930.9 \pmt{32.3} &  -1118.2 &  -932.2 \pmt{30.8} &  -1118.2 \\ 
MDPO-Robust-Augmented-Lag &  60.9 \pmt{10.6} & -221.4 &  -938.6 \pmt{34.4} &  -1172.2 &  -940.1 \pmt{32.9} &  -1172.2 \\ 
\bottomrule
\end{tabular}
}
\end{subtable}
\begin{subtable}[h]{0.49\textwidth}
\caption{After 400,000 time steps of training}
\label{tab: Rpen IM long run}
\resizebox{1.0\linewidth}{!}{%
    \begin{tabular}{l l l l l l l}
    \toprule
& Return &  & $R_{\text{pen}}^{\pm}$ (signed) &   &  $R_{\text{pen}}$ (positive) &  \\ 
\midrule
& Mean $\pm$ SE & Min  & Mean $\pm$ SE & Min  & Mean $\pm$ SE & Min \\  
\hline
MCPMD &  \textbf{119.9} \pmt{1.8} & \textbf{72.}8 &  -3425.3 \pmt{5.5} &  -3460.1 &  -3425.3 \pmt{5.5} &  -3460.1 \\ 
PPO &  \textbf{120.0} \pmt{1.8} & \textbf{73.6} &  -3425.4 \pmt{5.4} &  -3459.7 &  -3425.4 \pmt{5.4} &  -3459.7 \\ 
MDPO &  \textbf{118.8} \pmt{2.5} & 32.2 &  -3417.0 \pmt{8.8} &  -3459.7 &  -3417.0 \pmt{8.8} &  -3459.7 \\ 
\hline
RMCPMD &  109.3 \pmt{2.8} & 26.0 &  -2907.7 \pmt{8.5} &  -2956.1 &  -2907.9 \pmt{8.3} &  -2956.1 \\  
PPO-Robust &  \textbf{119.9} \pmt{1.8} & \textbf{73.2}&  -3425.3 \pmt{5.4} &  -3460.3 &  -3425.3 \pmt{5.4} &  -3460.3 \\ 
MDPO-Robust &  \textbf{119.7} \pmt{1.8} & 72.4 &  -3416.7 \pmt{4.1} &  -3429.7 &  -3416.7 \pmt{4.1} &  -3429.7 \\ 
\hline
MCPMD-Lag &  67.1 \pmt{7.0} & -156.5 &  \textbf{-837.7} \pmt{20.9} &  \textbf{-939.0} &  \textbf{-838.2} \pmt{20.2} &  \textbf{-939.0} \\ 
PPO-Lag &  66.7 \pmt{7.1} & -166.7 &  \textbf{-837.2} \pmt{21.4} &  \textbf{-939.0} &  \textbf{-837.8} \pmt{20.5} &  \textbf{-939.0} \\ 
MDPO-Lag &  66.8 \pmt{7.2} & -168.5 & \textbf{-837.3} \pmt{21.5} &  \textbf{-939.3} &  \textbf{-838.0} \pmt{20.4} &  \textbf{-939.3} \\ 
MDPO-Augmented-Lag &  66.9 \pmt{7.1} & -162.5 &  -837.5 \pmt{21.2} &  \textbf{-939.2} &  \textbf{-838.0} \pmt{20.4} &  \textbf{-939.2} \\ 
\hline 
RMCPMD-Lag &  66.8 \pmt{7.1} & -165.8 &  \textbf{-837.4} \pmt{21.4} &  \textbf{-939.0} &  \textbf{-838.1} \pmt{20.3} &  \textbf{-939.0} \\ 
PPO-Robust-Lag &  60.8 \pmt{10.8} & -234.0 &  -931.0 \pmt{32.5} &  -1118.3 &  -932.4 \pmt{30.9} &  -1118.3 \\ 
MDPO-Robust-Lag &  66.9 \pmt{7.1} & -166.5 &  \textbf{-837.4} \pmt{21.4} &  \textbf{-939.3} &  \textbf{-838.0} \pmt{20.4} &  \textbf{-939.3} \\ 
MDPO-Robust-Augmented-Lag &  66.8 \pmt{7.1} & -167.7 &  \textbf{-837.2} \pmt{21.5} & \textbf{ -939.1} &  \textbf{-837.9} \pmt{20.5} &  \textbf{-939.1} \\ 
\bottomrule
\end{tabular}
}
\end{subtable}
\end{table}

\subsection{3-D Inventory Management}
The third (and last) domain in the experiments introduces a multi-dimensional variant of the above Inventory Management problem, with dynamics
\begin{equation}
p(s' \vert s,a) =  \frac{1}{(2\pi)^{n/2} \sigma} e^{- \frac{1}{2\sigma} (s' - \eta(s,a)^{\intercal} \zeta(s,a) )^2} \,,
\end{equation}
where the feature vectors of Eq.~\ref{eq: feature vector} are now given by $\mu_{\zeta_1} = (-3, -2.5, -3.5, 5.0, 2.0, 4.5)$, $\mu_{\zeta_2} = (-6.0,-2.8,-4.0, 8.0,2.0,2.5)$, $\sigma_{\zeta} = (4,4.5)$, and the uncertainty set is given by $\{ \eta : \norm{\eta - \eta_c}{\infty} \leq \kappa \}$ where $\eta_c = [[-2, 2.5],[-1.8,1.5],[-1.5,2.0]]$ and $\kappa = 0.5$. The constraint $j \in [m]$ at time $t$ is given by
\begin{equation}
c_j(s_t,a_t) = a_{t,j} - K_s  s_{t,j} - d \,,
\end{equation}
where $a_{t,j}$ and $s_{t,j}$ denote the $j$'th dimension of the state and action, respectively, at time $t$, and constants are set as $K_{s} = 0.5$ and $d = 0.0$ in the experiments. The constraint indicates that the agent should not get a negative inventory and if the inventory is positive, it should not sell more than 50\% of the current inventory.

\paragraph{Training performance}
The training performance can be found in Figure~\ref{fig: CMDP training 3d-IM} and Figure~\ref{fig: RCMDP training 3d-IM} of Appendix~\ref{app: training 3d-IM}. It is clear that the MDPO-based algorithm have improved sample-efficiency and converge to the highest unconstrained and constrained performances.

\begin{figure}[htbp!]
    \centering
     \subfloat[Reward]{\includegraphics[width=0.24\linewidth]{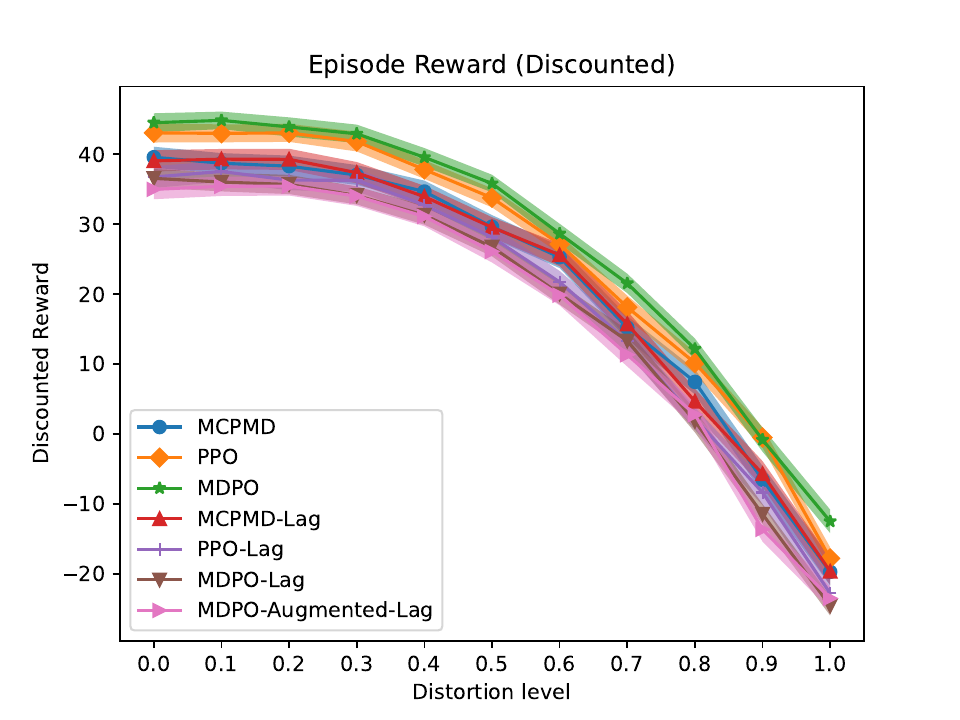}}
       \subfloat[Constraint-cost 1]{ \includegraphics[width=0.24\linewidth]{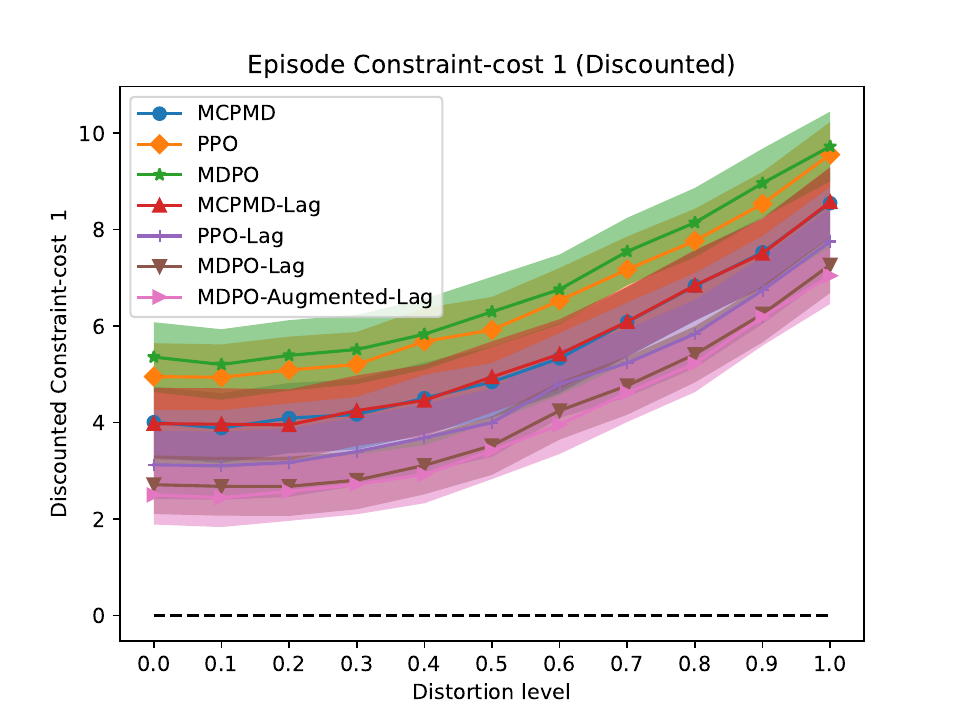}}
       \subfloat[Constraint-cost 2]{ \includegraphics[width=0.24\linewidth]{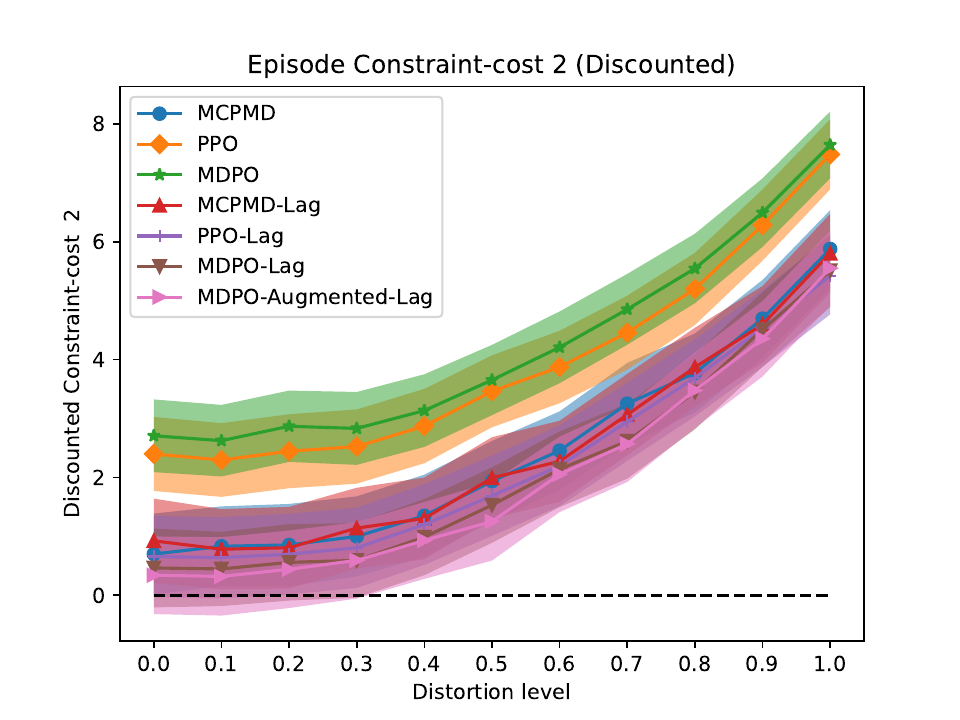}}
       \subfloat[Constraint-cost 3]{ \includegraphics[width=0.24\linewidth]{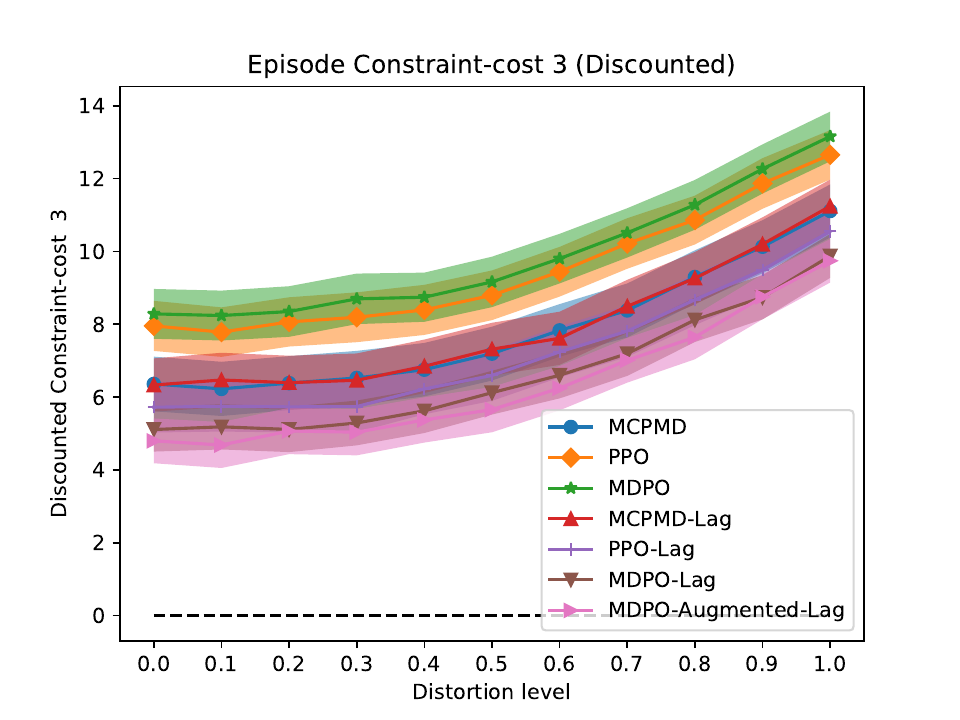}}
   \caption{Test performance of MDP and CMDP algorithms in the 3-D Inventory Management domain using the deterministic policy on the test distortions.}
    \label{fig: non-robust test 3d-IM}
\end{figure}

\begin{figure}[htbp!]
    \centering
     \subfloat[Reward]{\includegraphics[width=0.24\linewidth]{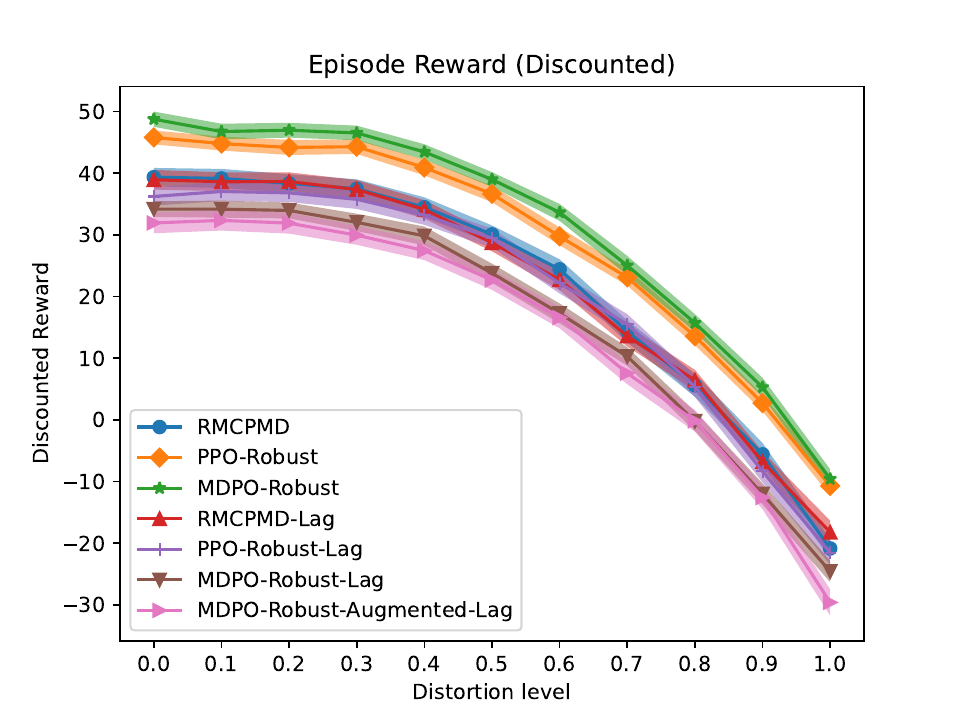}}
       \subfloat[Constraint-cost 1]{ \includegraphics[width=0.24\linewidth]{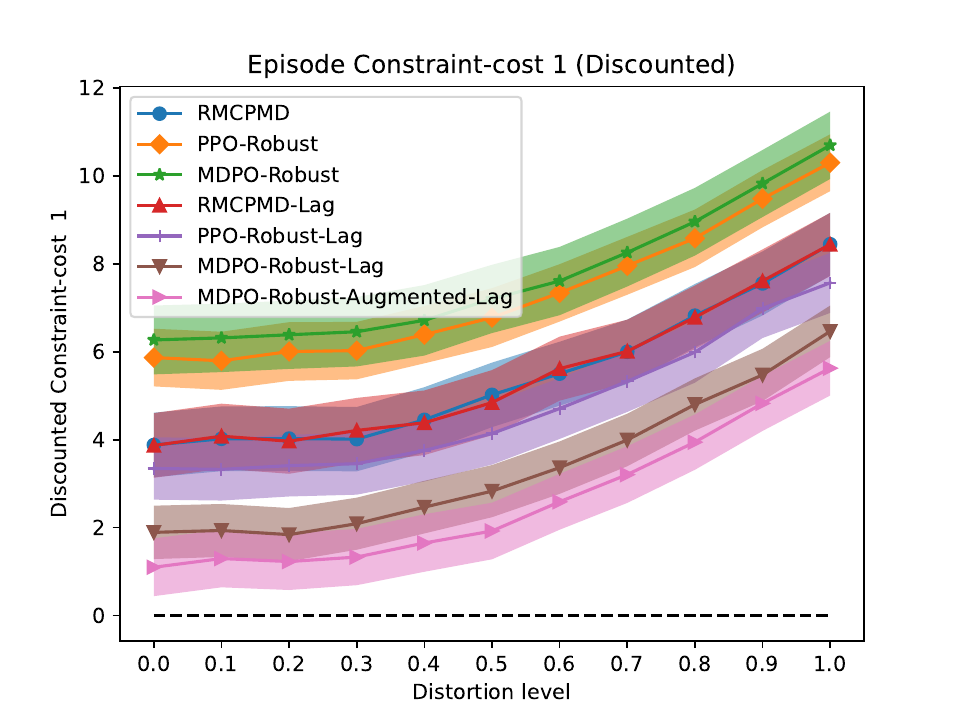}}
       \subfloat[Constraint-cost 2]{ \includegraphics[width=0.24\linewidth]{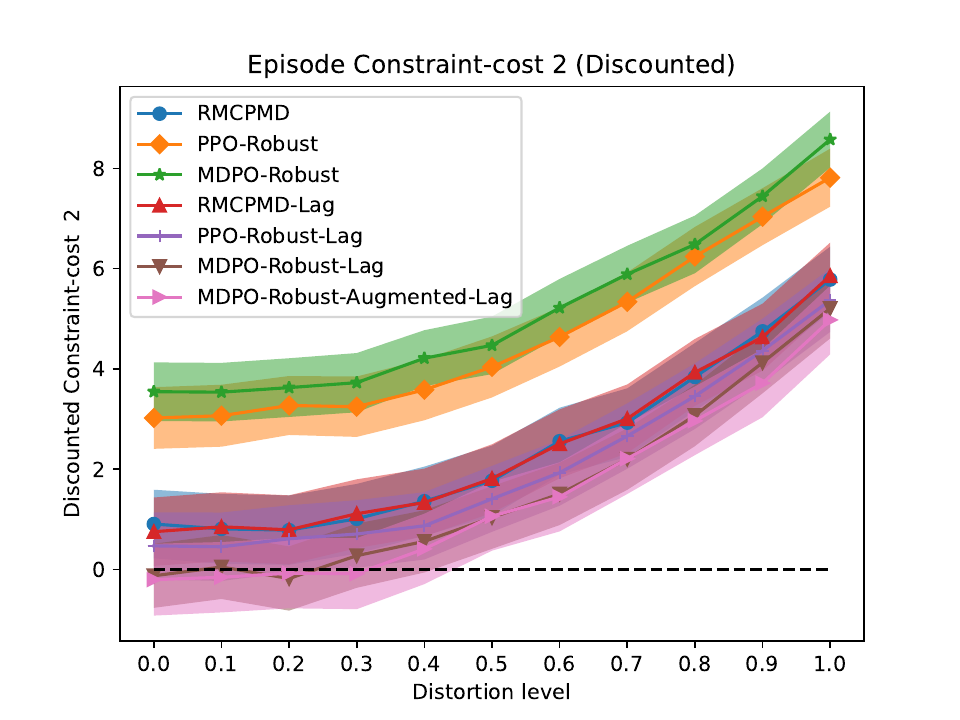}}
       \subfloat[Constraint-cost 3]{ \includegraphics[width=0.24\linewidth]{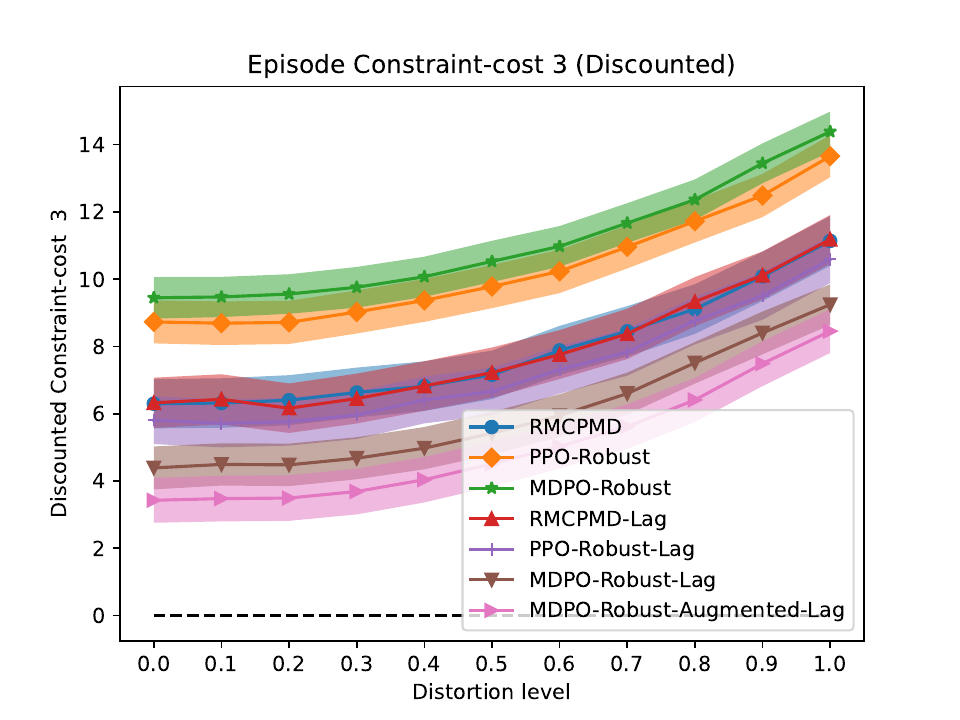}}
   \caption{Test performance of RMDP and RCMDP algorithms in the 3-D Inventory Management domain using the deterministic policy on the test distortions.}
    \label{fig: robust test 3d-IM}
\end{figure}

\paragraph{Test performance}
Figure~\ref{fig: non-robust test 3d-IM} and Figure~\ref{fig: robust test 3d-IM} visualise the results for the test performance after 40,000 time steps. While all algorithms are sensitive to the distortion level, it can be seen that the robust algorithms have are generally shifted up for the rewards, and robust-constrained algorithms have been shifted down for the constraint-costs, indicating the effectiveness of robustness training. The MDPO-Robust algorithm is superior in the return and the the MDPO-Robust-Lag algorithms are found to be superior in the constraint-cost. While it is challenging to satisfy the constraint with limited training, after 400,000 time all algorithms have further improved, but remarkably MDPO-Robust-Augmented-Lag can satisfy all the constraints for all but the highest  distortion levels (see Figure~\ref{fig: non-robust test 3d-IM  long run} and Figure~\ref{fig: robust test 3d-IM long run} of Appendix~\ref{app: test 3d-IM}).

\begin{table}[htbp!]
    \centering
\caption{Test performance on the return and penalised return statistics in the 3-D Inventory Management domain based on 11 distortion levels, 64 perturbations per level, 10 seeds, and 10 evaluations per test. The standard error and minimum are computed across the different environments (i.e. distortion levels and perturbations), indicating the robustness of the solution to changes in the environment. Bold indicates the top performance and any additional algorithms that are within one pooled standard error.}
\begin{subtable}{0.49\textwidth}
\caption{After 40,000 time steps}
\label{tab: Rpen 3d-IM}
\resizebox{1.0\textwidth}{!}{
    \begin{tabular}{l l l l l l l}
    \toprule
& Return &  & $R_{\text{pen}}^{\pm}$ (signed) &   &  $R_{\text{pen}}$ (positive) &  \\ 
\midrule
& Mean $\pm$ SE & Min  & Mean $\pm$ SE & Min  & Mean $\pm$ SE & Min \\ 
\hline
MCPMD &  46.2 \pmt{0.5} & -19.7 &  -3810.9 \pmt{448.7} &  -10322.4 &  -7143.7 \pmt{250.0} &  -11158.3 \\ 
PPO &  50.6 \pmt{0.4} & -17.8 &  -5909.8 \pmt{458.0} &  -12737.1 &  -7957.4 \pmt{286.0} &  -12855.0 \\ 
MDPO &  51.6 \pmt{0.4} & -12.5 &  -6415.1 \pmt{463.7} &  -13210.2 &  -8389.6 \pmt{300.0} &  -13384.9 \\ 
\hline
RMCPMD &  46.3 \pmt{0.5} & -20.8 &  -3817.4 \pmt{449.2} &  -10449.0 &  -7146.4 \pmt{253.1} &  -11228.8 \\ 
PPO-Robust &  53.2 \pmt{0.4} & -10.7 &  -7164.3 \pmt{453.6} &  -13926.4 &  -8652.0 \pmt{316.2} &  -14027.1 \\ 
MDPO-Robust &  \textbf{54.9} \pmt{0.4} & \textbf{-9.6} &  -7956.9 \pmt{457.5} &  -14747.8 &  -9376.8 \pmt{330.2} &  -14836.3 \\ 
\hline
MCPMD-Lag &  46.3 \pmt{0.5} & -19.6 &  -3822.8 \pmt{452.0} &  -10390.2 &  -7149.5 \pmt{253.6} &  -11185.8 \\ 
PPO-Lag &  44.7 \pmt{0.5} & -22.8 &  -3038.1 \pmt{448.7} &  -9467.8 &  -6515.6 \pmt{238.5} &  -10277.8 \\ 
MDPO-Lag &  43.4 \pmt{0.5} & -24.6 &  -2391.4 \pmt{447.4} &  -8762.9 &  -5530.5 \pmt{237.8} &  -9481.2 \\ 
MDPO-Augmented-Lag &  42.8 \pmt{0.5} & -23.5 &  -2126.5 \pmt{446.1} &  -8385.7 &  -5460.9 \pmt{229.5} &  -9142.2 \\ 
\hline 
RMCPMD-Lag &  46.2 \pmt{0.5} & -18.2 &  -3811.7 \pmt{451.5} &  -10399.7 &  -7144.5 \pmt{253.5} &  -11239.9 \\ 
PPO-Robust-Lag &  44.7 \pmt{0.5} & -21.5 &  -3014.6 \pmt{445.4} &  -9428.9 &  -6494.1 \pmt{238.7} &  -10268.1 \\ 
MDPO-Robust-Lag &  41.4 \pmt{0.5} & -24.6 &  -1416.9 \pmt{446.1} &  -7726.9 &  \textbf{-5088.9} \pmt{221.7} &  -8560.6 \\ 
MDPO-Robust-Augmented-Lag &  39.5 \pmt{0.5} & -29.6 &  \textbf{-525.0} \pmt{441.2} &  \textbf{-6861.9} &  \textbf{-4957.0} \pmt{211.2} &  \textbf{-8289.3} \\ 
\bottomrule
\end{tabular}
}
\end{subtable}
\begin{subtable}{0.49\textwidth}
\caption{After 400,000 time steps}
\label{tab: Rpen 3d-IM long run}
\resizebox{1.0\textwidth}{!}{
    \begin{tabular}{l l l l l l l}
    \toprule
& Return &  & $R_{\text{pen}}^{\pm}$ (signed) &   &  $R_{\text{pen}}$ (positive) &  \\ 
\midrule
& Mean $\pm$ SE & Min  & Mean $\pm$ SE & Min  & Mean $\pm$ SE & Min \\ 
\hline
MCPMD &  46.2 \pmt{0.5} & -21.3 &  -3810.9 \pmt{448.9} &  -10358.5 &  -7147.0 \pmt{251.9} &  -11105.7 \\ 
PPO &  59.5 \pmt{0.4} & -5.9 &  -10122.1 \pmt{462.4} &  -16777.6 &  -10886.2 \pmt{370.4} &  -16823.6 \\ 
MDPO &  81.2 \pmt{0.4} & 18.6 &  -20577.2 \pmt{535.4} &  -28669.4 &  -20769.1 \pmt{504.0} &  -28670.9 \\ 
\hline
RMCPMD &  46.3 \pmt{0.5} & -18.9 &  -3817.2 \pmt{450.6} &  -10514.2 &  -7148.9 \pmt{252.9} &  -11353.5 \\ 
PPO-Robust &  60.9 \pmt{0.4} & -2.4 &  -10802.7 \pmt{443.3} &  -17484.1 &  -11429.2 \pmt{367.4} &  -17506.1 \\ 
MDPO-Robust &  \textbf{87.0} \pmt{0.4} & \textbf{24.4} &  -23312.6 \pmt{525.6} &  -31314.1 &  -23459.7 \pmt{499.3} &  -31314.1 \\ 
\hline
MCPMD-Lag &  46.2 \pmt{0.5} & -22.0 &  -3813.8 \pmt{453.1} &  -10509.0 &  -7147.1 \pmt{254.3} &  -11366.5 \\ 
PPO-Lag &  34.2 \pmt{0.5} & -36.5 &  1965.9 \pmt{416.5} &  -3963.9 &  -3126.5 \pmt{214.3} &  -7007.9 \\ 
MDPO-Lag &  34.9 \pmt{0.5} & -32.5 &  1566.5 \pmt{420.9} &  -4406.7 &  \textbf{-1180.5} \pmt{229.6} &  -5420.4 \\ 
MDPO-Augmented-Lag &  24.6 \pmt{0.5} & -44.1 &  \textbf{6421.1} \pmt{369.2} &  \textbf{1517.0} &  \textbf{-1139.5} \pmt{175.7} &  -4564.8 \\ 
\hline 
RMCPMD-Lag &  46.2 \pmt{0.5} & -20.8 &  -3813.1 \pmt{451.5} &  -10594.7 &  -7146.9 \pmt{252.7} &  -11310.6 \\ 
PPO-Robust-Lag &  31.0 \pmt{0.5} & -34.8 &  3460.0 \pmt{419.6} &  -2398.7 &  -2285.9 \pmt{167.0} &  -5397.1 \\ 
MDPO-Robust-Lag &  32.8 \pmt{0.5} & -32.4 &  2617.9 \pmt{378.6} &  -2601.7 &  \textbf{-1011.4} \pmt{169.9} &  \textbf{-3767.4} \\ 
MDPO-Robust-Augmented-Lag &  26.8 \pmt{0.4} & -38.6 &  5312.6 \pmt{340.2} &  758.7 &  -1219.2 \pmt{150.4} &  \textbf{-3867.7} \\ 
\bottomrule
\end{tabular}
}
\end{subtable}
\end{table}

The return and penalised return statistics after 40,000 time steps can be observed in Table~\ref{tab: Rpen 3d-IM}. MDPO-Robust has the highest score on the mean and minimum of the test return, which indicates the robustness training was succesful in guaranteeing high levels of performance across the uncertainty set. MDPO-Robust-Augmented-Lag outperforms all other algorithms on the signed and positive penalised return statistics. MDPO-Robust-Lag follows closely in mean and minimum performance on the positive penalised return. After 400,000 time steps, MDPO-Robust-Lag algorithms have by far the highest minimum penalised return while MDPO algorithms with augmented Lagrangian score remarkably well on the signed penalised return (see Table~\ref{tab: Rpen 3d-IM long run}). Overall, the data indicate the effectiveness for MDPO-based algorithms, and particularly the effectiveness of MDPO-Robust-Lag algorithms for robust constrained optimisation.

\section{Conclusion}
This paper presents mirror descent policy optimisation for RCMDPs, making use of policy gradient techniques to optimise both the policy and the transition kernel (as an adversary) on the Lagrangian representing a CMDP. In the sample-based rectangular RCMDP setting, we require $\tilde{\cO}(\epsilon^{-3})$ samples for an average regret of at most $\epsilon$, confirming an $\tilde{\cO}\left(1/T^{1/3}\right)$ convergence rate. Our analysis also covers the case of non-rectangular RCMDPs, where our algorithm provides an $\tilde{\cO}\left(1/T^{1/5}\right)$ sample-based rate. An approximate algorithm for transition mirror ascent is also proposed which is of independent interest for designing adversarial environments in general MDPs. Experiments confirm the performance benefits of mirror descent policy optimisation in practice, obtaining significant improvements in test performance.

\subsubsection*{Acknowledgments}
This research/project is supported by the National Research Foundation, Singapore under its National Large Language Models Funding Initiative (AISG Award No: AISG-NMLP-2024-003). Any opinions, findings and conclusions or recommendations expressed in this material are those of the author(s) and do not reflect the views of National Research Foundation, Singapore. This research is supported by the National Research Foundation, Singapore and the Ministry of Digital Development and Information under the AI Visiting Professorship Programme (award number AIVP-2024-004). Any opinions, findings and conclusions or recommendations expressed in this material are those of the author(s) and do not reflect the views of National Research Foundation, Singapore and the Ministry of Digital Development and Information.  

\bibliography{ref}

\begin{thebibliography}{52}
\providecommand{\natexlab}[1]{#1}
\providecommand{\url}[1]{\texttt{#1}}
\expandafter\ifx\csname urlstyle\endcsname\relax
  \providecommand{\doi}[1]{doi: #1}\else
  \providecommand{\doi}{doi: \begingroup \urlstyle{rm}\Url}\fi

\bibitem[Abdullah et~al.(2019)Abdullah, Ren, Ammar, Milenkovic, Luo, Zhang, and
  Wang]{Abdullah2019}
Mohammed~Amin Abdullah, Hang Ren, Haitham~Bou Ammar, Vladimir Milenkovic, Rui
  Luo, Mingtian Zhang, and Jun Wang.
\newblock {Wasserstein Robust Reinforcement Learning}.
\newblock \emph{arXiv preprint arXiv:1907.13196}, 2019.

\bibitem[Achiam et~al.(2017)Achiam, Held, Tamar, and Abbeel]{Achiam2017}
Joshua Achiam, David Held, Aviv Tamar, and Pieter Abbeel.
\newblock {Constrained policy optimization}.
\newblock In \emph{Proceedings of the International Conference on Machine
  Learning (ICML 2017)}, pp.\  30--47, 2017.
\newblock ISBN 9781510855144.

\bibitem[Altman(1999)]{Altman1999}
Eitan Altman.
\newblock \emph{{Constrained Markov Decision Processses}}.
\newblock CRC Press, 1999.

\bibitem[Barto et~al.(1983)Barto, Sutton, and Anderson]{Barto1983}
Andrew~G. Barto, Richard~S. Sutton, and Charles~W. Anderson.
\newblock Neuronlike adaptive elements that can solve difficult learning
  control problems.
\newblock \emph{IEEE Transactions on Systems, Man, and Cybernetics},
  SMC-13\penalty0 (5):\penalty0 834--846, 1983.
\newblock \doi{10.1109/TSMC.1983.6313077}.

\bibitem[Beck \& Teboulle(2003)Beck and Teboulle]{Beck2003}
Amir Beck and Marc Teboulle.
\newblock {Mirror descent and nonlinear projected subgradient methods for
  convex optimization}.
\newblock \emph{Operations Research Letters}, 31:\penalty0 167--175, 2003.

\bibitem[Behzadian et~al.(2019)Behzadian, Russel, and Petrik]{Behzadian2019}
Bahram Behzadian, Reazul~Hasan Russel, and Marek Petrik.
\newblock High-confidence policy optimization: Reshaping ambiguity sets in
  robust mdps.
\newblock In \emph{Advances in Neural Information Processing Systems (NeurIPS
  2019)}, 2019.

\bibitem[Bossens(2024)]{Bossens2024}
David~M. Bossens.
\newblock {Robust Lagrangian and Adversarial Policy Gradient for Robust
  Constrained Markov Decision Processes}.
\newblock In \emph{IEEE Conference on Artificial intelligence (CAI 2024)},
  2024.

\bibitem[Brockman et~al.(2016)Brockman, Cheung, Pettersson, Schneider,
  Schulman, Tang, and Zaremba]{Brockman2016}
Greg Brockman, Vicki Cheung, Ludwig Pettersson, Jonas Schneider, John Schulman,
  Jie Tang, and Wojciech Zaremba.
\newblock {OpenAI Gym}.
\newblock \emph{arXiv preprint arXiv:1606.01540}, pp.\  1--4, 2016.

\bibitem[Cen et~al.(2022)Cen, Cheng, Chen, Wei, and Chi]{Cen2022}
Shicong Cen, Chen Cheng, Yuxin Chen, Yuting Wei, and Yuejie Chi.
\newblock {Fast Global Convergence of Natural Policy Gradient Methods with
  Entropy Regularization}.
\newblock \emph{Operations Research}, 70\penalty0 (4):\penalty0 2563--2578,
  2022.
\newblock ISSN 15265463.
\newblock \doi{10.1287/opre.2021.2151}.

\bibitem[Ding et~al.(2020)Ding, Zhang, Basar, and Jovanovic]{Ding2020}
Dongsheng Ding, Kaiqing Zhang, Tamer Basar, and Mihailo~R. Jovanovic.
\newblock {Natural policy gradient primal-dual method for constrained Markov
  decision processes}.
\newblock In \emph{Advances in Neural Information Processing Systems (NeurIPS
  2020)}, 2020.

\bibitem[Ding et~al.(2021)Ding, Wei, Yang, Wang, and Jovanovi{\'{c}}]{Ding2021}
Dongsheng Ding, Xiaohan Wei, Zhuoran Yang, Zhaoran Wang, and Mihailo~R.
  Jovanovi{\'{c}}.
\newblock {Provably Efficient Safe Exploration via Primal-Dual Policy
  Optimization}.
\newblock \emph{Proceedings of Machine Learning Research}, 130:\penalty0
  3304--3312, 2021.

\bibitem[Efroni et~al.(2020)Efroni, Mannor, and Pirotta]{Efroni2020}
Yonathan Efroni, Shie Mannor, and Matteo Pirotta.
\newblock {Exploration-Exploitation in Constrained MDPs}.
\newblock \emph{arXiv preprint}, 2020.
\newblock ISSN 2331-8422.
\newblock URL \url{http://arxiv.org/abs/2003.02189}.

\bibitem[Germano et~al.(2024)Germano, Stradi, Genalti, Castiglioni, Marchesi,
  and Gatti]{Germano2024}
Jacopo Germano, Francesco~Emanuele Stradi, Gianmarco Genalti, Matteo
  Castiglioni, Alberto Marchesi, and Nicola Gatti.
\newblock A best-of-both-worlds algorithm for constrained mdps with long-term
  constraints.
\newblock \emph{arXiv preprint arXiv:2304.14326}, 2024.
\newblock URL \url{https://arxiv.org/abs/2304.14326}.

\bibitem[Grand-Cl{\'{e}}ment et~al.(2024)Grand-Cl{\'{e}}ment, Si, and
  Wang]{Grand-Clement2024}
Julien Grand-Cl{\'{e}}ment, Nian Si, and Shengbo Wang.
\newblock {Tractable Robust Markov Decision Processes}.
\newblock \emph{arXiv preprint arXiv:2411.08435}, pp.\  1--45, 2024.

\bibitem[Ho et~al.(2021)Ho, Petrik, and Wiesemann]{Ho2021}
Chin~Pang Ho, Marek Petrik, and Wolfram Wiesemann.
\newblock {Partial policy iteration for L1-Robust Markov decision processes}.
\newblock \emph{Journal of Machine Learning Research}, 22:\penalty0 1--46,
  2021.

\bibitem[Hou et~al.(2007)Hou, Pang, Hong, Lan, Ma, and Yin]{Hou2007}
Linfang Hou, Liang Pang, Xin Hong, Yanyan Lan, Zhiming Ma, and Dawei Yin.
\newblock {Robust Reinforcement Learning with Wasserstein Constraint}.
\newblock \emph{arXiv preprint arXiv:2006.00945v1}, 2007.

\bibitem[Iyengar(2005)]{Iyengar2005}
Garud~N. Iyengar.
\newblock {Robust dynamic programming}.
\newblock \emph{Mathematics of Operations Research}, 30\penalty0 (2):\penalty0
  257--280, 2005.
\newblock ISSN 0364765X.
\newblock \doi{10.1287/moor.1040.0129}.

\bibitem[Kuang et~al.(2022)Kuang, Lu, Wang, Zhou, Li, and Li]{Kuang2022}
Yufei Kuang, Miao Lu, Jie Wang, Qi~Zhou, Bin Li, and Houqiang Li.
\newblock {Learning Robust Policy against Disturbance in Transition Dynamics
  via State-Conservative Policy Optimization}.
\newblock In \emph{Proceedings of the AAAI Conference on Artificial
  Intelligence, AAAI 2022}, pp.\  7247--7254, 2022.
\newblock \doi{10.1609/aaai.v36i7.20686}.

\bibitem[Kumar et~al.(2023)Kumar, Geist, Levy, Derman, and Mannor]{Kumar2023}
Navdeep Kumar, Matthieu Geist, Kfir Levy, Esther Derman, and Shie Mannor.
\newblock {Policy Gradient for Rectangular Robust Markov Decision Processes}.
\newblock In \emph{Advances in Neural Information Processing Systems (NeurIPS
  2023)}, 2023.

\bibitem[Kumar et~al.(2025)Kumar, Gupta, Elfatihi, Ramponi, Levy, and
  Mannor]{Kumar2025}
Navdeep Kumar, Adarsh Gupta, Maxence~Mohamed Elfatihi, Giorgia Ramponi,
  Kfir~Yehuda Levy, and Shie Mannor.
\newblock {Dual Formulation for Non-Rectangular Lp Robust Markov Decision
  Processes}.
\newblock \emph{arXiv preprint arXiv:2502.09432}, 2025.

\bibitem[Li et~al.(2023)Li, Sutter, and Kuhn]{Li2023}
Mengmeng Li, Tobias Sutter, and Daniel Kuhn.
\newblock {Policy Gradient Algorithms for Robust MDPs with Non-Rectangular
  Uncertainty Sets}.
\newblock In \emph{Advances in Neural Information Processing Systems (NeurIPS
  2023)}, pp.\  1--31, 2023.

\bibitem[Lillicrap et~al.(2015)Lillicrap, Hunt, Pritzel, Heess, Erez, Tassa,
  Silver, and Wierstra]{Lillicrap2015}
Timothy~P. Lillicrap, Jonathan~J. Hunt, Alexander Pritzel, Nicolas Heess, Tom
  Erez, Yuval Tassa, David Silver, and Daan Wierstra.
\newblock {Continuous control with deep reinforcement learning}.
\newblock \emph{arXiv preprint arXiv:1509.02971}, pp.\  1--127, 2015.

\bibitem[Liu et~al.(2021{\natexlab{a}})Liu, Zhou, Kalathil, Kumar, and
  Tian]{Liu2021}
Tao Liu, Ruida Zhou, Dileep Kalathil, P.~R. Kumar, and Chao Tian.
\newblock {Learning Policies with Zero or Bounded Constraint Violation for
  Constrained MDPs}.
\newblock \emph{Advances in Neural Information Processing Systems}, 21\penalty0
  (NeurIPS):\penalty0 17183--17193, 2021{\natexlab{a}}.
\newblock ISSN 10495258.

\bibitem[Liu et~al.(2021{\natexlab{b}})Liu, Zhou, Kalathil, Kumar, and
  Tian]{Liu2021a}
Tao Liu, Ruida Zhou, Dileep Kalathil, P.~R. Kumar, and Chao Tian.
\newblock {Policy Optimization for Constrained MDPs with Provable Fast Global
  Convergence}.
\newblock \emph{arXiv preprint arXiv:2111.00552}, 2021{\natexlab{b}}.

\bibitem[Mankowitz et~al.(2020)Mankowitz, Calian, Jeong, Paduraru, Heess,
  Dathathri, Riedmiller, and Mann]{Mankowitz2020a}
Daniel~J. Mankowitz, Dan~A. Calian, Rae Jeong, Cosmin Paduraru, Nicolas Heess,
  Sumanth Dathathri, Martin Riedmiller, and Timothy Mann.
\newblock {Robust Constrained Reinforcement Learning for Continuous Control
  with Model Misspecification}.
\newblock \emph{arXiv preprint arXiv:2010.10644}, pp.\  1--23, 2020.

\bibitem[Mei et~al.(2020)Mei, Xiao, Szepesvari, and Schuurmans]{Mei2020}
Jincheng Mei, Chenjun Xiao, Csaba Szepesvari, and Dale Schuurmans.
\newblock {On the global convergence rates of softmax policy gradient methods}.
\newblock In \emph{International Conference on Machine Learning (ICML 2020)},
  pp.\  6776--6785, 2020.

\bibitem[Mnih et~al.(2016)Mnih, Badia, Mirza, Graves, Lillicrap, Harley,
  Silver, and Kavukcuoglu]{Mnih2016a}
Volodymyr Mnih, Adri{\`{a}}~Puigdom{\`{e}}nech Badia, Mehdi Mirza, Alex Graves,
  Timothy~P Lillicrap, Tim Harley, David Silver, and Koray Kavukcuoglu.
\newblock {Asynchronous Methods for Deep Reinforcement Learning}.
\newblock In \emph{Proceedings of the International Conference on Machine
  Learning (ICML 2016)}, New York, NY, USA, 2016.
\newblock \doi{10.1177/0956797613514093}.

\bibitem[Nilim \& Ghaoui(2005)Nilim and Ghaoui]{Nilim2005}
Arnab Nilim and Laurent~El Ghaoui.
\newblock {Robust control of Markov decision processes with uncertain
  transition matrices}.
\newblock \emph{Operations Research}, 53\penalty0 (5):\penalty0 780--798, 2005.
\newblock \doi{10.1287/opre.1050.0216}.

\bibitem[Paternain et~al.(2023)Paternain, Calvo-Fullana, Chamon, and
  Ribeiro]{Paternain2023}
Santiago Paternain, Miguel Calvo-Fullana, Luiz~F.O. Chamon, and Alejandro
  Ribeiro.
\newblock {Safe Policies for Reinforcement Learning via Primal-Dual Methods}.
\newblock \emph{IEEE Transactions on Automatic Control}, 68\penalty0
  (3):\penalty0 1321--1336, 2023.
\newblock \doi{10.1109/TAC.2022.3152724}.

\bibitem[Ray et~al.(2019)Ray, Achiam, and Amodei]{Ray2019}
Alex Ray, Joshua Achiam, and Dario Amodei.
\newblock {Benchmarking Safe Exploration in Deep Reinforcement Learning}.
\newblock \emph{OpenAI}, pp.\  1--6, 2019.
\newblock URL \url{https://cdn.openai.com/safexp-short.pdf}.

\bibitem[Russel et~al.(2020)Russel, Benosman, and {Van Baar}]{Russel2020}
Reazul~Hasan Russel, Mouhacine Benosman, and Jeroen {Van Baar}.
\newblock {Robust Constrained-MDPs: Soft-Constrained Robust Policy Optimization
  under Model Uncertainty}.
\newblock \emph{arXiv preprint arXiv:2010.04870}, 2020.

\bibitem[Russel et~al.(2023)Russel, Benosman, {Van Baar}, and
  Corcodel]{Russel2021}
Reazul~Hasan Russel, Mouhacine Benosman, Jeroen {Van Baar}, and Radu Corcodel.
\newblock \emph{{Lyapunov Robust Constrained-MDPs: Soft-Constrained Robustly
  Stable Policy Optimization under Model Uncertainty}}, pp.\  307--328.
\newblock Springer International Publishing, 2023.

\bibitem[Schulman et~al.(2015)Schulman, Levine, Abbeel, Jordan, and
  Moritz]{Schulman15}
John Schulman, Sergey Levine, Pieter Abbeel, Michael Jordan, and Philipp
  Moritz.
\newblock Trust region policy optimization.
\newblock In \emph{Proceedings of the International Conference on Machine
  Learning (ICML 2015)}, volume~37, pp.\  1889--1897, 2015.

\bibitem[Schulman et~al.(2017)Schulman, Wolski, Dhariwal, Radford, and
  Klimov]{schulman2017}
John Schulman, Filip Wolski, Prafulla Dhariwal, Alec Radford, and Oleg Klimov.
\newblock Proximal policy optimization algorithms.
\newblock \emph{arXiv preprint arXiv:1707.06347}, 2017.

\bibitem[Schulman et~al.(2018)Schulman, Moritz, Levine, Jordan, and
  Abbeel]{schulman2018}
John Schulman, Philipp Moritz, Sergey Levine, Michael Jordan, and Pieter
  Abbeel.
\newblock High-dimensional continuous control using generalized advantage
  estimation.
\newblock In \emph{Proceedings of the International Conference on Learning
  Representations (ICLR 2018)}, pp.\  1--14, 2018.

\bibitem[Sun et~al.(2024)Sun, He, Miao, and Zou]{Sun2024}
Zhongchang Sun, Sihong He, Fei Miao, and Shaofeng Zou.
\newblock {Constrained Reinforcement Learning Under Model Mismatch}.
\newblock In \emph{Proceedings of the International Conference on Machine
  Learning (ICML 2024)}, pp.\  47017--47032, 2024.

\bibitem[Taleghan \& Dietterich(2018)Taleghan and Dietterich]{Taleghan2018}
Majid~Alkaee Taleghan and Thomas~G. Dietterich.
\newblock {Efficient exploration for constrained MDPs}.
\newblock In \emph{AAAI Spring Symposium 2018 - Technical Report}, pp.\
  313--319, 2018.

\bibitem[Tessler et~al.(2019)Tessler, Mankowitz, and Mannor]{Tessler2019}
Chen Tessler, Daniel~J. Mankowitz, and Shie Mannor.
\newblock {Reward constrained policy optimization}.
\newblock In \emph{Proceedings of the International Conference on Learning
  Representations (ICLR 2019)}, pp.\  1--15, 2019.

\bibitem[Tomar et~al.(2022)Tomar, Shani, Efroni, and Ghavamzadeh]{Tomar2022}
Manan Tomar, Lior Shani, Yonathan Efroni, and Mohammad Ghavamzadeh.
\newblock {Mirror Descent Policy Optimization}.
\newblock In \emph{International Conference on Learning Representations (ICLR
  2022)}, pp.\  1--24, 2022.

\bibitem[Wang et~al.(2023)Wang, Ho, and Petrik]{Wang2023}
Qiuhao Wang, Chin~Pang Ho, and Marek Petrik.
\newblock {Policy Gradient in Robust MDPs with Global Convergence Guarantee}.
\newblock In \emph{Proceedings of the International Conference on Machine
  Learning (ICML 2023)}, pp.\  35763--35797, 2023.

\bibitem[Wang et~al.(2024)Wang, Xu, Ho, and Petrik]{Wangd}
Qiuhao Wang, Shaohang Xu, Chin~Pang Ho, and Marek Petrik.
\newblock {Policy Gradient for Robust Markov Decision Processes}.
\newblock \emph{arXiv preprint arXiv:2410.22114v2}, pp.\  1--59, 2024.

\bibitem[Wang et~al.(2022)Wang, Miao, and Zou]{Wang2022}
Yue Wang, Fei Miao, and Shaofeng Zou.
\newblock {Robust Constrained Reinforcement Learning}.
\newblock \emph{arXiv preprint arXiv:2209.06866}, 2022.

\bibitem[Wiesemann et~al.(2013)Wiesemann, Kuhn, and Rustem]{Wiesemann2013a}
Wolfram Wiesemann, Daniel Kuhn, and Ber{\c{c}} Rustem.
\newblock {Robust markov decision processes}.
\newblock \emph{Mathematics of Operations Research}, 38\penalty0 (1):\penalty0
  153--183, 2013.
\newblock ISSN 0364765X.
\newblock \doi{10.1287/moor.1120.0566}.

\bibitem[Wu et~al.(2024)Wu, Tang, Lin, Yu, Mao, Xie, Wang, and Wang]{Wu2024}
Zifan Wu, Bo~Tang, Qian Lin, Chao Yu, Shangqin Mao, Qianlong Xie, Xingxing
  Wang, and Dong Wang.
\newblock {Off-Policy Primal-Dual Safe Reinforcement Learning}.
\newblock In \emph{Proceedings of the International Conference on Learning
  Representations (ICLR 2024)}, pp.\  1--20, 2024.

\bibitem[Xiao(2022)]{Xiao2022}
Lin Xiao.
\newblock {On the Convergence Rates of Policy Gradient Methods}.
\newblock \emph{Journal of Machine Learning Research}, 23:\penalty0 1--36,
  2022.

\bibitem[Xu et~al.(2021)Xu, Lang, and Lan]{Xu2021}
Tengyu Xu, Yingbin Lang, and Guanghui Lan.
\newblock {CRPO: A New Approach for Safe Reinforcement Learning with
  Convergence Guarantee}.
\newblock In \emph{Proceedings of the International Conference on Machine
  Learning (ICML 2021)}, pp.\  11480--11491, 2021.

\bibitem[Yang et~al.(2020)Yang, Rosca, Narasimhan, and Ramadge]{Yang2020}
Tsung~Yen Yang, Justinian Rosca, Karthik Narasimhan, and Peter~J. Ramadge.
\newblock {Projection-Based Constrained Policy Optimization}.
\newblock In \emph{8th International Conference on Learning Representations,
  ICLR 2020}, pp.\  1--24, 2020.

\bibitem[Ying et~al.(2023)Ying, Guo, Ding, Lavaei, and Shen]{Ying2023}
Donghao Ying, Mengzi~Amy Guo, Yuhao Ding, Javad Lavaei, and Zuo~Jun Shen.
\newblock {Policy-Based Primal-Dual Methods for Convex Constrained Markov
  Decision Processes}.
\newblock In \emph{Proceedings of the AAAI Conference on Artificial
  Intelligence (AAAI 2023)}, pp.\  10963--10971, 2023.
\newblock \doi{10.1609/aaai.v37i9.26299}.

\bibitem[Zhan et~al.(2023)Zhan, Cen, Huang, Chen, Lee, and Chi]{Zhan2023}
Wenhao Zhan, Shicong Cen, Baihe Huang, Yuxin Chen, Jason~D. Lee, and Yuejie
  Chi.
\newblock {Policy Mirror Descent for Regularized Reinforcement Learning: a
  Generalized Framework With Linear Convergence}.
\newblock \emph{SIAM Journal on Optimization}, 33\penalty0 (2):\penalty0
  1061--1091, 2023.
\newblock \doi{10.1137/21M1456789}.

\bibitem[Zhang et~al.(2021)Zhang, Chen, Zhu, and Sun]{Zhang2021d}
Xuezhou Zhang, Yiding Chen, Jerry Zhu, and Wen Sun.
\newblock {Robust Policy Gradient against Strong Data Corruption}.
\newblock In \emph{Proceedings of the International Conference on Machine
  Learning (ICML 2021)}, pp.\  12391--12401, 2021.

\bibitem[Zhang et~al.(2024)Zhang, Panaganti, Shi, Sui, Wierman, and
  Yue]{Zhang2024}
Zhengfei Zhang, Kishan Panaganti, Laixi Shi, Yanan Sui, Adam Wierman, and
  Yisong Yue.
\newblock {Distributionally Robust Constrained Reinforcement Learning under
  Strong Duality}.
\newblock \emph{arXiv preprint arXiv:2406.15788}, 2024.

\bibitem[Zhou et~al.(2023)Zhou, Liu, Cheng, Kalathil, Kumar, and
  Tian]{Zhou2023}
Ruida Zhou, Tao Liu, Min Cheng, Dileep Kalathil, P.~R. Kumar, and Chao Tian.
\newblock {Natural Actor-Critic for Robust Reinforcement Learning with Function
  Approximation}.
\newblock In \emph{Advances in Neural Information Processing Systems (NeurIPS
  2023)}, 2023.

\end{thebibliography}
\bibliographystyle{tmlr}

\appendix

\clearpage

\section{Bregman divergence and associated policy definitions}
\label{app: bregman divergence}
\begin{definition}
\textbf{Bregman divergence.} Let $h: \Delta(A) \to \bR$ be a convex and differentiable function. We define 
\begin{equation}
\label{eq: B_h}
B(x, y ; h) := h(x) - h(y) - \left\langle \nabla h(y), x - y \right\rangle 
\end{equation}
as the Bregman divergence with distance-generating function $h$.
\end{definition}
The Bregman divergence represents the distance between the first-order Taylor expansion and the actual function value, intuitively representing the strength of the convexity. A list of common distance-generating functions and their associated Bregman divergences is given in Table~\ref{tab: bregman generating}.
\begin{table}[htbp!]
    \centering
    \caption{Bregman divergence for common distance-generating functions.}
    \label{tab: bregman generating}
    \begin{tabular}{l l}
    \toprule
    \textbf{Distance-generating function}  ($h$) & \textbf{Bregman divergence}  ($B(\cdot,\cdot ; h)$) \\
    \midrule
      $\ell_1$-norm   $\norm{p}{1}$   &  $0$ for $x,y \in \bR^d_{+}$. \\
            &      \\
      Squared $\ell_2$-norm  $\frac{1}{2} \norm{x}{2}^2$   & Squared Euclidian distance $D_{\text{SE}}(x,y) = \frac{1}{2} \norm{x - y}{2}^2$ for $x,y \in \bR^d$\\
      &      \\
      Negative entropy $\sum_{i} p(i) \log(p(i))$   &  Kullbach-Leibler divergence  $D_{\text{KL}}(p,q) = \sum_{i} p(i) \log(p(i)/q(i))$ for $p,q \in \Delta$  \\
      \bottomrule
    \end{tabular}
\end{table}

Table~\ref{tab: bregman policies} and Table~\ref{tab: bregman transitions} show the update rules for different policy and transition kernel parametrisations. They hold true for any value function as well as Lagrangian value functions, so the tables omit the bold for generality purposes.
\begin{table}[htbp!]
    \centering
        \caption{Update rules for different policy parametrisations and Bregman divergences. }
    \label{tab: bregman policies}
    \begin{adjustbox}{width=.99\textwidth,center}
    \begin{tabular}{l l l }
    \toprule
    \textbf{Parametrisation}  &  \textbf{Bregman divergence} & \textbf{Update rule} \\
    \midrule
      Direct: $\pi = \theta$ & $D_{\text{SE}}(\theta,\theta_t)$ & $\theta_{t+1} \gets \text{proj}_{\Pi}(\theta_t - \eta_t \nabla_{\theta} V_{\pi,p}(d_0))$ \\
      Softmax: $\pi(a \vert s) = \frac{\exp(\theta_{s,a})}{\sum_{a'} \exp(\theta_{s,a'})}$ & $D_{\text{SE}}(\theta,\theta_t)$ &  $\theta_{t+1} \gets \theta_t - \eta_t \nabla_{\theta} V_{\pi,p}(d_0)  $ \\
      Softmax:  $\pi(a \vert s) = \frac{\exp(\theta_{s,a})}{\sum_{a'} \exp(\theta_{s,a'})}$     &  Occupancy-weighted KL-divergence (Eq.~\ref{eq: weighted bregman})     & $\pi_{k}^{t+1}(a \vert s) = \frac{1}{Z_k^t(s)} (\pi_{k}^{t}(a \vert s))^{1 - \frac{\eta \alpha}{1 - \gamma}} e^{ \frac{- \eta  \mathbf{Q}_{\pi_k^t,p}^{\alpha}(s,a)}{1-\gamma}}$  \\
      \bottomrule
    \end{tabular}
    \end{adjustbox}
\end{table}

\begin{table}[htbp!]
    \centering
    \caption{Update rules and uncertainty sets for different parametric transition kernels (PTKs). The notation $D(\cdot,\cdot)$ indicates a particular distance function such as $\ell_1$ or $\ell_{\infty}$-norm and the $\bar{x}$ notation indicates the nominal model for any parameter $x$ (typically obtained from parameter estimates).}
    \label{tab: bregman transitions}
\begin{adjustbox}{width=.99\textwidth,center}
    \begin{tabular}{l l  l}
    \toprule
    \textbf{Parametrisation}  &  \textbf{Update rule} & \textbf{Uncertainty sets} \\
    \midrule
        Entropy PTK:  &   &\\
        $p(s' \vert s,a) = \frac{\bar{p}(s' \vert s,a) \exp\left(\frac{\zeta^{\intercal}\phi(s')}{\mathbf{\lambda'}^{\intercal} \varphi(s,a)}\right)}{\sum_{s''} \bar{p}(s'' \vert s,a) \exp\left(\frac{\zeta^{\intercal}\phi(s'')}{\mathbf{\lambda'}^{\intercal} \varphi(s,a)}\right)} $       &   $\xi_{t+1} \gets \argmax_{\xi} \{ \left\langle \eta_t \nabla_{\xi} V_{\pi,p}(d_0) , \xi \right\rangle  - D(\xi,\xi_t)    \}$   &  $\cU_{\xi} = \{ \xi : D(\xi , \bar{\xi}) \leq \kappa_{\xi} \}$.  \\
          &      &   \\
          \hline
        Gaussian mixture PTK:     &     &     \\
         $p(s' \vert s,a) = \sum_{m=1}^{M} \omega_m \cN(\eta^{\intercal}\zeta(s,a))$     &   $\xi_{t+1} \gets \argmax_{\xi} \{ \left\langle \eta_t \nabla_{\xi} V_{\pi,p}(d_0) , \xi \right\rangle  - D(\xi,\xi_t)    \}$    &  $\cU_{\eta} = \{ \eta : \forall m \in [M] \,, D(\eta , \bar{\eta}_m) \leq \kappa_{\eta} \}$   \\
          &    & $\cU_{\omega} = \{ \omega \in \Delta^{M}: \forall m \in [M] \,, \omega_m \in [F^{-1}_{\omega_m}(\delta/2),F^{-1}_{\omega_m}(1-\delta/2)] \}$     \\
      \bottomrule
    \end{tabular}%
\end{adjustbox}
\end{table}
\clearpage 
\section{Robust Sample-based PMD-PD pseudocode}
\subsection{Discretised state-action space}
\begin{algorithm}[htbp!]
\caption{Robust Sample-based PMD-PD (discrete setting)} \label{alg: discrete}
\begin{algorithmic}
 \State Inputs: Discount $\gamma \in [0,1)$, error tolerances $\epsilon,\epsilon_0' > 0$, failure probability $\delta \in [0,1)$, learning rates $\eta = \frac{1-\gamma}{\alpha}$, $\eta_p > 0$, $\eta_{\lambda} = 1.0$, and penalty coefficients $\alpha = \frac{2\gamma^2 m \eta_{\lambda}}{(1-\gamma)^3}$ and $\alpha_p > 0$.
\State Initialise: $\pi_0$ as uniform random policy, $\lambda_{0,j} = \max \left\{0, -\eta_{\lambda} \hat{V}_{\pi_0,p_0}(\rho) \right\}$, $p_0=\bar{p}$.
\For{$k \in \left\{0,\dots,K-1\right\}$}
\For{$t \in \left\{0,\dots,t_k -1\right\}$}
\For{$(s,a) \in \cS \times \cA$}
\State Generate $M_{Q,k}$ samples of length $N_{Q,k}$ based on $\pi_k^t$ and $p_k$ (following Lemma~\ref{lem: approximation}).
\State Estimate the value: e.g. for softmax parametrisation,\\
\State $\hat{\mathbf{Q}}_{\pi_k^t,p_k}^{\alpha}(s,a) = \tilde{\mathbf{c}}_k(s,a) + \alpha \log(\frac{1}{\pi_k(a \vert s)}) + \frac{1}{M_{Q,k}} \sum_{j=1}^{M_{Q,k}} \sum_{l=1}^{N_{V,k} - 1} \gamma^l \left[ \tilde{\mathbf{c}}_k(s_l^j,a_l^j) + \alpha \sum_{a'} \pi_k^t(a' \vert  s_l^j)\log(\frac{\pi_k^t(a' \vert s_l^j)}{\pi_k(a' \vert s_l^j)})  \right]$.
\EndFor
\State \LineComment{Policy mirror descent}
\If{ Using Softmax parametrisation}
\State $\pi_{k}^{t+1}(a \vert s) \gets (\pi_{k}^{t}(a \vert s))^{1- \frac{\eta \alpha}{1-\gamma}} \exp\left(-\eta \frac{\hat{\mathbf{Q}}_{\pi_{k}^t,p_k}^{\alpha}(s,a)}{1-\gamma}\right) \,~\forall (s,a) \in \cS \times \cA$.
\Else
\State \LineComment{Use general Bregman divergence}
\State $\theta_{k}^{t+1} \gets \argmin_{\theta \in \Theta} \eta \left\langle \nabla_{\theta} \hat{\mathbf{Q}}_{\pi_{k}^t,p_k}^{\alpha}, \theta \right\rangle + \alpha B_{d_{\rho}^{\pi_{\theta},p_k}}(\theta,\theta_{k})$ 
\State Define $\pi_{k}^{t+1} := \pi_{\theta_k^{t+1}}$.
\EndIf
\EndFor
\State Define $\pi_{k+1} := \pi_{k}^{t_k}$
\State Apply Tabular Approximate TMA (Algorithm~\ref{alg: approximate TMA}) with $c = \mathbf{c}_k$, $\pi = \pi_{k+1}$, and $p^0 = p_{k}^{0}$ to get $p_{k}^{t_k'}$.
\State  Define $p_{k+1} := p_{k}^{t_k'}$
\State \LineComment{Augmented update of Lagrangian multipliers}
\State Generate $M_{V,k+1}$ samples of length $N_{V,k+1}$ based on $\pi_{k+1}$ and $p_{k+1}$ starting from $\rho$  ( Lemma~\ref{lem: approximation}).
\State Estimate $\hat{V}_{\pi_{k+1},p_{k+1}}^{i}(\rho) = \frac{1}{M_{V,k+1}} \sum_{j=1}^{M_{Q,k+1}} \sum_{l=1}^{N_{V,k+1}} \gamma^l c_i(s_l^j,a_l^j)$ for all $i \in [m]$.
\State $\lambda_{k+1,i} \gets \max \left\{ -\eta_{\lambda} \hat{V}_{\pi_{k+1},p_{k+1}}(\rho), \lambda_{k,i} + \eta_{\lambda} \hat{V}_{\pi_{k+1},p_{k+1}}^i(\rho) \right\} \quad \forall i \in [m]$
\EndFor
\end{algorithmic}
\end{algorithm}

\clearpage
\subsection{Continuous state-action space}

\begin{algorithm}[htbp!]
\caption{Robust Sample-based PMD-PD (continuous setting)} \label{alg: continuous}
\begin{algorithmic}[htbp!]
\State Inputs: Discount factor $\gamma \in [0,1)$, sample sizes $M_{Q,k},M_{V,k}$ and episode lengths $N_{Q,k},N_{V,k}$ for all $k \in [K]$, learning rate $\eta$, TMA learning rate $\eta_p > 0$, dual learning rate $\eta_{\lambda}$, error tolerance for LTMA $\epsilon_0' > 0$, and penalty coefficients $\alpha > 0$ and $\alpha_p > 0$.
\State Initialise: $\pi_0$ as uniform random policy, $\lambda_{0,j} = \max \left\{ 0, -\eta_{\lambda} \hat{V}_{\pi_0,p_0}(\rho) \right\}$, $p_0=\bar{p}$.
\For{$k \in \left\{0,\dots,K-1\right\}$}
\For{$t \in \left\{0,\dots,t_k - 1\right\}$}
\State Generate $M_{Q,k}$ samples of length $N_{Q,k}$ based on $\pi_k^t$ and $p_k$. 
\State Update $\hat{\mathbf{Q}}_{\pi_k^t,p_k} \gets \argmin_{Q \in \cF_Q} \frac{1}{M_{Q,k}} \sum_{j=1}^{M_{Q,k}}  \left(Q(s_0^j,a_0^j) - \sum_{l=0}^{N_{Q,k}-1} \gamma^l \left[ \tilde{\mathbf{c}}_k(s_l^j,a_l^j) \right]\right)^2$.
\State \LineComment{Policy mirror descent}
\State $\theta_{k}^{t+1} \gets \argmin_{\theta \in \Theta} \eta \left\langle \nabla_{\theta} \hat{\mathbf{Q}}_{\pi_{k}^t,p_k}^{\alpha}, \theta \right\rangle +  \alpha B_{d_{\rho}^{\pi_{\theta},p_k}}(\theta,\theta_{k})$. 
\State Define $\pi_{k}^{t+1} := \pi_{\theta_k^{t+1}}$
\EndFor
\State Define $\pi_{k+1} := \pi_{k}^{t_k}$
\State Apply Function Approximation TMA (Algorithm~\ref{alg: approximate TMA}) with $c = \mathbf{c}_k$, $\pi = \pi_{k+1}$, and $p^0 = p_{k}^{0}$ to get $p_{k}^{t_k'}$.
\State Define $p_{k+1}:= p_{k}^{t_k'}$.
\State \LineComment{Dual update of augmented Lagrangian multipliers}
\State Generate $M_{V,k+1}$ samples of length $N_{V,k+1}$ based on $\pi_{k+1}$ and $p_{k+1}$ starting from $\rho$. 
\State Estimate $\hat{\mathbf{V}}_{\pi_{k+1}}^{i}(\rho) \gets \argmin_{V \in \cF_V} \frac{1}{M_{V,k+1}} \sum_{j=1}^{M_{V,k+1}} \left(V(s_0^j) - \sum_{l=1}^{N_{V,k+1}} \gamma^l c_i(s_l^j,a_l^j) \right)^2$ for all $i \in [m]$.
\State $\lambda_{k+1,i} \gets \max \left\{ -\eta_{\lambda} \hat{V}_{\pi_{k+1}}^i(\rho), \lambda_{k,i} + \eta_{\lambda} \hat{V}_{\pi_{k+1}}^i(\rho) \right\} $ for all $i \in [m]$.
\EndFor 
\end{algorithmic}
\end{algorithm}

\section{Supporting lemmas and proofs for Approximate TMA}
\subsection{Recursion property}
The following straightforward recursion property is useful in the proof of the regret bound of approximate TMA.
\begin{lemma}[Recursion property, Lemma~13 in \cite{Xiao2022}]
\label{lem: recursion}
Let $\beta \in (0,1)$, and $b>0$. If $\{a_t\}_{t \geq 0}$ is a non-negative sequence such that $x_{t+1} \leq \beta a_t + b$, then  $x_{t} \leq \beta^t x_0 + \frac{b}{1-\beta}$.
\end{lemma}

\subsection{Regret bound for approximate TMA}
\label{app: regret approximate TMA}
The regret bound of Approximate TMA (Theorem~\ref{th: regret approximate TMA}) is shown below.
\begin{proof}
Note that
\begin{align*}
\langle p^*(\cdot \vert s,a) - p^{t+1}(\cdot \vert s,a), \hat{G}_{\pi,p^t}(s,a,\cdot) \rangle  = \langle p^*(\cdot \vert s,a) - p^{t}(\cdot \vert s,a), \hat{G}_{\pi,p^t}(s,a,\cdot) \rangle + \langle p^{t}(\cdot \vert s,a) - p^{t+1}(\cdot \vert s,a), \hat{G}_{\pi,p^t}(s,a,\cdot) \rangle  \,.
\end{align*}
From the above and the pushback property in Eq.~\ref{eq: pushback TMA} with $p=p^*$, 
\begin{align*}
\langle p^*(\cdot \vert s,a) - p^{t}(\cdot \vert s,a), \hat{G}_{\pi,p^t}(s,a,\cdot)\rangle + \langle p^{t}(\cdot \vert s,a) - p^{t+1}(\cdot \vert s,a), \hat{G}_{\pi,p^t}(s,a,\cdot) \rangle \leq \frac{\alpha_p(t)}{\eta_p(t)} \left(B(p^*, p^t) - B(p^*, p^{t+1}) \right)
\end{align*}
Weighting both sides according to the occupancy measure, we obtain
\begin{align*}
\label{eq: expectation pushback}
&\underbrace{\frac{1}{1-\gamma}\sum_{s\in \cS} d_{\rho}^{\pi,p^*}(s) \sum_{a \in \cA} \pi(a \vert s) \langle p^*(\cdot \vert s,a) - p^{t}(\cdot \vert s,a),  \hat{G}_{\pi,p^t}(s,a,\cdot) \rangle}_{A} \\
& +  \underbrace{\frac{1}{1-\gamma}\sum_{s\in \cS} d_{\rho}^{\pi,p^*}(s) \sum_{a \in \cA} \pi(a \vert s) \langle  p^{t}(\cdot \vert s,a) - p^{t+1}(\cdot \vert s,a), \hat{G}_{\pi,p^t}(s,a,\cdot) \rangle}_{B} \\
&\leq \frac{\alpha_p(t)}{\eta_p(t)} \left(B_{d_{\rho}^{\pi,p^*}}(p^*, p^t) - B_{d_{\rho}^{\pi,p^*}}(p^*, p^{t+1}) \right) \,.
\end{align*}

For part A), note that 
\begin{align*}
& \frac{1}{1-\gamma} \sum_{s \in \cS} d_{\rho}^{\pi,p^*}(s) \sum_{a \in \cA} \pi(a \vert s) \langle p^*(\cdot \vert s,a) - p^{t}(\cdot \vert s,a),  \hat{G}_{\pi,p^t}(s,a,\cdot) \rangle \\
&= \frac{1}{1-\gamma} \sum_{s \in \cS} d_{\rho}^{\pi,p^*}(s) \sum_{a \in \cA} \pi(a \vert s) \left( \langle p^*(\cdot \vert s,a) - p^{t}(\cdot \vert s,a),  G_{\pi,p^t}(s,a,\cdot) \rangle +  \langle p^*(\cdot \vert s,a) - p^{t}(\cdot \vert s,a),  \hat{G}_{\pi,p^t}(s,a,\cdot) - G_{\pi,p^t}(s,a,\cdot) \rangle \right)  \tag{decomposing}  \\
&= V_{\pi,p^*}(\rho) - V_{\pi,p^t}(\rho) +   \frac{1}{1-\gamma} \sum_{s \in \cS} d_{\rho}^{\pi,p^*}(s) \sum_{a \in \cA} \pi(a \vert s) \langle p^*(\cdot \vert s,a) - p^{t}(\cdot \vert s,a),  \hat{G}_{\pi,p^t}(s,a,\cdot) - G_{\pi,p^t}(s,a,\cdot) \rangle \tag{Lemma~\ref{lem: performance difference transitions 2}} \\
&\geq V_{\pi,p^*}(\rho) - V_{\pi,p^t}(\rho) -  \frac{2\epsilon'}{1-\gamma} \,,
\end{align*}
where the last step follows from a bound of the absolute value as in the proof of Lemma~\ref{lem: approximate TMA ascent property}.

For part B), note that applying the ascent property (Eq.~\ref{eq: ascent TMA} on $\hat{G}$ results in
\begin{align*}
0 &\geq \frac{1}{1-\gamma}\sum_{s\in \cS} d_{\rho}^{\pi,p^*}(s) \sum_{a \in \cA} \pi(a \vert s) \langle p^{t}(\cdot \vert s,a) - p^{t+1}(\cdot \vert s,a),  \hat{G}_{\pi,p^t}(s,a,\cdot) \rangle \\
&= \frac{1}{1-\gamma} \sum_{s \in \cS} d_{\rho}^{\pi,p^*}(s) \sum_{a \in \cA} \pi(a \vert s) \big( \langle p^{t}(\cdot \vert s,a) - p^{t+1}(\cdot \vert s,a),  G_{\pi,p^t}(s,a,\cdot) \rangle +  \\
& \hspace{6cm} \langle p^{t}(\cdot \vert s,a) - p^{t+1}(\cdot \vert s,a),  \hat{G}_{\pi,p^t}(s,a,\cdot) - G_{\pi,p^t}(s,a,\cdot) \rangle \big) \tag{decomposing} \\
&= \frac{1}{1-\gamma} \sum_{s \in \cS} \frac{d_{\rho}^{\pi,p^*}(s)}{d_{\rho}^{\pi,p^{t+1}}(s)} d_{\rho}^{\pi,p^{t+1}}(s)   \sum_{a \in \cA} \pi(a \vert s) \big( \langle p^{t}(\cdot \vert s,a) - p^{t+1}(\cdot \vert s,a),  G_{\pi,p^t}(s,a,\cdot) \rangle +  \\
&  \hspace{6cm} \langle p^{t}(\cdot \vert s,a) - p^{t+1}(\cdot \vert s,a),  \hat{G}_{\pi,p^t}(s,a,\cdot) - G_{\pi,p^t}(s,a,\cdot) \rangle  \big) \\
\end{align*}
\begin{align*}
&\geq \norm{\frac{d_{\rho}^{\pi,p^*}}{d_{\rho}^{\pi,p^{t+1}}}}{\infty} \Big( \left(V_{\pi,p^t}(\rho) - V_{\pi,p^{t+1}}(\rho)\right)  + \\
&  \hspace{4cm} \frac{1}{1-\gamma} \sum_{s \in \cS} d_{\rho}^{\pi,p^{t+1}}(s) \sum_{a \in \cA} \pi(a \vert s) \langle p^{t}(\cdot \vert s,a) - p^{t+1}(\cdot \vert s,a),  \hat{G}_{\pi,p^t}(s,a,\cdot) - G_{\pi,p^t}(s,a,\cdot) \rangle \Big) \tag{since the term is negative, upper bound the ratio; Lemma~\ref{lem: performance difference transitions 2}}\\
&\geq M\Big(\left(V_{\pi,p^t}(\rho) - V_{\pi,p^{t+1}}(\rho)\right)  + \\
&  \hspace{4cm} \frac{1}{1-\gamma} \sum_{s \in \cS} d_{\rho}^{\pi,p^{t+1}}(s) \sum_{a \in \cA} \pi(a \vert s) \langle p^{t}(\cdot \vert s,a) - p^{t+1}(\cdot \vert s,a),  \hat{G}_{\pi,p^t}(s,a,\cdot) - G_{\pi,p^t}(s,a,\cdot) \rangle \Big) \tag{definition of $M = \sup_{p \in \cP} \norm{\frac{d_{\rho}^{\pi,p^*}}{d_{\rho}^{\pi,p}}}{\infty}$}\\
&\geq M\left(V_{\pi,p^t}(\rho) - V_{\pi,p^{t+1}}(\rho) -  \frac{2\epsilon'}{1-\gamma}   \right) \,,
\end{align*}
where the last step again follows from derivations in Lemma~\ref{lem: approximate TMA ascent property}.

Combining both A) and B), denoting $\delta_t = V_{\pi,p^*}(\rho) - V_{\pi,p^t}(\rho)$, and rearranging, we obtain
\begin{align*}
&\delta_t + M\left(\delta_{t+1} - \delta_{t} -  \frac{2\epsilon'}{1-\gamma}  \right) \leq \frac{\alpha_p(t)}{\eta_p(t) (1-\gamma)} \left(B_{d_{\rho}^{\pi,p^*}}(p^*, p^t) - B_{d_{\rho}^{\pi,p^*}}(p^*, p^{t+1}) \right) + \frac{2 \epsilon'}{1-\gamma}   \\
& M \delta_{t+1} + (1-M) \delta_{t}  \leq \frac{\alpha_p(t)}{\eta_p(t) (1-\gamma)} \left(B_{d_{\rho}^{\pi,p^*}}(p^*, p^t) - B_{d_{\rho}^{\pi,p^*}}(p^*, p^{t+1}) \right) + \frac{2 \epsilon' (1 + M)}{1-\gamma} \tag{rearrange} \\
& \left(\frac{1}{M} - 1\right) \delta_t + \delta_{t+1} \leq \frac{\alpha_p(t)}{M \eta_p(t) (1-\gamma)} \left(B_{d_{\rho}^{\pi,p^*}}(p^*, p^t) - B_{d_{\rho}^{\pi,p^*}}(p^*, p^{t+1}) \right) + \frac{4 \epsilon' }{1-\gamma} \tag{divide by $M$ and note $\frac{1}{M} + 1 < 2$} \\
&\delta_{t+1} + \frac{\alpha_p(t)}{M \eta_p(t) (1-\gamma)} B_{d_{\rho}^{\pi,p^*}}(p^*, p^{t+1}) \leq \left(1 - \frac{1}{M}\right)\left( \delta_t +  \frac{\alpha_p(t)}{(M-1) \eta_p(t) (1-\gamma)} B_{d_{\rho}^{\pi,p^*}}(p^*, p^t) \right) + \frac{4 \epsilon' }{1-\gamma} \tag{rearrange} \\
&\delta_{t+1} + \frac{\alpha_p(t+1)}{(M - 1) \eta_p(t+1) (1-\gamma)} B_{d_{\rho}^{\pi,p^*}}(p^*, p^{t+1}) \leq \left(1 - \frac{1}{M}\right)\left( \delta_t +  \frac{\alpha_p(t)}{(M-1) \eta_p(t) (1-\gamma)} B_{d_{\rho}^{\pi,p^*}}(p^*, p^t) \right) + \frac{4 \epsilon' }{1-\gamma} \tag{since $M \geq \frac{1}{1-\gamma}$, it follows that $\eta_p(t+1) \geq \eta_p(t)/\gamma \geq \eta_p(t)\frac{M}{M-1}$} \,.
\end{align*}
Using Lemma~\ref{lem: recursion} with $\beta = 1 - \frac{1}{M}$, $x_t = \delta_t +  \frac{\alpha_p(t)}{(M-1) \eta_p(t) (1-\gamma)} B_{d_{\rho}^{\pi,p^*}}(p^*, p^t)$, and $b = \frac{4 \epsilon'}{1-\gamma}$, it follows that
\begin{align*}
\delta_ t &\leq \delta_{t} + \frac{\alpha_p(t)}{(M - 1) \eta_p(t) (1-\gamma)} B_{d_{\rho}^{\pi,p^*}}(p^*, p^{t}) \\
         &\leq \left(1 - \frac{1}{M}\right)^t\left( \delta_0 +  \frac{\alpha_p(0)}{(M-1) \eta_p(0) (1-\gamma)} B_{d_{\rho}^{\pi,p^*}}(p^*, p^0) \right) + \frac{4 M \epsilon' }{1-\gamma} \\
         &\leq \left(1-\frac{1}{M}\right)^t \frac{3C}{1-\gamma} + \frac{4 M}{1-\gamma} \epsilon' \,,
\end{align*}
where the last step follows from $\delta_0 \leq \frac{2 C}{1-\gamma}$ and $\eta_p(0) \geq \alpha_p(0) \frac{\gamma}{C (1-\gamma)} B_{d_{\rho}^{\pi,p^{0}}}(p^*, p^0)$.
\end{proof}

\subsection{Approximate action next-state value}
\label{app: approximate G}
The proof of Lemma~\ref{lem: approximate G} is given below.
\begin{proof}
By triangle inequality, the error $\norm{\hat{G} - G}{\infty}$ is upper bounded by the truncation error and the estimation error, so it suffices to bound both by half of the desired error in Eq.~\ref{eq: approximately correct G}.

For the truncation error, observe that 
\begin{align*}
\left\vert \bE\left[\sum_{t=N_{G,t}}^{\infty} \gamma^t c(s_t,a_t,s_{t+1}) \right] \right\vert \leq \sum_{t=N_{G,t}}^{\infty} C \gamma^t = C \frac{\gamma^{N_{G,t}}}{1-\gamma} \,.
\end{align*}

For the estimation error of any particular $(s,a,s') \in \cS \times \cA \times \cS$ and any $t \in \{0,\dots,t_k'\}$, apply Hoeffding's inequality with the random variable $X_t(s,a,s') = \sum_{l=0}^{N_{G,t}-1} \gamma^l \left[ c(s_l^j,a_l^j) \right] \in [-\frac{C}{1-\gamma},\frac{C}{1-\gamma}]$ conditioned on $(s_0,a_0,s_1)=(s,a,s')$, error tolerance $\zeta = \frac{\gamma^{N_{G,t}} C}{1-\gamma}$, and failure probability $\delta' = \frac{\delta}{S^2At_k'}$. This results in 
\begin{align*}
& P( \vert \bar{X}(s,a,s') - \bE[\bar{X}(s,a,s')] \vert \geq \zeta) \leq 2\exp\left(-\frac{M_{G,t} \zeta^2 (1-\gamma)^2}{2C^2}\right) = \delta' \\
& M_{G,t} \leq \frac{2 C^2}{\zeta^2 (1-\gamma)^2}  \log\left(\frac{2}{\delta'}\right)  = \frac{2 \gamma^{2 N_{G,t}}}{(1-\gamma)^2}  \log\left(\frac{2}{\delta'}\right)  \,.
\end{align*} 
Repeating the same procedure for all $\cS \times \cA \times \cS$, and applying union bound, we obtain a failure probability
\begin{align*}
& P\left( \bigcup_{s,a,s',t} \vert \bar{X}_t(s,a,s') - \bE[\bar{X}_t(s,a,s')] \vert \geq \zeta \right)   \\
& \leq \sum_{s,a,s',t} P( \vert \bar{X}_t(s,a,s') - \bE[\bar{X}_t(s,a,s')] \vert \geq \zeta) \\
& \leq \sum_{s,a,s',t} \delta' = \delta \,.
\end{align*}
This concludes the proof that the setting $M_{G,t} = \frac{2 \gamma^{-2N_{G,t}}}{(1-\gamma)^2} \log\left(\frac{2}{\delta'}\right) $ is sufficient to obtain an estimation error of at most $C \frac{\gamma^{N_{G,t}}}{1-\gamma}$ with a failure probability of at most $\delta$ across all iterations and state-action-state triplets.
\end{proof}

\subsection{Sample complexity of Approximate TMA}
\label{app: sample complexity approximate TMA}
The proof of Lemma~\ref{lem: sample complexity approximate TMA} is given below.
\begin{proof}
Note that it suffices to bound both parts in the RHS of Eq.~\ref{eq: regret approximate TMA} with $\epsilon'/2$.

For the first part of Eq.~\ref{eq: regret approximate TMA}, note that requiring
\begin{align}
\label{eq: part 1}
\left(1 -\frac{1}{M}\right)^t \frac{3C}{1-\gamma} \leq \frac{\epsilon'}{2} 
\end{align}
implies that the number of iterations is bounded by
\begin{align*}
 t  \leq \log\left(\frac{\epsilon' (1-\gamma)}{6C}\right) / \log\left(\frac{M-1}{M}\right) = \log_{\frac{M}{M-1}}\left(\frac{6C}{\epsilon' (1-\gamma)}\right) \,.
\end{align*}
This confirms that the setting $t_k' \geq \log_{\frac{M}{M-1}}\left(\frac{6C}{\epsilon' (1-\gamma)}\right)$ is sufficient to satisfy Eq.~\ref{eq: part 1}.

For the second part of Eq.~\ref{eq: regret approximate TMA}, define $\epsilon' = 2 C \frac{\gamma^{H}}{1-\gamma}$ consistent with Eq.~\ref{eq: approximate next-state value}. This amounts to requiring that
\begin{align}
\label{eq: part 2}
\frac{4M}{1-\gamma} \epsilon' = \frac{8MC}{(1-\gamma)^2} \gamma^{H} \leq \epsilon'/2
\end{align}
This implies a requirement of at most 
\begin{align*}
H \leq \log\left(\frac{\epsilon' (1-\gamma)^2}{16MC}\right) / \log(\gamma) = \log_{\frac{1}{\gamma}}\left(\frac{16MC}{\epsilon' (1-\gamma)^2}\right) \,.
\end{align*}
Therefore the setting of $H \geq \log_{\frac{1}{\gamma}}\left(\frac{16MC}{\epsilon' (1-\gamma)^2}\right)$ suffices to obtain Eq.~\ref{eq: part 2}.

With the settings of $M_G$, $t_k'$, and $H$, note that $\gamma^{-2H} = \left(\frac{1}{\gamma}\right)^{2 \log_{\frac{1}{\gamma}}\left(\frac{16MC}{\epsilon' (1-\gamma)^2}\right)} = \left(\frac{16MC}{\epsilon' (1-\gamma)^2}\right)^2$. Therefore
\begin{align*}
M_G &= \frac{2 \gamma^{-2H}}{(1-\gamma)^2} \log\left(\frac{2 S^2At_k'}{\delta}\right)  \\
	&=  2 \left( \frac{16 M}{(1-\gamma)^2 \epsilon'}  \right)^2 \log\left(\frac{2 S^2At_k'}{\delta} \right) \,.
\end{align*}

The total number of samples is determined by the number of state-action-state triplets, the number of iterations, the horizon, and the number of trajectories per mini-batch:
\begin{align*}
n &= \vert S \vert^2 \, \vert A \vert \, t_k' H M_{G}    \\
  &\leq S^2A \log_{\frac{M}{M-1}}\left(\frac{6C}{\epsilon' (1-\gamma)}\right) \log_{\frac{1}{\gamma}}\left(\frac{16M}{(1-\gamma)^2 \epsilon'}  \right) 2 \left( \frac{16 M}{(1-\gamma)^2 \epsilon'}  \right)^2 \log\left(\frac{2t_k'S^2A}{\delta} \right)  \\
    &= 512 S^2A \frac{M^2}{(1-\gamma)^4 \epsilon''^2} \log_{\frac{M}{M-1}}\left(\frac{6C}{\epsilon' (1-\gamma)}\right) \log\left(\frac{16M}{(1-\gamma)^4 \epsilon'}  \right)\log\left(\frac{2t_k'S^2A}{\delta} \right)  \\
    &=  \tilde{\cO}\left(\frac{S^2A M^2}{(1-\gamma)^4 \epsilon'^2}\right) \,.
\end{align*}
\end{proof}

\section{Supporting lemmas and proofs for Robust Sample-based PMD-PD}
\subsection{Analysis of the multipliers}
\label{app: multipliers}
To analyse the average regret in terms of the value directly, the lemma below provides an equivalence between the unconstrained value and the Lagrangian value at the optimum.
\begin{lemma}[Complementary slackness]
For any CMDP and any $j \in [m]$, the optimal constrained solution $(\pi^*,\lambda^*)$ has either $\lambda_{j}^* = 0$ or $V_{\pi^*}^j(\rho)=0$, such that its Lagrangian value is equal to its unconstrained value:
\begin{equation}
\mathbf{V}_{\pi^*}(\rho; \lambda^*) = V_{\pi^*}(\rho) \,.
\end{equation}
\end{lemma}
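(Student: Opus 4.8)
The plan is to derive complementary slackness from strong duality, which holds for the CMDP under Slater's condition (Assumption~\ref{ass: slater}). First I would recall that the constrained problem $\min_{\pi}\{V_{\pi}(\rho) : V_{\pi}^{j}(\rho)\leq 0,\ \forall j\in[m]\}$ admits an equivalent reformulation as a linear program over the polytope of discounted state-action occupancy measures $d_{\rho}^{\pi}(s,a)$: both the objective $V_{\pi}(\rho)=\tfrac{1}{1-\gamma}\sum_{s,a}d_{\rho}^{\pi}(s,a)c_{0}(s,a)$ and each constraint $V_{\pi}^{j}(\rho)$ are linear in $d_{\rho}^{\pi}$, and the set of valid occupancy measures is convex. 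Hence the problem is convex, and the strictly feasible policy $\bar\pi$ guaranteed by Slater's condition (with $V_{\bar\pi}^{j}(\rho)\leq-\zeta<0$ for all $j$) verifies Slater's constraint qualification. Consequently strong duality holds: there is an optimal dual vector $\lambda^{*}\geq 0$ such that $(\pi^{*},\lambda^{*})$ is a saddle point of the Lagrangian, i.e.
\begin{equation*}
\mathbf{V}_{\pi^{*}}(\rho;\lambda)\ \leq\ \mathbf{V}_{\pi^{*}}(\rho;\lambda^{*})\ \leq\ \mathbf{V}_{\pi}(\rho;\lambda^{*})\qquad\forall\,\pi,\ \forall\,\lambda\geq 0 ,
\end{equation*}
and $V_{\pi^{*}}(\rho)=\mathbf{V}_{\pi^{*}}(\rho;\lambda^{*})$ (zero duality gap).

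Next I would extract complementary slackness from the left-hand saddle-point inequality. Taking $\lambda=0$ gives $V_{\pi^{*}}(\rho)\leq\mathbf{V}_{\pi^{*}}(\rho;\lambda^{*})=V_{\pi^{*}}(\rho)+\sum_{j=1}^{m}\lambda_{j}^{*}V_{\pi^{*}}^{j}(\rho)$, hence $\sum_{j}\lambda_{j}^{*}V_{\pi^{*}}^{j}(\rho)\geq 0$. On the other hand, $\pi^{*}$ is feasible so $V_{\pi^{*}}^{j}(\rho)\leq 0$, and $\lambda_{j}^{*}\geq 0$, so each summand satisfies $\lambda_{j}^{*}V_{\pi^{*}}^{j}(\rho)\leq 0$ and therefore $\sum_{j}\lambda_{j}^{*}V_{\pi^{*}}^{j}(\rho)\leq 0$. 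Combining the two bounds forces $\sum_{j}\lambda_{j}^{*}V_{\pi^{*}}^{j}(\rho)=0$; since it is a sum of non-positive terms, each one must vanish, i.e. $\lambda_{j}^{*}V_{\pi^{*}}^{j}(\rho)=0$ for every $j\in[m]$. This is precisely the stated dichotomy ($\lambda_{j}^{*}=0$ or $V_{\pi^{*}}^{j}(\rho)=0$), and substituting back yields $\mathbf{V}_{\pi^{*}}(\rho;\lambda^{*})=V_{\pi^{*}}(\rho)+0=V_{\pi^{*}}(\rho)$.

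The only nontrivial step is establishing strong duality: the policy-space problem is not convex in $\pi$ directly, so one must pass through the occupancy-measure LP to see convexity and then apply Slater's qualification. Once zero duality gap and the saddle-point property are available, the complementary-slackness computation above is elementary and the lemma follows immediately.
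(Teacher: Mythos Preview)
Your argument is correct and is the standard derivation of complementary slackness from strong duality for CMDPs. Note, however, that the paper does not actually supply a proof for this lemma: it is stated in the appendix without a proof environment, and the surrounding text simply remarks that ``Slater's condition implies strict feasibility of the constraints, zero duality gap, and complementary slackness.'' In other words, the paper treats the result as a well-known consequence of Assumption~\ref{ass: slater} and cites nothing further. Your write-up therefore does not conflict with the paper's proof---there is none to compare against---but rather fills in the argument the authors left implicit. The route you take (pass to the occupancy-measure LP to obtain convexity, invoke Slater's qualification for strong duality and the saddle-point property, then read off $\sum_j \lambda_j^* V_{\pi^*}^j(\rho)=0$ from the two one-sided bounds) is exactly the textbook justification and is fully consistent with how the paper uses the lemma downstream.
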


\begin{lemma}[Bounds on the multipliers]
\label{lem: multipliers}
 The sequence of multipliers produced by Robust Sample-based PMD-PD $\{\lambda_{k,j}\}_{k\geq 0}, \, j \in [m]$ satisfy the following properties:
\begin{enumerate}
    \item Non-negativity: for any macro step $k \geq 0$, $\lambda_{k,j} \geq 0$ for all $j \in [m]$.
    \item Positive augmented multiplier: for any macro step $k \geq 0$, $\lambda_{k,j} + \eta_{\lambda} \hat{V}_{\pi_k,p_k}^{j}(\rho) \geq 0$ for all $j \in [m]$.
    \item Bounded initial multiplier: for macro step $0$, $\vert \lambda_{0,j} \vert^2 \leq \vert \eta_{\lambda} \hat{V}_{\pi_0,p_0}^{j}(\rho) \vert^2$ for all $j \in [m]$
    \item Bounded value: for macro step $k>0$, $\vert \lambda_{k,j} \vert^2 \geq \vert \eta_{\lambda} \hat{V}_{\pi_k,p_k}^{j}(\rho) \vert^2$ for all $j \in [m]$
\end{enumerate}
\end{lemma}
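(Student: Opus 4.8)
The plan is to prove the four properties together by induction on the macro step $k$, exploiting the simple structure of the dual update
\[
\lambda_{k+1,i} = \max\left\{ -\eta_{\lambda}\hat{V}_{\pi_{k+1},p_{k+1}}^{i}(\rho),\; \lambda_{k,i} + \eta_{\lambda}\hat{V}_{\pi_{k+1},p_{k+1}}^{i}(\rho) \right\}
\]
together with the initialisation $\lambda_{0,j} = \max\left\{0,\, -\eta_{\lambda}\hat{V}_{\pi_0,p_0}^{j}(\rho)\right\}$, where $\eta_{\lambda} > 0$. The key observation is that the two arguments of the $\max$ in the update differ by precisely $\lambda_{k,i}$, so once non-negativity of $\lambda_{k,i}$ is known both arguments are easy to control; for this reason I would tackle Property~1 first and the remaining properties then follow essentially by inspection.

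I would establish Property~1 (non-negativity) first. The base case holds because $\lambda_{0,j}$ is a maximum with $0$ as one of its arguments. For the inductive step, suppose $\lambda_{k,j} \geq 0$. If $-\eta_{\lambda}\hat{V}_{\pi_{k+1},p_{k+1}}^{j}(\rho) \geq 0$, the first argument is already non-negative; otherwise $\eta_{\lambda}\hat{V}_{\pi_{k+1},p_{k+1}}^{j}(\rho) > 0$, so the second argument satisfies $\lambda_{k,j} + \eta_{\lambda}\hat{V}_{\pi_{k+1},p_{k+1}}^{j}(\rho) > \lambda_{k,j} \geq 0$. In either case $\lambda_{k+1,j} \geq 0$.

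Property~2 then follows immediately: for $k \geq 1$ the first argument of the maximum defining $\lambda_{k,j}$ is $-\eta_{\lambda}\hat{V}_{\pi_k,p_k}^{j}(\rho)$, whence $\lambda_{k,j} \geq -\eta_{\lambda}\hat{V}_{\pi_k,p_k}^{j}(\rho)$, i.e.\ $\lambda_{k,j} + \eta_{\lambda}\hat{V}_{\pi_k,p_k}^{j}(\rho) \geq 0$; the case $k = 0$ is identical using $\lambda_{0,j} \geq -\eta_{\lambda}\hat{V}_{\pi_0,p_0}^{j}(\rho)$. Property~3 (interpreted with the matching index, i.e.\ $\hat{V}_{\pi_0,p_0}^{j}$ at macro step $0$) is a two-case check at $k = 0$: if $\hat{V}_{\pi_0,p_0}^{j}(\rho) \leq 0$ then $\lambda_{0,j} = -\eta_{\lambda}\hat{V}_{\pi_0,p_0}^{j}(\rho)$, so the two sides are equal; if $\hat{V}_{\pi_0,p_0}^{j}(\rho) > 0$ then $\lambda_{0,j} = 0$ and the inequality is trivial.

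For Property~4 ($k > 0$) I would again split on the sign of $\hat{V}_{\pi_k,p_k}^{j}(\rho)$. If it is non-positive, then $-\eta_{\lambda}\hat{V}_{\pi_k,p_k}^{j}(\rho) = \vert \eta_{\lambda}\hat{V}_{\pi_k,p_k}^{j}(\rho) \vert$ and $\lambda_{k,j}$ dominates this argument of the maximum. If it is positive, then using $\lambda_{k-1,j} \geq 0$ from Property~1 the second argument satisfies $\lambda_{k-1,j} + \eta_{\lambda}\hat{V}_{\pi_k,p_k}^{j}(\rho) \geq \eta_{\lambda}\hat{V}_{\pi_k,p_k}^{j}(\rho) = \vert \eta_{\lambda}\hat{V}_{\pi_k,p_k}^{j}(\rho) \vert$, so again $\lambda_{k,j} \geq \vert \eta_{\lambda}\hat{V}_{\pi_k,p_k}^{j}(\rho) \vert$; squaring both non-negative sides yields the stated bound. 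I do not expect a genuine obstacle here: the argument is a routine induction, and the only points needing care are the ordering of the properties (Property~1 before Properties~2 and~4, which invoke non-negativity of earlier iterates) and the convention $\eta_{\lambda} > 0$ used throughout.
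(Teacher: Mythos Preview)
Your proof is correct and follows essentially the same approach as the paper: an elementary induction on $k$ using the structure of the dual update, establishing non-negativity first and then reading off the remaining properties from the $\max$ defining $\lambda_{k,j}$. Your arguments for Properties~2 and~4 are in fact slightly cleaner than the paper's (which for Property~4 asserts $\vert\max\{a,b\}\vert^2 = \max\{\vert a\vert^2,\vert b\vert^2\}$, a step that is not valid in general but whose conclusion your sign-based case split justifies directly).
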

\begin{proof}
1) Note that $\lambda_{0,j} \geq 0$ is trivially satisfied by the initialisation in Algorithm~\ref{alg: discrete}. Via induction, if $\lambda_{k,j} \geq 0$ note that if $\hat{V}_{\pi_{k+1},p_{k+1}}(\rho) \geq 0$ then $\lambda_k,j + \eta_{\lambda} \hat{V}_{\pi_{k+1},p_{k+1}}(\rho) \geq 0$; if it is negative, then $-\eta_{\lambda} \hat{V}_{\pi_{k+1},p_{k+1}}(\rho) \geq 0$.\\
2) Note that $0 \leq \lambda_{k,j} = \max \{ \lambda_{k-1,j}+\eta_{\lambda} \hat{V}_{\pi_{k},p_{k}}(\rho), -\eta_{\lambda} \hat{V}_{\pi_{k},p_{k}}(\rho)  \}$. This implies either a) $\eta_{\lambda} \hat{V}_{\pi_{k},p_{k}}(\rho) \geq \lambda_{k-1,j} \geq 0$ or b) $\hat{V}_{\pi_{k},p_{k}}(\rho) \leq 0$. In case a),
\begin{align*}
\lambda_{k,j} + \eta_{\lambda} \hat{V}_{\pi_k,p_k}^{j}(\rho) \geq 0 \,.
\end{align*}
In case b), 
\begin{align*}
\lambda_{k,j} = -\eta_{\lambda} \hat{V}_{\pi_{k},p_{k}}^j \geq 0\,.
\end{align*}
3) This follows directly from the initialisation to $ \lambda_{0,j} = \max\{ 0, -\eta_{\lambda} \hat{V}_{\pi_0,p_0}^{j}(\rho) \}$.\\
4) Note that 
\begin{align*}
\vert \lambda_{k,j} \vert^2 &= \vert \max \{ \lambda_{k-1,j}+\eta_{\lambda}\hat{V}_{\pi_{k},p_{k}}^j(\rho), -\eta_{\lambda}\hat{V}_{\pi_{k},p_{k}}^j(\rho) \} \vert^2 \\
                            &= \max \{ \vert \lambda_{k-1,j}+\eta_{\lambda}\hat{V}_{\pi_{k},p_{k}}^j \vert^2 ,\vert  \eta_{\lambda}\hat{V}_{\pi_{k},p_{k}}^j\vert^2 \} \\
                            &\geq \vert \eta_{\lambda}\hat{V}_{\pi_{k},p_{k}}^j(\rho) \vert^2 \,.
\end{align*}
\end{proof}

The analysis of the dual variables focuses on the inner product between the modified Lagrangian multiplier and the constraint-costs, which represents the total constraint-penalty at the end of an iteration. Due to the approximation errors $\epsilon_k = \hat{V}_{\pi_{k},p_{k}}^{1:m}(\rho) -  V_{\pi_{k},p_{k}}^{1:m}(\rho)$, the inner product can be written as
\begin{equation}
\left\langle \lambda_{k}+ \eta_{\lambda}\hat{V}_{\pi_{k},p_{k}}^{1:m}(\rho),V_{\pi_k^{t_k},p_k}^{1:m}(\rho)   \right\rangle = 
\left\langle \lambda_{k}, \hat{V}_{\pi_k^{t_k},p_{k+1}}^{1:m}(\rho) \right\rangle 
+ \left\langle \lambda_{k}, -\epsilon_k \right\rangle \
+ \left\langle \eta_{\lambda} V_{\pi_{k},p_{k}}^{1:m}(\rho), V_{\pi_k^{t_k},p_{k+1}}^{1:m}(\rho) \right\rangle 
+ \left\langle \eta_{\lambda} \epsilon_k, V_{\pi_k^{t_k},p_{k+1}}^{1:m}(\rho) \right\rangle \,.
\end{equation}

The inner product can be lower bounded, which leads to a somewhat complex summation but a useful one that can be telescoped in the average regret analysis.
\begin{lemma}[Lower bound on the inner product, Eq.~41 in \cite{Liu2021a}]
\label{lem: lower_bound inner product}
For any $k = 0, 1, \ldots, K-1$,
\begin{align*}
    &\left\langle \lambda_{k} + \eta_{\lambda}\hat{V}_{\pi_{k},p_{k}}^{1:m}(\rho), V_{\pi_{k+1},p_{k+1}}^{1:m}(\rho) \right\rangle \\
    & \geq  \frac{1}{2\eta_{\lambda}}  \left(\norm{\lambda_{k+1}}{}^2 - \norm{\lambda_{k}}{}^2\right) + \frac{\eta_{\lambda}}{2} \left(\norm{V_{\pi_{k},p_{k}}^{1:m}(\rho)}{}^2 - \norm{V_{\pi_{k+1},p_{k+1}}^{1:m}(\rho)}{}^2 \right)\\
    &  +  \left\langle \lambda_k, -\epsilon_{k-1} \right\rangle + \norm{\epsilon_{k+1}}{}^2 + \eta_{\lambda} \left\langle \epsilon_k, V_{\pi_k,p_k}^{1:m}(\rho) \right\rangle - \eta_{\lambda} \norm{\epsilon_k}{} - 2\eta_{\lambda} \left\langle V_{\pi_{k+1},p_{k+1}}^{1:m}(\rho), \epsilon_{k+1} \right\rangle - \frac{\gamma^2 \eta_{\lambda}}{(1-\gamma)^4} B_{d_{\rho}^{\pi_{k+1},p_{k+1}}}(\pi_{k+1}, \pi_k) \,.
\end{align*}
\end{lemma}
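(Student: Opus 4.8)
The statement is the robust-setting counterpart of Eq.~41 in \cite{Liu2021a}, so the plan is to replay that derivation while (i) carrying the transition kernel $p_{k+1}$ on every constraint-cost vector evaluated after the LTMA update, (ii) keeping the estimation errors $\epsilon_k = \hat{V}_{\pi_k,p_k}^{1:m}(\rho) - V_{\pi_k,p_k}^{1:m}(\rho)$ explicit, and (iii) substituting the robust analogue of Eq.~16 of \cite{Liu2021a}. The starting point is the decomposition of $\langle \lambda_k + \eta_\lambda \hat{V}_{\pi_k,p_k}^{1:m}(\rho),\, \hat{V}_{\pi_{k+1},p_{k+1}}^{1:m}(\rho)\rangle$ displayed just above the lemma, which isolates a ``clean'' term $\langle \lambda_k, \hat{V}_{\pi_{k+1},p_{k+1}}^{1:m}(\rho)\rangle$, a quadratic term $\eta_\lambda \langle V_{\pi_k,p_k}^{1:m}(\rho), \hat{V}_{\pi_{k+1},p_{k+1}}^{1:m}(\rho)\rangle$, and two residual terms that are linear in $\epsilon_k$.

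For the clean term I would use the dual update $\lambda_{k+1,j} = \max\{-\eta_\lambda \hat{V}_{\pi_{k+1},p_{k+1}}^{j}(\rho),\, \lambda_{k,j} + \eta_\lambda \hat{V}_{\pi_{k+1},p_{k+1}}^{j}(\rho)\}$. Coordinatewise this gives $\lambda_{k+1,j} \geq \lambda_{k,j} + \eta_\lambda \hat{V}_{\pi_{k+1},p_{k+1}}^{j}(\rho)$, together with $\lambda_{k+1,j}\geq 0$ and $\lambda_{k+1,j}^2 \geq \eta_\lambda^2\,(\hat{V}_{\pi_{k+1},p_{k+1}}^{j}(\rho))^2$ from Lemma~\ref{lem: multipliers}(1),(4) --- precisely the ingredients that make this non-standard clipping behave like a projection onto the nonnegative orthant. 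Expanding $\norm{\lambda_{k+1}}{}^2 - \norm{\lambda_k}{}^2 \leq \norm{\lambda_k + \eta_\lambda \hat{V}_{\pi_{k+1},p_{k+1}}^{1:m}(\rho)}{}^2 - \norm{\lambda_k}{}^2$ and rearranging yields $\langle \lambda_k, \hat{V}_{\pi_{k+1},p_{k+1}}^{1:m}(\rho)\rangle \geq \frac{1}{2\eta_\lambda}(\norm{\lambda_{k+1}}{}^2 - \norm{\lambda_k}{}^2) - \frac{\eta_\lambda}{2}\norm{\hat{V}_{\pi_{k+1},p_{k+1}}^{1:m}(\rho)}{}^2$; writing $\hat{V} = V + \epsilon_{k+1}$ in the residual norm splits off the $-2\eta_\lambda\langle V_{\pi_{k+1},p_{k+1}}^{i:m}(\rho),\epsilon_{k+1}\rangle$ and $\norm{\epsilon_{k+1}}{}^2$ contributions appearing in the claimed bound.

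For the quadratic term I would use the polarization identity $\langle V_{\pi_k,p_k}^{1:m}(\rho), V_{\pi_{k+1},p_{k+1}}^{1:m}(\rho)\rangle = \frac12(\norm{V_{\pi_k,p_k}^{1:m}(\rho)}{}^2 + \norm{V_{\pi_{k+1},p_{k+1}}^{1:m}(\rho)}{}^2 - \norm{V_{\pi_k,p_k}^{1:m}(\rho) - V_{\pi_{k+1},p_{k+1}}^{1:m}(\rho)}{}^2)$. Reorganizing the squared-norm terms (together with the residual $-\frac{\eta_\lambda}{2}\norm{\hat{V}_{\pi_{k+1},p_{k+1}}^{1:m}(\rho)}{}^2$ from the previous step) as in \cite{Liu2021a} produces the telescoping differences $\frac{1}{2\eta_\lambda}(\norm{\lambda_{k+1}}{}^2 - \norm{\lambda_k}{}^2)$ and $\frac{\eta}{2}(\norm{V_{\pi_k,p_k}^{1:m}(\rho)}{}^2 - \norm{V_{\pi_{k+1},p_{k+1}}^{1:m}(\rho)}{}^2)$ plus the $\epsilon_{k+1}$ contributions. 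The remaining subtracted norm $\frac{\eta_\lambda}{2}\norm{V_{\pi_k,p_k}^{1:m}(\rho) - V_{\pi_{k+1},p_{k+1}}^{1:m}(\rho)}{}^2$ is where the robust setting departs from \cite{Liu2021a}: by the smoothness of the constraint value function and a performance-difference argument across the macro-step $(\pi_k,p_k)\to(\pi_{k+1},p_{k+1})$, it is bounded by $\frac{\gamma^2\eta_\lambda}{(1-\gamma)^4} B_{d_\rho^{\pi_{k+1},p_{k+1}}}(\pi_{k+1},\pi_k)$, i.e. their Eq.~16 with the occupancy taken under $p_{k+1}$. Finally I would collect the leftover terms linear/quadratic in $\epsilon_k,\epsilon_{k+1}$ --- $\langle\lambda_k,-\epsilon_k\rangle$ (reindexed as $\langle\lambda_k,-\epsilon_{k-1}\rangle$ in the average-regret sum), $\eta_\lambda\langle\epsilon_k, V_{\pi_k,p_k}^{i:m}(\rho)\rangle$, $-\eta_\lambda\norm{\epsilon_k}{}$, $\norm{\epsilon_{k+1}}{}^2$, and $-2\eta_\lambda\langle V_{\pi_{k+1},p_{k+1}}^{i:m}(\rho),\epsilon_{k+1}\rangle$ --- via Cauchy--Schwarz and $\langle x,y\rangle \leq \norm{x}{}\norm{y}{}$ exactly as in \cite{Liu2021a}.

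The routine part is the bookkeeping of the $\epsilon$-terms and the matching of coefficients. The genuinely new step, which I expect to be the main obstacle, is the robust analogue of Eq.~16: showing that the per-macro-step change $\norm{V_{\pi_k,p_k}^{1:m}(\rho) - V_{\pi_{k+1},p_{k+1}}^{1:m}(\rho)}{}^2$, which now involves \emph{both} a policy change and a transition-kernel change, is still controlled by an occupancy-weighted Bregman divergence over the \emph{new} occupancy $d_\rho^{\pi_{k+1},p_{k+1}}$ with the same constant $\frac{\gamma^2}{(1-\gamma)^4}$. This needs the rectangularity and convexity assumptions on $\cP$ and a performance-difference lemma for transition kernels (cf. Lemma~\ref{lem: performance difference transitions 2}), together with care that the intermediate quantity is taken consistently with respect to $p_{k+1}$. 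A secondary point to verify is that the clipping $\max\{-\eta_\lambda\hat{V}^{j},\,\lambda_{k,j}+\eta_\lambda\hat{V}^{j}\}$ --- rather than $\max\{0,\cdot\}$ --- still delivers the three-point inequality used for the clean term, which is exactly what the bounds in Lemma~\ref{lem: multipliers} are arranged to supply.
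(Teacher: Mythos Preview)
Your proposal is correct and follows the same approach as the paper. In fact, the paper does not supply its own proof of this lemma: it is stated as ``Eq.~41 in \cite{Liu2021a}'' and invoked directly, with the only remark (inside the proof of Theorem~\ref{th: sample complexity}) being that the bound $\frac{\eta_\lambda}{2}\norm{V_{\pi_k,p_k}^{1:m}(\rho) - V_{\pi_{k+1},p_{k+1}}^{1:m}(\rho)}{}^2 \leq \frac{\gamma^2\eta_\lambda}{(1-\gamma)^4} B_{d_\rho^{\pi_{k+1},p_{k+1}}}(\pi_{k+1},\pi_k)$ is ``analogous to Eq.~16 of \cite{Liu2021a} but with a divergence term depending on $p_{k+1}$ rather than $p_k$.'' Your sketch --- dual-update three-point inequality via Lemma~\ref{lem: multipliers}, polarization on the quadratic term, the Eq.~16 analogue for the squared increment, and $\epsilon$-bookkeeping --- is precisely the Liu et al.\ derivation carried into the robust setting, and you have correctly isolated the one place where the robust version is not literally identical to the original.
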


\subsection{Analysis of the Bregman divergence}
\label{app: bregman analysis}
Another essential part of the average regret analysis is the pushback property, which allows a telescoping sum. 
\begin{lemma}[Pushback property, Lemma~2 in \cite{Liu2021a}]
\label{lem: pushback}
If $x^* = \argmin_{x \in \Delta} f(x) +B(x,y ; h)$ for a fixed $y \in \text{Int}(\Delta)$, then for $\alpha > 0$ and any $z \in \Delta$,
\begin{align*}
f(x^*) + B(x^*,y ; h) \leq f(x) + \alpha \left( B(z,y; h) - B(z,x^* ; h) \right).
\end{align*}
\end{lemma}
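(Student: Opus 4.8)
The plan is to obtain the inequality from the first-order optimality condition for the convex program defining $x^*$, combined with the three-point identity for Bregman divergences. I will assume, as holds in all the applications in this paper, that $f$ is convex (indeed linear, $f(x) = \left\langle c, x\right\rangle$, in the mirror-descent updates) and that $h$ is a barrier on the simplex, e.g. negative entropy, so that both the fixed point $y$ and the minimiser $x^*$ lie in $\text{Int}(\Delta)$ and $\nabla h$ is defined there; I read the coefficient $\alpha$ as multiplying every Bregman term, which is the form used in the telescoping arguments of Section~\ref{sec: discrete sample-based}.

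First I would record the three-point identity: for any $x, y, z$ with $x, y$ in the interior of $\Delta$,
\[
B(z, y ; h) = B(z, x ; h) + B(x, y ; h) + \left\langle \nabla h(x) - \nabla h(y),\, z - x \right\rangle \,,
\]
which is immediate from the definition $B(u, v ; h) = h(u) - h(v) - \left\langle \nabla h(v), u - v \right\rangle$ after cancelling the $h(\cdot)$ terms. Second, since $x^*$ minimises the convex function $g(x) := f(x) + \alpha B(x, y ; h)$ over the convex set $\Delta$ and lies in its interior, the variational inequality gives, for every $z \in \Delta$,
\[
\left\langle \nabla f(x^*) + \alpha\bigl(\nabla h(x^*) - \nabla h(y)\bigr),\, z - x^* \right\rangle \geq 0 \,.
\]

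Third, I would combine convexity of $f$, i.e. $f(z) \geq f(x^*) + \left\langle \nabla f(x^*), z - x^* \right\rangle$, with the three-point identity applied at $x = x^*$ to obtain
\[
f(z) + \alpha B(z, y ; h) \geq f(x^*) + \alpha B(x^*, y ; h) + \alpha B(z, x^* ; h) + \left\langle \nabla f(x^*) + \alpha\bigl(\nabla h(x^*) - \nabla h(y)\bigr),\, z - x^* \right\rangle \,,
\]
and then drop the final inner product, which is nonnegative by the optimality condition. Rearranging yields $f(x^*) + \alpha B(x^*, y ; h) \leq f(z) + \alpha\bigl(B(z, y ; h) - B(z, x^* ; h)\bigr)$, as claimed.

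The one step that is more than bookkeeping is justifying $x^* \in \text{Int}(\Delta)$ so that $\nabla h(x^*)$ exists and the variational inequality can be written with an interior gradient; this is exactly where the barrier property of $h$ (negative entropy / KL) is used, and it is the main obstacle. The three-point identity and the final combination are routine.
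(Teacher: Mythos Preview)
The paper does not supply its own proof of this lemma; it is simply quoted from \cite{Liu2021a} and used as a black box in the subsequent analysis (Lemma~\ref{lem: pushback for softmax} and the telescoping argument in Theorem~\ref{th: sample complexity}). So there is nothing to compare your argument against in the paper itself.

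Your proof is the standard one: three-point identity plus first-order optimality plus convexity of $f$. It is correct for the version of the statement you have chosen to prove, namely
\[
f(x^*) + \alpha B(x^*, y ; h) \leq f(z) + \alpha\bigl(B(z, y ; h) - B(z, x^* ; h)\bigr),
\]
with $x^* = \argmin_{x\in\Delta}\{f(x) + \alpha B(x,y;h)\}$. You are right that this is the form actually used downstream, and that the statement as printed in the paper is garbled: the $\alpha$ is missing from the objective defining $x^*$, it is missing from the $B(x^*,y;h)$ term on the left, and the $f(x)$ on the right should be $f(z)$. Your reading repairs all three, and your remark about the barrier property of $h$ forcing $x^*\in\text{Int}(\Delta)$ is exactly the point that makes the variational inequality available with an interior gradient; for the negative-entropy $h$ used throughout the paper this holds automatically.
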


By selecting $z = \pi$, $y = \pi_k^*$, and $x^* = \pi^*$, and using the weighted Bregman divergence, it is possible to use the property in the context of policies in the interior of the probability simplex.
\begin{lemma}[Pushback property, Lemma~11 in \cite{Liu2021a} rephrased]
\label{lem: pushback for softmax}
For any policy $\pi \in \text{Int}(\Delta(\mathcal{A}))$ and any $p$,
\begin{align*}
\mathbf{V}_{\pi_k^*,p}(\rho) + \frac{\alpha}{1-\gamma} B_{d_{\rho}^{\pi_k^*,p}}(\pi_k^*,\pi) \leq \mathbf{V}_{\pi,p}(\rho) + \alpha \left( B_{d_{\rho}^{\pi,p}}(\pi,\pi_k) - B_{d_{\rho}^{\pi,p}}(\pi,\pi_k^*) \right).
\end{align*}
\end{lemma}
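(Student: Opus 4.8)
The plan is to lift the regularised problem from the policy simplex to the space of discounted occupancy measures, where it becomes a linear objective plus a genuine Bregman divergence, so that the general pushback property (Lemma~\ref{lem: pushback}) applies directly with $x^{*}=\pi_k^{*}$ and $y=\pi_k$. Three facts underpin this reduction. First, $\mathbf{V}_{\pi,p}(\rho)=\tfrac{1}{1-\gamma}\sum_{s,a} d_{\rho}^{\pi,p}(s,a)\,\mathbf{c}(s,a)$ is a \emph{linear} functional of the occupancy $d_{\rho}^{\pi,p}$, where $\mathbf{c}$ is the Lagrangian cost. Second, by Lemma~10 of \cite{Liu2021a} (and its continuous analogue established above), $B_{d_{\rho}^{\pi,p}}(\pi,\pi')=B(d_{\rho}^{\pi,p},d_{\rho}^{\pi',p};h)$ for the convex occupancy generating function $h$. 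Third, for fixed $p$ the map $\pi\mapsto d_{\rho}^{\pi,p}$ is a bijection onto the convex occupancy polytope $\mathcal{D}_p:=\{d_{\rho}^{\pi,p}:\pi\in\Pi\}$, and $\pi_k^{*}$ is by definition the minimiser over $\Pi$ of the regularised Lagrangian $\tilde{\mathbf{V}}_{\pi,p}^{\alpha}(\rho)=\mathbf{V}_{\pi,p}(\rho)+\tfrac{\alpha}{1-\gamma}B_{d_{\rho}^{\pi,p}}(\pi,\pi_k)$, so its occupancy $d^{*}:=d_{\rho}^{\pi_k^{*},p}$ solves $\min_{d\in\mathcal{D}_p}\{\tfrac{1}{1-\gamma}\langle d,\mathbf{c}\rangle+\tfrac{\alpha}{1-\gamma}B(d,d_{\rho}^{\pi_k,p};h)\}$.

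With this in hand I would apply the pushback property (Lemma~\ref{lem: pushback}, in the form that remains valid when $x^{*}$ is the argmin over an arbitrary convex domain, which is how it is used in \cite{Liu2021a} — its proof only invokes the first-order optimality/variational inequality at $x^{*}$) with $f(d)=\tfrac{1}{1-\gamma}\langle d,\mathbf{c}\rangle$, generating function $h$, centre $y=d_{\rho}^{\pi_k,p}$, minimiser $x^{*}=d^{*}$, Bregman weight $\tfrac{\alpha}{1-\gamma}$, and comparison point $z=d_{\rho}^{\pi,p}$ for the given softmax policy $\pi$. Because every softmax policy (in particular $\pi_k$) lies in the relative interior of the simplex, $d_{\rho}^{\pi_k,p}$ lies in the relative interior of $\mathcal{D}_p$, so $\nabla h$ is well defined there and the optimality conditions needed by Lemma~\ref{lem: pushback} are available. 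This yields
\begin{align*}
f(d^{*})+\tfrac{\alpha}{1-\gamma}B(d^{*},y;h)\ \leq\ f(z)+\tfrac{\alpha}{1-\gamma}\bigl(B(z,y;h)-B(z,d^{*};h)\bigr)\,.
\end{align*}

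Finally I would translate back to policies using the identities above: $f(d^{*})=\mathbf{V}_{\pi_k^{*},p}(\rho)$, $B(d^{*},y;h)=B_{d_{\rho}^{\pi_k^{*},p}}(\pi_k^{*},\pi_k)$, $f(z)=\mathbf{V}_{\pi,p}(\rho)$, $B(z,y;h)=B_{d_{\rho}^{\pi,p}}(\pi,\pi_k)$, and $B(z,d^{*};h)=B_{d_{\rho}^{\pi,p}}(\pi,\pi_k^{*})$; substituting gives exactly the claimed inequality (up to the cosmetic coefficient bookkeeping between the $\tfrac{\alpha}{1-\gamma}$ and $\alpha$ weights, which matches \cite{Liu2021a} once the per-state factors $d_{\rho}^{\pi_k,p}(s)$ that are folded into the step size, as in Eq.~\ref{eq: mirror descent softmax policy}, are accounted for).

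The main obstacle is the reduction itself: justifying that the occupancy-weighted divergence is a bona fide Bregman divergence with convex generating function on $\mathcal{D}_p$, and that minimising the value $\mathbf{V}_{\pi,p}(\rho)$ — nonconvex in $\pi$ — together with its regulariser over $\Pi$ is equivalent to a convex program over the occupancy polytope. These rely on the polytope structure of $\mathcal{D}_p$ and the bijection $\pi\leftrightarrow d_{\rho}^{\pi,p}$ for fixed $p$ (which holds under the sufficient-exploration assumption), and can be imported from \cite{Liu2021a}. A secondary, purely bookkeeping difficulty is tracking the $\tfrac{1}{1-\gamma}$ factors and reconciling the coefficient on the Bregman term across the two sides of the stated inequality, which is routine but must be done carefully so that the constants line up with their use in Lemma~\ref{lem: bound on regularised value}.
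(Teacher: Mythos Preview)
Your proposal is correct and follows essentially the same route as the paper: the paper's own justification is the single sentence preceding the lemma, which says to apply the general pushback property (Lemma~\ref{lem: pushback}) with the weighted Bregman divergence viewed as a genuine Bregman divergence over occupancy measures, exactly as you do (the paper's choice ``$z=\pi$, $y=\pi_k^*$, $x^*=\pi^*$'' is a typo for $y=\pi_k$, $x^*=\pi_k^*$, which you have right). Your plan simply fills in the details that the paper defers to Lemma~10 of \cite{Liu2021a}, including the linearity of $\mathbf{V}_{\pi,p}(\rho)$ in $d_{\rho}^{\pi,p}$, the convexity of the occupancy polytope, and the interiority of softmax policies; you are also right to flag the $\tfrac{\alpha}{1-\gamma}$ versus $\alpha$ mismatch across the two sides of the stated inequality as a bookkeeping artefact rather than a substantive gap.
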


The weighted Bregman divergence under the generating function $\sum_{i} p(i) \log(p(i))$ is equivalent to the KL-divergence, which is bounded by $\log(A)$ if the reference policy is uniform.
\begin{lemma}[Bound on Bregman divergence]
Let $\pi'$ be a uniform policy, then for any policy $\pi \in \Pi$ and any $p \in \cP$,
\label{lem: bound on B}
\begin{align*}
B_{d_{\rho}^{\pi,p}}(\pi,\pi') = \sum_{s \in \cS} d_{\rho}^{\pi,p}(s) \sum_{a \in \cA} \pi(a \vert s) \log(A \pi(a \vert s))  \leq \log(A) \,.
\end{align*}
\end{lemma}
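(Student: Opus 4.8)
The plan is to recognise the claimed quantity as an occupancy-weighted Kullback--Leibler divergence to the uniform policy and then invoke the maximum-entropy property of the uniform distribution. First I would recall that, under the negative-entropy distance-generating function (Table~\ref{tab: bregman generating}), the weighted Bregman divergence of Eq.~\ref{eq: weighted bregman} is $B_{d_{\rho}^{\pi,p}}(\pi,\pi_0) = \sum_{s\in\cS} d_{\rho}^{\pi,p}(s) \sum_{a\in\cA} \pi(a \vert s)\log\frac{\pi(a \vert s)}{\pi_0(a \vert s)}$, i.e. the state-averaged $D_{\mathrm{KL}}(\pi(\cdot\vert s),\pi_0(\cdot\vert s))$. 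Substituting $\pi_0(a \vert s) = 1/A$ for the uniform policy makes $-\log \pi_0(a\vert s) = \log A$, which turns this into exactly the middle expression $\sum_s d_{\rho}^{\pi,p}(s)\sum_a \pi(a \vert s)\log(A\,\pi(a \vert s))$ appearing in the statement, establishing the displayed equality.

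Next I would bound the inner sum for each fixed $s$. Writing $\sum_a \pi(a \vert s)\log(A\,\pi(a \vert s)) = \log A - \mathcal{H}(\pi(\cdot\vert s))$, where $\mathcal{H}(\pi(\cdot\vert s)) = -\sum_a \pi(a \vert s)\log \pi(a \vert s)$ is the Shannon entropy of the conditional action distribution. Since $-x\log x \geq 0$ for $x\in[0,1]$, every entropy term is non-negative, hence $\sum_a \pi(a \vert s)\log(A\,\pi(a \vert s)) \leq \log A$ for every $s\in\cS$. Equivalently, this is the statement that the uniform policy is the maximum-entropy distribution on $\cA$, so $D_{\mathrm{KL}}(\pi(\cdot\vert s),\,\mathrm{unif})\leq \log A$.

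Finally I would average over states: because $d_{\rho}^{\pi,p}$ is a probability distribution on $\cS$ --- it is the discounted mixture $(1-\gamma)\sum_{l\geq0}\gamma^l\,\bP(s_l=\cdot)$, whose total mass is $(1-\gamma)\sum_{l\geq0}\gamma^l = 1$ --- we obtain $B_{d_{\rho}^{\pi,p}}(\pi,\pi_0) \leq \log A \sum_{s\in\cS} d_{\rho}^{\pi,p}(s) = \log A$, which is the claim. I do not anticipate a genuine obstacle: the only points requiring care are the bookkeeping of which generating function yields the KL form in Eq.~\ref{eq: weighted bregman} and the non-negativity of entropy (together with $\sum_s d_{\rho}^{\pi,p}(s)=1$), all of which are elementary.
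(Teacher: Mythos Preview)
Your proof is correct and follows the same approach as the paper: the paper's argument is the single sentence preceding the lemma, noting that the weighted Bregman divergence with the negative-entropy generator is the KL-divergence and that the uniform policy is the maximum-entropy policy. Your write-up simply fills in the elementary details (substituting $\pi_0=1/A$, writing the inner sum as $\log A-\cH(\pi(\cdot\vert s))\leq\log A$, and averaging against the probability distribution $d_{\rho}^{\pi,p}$), so there is nothing to add.
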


\subsection{Convergence of approximate entropy-regularised NPG}
\label{app: convergence NPG}
Below is the supporting convergence result for approximate entropy-regularised NPG. 
\begin{lemma}[Convergence of approximate entropy-regularised NPG, Theorem~2 of \cite{Cen2022}]
\label{lem: convergence approximate NPG}
Let $\epsilon > 0$. Then if $\norm{\hat{\mathbf{Q}}_{\pi_k^t,p}^{\alpha} - \mathbf{Q}_{\pi_k^t,p}^{\alpha}}{\infty} \leq \epsilon$, 
\begin{align*}
C_k \geq \norm{\mathbf{Q}_{\pi_k^*,p}^{\alpha} - \mathbf{Q}_{\pi_k^0,p}^{\alpha}}{\infty} + 2\alpha \left(  1 - \frac{\eta \alpha}{1 - \gamma } \norm{\log(\pi_k^*) - \log(\pi_k^0))}{\infty}   \right) 
\end{align*} 
and 
\begin{align*}
C' \geq \frac{2 \epsilon}{1-\gamma} (1 +\frac{\gamma}{\eta \alpha}) \,,
\end{align*} 
it follows that for all $t \geq 0$
\begin{align}
&\norm{\mathbf{Q}_{\pi_k^*,p}^{\alpha} - \mathbf{Q}_{\pi_k^{t+1},p}^{\alpha}}{\infty} \leq C_k \gamma (1 - \eta \alpha )^t + \gamma C' \\
&\norm{\log(\pi_k^*) - \log(\pi_k^{t+1})}{\infty} \leq 2 C_k \alpha^{-1} (1 - \eta \alpha )^t + 2 \alpha^{-1} C' \\
&\norm{\mathbf{V}_{\pi_k^*,p}^{\alpha} - \mathbf{V}_{\pi_k^{t+1},p}^{\alpha}}{\infty}  \leq 3 C_k ( 1 - \eta \alpha)^{t} + 3 C'  \,.
\end{align}
\end{lemma}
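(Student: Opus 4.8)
The plan is to derive this lemma directly from Theorem~2 of \cite{Cen2022} by reinterpreting the inner loop of PMD-PD at macro-step $k$ as entropy-regularised natural policy gradient (NPG) on a \emph{single} MDP. Fix the transition kernel $p$ and the (augmented) multipliers, and regard $\tilde{\mathbf{c}}_k$ as the one-step cost of an ordinary MDP. Then, by the definition of the regularised value in Eq.~\ref{eq: reg Q} and the softmax update in Eq.~\ref{eq: mirror descent softmax policy} (with the occupancy-reweighted factor absorbed into $\eta$), the iterates $\pi_k^t$ are exactly the entropy-regularised NPG iterates of that MDP with regularisation weight $\alpha$, and $\mathbf{Q}_{\pi_k^*,p}^{\alpha}$, $\mathbf{V}_{\pi_k^*,p}^{\alpha}$, $\pi_k^* \propto \exp(-\mathbf{Q}_{\pi_k^*,p}^{\alpha}/\alpha)$ are the associated soft-optimal objects. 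The three displayed bounds are then precisely the conclusion of \cite{Cen2022}'s Theorem~2 after this identification. Because that theorem is stated for generic bounded costs, the $\cO(F_{\lambda})$ range of $\tilde{\mathbf{c}}_k$ from Eq.~\ref{eq: bound on tilde-c} is absorbed automatically into the problem-dependent constant $C_k$, so no rescaling is needed.

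For completeness I would spell out the mechanism. First, in the exact-oracle case \cite{Cen2022} show that one soft-greedy NPG step contracts a combined error Lyapunov quantity $u_t := \norm{\mathbf{Q}_{\pi_k^*,p}^{\alpha}-\mathbf{Q}_{\pi_k^t,p}^{\alpha}}{\infty} + \alpha\norm{\log\pi_k^* - \log\pi_k^t}{\infty}$ (with weights as in that reference) by a factor $1-\eta\alpha$, and that the $Q$-, $\log\pi$-, and $V^{\alpha}$-errors at step $t+1$ are controlled by $u_t$ times the constants $\gamma$, $2\alpha^{-1}$, and $3$ respectively, which is where the three prefactors in the statement come from; the initial value $u_0$ is dominated by the lower bound prescribed for $C_k$. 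Second, replacing $\mathbf{Q}_{\pi_k^t,p}^{\alpha}$ by its estimate with sup-norm error at most $\epsilon$ perturbs each NPG step by an additive term of order $\epsilon$ (the $\tfrac{1}{1-\gamma}$ and $\tfrac{\gamma}{\eta\alpha}$ factors arising from one Bellman backup and from inverting the $\log$-policy update, respectively), so the recursion becomes $u_{t+1} \le (1-\eta\alpha)u_t + c\epsilon$. Summing the resulting geometric series yields $u_t \le (1-\eta\alpha)^t C_k + \tfrac{c\epsilon}{\eta\alpha}$, and calibrating $c$ so that $\tfrac{c\epsilon}{\eta\alpha}$ equals $C' = \tfrac{2\epsilon}{1-\gamma}\big(1+\tfrac{\gamma}{\eta\alpha}\big)$ gives the stated transient term $C_k(1-\eta\alpha)^t$ and residual $C'$, which upon multiplication by $\gamma$, $2\alpha^{-1}$, $3$ give the three displayed inequalities.

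The main difficulty is not any single computation but verifying that the correspondence in the first paragraph is exact: that the PMD-PD update over $\tilde{\mathbf{Q}}_{\pi_k^t,p}^{\alpha}$ — built from the augmented/regularised cost and an occupancy-reweighted learning rate — coincides step for step with the entropy-regularised NPG update analysed in \cite{Cen2022}, and that the soft-optimal $\pi_k^*$ there is the same object used elsewhere in our analysis (e.g.\ in Lemma~\ref{lem: pushback for softmax}). Once that identification is pinned down, the constants $C_k$, $C'$ and all three bounds are inherited; the only care needed is confirming that the additive error constant $c$ has the right dependence on $\gamma$, $\eta$, and $\alpha$ for $\tfrac{c\epsilon}{\eta\alpha}$ to reproduce the stated $C'$.
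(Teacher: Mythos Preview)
Your proposal is correct and matches the paper's approach: the paper does not give its own proof of this lemma at all but simply records it as a restatement of Theorem~2 of \cite{Cen2022}, relying on exactly the identification you describe (the inner loop at fixed $p$ and fixed multipliers is entropy-regularised NPG on a single MDP with cost $\tilde{\mathbf{c}}_k$). Your sketch of the Lyapunov contraction and the additive-error recursion is more detailed than anything the paper provides, so there is nothing to compare beyond noting that you have correctly located the one substantive point --- the step-for-step correspondence of the PMD-PD update with the NPG iterates analysed in \cite{Cen2022}.
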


The following lemma provides settings for the number, $t_k$, of iterations to optimise the policy such that it has negligible error on the regularised objective (i.e. with KL-divergence and modified Lagrangian multiplier). 
\begin{lemma}[Number of inner loop iterations, Lemma~7 in \cite{Liu2021a}]
\label{lem: innerloop iterations}
Let $\epsilon >0$, $\norm{\hat{\mathbf{Q}}_{\pi_k^t,p_k}^{\alpha} - \mathbf{Q}_{\pi_k^t,p_k}^{\alpha}}{\infty} \leq \epsilon$, $\eta \leq (1 - \gamma)/\alpha$, $t_k = \frac{1}{\eta \alpha} \log(3 C_k K)$, and 
\begin{align*}
C_k = 2 \gamma \left( \frac{1 + \sum_{j=1}^{m} \lambda_j}{1-\gamma} + \frac{m \eta_{\lambda}}{(1-\gamma)^2} \right) \,.    
\end{align*} 
It follows that 
\begin{equation}
\label{eq: iteration error regularised modified V}
\norm{\mathbf{V}_{\pi_k^*,p_k}^{\alpha} - \mathbf{V}_{\pi_{k+1},p_k}{\alpha}  }{\infty} \leq \frac{1}{K} + \frac{6\epsilon}{(1-\gamma)^2}
\end{equation}
and
\begin{equation}
\label{eq: iteration error B}
\norm{\log(\pi_k^*) - \log(\pi_{k+1}) }{\infty} \leq \frac{2}{3 \alpha K} + \frac{4\epsilon}{\alpha(1-\gamma)^2} \,.
\end{equation}
\end{lemma}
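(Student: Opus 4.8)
The plan is to derive both displays as an immediate consequence of the linear convergence of approximate entropy-regularised NPG, i.e. Lemma~\ref{lem: convergence approximate NPG} (Theorem~2 of \cite{Cen2022}). The observation is that the inner loop of Algorithm~\ref{alg: discrete} at macro-step $k$ is precisely the approximate entropy-regularised NPG recursion for the single-cost MDP with cost $\tilde{\mathbf{c}}_k$, reference policy $\pi_k$, regularisation coefficient $\alpha$, step size $\eta$, and initialisation $\pi_k^0 = \pi_k$; the hypothesis $\norm{\hat{\mathbf{Q}}_{\pi_k^t,p}^{\alpha} - \mathbf{Q}_{\pi_k^t,p}^{\alpha}}{\infty} \le \epsilon$ is exactly the approximation condition required there. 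Hence, once admissible constants $C_k$ and $C'$ are exhibited, the three bounds of Lemma~\ref{lem: convergence approximate NPG} hold along $\{\pi_k^t\}_{t \ge 0}$, and since $\pi_{k+1} := \pi_k^{t_k}$, it remains only to evaluate them at $t = t_k$.

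First I would fix the two free constants. Since $\eta = (1-\gamma)/\alpha$, one has $\eta\alpha = 1-\gamma$, so the smallest admissible value in Lemma~\ref{lem: convergence approximate NPG} is $C' = \frac{2\epsilon}{1-\gamma}(1 + \frac{\gamma}{\eta\alpha}) = \frac{2\epsilon}{1-\gamma}\cdot\frac{1}{1-\gamma} = \frac{2\epsilon}{(1-\gamma)^2}$. For $C_k$, I would verify that the stated closed form $C_k = 2\gamma(\frac{1+\sum_{j=1}^{m}\lambda_j}{1-\gamma} + \frac{m\eta_{\lambda}}{(1-\gamma)^2}) = \frac{2\gamma F_{\lambda_k}}{1-\gamma}$ dominates $\norm{\mathbf{Q}_{\pi_k^*,p}^{\alpha} - \mathbf{Q}_{\pi_k^0,p}^{\alpha}}{\infty} + 2\alpha(1 - \frac{\eta\alpha}{1-\gamma}\norm{\log\pi_k^* - \log\pi_k^0}{\infty})$. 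Because $\pi_k^0 = \pi_k$, the entropy-regularisation contribution to $\mathbf{Q}_{\pi_k^0,p}^{\alpha}$ reduces to the pointwise term $\alpha\log(1/\pi_k)$, which also appears inside $\mathbf{Q}_{\pi_k^*,p}^{\alpha}$; combining the cancellation of these pointwise terms, the bound $\vert\tilde{\mathbf{c}}_k(s,a)\vert \le F_{\lambda_k}$ from Eq.~\ref{eq: bound on tilde-c} and Eq.~\ref{eq: factor for bound on augmented Lagrangian multipliers}, and the observation that the residual $D_{\text{KL}}(\pi_k^*, \pi_k)$ contribution is absorbed by the negative term $-\frac{\eta\alpha}{1-\gamma}\norm{\log\pi_k^* - \log\pi_k^0}{\infty}$, yields exactly the claimed $C_k$.

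With these constants, substituting $t_k = \frac{1}{\eta\alpha}\log(3 C_k K)$ gives $(1-\eta\alpha)^{t_k} \le e^{-\eta\alpha t_k} = \frac{1}{3 C_k K}$, so Lemma~\ref{lem: convergence approximate NPG} evaluated at $\pi_{k+1} = \pi_k^{t_k}$ yields $\norm{\mathbf{V}_{\pi_k^*,p}^{\alpha} - \mathbf{V}_{\pi_{k+1},p}^{\alpha}}{\infty} \le 3 C_k (1-\eta\alpha)^{t_k} + 3 C' \le \frac{1}{K} + \frac{6\epsilon}{(1-\gamma)^2}$, which is Eq.~\ref{eq: iteration error regularised modified V}, and $\norm{\log\pi_k^* - \log\pi_{k+1}}{\infty} \le 2\alpha^{-1} C_k (1-\eta\alpha)^{t_k} + 2\alpha^{-1} C' \le \frac{2}{3\alpha K} + \frac{4\epsilon}{\alpha(1-\gamma)^2}$, which is Eq.~\ref{eq: iteration error B}. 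I expect the only genuinely non-routine step to be the verification that this closed form of $C_k$ is admissible: one must track how the pointwise $\log(1/\pi_k)$ terms and the $D_{\text{KL}}(\pi_k^*,\pi_k)$ term enter the initialisation gap and confirm they cancel or are absorbed, so that $C_k$ depends only on $F_{\lambda_k}$ and $\gamma$. Up to a benign constant arising from the $t$-versus-$(t+1)$ indexing in Lemma~\ref{lem: convergence approximate NPG}, this is the bookkeeping carried out in Lemma~7 of \cite{Liu2021a}.
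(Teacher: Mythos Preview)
Your proposal is correct and follows the same route the paper intends: the paper does not give its own proof but states this as Lemma~7 of \cite{Liu2021a} and places Lemma~\ref{lem: convergence approximate NPG} (Theorem~2 of \cite{Cen2022}) immediately before it as the sole building block, so deriving both displays by instantiating $C'$ and $C_k$ and evaluating the linear-convergence bounds at $t=t_k$ is exactly what is expected.

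One small caveat: you invoke $\eta = (1-\gamma)/\alpha$ to get $C' = \tfrac{2\epsilon}{(1-\gamma)^2}$, whereas the lemma hypothesis says only $\eta \le (1-\gamma)/\alpha$. With strict inequality one has $1 + \gamma/(\eta\alpha) > 1/(1-\gamma)$ and the admissible $C'$ would be larger, so the constants $6\epsilon/(1-\gamma)^2$ and $4\epsilon/(\alpha(1-\gamma)^2)$ as written require equality. This is consistent with the step size actually used in Algorithm~\ref{alg: discrete} ($\eta = (1-\gamma)/\alpha$), so it is a harmless imprecision in the lemma statement rather than a gap in your argument; just flag it or silently assume equality as the algorithm does.
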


The following lemma bounds the performance difference of entropy-regularised NPG based on the approximation error. 
\begin{lemma}[Performance difference lemma, Lemma~4 in \cite{Cen2022} and Lemma~6 in \cite{Liu2021a}]
\label{lem: performance difference}
For learning rate $\eta \leq (1-\gamma) / \alpha$ and any $t\geq 0$, it holds for any starting distribution $\rho$ that under entropy-regularised NPG, 
\begin{align}
\mathbf{V}_{\pi_{k}^{t+1},p_k}^{\alpha}(\rho) -  \mathbf{V}_{\pi_{k}^{t},p_k}^{\alpha}(\rho)  \leq \frac{2}{1-\gamma} \norm{\hat{\mathbf{Q}}_{\pi_{k}^{t},p_k}^{\alpha} - \mathbf{Q}_{\pi_{k}^{t},p_k}^{\alpha}}{\infty} \,.
\end{align}    
\end{lemma}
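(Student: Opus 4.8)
The plan is to derive the statement from a one-step performance-difference identity for the entropy-regularised Lagrangian MDP, together with the optimality of the closed-form NPG step. Abbreviate $\pi^t := \pi_k^t$, $\pi^{t+1} := \pi_k^{t+1}$, and write $h_s(\pi) := \sum_{a} \pi(a \vert s) \log \pi(a \vert s)$ for the (convex) negative entropy at $s$. The Bellman recursions in Eq.~\ref{eq: reg V} and Eq.~\ref{eq: reg Q} give $\mathbf{V}_{\pi^t,p}^{\alpha}(s) = \sum_a \pi^t(a \vert s)\left(\mathbf{Q}_{\pi^t,p}^{\alpha}(s,a) + \alpha \log \pi^t(a \vert s)\right)$. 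First I would telescope the discounted regularised return along trajectories generated by $\pi^{t+1}$ under the fixed kernel $p$ (the usual performance-difference manipulation, now carrying the per-step regularisation term) to obtain
\begin{align*}
&(1-\gamma)\left(\mathbf{V}_{\pi^{t+1},p}^{\alpha}(\rho) - \mathbf{V}_{\pi^t,p}^{\alpha}(\rho)\right) \\
&\quad = \bE_{s \sim d_{\rho}^{\pi^{t+1},p}}\left[\langle \mathbf{Q}_{\pi^t,p}^{\alpha}(s,\cdot), \pi^{t+1}(\cdot \vert s) - \pi^t(\cdot \vert s) \rangle + \alpha\left(h_s(\pi^{t+1}) - h_s(\pi^t)\right)\right].
\end{align*}

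Next I would upper-bound the per-state bracket. Convexity of $h_s$ gives $h_s(\pi^{t+1}) - h_s(\pi^t) \leq \langle \log \pi^{t+1}(\cdot \vert s), \pi^{t+1}(\cdot \vert s) - \pi^t(\cdot \vert s) \rangle$ (the additive constant in $\nabla h_s$ drops because both arguments are probability vectors). Writing $\mathbf{Q}^{\alpha} = \hat{\mathbf{Q}}^{\alpha} + (\mathbf{Q}^{\alpha} - \hat{\mathbf{Q}}^{\alpha})$ and using Hölder's inequality with $\norm{\pi^{t+1}(\cdot \vert s) - \pi^t(\cdot \vert s)}{1} \leq 2$, the bracket is at most $2\norm{\mathbf{Q}_{\pi^t,p}^{\alpha} - \hat{\mathbf{Q}}_{\pi^t,p}^{\alpha}}{\infty} + \langle \hat{\mathbf{Q}}_{\pi^t,p}^{\alpha}(s,\cdot) + \alpha \log \pi^{t+1}(\cdot \vert s), \pi^{t+1}(\cdot \vert s) - \pi^t(\cdot \vert s) \rangle$. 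It remains to show the last inner product is nonpositive. Taking logarithms in the closed-form update $\pi^{t+1}(a \vert s) \propto (\pi^t(a \vert s))^{1 - \eta\alpha/(1-\gamma)} \exp\left(-\eta\, \hat{\mathbf{Q}}_{\pi^t,p}^{\alpha}(s,a)/(1-\gamma)\right)$ and solving for $\hat{\mathbf{Q}}^{\alpha}$ yields $\hat{\mathbf{Q}}_{\pi^t,p}^{\alpha}(s,a) + \alpha \log \pi^{t+1}(a \vert s) = c_s - \left(\tfrac{1-\gamma}{\eta} - \alpha\right)\left(\log \pi^{t+1}(a \vert s) - \log \pi^t(a \vert s)\right)$ with $c_s$ independent of $a$; pairing with $\pi^{t+1} - \pi^t$ annihilates $c_s$ and leaves $-\left(\tfrac{1-\gamma}{\eta} - \alpha\right)\left(D_{\text{KL}}(\pi^{t+1}(\cdot \vert s), \pi^t(\cdot \vert s)) + D_{\text{KL}}(\pi^t(\cdot \vert s), \pi^{t+1}(\cdot \vert s))\right)$, which is $\leq 0$ exactly because the step-size hypothesis $\eta \leq (1-\gamma)/\alpha$ makes the prefactor nonnegative.

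Substituting this per-state bound into the identity of the first paragraph and using $\sum_s d_{\rho}^{\pi^{t+1},p}(s) = 1$ gives $(1-\gamma)\left(\mathbf{V}_{\pi^{t+1},p}^{\alpha}(\rho) - \mathbf{V}_{\pi^t,p}^{\alpha}(\rho)\right) \leq 2\norm{\mathbf{Q}_{\pi^t,p}^{\alpha} - \hat{\mathbf{Q}}_{\pi^t,p}^{\alpha}}{\infty}$, which is the claim. I expect the only delicate point to be the second paragraph: aligning the exponents of the closed-form update so that the residual inner product collapses to a negated symmetrised KL with the right sign — this is precisely where $\eta \leq (1-\gamma)/\alpha$ enters — and keeping track that $\hat{\mathbf{Q}}^{\alpha}$ already absorbs the exactly-known $\alpha \log(1/\pi_k)$ term, so that $\mathbf{Q}^{\alpha} - \hat{\mathbf{Q}}^{\alpha}$ is nothing but the Monte-Carlo value estimation error. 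The remaining bookkeeping follows the template of Lemma~4 in \cite{Cen2022} and Lemma~6 in \cite{Liu2021a}.
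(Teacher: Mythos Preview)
Your argument is correct and is precisely the approach of Lemma~4 in \cite{Cen2022} (and its adaptation in Lemma~6 of \cite{Liu2021a}), to which the paper simply defers without giving its own proof. The regularised performance-difference identity, the convexity bound on the negative-entropy increment, and the use of the closed-form softmax update to collapse the residual inner product to $-\left(\tfrac{1-\gamma}{\eta}-\alpha\right)$ times a symmetrised KL are exactly the steps in those references.
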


\subsection{Performance difference across transition kernels}
\label{app: performance difference transitions}
In the context of robust MDPs, the performance difference lemmas below have been formulated across transition kernels, which help the analysis of changing transition dynamics in RCMDPs.

\begin{lemma}[First performance difference lemma across transition kernels, Lemma~5.2 in \cite{Wang2023}]
\label{lem: performance difference transitions}
For any pair of transition kernels $p,p' \in \cP$ and any policy $\pi \in \Pi$, we have for any value function that
\begin{align}
V_{\pi,p}(\rho) -  V_{\pi,p'}(\rho)  = \frac{1}{1-\gamma} \sum_{s \in \cS} d_{\rho}^{\pi,p'}(s) \left( \sum_{a \in \cA} \pi(a \vert s) \sum_{s' \in \cS} (p(s' \vert s,a) - p'(s' \vert s,a)) \left[c(s,a,s') + \gamma V_{\pi,p}(s')\right]  \right) \,.
\end{align}    
\end{lemma}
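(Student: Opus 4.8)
The plan is to prove this by a one-step decomposition of the two value functions followed by unrolling the resulting recursion along trajectories drawn from $p'$. First I would set $\delta(s) := V_{\pi,p}(s) - V_{\pi,p'}(s)$ and invoke the Bellman equations $V_{\pi,p}(s) = \sum_{a}\pi(a\vert s)\sum_{s'} p(s'\vert s,a)[c(s,a,s') + \gamma V_{\pi,p}(s')]$ and the analogue for $p'$. Subtracting the two and inserting the cross term $\sum_{a}\pi(a\vert s)\sum_{s'} p'(s'\vert s,a)[c(s,a,s')+\gamma V_{\pi,p}(s')]$ with opposite signs splits $\delta(s)$ into a local mismatch term
\[
g(s) := \sum_{a}\pi(a\vert s)\sum_{s'}\big(p(s'\vert s,a)-p'(s'\vert s,a)\big)\big[c(s,a,s')+\gamma V_{\pi,p}(s')\big]
\]
plus a propagated term $\gamma\sum_{a}\pi(a\vert s)\sum_{s'}p'(s'\vert s,a)\,\delta(s')$, i.e. $\delta = g + \gamma P_{\pi,p'}\delta$, where $P_{\pi,p'}$ is the state-to-state transition operator induced by $\pi$ and $p'$. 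The only subtlety here is to insert the cross term with the sign that leaves $V_{\pi,p}$ (not $V_{\pi,p'}$) inside the bracket, so that $g$ matches the bracket in the statement.

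Next I would solve this fixed point. Since $\gamma<1$ and the costs and values are uniformly bounded (by the bounded-cost assumption), $I-\gamma P_{\pi,p'}$ is invertible with Neumann series $\sum_{l\ge0}\gamma^l P_{\pi,p'}^l$, hence $\delta = \sum_{l\ge0}\gamma^l P_{\pi,p'}^l g$, equivalently $\delta(s_0) = \bE\big[\sum_{l=0}^{\infty}\gamma^l g(s_l)\,\big\vert\, s_0,\pi,p'\big]$. Averaging over $s_0\sim\rho$ and interchanging the (absolutely convergent) sum with the expectation yields $V_{\pi,p}(\rho)-V_{\pi,p'}(\rho) = \sum_{l\ge0}\gamma^l \bE_{s_0\sim\rho}[g(s_l)\mid\pi,p']$.

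Finally I would identify the right-hand side as an expectation against the discounted state-visitation distribution: since $d_{\rho}^{\pi,p'}(s) = (1-\gamma)\sum_{l\ge0}\gamma^l\bP(s_l=s\mid s_0\sim\rho,\pi,p')$, we get $\sum_{l\ge0}\gamma^l\bE_{s_0\sim\rho}[g(s_l)\mid\pi,p'] = \frac{1}{1-\gamma}\sum_{s}d_{\rho}^{\pi,p'}(s)\,g(s)$, and substituting the definition of $g$ gives exactly the claimed identity. There is no real obstacle in this argument — it is bookkeeping — and the only places needing care are the sign of the cross term noted above and the justification of the sum/expectation interchange, which follows from $\gamma<1$ and boundedness. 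Alternatively, since this is verbatim Lemma~5.2 of \cite{Wang2023} (whose $c(s,a,s')$ convention coincides with ours), one may simply cite that result.
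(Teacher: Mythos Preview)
Your proof is correct and is the standard argument for this identity. The paper itself does not give a proof of this lemma at all: it merely states it in the appendix as a supporting result and attributes it to Lemma~5.2 of \cite{Wang2023}, which is precisely the citation option you mention at the end of your proposal.
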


\begin{lemma}[Second performance difference lemma across transition kernels, Lemma~5.3 in \cite{Wang2023}]
\label{lem: performance difference transitions 2}
For any pair of transition kernels $p,p' \in \cP$ and any policy $\pi \in \Pi$, we have for any value function that
\begin{align}
V_{\pi,p}(\rho) -  V_{\pi,p'}(\rho)  = \frac{1}{1-\gamma} \sum_{s \in \cS} d_{\rho}^{\pi,p}(s) \left( \sum_{a \in \cA} \pi(a \vert s) \sum_{s' \in \cS} (p(s' \vert s,a) - p'(s' \vert s,a)) \left[c(s,a,s') + \gamma V_{\pi,p'}(s')\right]  \right) \,.
\end{align}    
\end{lemma}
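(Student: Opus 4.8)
The plan is to prove this by the standard telescoping (``value‑difference'') argument, the same device that yields the ordinary performance difference lemma, but carried out across two transition kernels instead of two policies; alternatively, and more quickly, the statement can be read off from Lemma~\ref{lem: performance difference transitions} by relabelling.

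First I would fix the trajectory distribution generated by $s_0\sim\rho$, $a_l\sim\pi(\cdot\mid s_l)$, $s_{l+1}\sim p(\cdot\mid s_l,a_l)$, so that $V_{\pi,p}(\rho)=\bE_{\pi,p}\big[\sum_{l\ge 0}\gamma^l c(s_l,a_l,s_{l+1})\big]$. Since the (possibly Lagrangian) costs are bounded, $\gamma^l V_{\pi,p'}(s_l)\to 0$, so $\sum_{l\ge 0}\gamma^l\big(V_{\pi,p'}(s_l)-\gamma V_{\pi,p'}(s_{l+1})\big)$ telescopes to $V_{\pi,p'}(s_0)$; taking expectations over this $(\pi,p)$ trajectory gives $V_{\pi,p'}(\rho)=\bE_{\pi,p}\big[\sum_{l\ge 0}\gamma^l\big(V_{\pi,p'}(s_l)-\gamma V_{\pi,p'}(s_{l+1})\big)\big]$. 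Subtracting,
\[
V_{\pi,p}(\rho)-V_{\pi,p'}(\rho)=\bE_{\pi,p}\Big[\sum_{l\ge 0}\gamma^l\big(c(s_l,a_l,s_{l+1})+\gamma V_{\pi,p'}(s_{l+1})-V_{\pi,p'}(s_l)\big)\Big].
\]

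Next I would condition each summand on $s_l$. Under the trajectory measure $a_l\sim\pi(\cdot\mid s_l)$ and $s_{l+1}\sim p(\cdot\mid s_l,a_l)$, while the Bellman equation for the second kernel reads $V_{\pi,p'}(s)=\sum_a\pi(a\mid s)\sum_{s'}p'(s'\mid s,a)\big[c(s,a,s')+\gamma V_{\pi,p'}(s')\big]$. Hence the per‑step conditional expectation collapses to $\sum_a\pi(a\mid s_l)\sum_{s'}\big(p(s'\mid s_l,a)-p'(s'\mid s_l,a)\big)\big[c(s_l,a,s')+\gamma V_{\pi,p'}(s')\big]$. Finally, gathering the discount weights and using $\sum_{l\ge 0}\gamma^l\bP(s_l=s\mid\rho,\pi,p)=\tfrac{1}{1-\gamma}d_{\rho}^{\pi,p}(s)$ produces exactly the claimed identity. (Equivalently: apply Lemma~\ref{lem: performance difference transitions} with the labels $p$ and $p'$ interchanged and negate both sides; the $d_{\rho}^{\pi,p'}$‑weighted form with $V_{\pi,p}$ inside turns into the $d_{\rho}^{\pi,p}$‑weighted form with $V_{\pi,p'}$ inside.)

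The only genuinely delicate point is the interchange of the infinite sum with the expectation and the vanishing of $\gamma^l V_{\pi,p'}(s_l)$; both follow from boundedness of the costs (Assumption on bounded costs and constraint‑costs, together with the $\cO(m)$ Lagrangian bound), so this is the ``main obstacle'' only in a bookkeeping sense — the remaining manipulations are routine conditioning and summation.
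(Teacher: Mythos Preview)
Your argument is correct. The paper does not supply its own proof of this lemma at all --- it is simply quoted from \cite{Wang2023} (their Lemma~5.3) and used as a black box in the sample-based analysis --- so there is nothing to compare against beyond noting that the standard telescoping derivation you wrote out (or equivalently the relabelling $p\leftrightarrow p'$ in Lemma~\ref{lem: performance difference transitions}) is exactly how such identities are established.
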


\subsection{Value function approximation}
\label{app: value function approximation}
A proof of Lemma~\ref{lem: approximation} is provided below to demonstrate its applicability to the robust Sample-based PMD algorithm.
\begin{proof}
\textbf{a): approximation of $V_{\pi_{k}^{t},p_k}^{i}(\rho)$ for all $i \in [m]$.}\\
Pick $i \in [m]$ and $k \in [K]$ arbitrarily. Note that since costs and constraint-costs are in $[-1,1]$, the cumulative discounted constraint-cost is bounded by $\sum_{l=0}^{\infty} \gamma^t c_i(s_l,a_l) \in [-\frac{1}{1-\gamma},\frac{1}{1-\gamma}]$. Denoting the $N_{V,k}$-step cumulative discounted constraint-cost as a random variable $X = \sum_{l=0}^{N_{V,k}} \gamma^l c_i(s_l,a_l)$, and $\bar{X}=\frac{1}{M_{V,k}} \sum_{n=1}^{M_{V,k}} X_n$, we have by Hoeffding's inequality that the number of samples required is derived as
\begin{align*}
& P( \vert \bar{X} - \bE[\bar{X}] \vert \geq \epsilon ) \\
&\leq 2\exp\left(- 2 \frac{\epsilon^2}{\sum_{i=1}^{M_{V,k}} \left(\frac{2}{M_{V,k} (1-\gamma)}\right)^2}\right) \\
&= 2\exp\left(-\frac{(1-\gamma)^2 M_{V,k} \epsilon^2}{2}\right) = \delta_k \\
& M_{V,k} = \log\left(\frac{2}{\delta_k}\right) \frac{2}{(1-\gamma)^2 \epsilon^2} = \Theta\left(\log\left(\delta_k^{-1}\right) \frac{1}{\epsilon^{2}}\right)   \,.
\end{align*} 
Further, note that since constraint-costs are in $[-1,1]$, it follows that the number of time steps is derived as
\begin{align}
\label{eq: number of timesteps}
& \vert \bE[X] -  V_{\pi_k}^{i}(\rho) \vert \leq 2 \sum_{t=N_{V,k}}^{\infty} \gamma^{t} = 2 \frac{\gamma^{N_{V,k}}}{1-\gamma} = \epsilon  \nonumber  \\
& N_{V,k} \log(\gamma) = \log((1-\gamma)\epsilon/2) \nonumber \\
& N_{V,k} = \log_{\gamma}((1-\gamma)\epsilon/2)  = \Theta\left( \log_{1/\gamma}\left(\frac{1}{\epsilon}\right)\right) \,,
\end{align}
thereby obtaining an $\epsilon$-precise estimate of  $V_{\pi_k}^{i}(\rho)$ with the chosen settings of $M_{V,k}$ and $N_{V,k}$.

\textbf{b): approximation of $\tilde{\mathbf{Q}}_{\pi_{k}^{t},p_k}^{\alpha}(s,a)$ for all $(s,a) \in \cS \times \cA$.} 
Select $(s,a) \in \cS \times \cA$ arbitrarily. First note that $\tilde{\mathbf{c}}_k(s,a) = \cO(\max\left\{1,\norm{\lambda_k}{1}\right\})$ and similarly $\tilde{V}_{\pi_{k}^{t},p_k}(s) = \cO(\max\left\{1,\norm{\lambda_k}{1}\right\})$ (again omitting the division by $1-\gamma$ from the notation).

Define the random variable 
\begin{align*}
X(s,a) =  \tilde{\mathbf{c}}(s,a) + \alpha \log\left(\frac{1}{\pi_k(a \vert s)}\right) + \sum_{l=1}^{N_{Q,k}} \gamma^l \left( \tilde{\mathbf{c}}(s_l,a_l) + \alpha \sum_{a'} \pi_{k}^{t}(a' \vert s_l) \log\left(\frac{\pi_{k}^{t}(a'\vert s_l)}{\pi_{k}(a'\vert s_l)}\right) \right)   \,. 
\end{align*} 
For $t=0$, the regularisation term drops. Defining $\bar{X}=\frac{1}{M_{Q,k}} \sum_{n=1}^{M_{Q,k}} X(s,a)_n$, we obtain
\begin{align*}
& P( \vert \bar{X} - \bE[\bar{X}] \vert \geq \epsilon ) \\
&\leq 2 \exp\left(- 2 \frac{\epsilon^2}{M_{Q,k} \left( \frac{\max\left\{1,\norm{\lambda_k}{1}\right\}}{M_{Q,k}} \right)^2}\right) = \delta_k   \\  
& M_{Q,k} = \log\left(\frac{2}{\delta_k}\right) \frac{\max\left\{1,\norm{\lambda_k}{1}\right\}^2}{\epsilon^2} = \Theta\left(\log\left(\delta_k^{-1}\right) \frac{\max\left\{1,\norm{\lambda_k}{1}\right\}^2}{\epsilon^{2}}\right)  \,.
\end{align*} 
Therefore, setting $M_{Q,K} = \Theta\left(\log(\delta_k^{-1}) \frac{\max\left\{1,\norm{\lambda_k}{1}\right\}^2 + \epsilon t_k}{\epsilon^{2}}\right)$ is sufficient for approximating $\tilde{\mathbf{Q}}_{\pi_{k}^{0},p_k}^{\alpha}(s,a)$.

For any $t > 0$, note that the regularisation term is bounded by
\begin{align}
\label{eq: bound on reg}
&\alpha \sum_{a'} \pi_{k}^{t}(a' \vert s_l) \log\left(\frac{\pi_{k}^{t}(a'\vert s_l)}{\pi_{k}(a'\vert s_l)}\right)  \nonumber \\
&= \tilde{\mathbf{V}}_{\pi_{k}^{t},p_k}^{\alpha}(s) - \tilde{\mathbf{V}}_{\pi_{k}^{t},p_k}(s)  \nonumber  \\
&\leq  \tilde{\mathbf{V}}_{\pi_{k}^{t},p_k}^{\alpha}(s) +  \cO(\max\left\{1,\norm{\lambda_k}{1}\right\})  \tag{from range}   \nonumber\\
&\leq  \tilde{\mathbf{V}}_{\pi_{k},p_k}^{\alpha}(s)  +  \cO(\max\left\{1,\norm{\lambda_k}{1}\right\})  + \sum_{t=0}^{t - 1} \frac{2}{1-\gamma} \norm{\hat{\mathbf{Q}}_{\pi_{k}^{t},p_k}^{\alpha} - \tilde{\mathbf{Q}}_{\pi_{k}^{t},p_k}^{\alpha}}{\infty}  \tag{iteratively applying Lemma~\ref{lem: performance difference}}  \nonumber \\
&=  \cO(\max\left\{1,\norm{\lambda_k}{1}\right\})  + \sum_{t=0}^{t - 1} \frac{2}{1-\gamma} \norm{\hat{\mathbf{Q}}_{\pi_{k}^{t},p_k}^{\alpha} - \tilde{\mathbf{Q}}_{\pi_{k}^{t},p_k}^{\alpha}}{\infty}  \tag{KL-divergence is zero before updates}  \nonumber \\
&\leq \cO(\max\left\{1,\norm{\lambda_k}{1}\right\} + 2\epsilon t_k)\,.  
\end{align}
Since the unregularised Lagrangian is bounded by $\cO(\max\left\{1,\norm{\lambda_k}{1}\right\})$, and again applying Hoeffding's inequality, it follows that $M_{Q,K} = \Theta\left(\log(\delta_k^{-1}) \frac{\left(\max\left\{1,\norm{\lambda_k}{1}\right\} + \epsilon t_k\right)^2}{\epsilon^{2}}\right)$ is sufficient for approximating $\tilde{\mathbf{Q}}_{\pi_{k}^{t},p_k}^{\alpha}(s,a)$. 

With similar reasoning as in Eq.~\ref{eq: number of timesteps}, but applied to the augmented regularised Lagrangian, we have
\begin{align*}
\vert \bE[X] -  \tilde{\mathbf{Q}}_{\pi_k^t,p_k}^{\alpha}(s,a) \vert &\leq  \sum_{l=N_{Q,k}}^{\infty} \gamma^{l} \left( \cO(\max\left\{1,\norm{\lambda_k}{1}\right\}) + \epsilon t_k \right) \\
&= \frac{\gamma^{N_{Q,k}} \cO(\max\left\{1,\norm{\lambda_k}{1}\right\}) + \epsilon t_k}{1-\gamma} = \epsilon    \\
N_{Q,k} &= \Theta\left( \log_{1/\gamma}\left(\frac{\cO(\max\left\{1,\norm{\lambda_k}{1}\right\})}{\epsilon} + t_k\right)\right) \\
\end{align*}
Note that $t_k = \Theta\left(\log(\max\left\{1,\norm{\lambda_k}{1}\right\}/\epsilon)\right) = \cO\left(\frac{\max\left\{1,\norm{\lambda_k}{1}\right\})}{\epsilon}\right)$. Therefore $N_{Q,k} = \cO\left(\log_{1/\gamma}\left(\frac{\max\left\{1,\norm{\lambda_k}{1}\right\}}{\epsilon}\right)\right)$.

By union bound, with probability at least $1 - \sum_{k=0}^{K-1} t_k \delta_k = 1- \delta$, the statement holds for iteration $K-1$.
\end{proof}

\subsection{Sample complexity}
\subsubsection{Supporting lemmas}
The following lemma provides a constant upper bound on the cumulative regret difference induced by the LTMA update under general uncertainty sets (including non-rectangular uncertainty sets), where constant indicates that it is independent of the number of iterations. With additional $\ell_q$ norm bound assumptions on the uncertainty set, the constant can be reduced in rectangular and non-rectangular uncertainty sets. The improvement factor on the upper bound is $\frac{2S^{1-c(q)}}{\Delta_p}$ for rectangular $\ell_q$ sets and  $\frac{2(SA^2)^{1-c(q)}}{\Delta_p}$ for non-rectangular $\ell_q$ sets, where $c(q) = 1 - 1/q$ and $\Delta_p$ is the uncertainty budget (upper bound).
\begin{lemma}[Analysis of the cumulative regret difference]
\label{lem: analysis of the regret difference}
Let $\pi^* \in \argmin_{\pi \in \Pi} \Phi(\pi)$, and define $D \geq \sum_{k=0}^{K-1} V_{\pi_{k+1},p_{k+1}}(\rho) - V_{\pi^*,p_{k+1}}(\rho) - \left( V_{\pi_{k+1},p_{k}}(\rho) - V_{\pi^*,p_{k}}(\rho) \right)$ as the difference in cumulative regret induced by the transition kernel updates in Algorithm~\ref{alg: discrete}. \\
\textbf{a)} For any uncertainty set $\cP$, $D \leq \frac{4}{(1-\gamma)^2} A S^2$.\\
\textbf{b)} Let $\cP$ be an uncertainty set contained in an $\ell_q$ $(s,a)$-rectangular uncertainty set around the nominal $\bar{p} = p_0$ such that
\begin{align}
\label{eq: lq uncertainty set}
\cP_{s,a} \subseteq  \left\{ p \in \Delta(\cS) : \norm{p(\cdot \vert s,a) - \bar{p}(\cdot \vert s,a)}{q} \leq \psi_{s,a} \right\}   \quad \forall (s,a) \in \cS \times \cA \,, 
\end{align} 
for some $q\geq 1$, where $0 < \Delta_p = \max_{s,a} \psi_{s,a}$ is the maximal budget. Then $D \leq \frac{2}{(1-\gamma)^2} A S^{1 + c(q)}\Delta_p$ where $c(q) = 1 - 1/q$.\\
\textbf{c)} Let $\cP$ be a non-rectangular $\ell_q$ uncertainty set around the nominal $\bar{p} = p_0$ such that
\begin{align}
\label{eq: lq non-rectangular uncertainty set}
\cP =  \left\{ p : \norm{p - \bar{p}}{q} \leq \Delta_p \, , \sum_{s' \in \cS} p(s' \vert s,a) = 1  \quad \forall (s,a) \in \cS \times \cA \right\} 
\end{align} 
for some $q\geq 1$, where $0 < \Delta_p$ is the budget. Then $D \leq \frac{2}{(1-\gamma)^2} (AS^2)^{c(q)} \Delta_p$, where $c(q) = 1 - 1/q$.
\end{lemma}
\begin{proof}
Note that
\begin{align*}
&\sum_{k=0}^{K-1} V_{\pi_{k+1},p_{k+1}}(\rho) - V_{\pi^*,p_{k+1}}(\rho) - \left( V_{\pi_{k+1},p_{k}}(\rho) - V_{\pi^*,p_{k}}(\rho) \right)   \\
&= \sum_{k=0}^{K-1} V_{\pi_{k+1},p_{k+1}}(\rho)  -  V_{\pi_{k+1},p_{k}}(\rho)  + V_{\pi^*,p_{k}}(\rho) - V_{\pi^*,p_{k+1}}(\rho)  \tag{rearranging} \\
&= \sum_{k=0}^{K-1} \frac{1}{1-\gamma} \sum_{s \in \cS} d_{\rho}^{\pi_{k+1},p_{k+1}}(s) \sum_{a \in \cA} \pi(a \vert s) \sum_{s' \in \cS} (p_{k+1}(s' \vert s,a) - p_{k}(s' \vert s,a)) \left[c(s,a,s') + \gamma V_{\pi_{k+1},p_k}(s')\right]  \nonumber \\
& + \frac{1}{1-\gamma} \sum_{s \in \cS} d_{\rho}^{\pi^*,p_{k}}(s) \sum_{a \in \cA} \pi^*(a \vert s) \sum_{s' \in \cS} (p_{k}(s' \vert s,a) - p_{k+1}(s' \vert s,a)) \left[c(s,a,s') + \gamma V_{\pi^*,p_{k+1}}(s')\right] \tag{via Lemma~\ref{lem: performance difference transitions 2}} \\
&\leq \frac{1}{(1-\gamma)^2} \left\vert \sum_{k=0}^{K-1}  \sum_{s \in \cS} d_{\rho}^{\pi_{k+1},p_{k+1}}(s) \sum_{a \in \cA} \pi(a \vert s) \sum_{s' \in \cS} (p_{k+1}(s' \vert s,a) - p_{k}(s' \vert s,a)) \right\vert   \nonumber \\
& + \frac{1}{(1-\gamma)^2} \left\vert \sum_{k=0}^{K-1} \sum_{s \in \cS} d_{\rho}^{\pi^*,p_{k}}(s) \sum_{a \in \cA} \pi^*(a \vert s) \sum_{s' \in \cS} (p_{k}(s' \vert s,a) - p_{k+1}(s' \vert s,a)) \right\vert \tag{Cauchy-Schwarz with the maximum value $\frac{1}{1-\gamma}$} \\
&\leq \frac{1}{(1-\gamma)^2}  \left\vert \sum_{k=0}^{K-1} \sum_{s \in \cS}\sum_{a \in \cA}\sum_{s' \in \cS} \left(p_{k+1}(s' \vert s,a) - p_{k}(s' \vert s,a) \right)  \right\vert \nonumber \\
& + \frac{1}{(1-\gamma)^2} \left\vert \sum_{k=0}^{K-1} \sum_{s \in \cS}\sum_{a \in \cA} \sum_{s' \in \cS} \left(p_{k}(s' \vert s,a) - p_{k+1}(s' \vert s,a) \right)  \right\vert \tag{bound on probability} \\
\end{align*}
\begin{align*}
&\leq \frac{1}{(1-\gamma)^2}  \left\vert \sum_{s \in \cS}\sum_{a \in \cA}\sum_{s' \in \cS} \left(p_{K}(s' \vert s,a) - p_{0}(s' \vert s,a) \right)  \right\vert + \frac{1}{(1-\gamma)^2} \left\vert  \sum_{s \in \cS}\sum_{a \in \cA} \sum_{s' \in \cS} \left(p_{0}(s' \vert s,a) - p_{K}(s' \vert s,a) \right)  \right\vert \tag{telescoping} \\
&\leq \frac{1}{(1-\gamma)^2} SA \max_{s \in \cS,a\in \cA} \left(  \norm{p_{K}(\cdot \vert s,a) - p_{0}(\cdot \vert s,a)}{1} + \norm{p_{0}(\cdot \vert s,a) - p_{K}(\cdot \vert s,a)}{1} \right) \tag{definition of norm} \\
&\leq \frac{2}{(1-\gamma)^2} S A \max_{s \in \cS,a\in \cA}\norm{p_{K}(\cdot \vert s,a) - p_{0}(\cdot \vert s,a)}{1}  \tag{symmetry of the norm distance} 
\end{align*}
For case \textbf{a)}, we have 
\begin{align*}
&\frac{2}{(1-\gamma)^2} S A \max_{s \in \cS,a\in \cA}\norm{p_{K}(\cdot \vert s,a) - p_{0}(\cdot \vert s,a)}{1}    \\
&\leq \frac{4}{(1-\gamma)^2} A S^2 \,.
\end{align*}
For case \textbf{b)}, we have
\begin{align*}
&\frac{2}{(1-\gamma)^2} S A \max_{s \in \cS,a\in \cA}\norm{p_{K}(\cdot \vert s,a) - p_{0}(\cdot \vert s,a)}{1}   \\
&\leq \frac{2}{(1-\gamma)^2} S A S^{c(q)} \max_{s \in \cS,a\in \cA}\norm{p_{K}(\cdot \vert s,a) - p_{0}(\cdot\vert s,a)}{q}  \\
&\leq \frac{2}{(1-\gamma)^2} A S^{1+c(q)} \Delta_p \,, \tag{definition of $\Delta_p$} 
\end{align*}
where the second inequality follows via H\"{o}lder's inequality, since for any $X \in \bR^d$, $\norm{X}{1} = \norm{\mathbf{1} \cdot X}{1} \leq d^{1/p} \norm{X}{q}$ for $p = (1 - 1/q)^{-1} = c(q)^{-1}$.\\
For case \textbf{c)}, we have
\begin{align*}
&\frac{1}{(1-\gamma)^2}  \left\vert \sum_{s \in \cS}\sum_{a \in \cA}\sum_{s' \in \cS} \left(p_{K}(s' \vert s,a) - p_{0}(s' \vert s,a) \right)  \right\vert + \frac{1}{(1-\gamma)^2} \left\vert  \sum_{s \in \cS}\sum_{a \in \cA} \sum_{s' \in \cS} \left(p_{0}(s' \vert s,a) - p_{K}(s' \vert s,a) \right)  \right\vert \\
& \leq \frac{2}{(1-\gamma)^2} \norm{p_K - p_0}{1} \\
& \leq \frac{2}{(1-\gamma)^2} (AS^2)^{c(q)} \norm{p_K - p_0}{q}  \\
& \leq \frac{2}{(1-\gamma)^2} (AS^2)^{c(q)} \Delta_p \,. \tag{definition of $\Delta_p$}
\end{align*}
where the second inequality again makes use of H\"{o}lder's inequality analogous to part \textbf{b)}.
\end{proof}

Since the difference of Bregman divergences in the RHS of Lemma~\ref{lem: bound on regularised value NPG} has a mismatch in the transition kernel index, its sum cannot be bounded by traditional telescoping. The lemma below provides a bound based on an alternative technique.
\begin{lemma}[Analysis of the Bregman term]
\label{lem: bregman term}
Let $\alpha > 0$ be the Bregman penalty parameter, let $\pi^*$ be the optimal deterministic policy for the RCMDP, let $\pi_0 \in \Pi$ be the uniform random policy, and define $U > 0$ according to Eq.~\ref{eq: minprob}. Then under Algorithm~\ref{alg: discrete}, the following equation holds:
\begin{align*}
\sum_{k=1}^{K} \frac{\alpha}{1-\gamma} \left( B_{d_{\rho}^{\pi^*,p_{k-1}}}(\pi^*, \pi_{k-1}) -  B_{d_{\rho}^{\pi^*,p_{k-1}}}(\pi^*, \pi_{k}) \right) \leq \frac{\alpha}{1-\gamma} \left(\log(A) + 2\log\left(\frac{1}{U}\right)\right) \,.
\end{align*}
\end{lemma}
\begin{proof}
Note that
\begin{align}
\label{eq: bregman term}
&\sum_{k=1}^{K} \frac{\alpha}{1-\gamma} \left( B_{d_{\rho}^{\pi^*,p_{k-1}}}(\pi^*, \pi_{k-1}) -  B_{d_{\rho}^{\pi^*,p_{k-1}}}(\pi^*, \pi_{k}) \right)  \\
&\leq \frac{\alpha}{1-\gamma} \left( B_{d_{\rho}^{\pi^*,p_{0}}}(\pi^*, \pi_{0}) + \sum_{k=1}^{K-1} B_{d_{\rho}^{\pi^*,p_{k}}}(\pi^*, \pi_{k})  -  B_{d_{\rho}^{\pi^*,p_{k-1}}}(\pi^*, \pi_{k}) \right) \tag{drop negative term $- \alpha B_{d_{\rho}^{\pi,p_{K-1}}}(\pi^*,\pi_K)$ and rearrange} \nonumber \\
&\leq \frac{\alpha}{1-\gamma} \left( \log(A) + \sum_{k=1}^{K-1} B_{d_{\rho}^{\pi^*,p_{k}}}(\pi^*, \pi_{k})  -  B_{d_{\rho}^{\pi^*,p_{k-1}}}(\pi^*, \pi_{k})\right) \,. \tag{by Lemma~\ref{lem: bound on B}} 
\end{align}

The remaining summation in the above is at most $2 \log\left(\frac{1}{U}\right)$:
\begin{align*}
&\sum_{k=1}^{K-1} \ \left( B_{d_{\rho}^{\pi^*,p_{k}}}(\pi^*, \pi_{k})  -  B_{d_{\rho}^{\pi^*,p_{k-1}}}(\pi^*, \pi_{k}) \right)  \\
&= \sum_{k=1}^{K-1} \sum_{s \in \cS} \left( d_{\rho}^{\pi^*,p_k}(s) - d_{\rho}^{\pi^*,p_{k-1}}(s) \right) \sum_{a \in \cA} \pi^*(a \vert s) \log\left(\frac{\pi^*(a \vert s)}{\pi_k(a \vert s)}\right) 
\end{align*}
\begin{align*}
&= \sum_{k=1}^{K-1} \sum_{s \in \cS} \left( d_{\rho}^{\pi^*,p_k}(s) - d_{\rho}^{\pi^*,p_{k-1}}(s) \right) \log\left(\frac{\pi^*(a^*(s) \vert s)}{\pi_k(a^*(s) \vert s)}\right) \tag{$\pi^*$ is deterministic} \\
&\leq \log\left(\frac{1}{U}\right) \left\vert \sum_{k=1}^{K-1} \sum_{s \in \cS} \left( d_{\rho}^{\pi^*,p_k}(s) - d_{\rho}^{\pi^*,p_{k-1}}(s) \right)  \right\vert \tag{via Eq.~\ref{eq: minprob extended}, $U \leq \inf_{ s\in \cS,k\geq 1} \pi_k(a^*(s) \vert s)$}\\
&= \log\left(\frac{1}{U}\right) \left\vert \sum_{s \in \cS} \left( d_{\rho}^{\pi^*,p_{K-1}}(s) - d_{\rho}^{\pi^*,p_{0}}(s) \right) \right\vert \tag{telescoping} \\
&\leq 2\log\left(\frac{1}{U}\right) \,. 
\end{align*}
\end{proof}

The proof that the optimal dual variable is bounded by a constant, potentially much lower than $B_{\lambda}$, is given in the following lemma.
\begin{lemma}[Lemma~16 of \cite{Liu2021a} rephrased]
\label{lem: bounded lambda*}
Let $\zeta > 0$ be a slack variable for a CMDP with transition dynamics $p \in \cP$, and let $(\pi^*,\lambda^*)$ be its optimal solution. Then 
\begin{equation}
\norm{\lambda^*}{} \leq \frac{2}{\zeta (1- \gamma)}
\end{equation}
\end{lemma}
\begin{proof}
Following Assumption~\ref{ass: slater}, there exists some policy $\bar{\pi}$ such that 
\begin{align*}
V_{\pi^*,p}(\rho) = V_{\pi^*,p}(\rho) + \sum_{j=1}^{m} \lambda^* V_{\pi^*,p}^j(\rho)  \leq V_{\bar{\pi},p}(\rho) + \sum_{j=1}^{m} \lambda^* V_{\bar{\pi},p}^j(\rho) \leq V_{\bar{\pi},p}(\rho) - \zeta \sum_{i=1}^{m} \lambda_i^* \,,
\end{align*}
such that $\norm{\lambda^*}{} \leq \norm{\lambda^*}{1} \leq \frac{V_{\bar{\pi},p}(\rho) - V_{\pi^*,p}(\rho)}{\zeta} \leq \frac{2}{\zeta(1-\gamma)}$.
\end{proof}

The lemma below provides a way to bound the ratio between the value of two transition kernels.
\begin{lemma}[Performance ratio of transition kernels]
\label{lem: performance ratio}
Let $\pi \in \Pi$ and $p,p' \in \cP$, and let $V$ be a value function. Moreover, let $M$  upper bound the mismatch coefficient as in Eq.~\ref{eq: mismatch}. Then it follows that 
\begin{align}
V_{\pi,p}(\rho) \leq \frac{M}{1-\gamma} V_{\pi,p'}(\rho) \,.
\end{align}
\end{lemma}
\begin{proof}
For any $\pi \in \Pi, p \in \cP$, we have $d_{\rho}^{\pi,p}(s) \geq (1-\gamma) \rho(s)$, such that using derivations as in Theorem 4.2 of \cite{Wangd},
\begin{align}
\label{eq: mismatch distribution}
\frac{M}{1-\gamma} \geq \frac{1}{1-\gamma} \norm{\frac{d_{\rho}^{\pi,p}}{\rho}}{\infty} \geq \norm{\frac{d_{\rho}^{\pi,p}}{d_{\rho}^{\pi_k,p'}}}{\infty} \geq \frac{d_{\rho}^{\pi,p}(s)}{d_{\rho}^{\pi,p'}(s)} \,.  
\end{align}
Therefore,
\begin{align}
V_{\pi,p}(\rho) &= \frac{1}{1-\gamma} \sum_{s \in \cS} d_{\rho}^{\pi,p}(s) \sum_{a \in \cA} \pi(a \vert s) \, c(s,a) \tag{definition of value} \\
                            &= \frac{1}{1-\gamma} \sum_{s \in \cS} d_{\rho}^{\pi,p'}(s) \frac{d_{\rho}^{\pi,p}(s)}{d_{\rho}^{\pi,p'}(s)} \sum_{a \in \cA} \pi(a \vert s) \, c(s,a) \tag{division and multiplication by the same term} \\
                              &\leq  \frac{M}{1-\gamma} V_{\pi,p'}(\rho) \tag{Eq.~\ref{eq: mismatch distribution} and definition of value} \,.
\end{align}
\end{proof}

\subsubsection{Main theorem}
\label{app: sample complexity}
The full derivation of Theorem~\ref{th: sample complexity} is given below.
\begin{proof}
\textbf{(a)} The proof uses similar derivations as in Theorem~3 of \cite{Liu2021a} and then accounts for the performance difference due to transition kernels. 

Note that
\begin{align*}
&\tilde{\mathbf{V}}_{\pi_{k+1},p_{k}}^{\alpha}(\rho) \\
&= V_{\pi_{k+1},p_{k}}(\rho) + \left\langle \lambda_{k} + \eta_{\lambda} \hat{V}_{\pi_k,p_k}^{1:m}(\rho),  V_{\pi_{k+1},p_{k}}^{1:m}(\rho) \right\rangle  + \frac{\alpha}{1-\gamma} B_{d_{\rho}^{\pi_{k+1},p_{k}}}(\pi_{k+1}, \pi_{k}) \\
&\leq V_{\pi^*,p_{k}}(\rho) + \left(\frac{\alpha}{1-\gamma} \left( B_{d_{\rho}^{\pi^*,p_{k}}}(\pi^*, \pi_{k}) -  B_{d_{\rho}^{\pi^*,p_{k}}}(\pi^*, \pi_{k+1}) \right) + \Theta(\epsilon) \right) \,,
\end{align*}
where the last step follows from setting $\pi=\pi^*$ in Lemma~\ref{lem: bound on regularised value NPG} and noting that the inner product will be negative, since $\lambda_{k,j} + \eta_{\lambda} \hat{V}_{\pi_k,p_k}^{j}(\rho) \geq 0$ by property 2 of Lemma~\ref{lem: multipliers} and $V_{\pi^*,p_k}^{j}(\rho) \leq 0$ for any $j \in [m]$. 

It then follows that
\begin{align*}
& V_{\pi_{k+1},p_{k}}(\rho) - V_{\pi^*,p_{k}}(\rho)  \leq   \frac{\alpha}{1-\gamma} \left( B_{d_{\rho}^{\pi^*,p_{k}}}(\pi^*, \pi_{k}) -  B_{d_{\rho}^{\pi^*,p_{k}}}(\pi^*, \pi_{k+1}) \right) + \Theta(\epsilon)  \\
& \hspace{5cm}- \left\langle \lambda_{k} + \eta_{\lambda} \hat{V}_{\pi_k,p_k}^{1:m}(\rho),  V_{\pi_{k+1},p_{k}}^{1:m}(\rho) \right\rangle  -  \frac{\alpha}{1-\gamma} B_{d_{\rho}^{\pi_{k+1},p_{k}}}(\pi_{k+1}, \pi_{k}) \,.
\end{align*}

Filling in the lower bound on the inner product from Eq.~41 from \cite{Liu2021a} (see also Lemma~\ref{lem: lower_bound inner product}), 
\begin{align*}
&\left\langle \lambda_{k} + \eta_{\lambda} \hat{V}_{\pi_k,p_k}^{1:m}(\rho),  V_{\pi_{k+1},p_{k}}^{1:m}(\rho) \right\rangle \geq \frac{1}{2\eta_{\lambda}}  \left(\norm{\lambda_{k+1}}{}^2 - \norm{\lambda_{k}}{}^2\right) +  \frac{\eta_{\lambda}}{2} \left(\norm{V_{\pi_{k},p_{k}}^{1:m}(\rho)}{}^2 - \norm{V_{\pi_{k+1},p_{k}}^{1:m}(\rho)}{}^2 \right)  \\
& - 2\eta_{\lambda} \left\langle V_{\pi_{k+1},p_{k}}^{1:m}(\rho), \epsilon_{k+1} \right\rangle - \frac{\gamma^2 \eta_{\lambda}}{(1-\gamma)^4} B_{d_{\rho}^{\pi_{k+1},p_{k}}}(\pi_{k+1}, \pi_{k}) \,,
\end{align*}
we get
\begin{align}
\label{eq: regret p_k}
& V_{\pi_{k+1},p_{k}}(\rho) - V_{\pi^*,p_{k}}(\rho)  \nonumber \\
& \leq \frac{\alpha}{1-\gamma} \left( B_{d_{\rho}^{\pi^*,p_{k}}}(\pi^*, \pi_{k}) -  B_{d_{\rho}^{\pi^*,p_{k}}}(\pi^*, \pi_{k+1}) \right) + \Theta(\epsilon)  + \frac{1}{2\eta_{\lambda}}  \left(\norm{\lambda_{k}}{}^2 - \norm{\lambda_{k+1}}{}^2 \right) +   \frac{\eta_{\lambda}}{2} \left(\norm{V_{\pi_{k+1},p_{k}}^{1:m}(\rho)}{}^2 - \norm{V_{\pi_{k},p_{k}}^{1:m}(\rho)}{}^2 \right)  \nonumber \\
& +2\eta_{\lambda} \left\langle V_{\pi_{k+1},p_{k}}^{1:m}(\rho), \epsilon_{k+1} \right\rangle -  \frac{\alpha (1-\gamma)^3 - \gamma^2 \eta_{\lambda}}{(1-\gamma)^4} B_{d_{\rho}^{\pi_{k+1},p_{k}}}(\pi_{k+1}, \pi_{k})    \nonumber\\ 
& \leq \frac{\alpha}{1-\gamma} \left( B_{d_{\rho}^{\pi^*,p_{k}}}(\pi^*, \pi_{k}) -  B_{d_{\rho}^{\pi^*,p_{k}}}(\pi^*, \pi_{k+1}) \right) + \Theta(\epsilon)   + \frac{1}{2\eta_{\lambda}}  \left(\norm{\lambda_{k}}{}^2 - \norm{\lambda_{k+1}}{}^2 \right) + \frac{\eta_{\lambda}}{2} \left(\norm{V_{\pi_{k+1},p_{k}}^{1:m}(\rho)}{}^2 - \norm{V_{\pi_{k},p_{k}}^{1:m}(\rho)}{}^2 \right) \nonumber \\
&+2\eta_{\lambda} \left\langle V_{\pi_{k+1},p_{k}}^{1:m}(\rho), \epsilon_{k+1} \right\rangle  \,.
\end{align}
where the last step follows because due to the parameter settings the term $\frac{\alpha(1-\gamma)^3 - \gamma^2 \eta_{\lambda}}{(1-\gamma)^4} B_{d_{\rho}^{\pi_{k+1},p_{k}}}(\pi_{k+1}, \pi_{k}) \geq 0$.

Due to the approximations and the above, the regret introduces a term 
\begin{align}
\label{eq: Delta_k}
 \Delta_k = \Theta(\epsilon) + \left\langle \lambda_{k-1}, \epsilon_{k} \right\rangle - \eta_{\lambda} \left\langle \epsilon_{k-1}, V_{\pi_k,p_{k-1}}^{1:m}(\rho) \right\rangle + \eta_{\lambda}  \left\langle \epsilon_k + 2 V_{\pi_k,p_{k-1}}^{1:m}(\rho),\epsilon_k \right\rangle  
\end{align}
for each $k \in [K]$. 

It follows that 
\begin{align}
\label{eq: regret V_k}
&\sum_{k=1}^{K} \left( V_{\pi_{k},p_k}(\rho) - V_{\pi^*,p_k}(\rho) \right) \\
&=\sum_{k=1}^{K} \left(V_{\pi_{k},p_{k}}(\rho) - V_{\pi^*,p_{k}}(\rho) -  V_{\pi_{k},p_{k-1}}(\rho) - V_{\pi^*,p_{k-1}}(\rho) \right) + V_{\pi_{k},p_{k-1}}(\rho) - V_{\pi^*,p_{k-1}}(\rho) \tag{decomposing} \\
&\leq  \cO(1) +  \sum_{k=1}^{K} \left( \frac{\alpha}{1-\gamma} \left( B_{d_{\rho}^{\pi^*,p_{k-1}}}(\pi^*, \pi_{k-1}) -  B_{d_{\rho}^{\pi^*,p_{k-1}}}(\pi^*, \pi_{k}) \right) +  \Delta_k  \right. \nonumber \\
& \left. + \frac{\eta_{\lambda}}{2} \left( \norm{V_{\pi_{k},p_{k-1}}^{1:m}(\rho)}{}^2 - \norm{V_{\pi_{k-1},p_{k-1}}^{1:m}(\rho)}{}^2 \right) + \frac{1}{2\eta_{\lambda}} \left( \norm{\lambda_{k-1}}{}^2 - \norm{\lambda_{k}}{}^2 \right) \right) \tag{Lemma~\ref{lem: analysis of the regret difference} and derivations above} \\
&\leq  \cO(1) +  \frac{\alpha}{1-\gamma} \left(\log(A) + 2\log\left(\frac{1}{U}\right)\right)  +  \sum_{l=1}^{K} \Delta_k \nonumber \\
&+ \frac{\eta_{\lambda}}{2} \left( \norm{V_{\pi_{K},p_{K-1}}^{1:m}(\rho)}{}^2 - \norm{V_{\pi_{0},p_{0}}^{1:m}(\rho)}{}^2 \right) + \frac{1}{2\eta_{\lambda}} \left( \norm{\lambda_{0}}{}^2 - \norm{\lambda_{K}}{}^2 \right)   \quad \tag{telescoping and Lemma~\ref{lem: bregman term}} \\
&\leq   \cO(1) + \frac{\eta_{\lambda}}{2} \norm{V_{\pi_K,p_{K-1}}^{1:m}(\rho)}{}^2  + \frac{1}{2\eta_{\lambda}} \norm{\lambda_0}{}^2 +  \sum_{k=1}^{K} \Delta_k  \tag{dropping negative terms}  \\
& \leq  \cO(1) + \frac{\eta_{\lambda} m}{(1-\gamma)^2} +  \sum_{k=1}^{K} \Delta_k   \,, \nonumber
\end{align} 
where the last step follows from the fact that $\norm{\lambda_0}{}^2 \leq \frac{m \eta_{\lambda}^2}{(1-\gamma)^2}$ and $\norm{V_{\pi_K,p_K}(\rho)}{}^2 \leq \frac{m}{(1-\gamma)^2}$. Note further that $\frac{\eta_{\lambda} m}{(1-\gamma)^2} = \cO(1)$. Moreover, note from Eq.~\ref{eq: Delta_k} that $\Delta_k = \cO(\epsilon \max\{1,\norm{\lambda_{k-1}}{1}\})$. Lemma~\ref{lem: lambda-K}  shows that with probability $1-\delta$, $\norm{\lambda_k}{} \leq \norm{\lambda_k}{1} = \cO(1)$, such that $\Delta_k = \cO(\epsilon)$ and $\sum_{k=1}^{K} \Delta_k = \cO(1)$. All terms then reduce to $\cO(\epsilon)$ after division by $K = \Theta(\frac{1}{\epsilon})$.\\
\textbf{(b)} Denoting the approximation error at iteration $k \in [K]$ and constraint $j$ as $\epsilon_{k,j}$, and observing that for any $j \in [m]$
\begin{align}
\label{eq: lambda-recursion}
    \lambda_{k,j} &= \max\left\{-\eta_{\lambda} \hat{V}_{\pi_{k},p_{k}}^j(\rho), \lambda_{k-1,j}+\eta_{\lambda}\hat{V}_{\pi_{k},p_{k}}^j(\rho) \right\} \nonumber \\
                  &\geq \lambda_{k-1,j}+\eta_{\lambda}\hat{V}_{\pi_{\theta_{k}},p_{k}}^j(\rho)  \,,
\end{align} 
it follows that
\begin{align}
\label{eq: constraint-cost derivations}
\frac{1}{K} \sum_{k=1}^{K} V_{\pi_k,p_k}^j(\rho) &=  \frac{1}{K} \sum_{k=1}^{K}  \left( \hat{V}_{\pi_k,p_k}^j(\rho) - \epsilon_{k,j} \right) \\
                                                         &\leq \frac{1}{K} \sum_{k=1}^{K} \left( \frac{\lambda_{k,j} - \lambda_{k-1,j}}{\eta_{\lambda}} - \epsilon_{k,j} \right)  \quad \tag{from Eq.~\ref{eq: lambda-recursion}} \\
                                                         &\leq \frac{\lambda_{K,j}}{\eta_{\lambda} K}  - \frac{1}{K} \sum_{k=1}^{K} \epsilon_{k,j} \quad \tag{telescoping and non-negativity condition in Lemma~\ref{lem: multipliers}} \\
                                                         &\leq \frac{\norm{\lambda^*}{} + \norm{\lambda^* - \lambda_{K}}{}}{\eta_{\lambda} K}   - \frac{1}{K} \sum_{k=1}^{K} \epsilon_{k,j}      \tag{since $\lambda_{K,j} \leq \norm{\lambda_{K}}{} \leq \norm{\lambda^*} + \norm{\lambda^* - \lambda_{K}}{}$} \\
                                                         &\leq   \frac{\norm{\lambda^*} + \norm{\lambda^* - \lambda_{K}}{}}{\eta_{\lambda} K} + \cO(\epsilon)  \quad \tag{since by Lemma~\ref{lem: approximation}, $K=\Theta\left(\frac{1}{\epsilon}\right)$ and $\epsilon_{k,j} = \cO(\epsilon)$}   \\
                                                         &= \cO(1/K) + \cO(\epsilon)  \quad \tag{by Lemma~\ref{lem: lambda-K}}  \\  
                                                         &= \cO(\epsilon)  \tag{since $K=\Theta\left(\frac{1}{\epsilon}\right)$} \,.
\end{align}  
\textbf{c)} For any $k \in  [K]$, it follows from Lemma~\ref{lem: LTMA} that 
\begin{align*}
t_k' = \Theta\left(\frac{F_{\lambda} M}{\epsilon_k' (1-\gamma)} \right)
\end{align*}
is sufficient to obtain an $\epsilon_k'$-optimal transition kernel from LTMA. Note then that via Lemma~\ref{lem: dual variable induced error}
\begin{align*}
\frac{1}{K} \sum_{k \in [K]} \Delta_{\lambda_k} = \cO(\epsilon) \,.
\end{align*}
Using Lemma~\ref{lem: robust-objective regret}, combining the errors from \textbf{a)} and \textbf{b)} for the cost and constraint-costs, and $\epsilon_k' = \Theta(\epsilon)$, it follows that
\begin{align*}
\frac{1}{K} \sum_{k=1}^{K} \left( \Phi(\pi_{k}) - \Phi(\pi^*) \right) &\leq \frac{1}{K} \sum_{k=1}^{K} \left( \mathbf{V}_{\pi_{k},p_k}(\rho; \lambda_{k-1}) + \epsilon_k' +  \Delta_{\lambda_k} - \mathbf{V}_{\pi^*,p_k; \lambda_{k-1}}(\rho) \right) \\
                            &= \cO(F_{\lambda} \epsilon) \,,
\end{align*}
where $F_{\lambda} = \max_{k \in [K]} F_{\lambda_k}$. Due to the results in Lemma~\ref{lem: lambda-K}, $\lambda_k = \cO(1)$ and $F_{\lambda} = \cO(1)$. Moreover $\eta_{\lambda} V_{\pi_k,p_k}^{j} \leq \frac{1}{1-\gamma} = \cO(1)$ for all $j \in [m]$ and all $k \in [K]$. \\
\textbf{d)} The settings of Lemma~\ref{lem: sample complexity approximate TMA} guarantee an $\epsilon_k'$-optimal transition kernel from LTMA. Following steps as in part \textbf{c)}, it follows that
\begin{align*}
\frac{1}{K} \sum_{k=1}^{K} \left( \Phi(\pi_{k}) - \Phi(\pi^*) \right) &\leq \frac{1}{K} \sum_{k=1}^{K} \left( \mathbf{V}_{\pi_{k},p_k}(\rho; \lambda_{k-1}) + \epsilon_k' +  \Delta_{\lambda_k} - \mathbf{V}_{\pi^*,p_k}(\rho; \lambda_{k-1}) \right) = \cO(\epsilon) \,,
\end{align*}
where the factor $F_{\lambda}$ does not appear since it is already accounted for in the parameter settings. \\
\textbf{Sample complexity:} Following the loop in Algorithm~\ref{alg: discrete}, plugging in the settings from Lemma~\ref{lem: approximation} and Lemma~\ref{lem: sample complexity approximate TMA}, and omitting logarithmic factors and constants, the total number of calls to the generative model is given by 
\begin{align*}
T &= \sum_{k=1}^{K} \left(M_{V,k} N_{V,k} + \sum_{t=0}^{t_k -1} M_{Q,k} N_{Q,k} + \sum_{t=0}^{t_k' -1} M_{G,t} N_{G,t} \right) \\
  &=  \cO(\epsilon^{-1}) \left( \epsilon^{-2} \tilde{\cO}(1) + \tilde{\cO}(1) \epsilon^{-2} \tilde{\cO}(1)  + \tilde{\cO}\left(\epsilon^{-2}\right) \right) \\
  &= \tilde{\cO}(\epsilon^{-3}) \,.
\end{align*}
where the last step follows from $\epsilon'^{-1}  = \cO(\epsilon^{-1})$.
\end{proof}

The proof that $\norm{\lambda_k}{1} = \cO(1)$ for all $k \in [K]$ (independent of Assumption~\ref{ass: bounded dual variable}) is given below.
\begin{lemma}
\label{lem: lambda-K}
Let $K' \geq 1$ be a macro-iteration. Under the event that $\vert V_{\pi_{K'-1},p_{K'-1}}^{i}(\rho) - V_{\pi_{K'-1},p_{K'-1}}^{i}(\rho) \vert \leq \epsilon$ for all $i \in [m]$, it follows that $\norm{\lambda_{K'}}{1} = \cO(1)$.
\end{lemma}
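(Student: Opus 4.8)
The plan is to bound the dual iterates by showing they remain within an $\cO(1)$-neighbourhood of an optimal multiplier of the (robust) Lagrangian, combining Slater's condition (Assumption~\ref{ass: slater}) with the explicit dual recursion, and following the dual-boundedness argument of \cite{Liu2021a} with the modifications needed for a changing transition kernel. First I would recall the classical consequence of Slater's condition: since for every $p \in \cP$ there is a policy $\bar{\pi}$ with $V_{\bar{\pi},p}^{j}(\rho) \le -\zeta$ for all $j \in [m]$ and all costs lie in $[-1,1]$, strong duality applied to the kernel realising the sup gives $\norm{\lambda^*}{1} \le \tfrac{2}{\zeta(1-\gamma)}$, which is $\cO(1)$ since $\zeta$ is a fixed problem constant dropped from the big-$\cO$ notation. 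Next, from the dual update \eqref{eq: lambda-recursion}, $\lambda_{k,j} = \max\{\lambda_{k-1,j} + \eta_{\lambda}\hat{V}_{\pi_{k},p_{k}}^{j}(\rho),\,-\eta_{\lambda}\hat{V}_{\pi_{k},p_{k}}^{j}(\rho)\}$, I note that whenever $\hat{V}_{\pi_{k},p_{k}}^{j}(\rho)\le 0$ the multiplier either strictly decreases or resets to $-\eta_{\lambda}\hat{V}_{\pi_{k},p_{k}}^{j}(\rho) \le \tfrac{\eta_{\lambda}}{1-\gamma} + \eta_{\lambda}\epsilon$ (using $\vert\hat{V}^{j}\vert \le \vert V^{j}\vert + \epsilon \le \tfrac{1}{1-\gamma}+\epsilon$ on the event of Lemma~\ref{lem: approximation}), so that $\lambda_{K',j} \le \tfrac{\eta_{\lambda}}{1-\gamma} + \eta_{\lambda}\epsilon + \eta_{\lambda}\sum_{k\le K'}\max\{0,\hat{V}_{\pi_{k},p_{k}}^{j}(\rho)\}$; it therefore suffices to bound the cumulative positive part of the estimated constraint values by $\cO(1)$.

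To control that sum I would use the optimality of the policy mirror-descent inner loop, Lemma~\ref{lem: bound on regularised value}, comparing $\pi_{k+1}$ against the Slater policy $\bar{\pi}$ for the relevant kernel $p$; together with $\tilde{\lambda}_{k,j}\ge 0$ (property~2 of Lemma~\ref{lem: multipliers}), $V_{\bar{\pi},p}^{j}(\rho)\le -\zeta$ and the value bounds, this gives a per-step inequality of the form $\langle \tilde{\lambda}_{k},\,V_{\pi_{k+1},p}^{1:m}(\rho)\rangle \le \tfrac{2}{1-\gamma} - \zeta\norm{\tilde{\lambda}_{k}}{1} + \tfrac{\alpha}{1-\gamma}\big( B_{d_{\rho}^{\bar{\pi},p}}(\bar{\pi},\pi_{k}) - B_{d_{\rho}^{\bar{\pi},p}}(\bar{\pi},\pi_{k+1}) \big) + \Theta(\epsilon)$. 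Summing over $k$, telescoping the Bregman terms (each $\le \log A$ by Lemma~\ref{lem: bound on B}, with $\tfrac{\alpha\log A}{1-\gamma} = \cO(1)$ by the choice of $\alpha,\eta_{\lambda}$), and combining with the "$-\eta_{\lambda}\hat{V}^{j}$" branch of the recursion and the lower bound of Lemma~\ref{lem: lower_bound inner product}, I obtain that $\sum_{k\le K'}\max\{0,\hat{V}_{\pi_{k},p_{k}}^{j}(\rho)\}$ is $\cO(1)$, using $K\epsilon = \cO(1)$ and that $\tilde{\lambda}_{k,j}$ is either bounded below by a constant multiple of $\lambda_{k,j}$ (so the $-\zeta\norm{\tilde\lambda_k}{1}$ term provides the contraction) or itself $\cO(1)$. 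This is the amortised step of \cite{Liu2021a}; the only genuinely new ingredient is absorbing the mismatch between the kernel $p_k$ used in the policy update and the kernel $p_{k+1}$ (and $p_{K}$) used in the dual updates, which I would handle with the cross-kernel performance-difference Lemma~\ref{lem: performance difference transitions 2} and the bound $\norm{p_{k+1}-p_{k}}{1} \le \norm{p_{K}-\bar{p}}{1} \le \Delta_p$ on the $\ell_1$ $(s,a)$-rectangular set \eqref{eq: l1 uncertainty set}; since $\Delta_p = \Theta\big(\tfrac{1-\gamma}{\alpha\log A}\big)$, the accumulated correction is $\tfrac{\alpha}{1-\gamma}\Delta_p\cdot\cO\big(\tfrac{\log A}{1-\gamma}\big) = \cO(1)$, exactly as for the analogous terms already appearing in the proof of Theorem~\ref{th: sample complexity}(a).

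Putting the pieces together yields $\lambda_{K',j} = \cO(1)$ for every $j \in [m]$, hence $\norm{\lambda_{K'}}{} \le \norm{\lambda_{K'}}{1} = \cO(1)$. All the value-approximation inequalities invoked above hold simultaneously for every macro-step $\le K' \le K$ with probability at least $1-\delta$ by the union bound already incorporated in Lemma~\ref{lem: approximation}, which is why the conclusion is stated with probability $1-\delta$. The hard part is precisely the amortised step of the second paragraph: a naive per-step use of Slater fails because the $\Theta\big(\tfrac{1}{1-\gamma}\big)$-scale terms coming from the comparison policy can dominate the $\zeta$-scale contraction, so the bound genuinely requires summing over iterations (exploiting $K\epsilon = \cO(1)$ and the telescoping Bregman terms) and requires that the transition-kernel drift has been kept small enough ($\Delta_p$ of order $\tfrac{1-\gamma}{\alpha\log A}$) that it does not spoil this balance.
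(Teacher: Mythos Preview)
Your proposal is correct and follows essentially the same approach as the paper. The paper's own proof is only a sketch that refers to Lemma~21 of \cite{Liu2021a} for the full argument, and the ingredients it lists---Slater's condition giving $\norm{\lambda^*}{1}\le\tfrac{2}{(1-\gamma)\zeta}$, telescoping via the inner-product Lemma~\ref{lem: lower_bound inner product}, the initial bound on $\lambda_0$ from Lemma~\ref{lem: multipliers}, and the transition-kernel correction $\alpha\Delta_p\cdot\cO\!\big(\tfrac{\log A}{1-\gamma}\big)=\cO(1)$---are exactly the components you assemble; you have simply spelled out the amortised Slater/telescoping step that the paper delegates to \cite{Liu2021a}.
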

\begin{proof}
Consider the Lagrangian with optimal parameters $(\pi^*,\lambda^*,p^*)$. Note that
\begin{align*}
K' V_{\pi^*,p^*}(\rho; \lambda^*) &= K' \mathbf{V}_{\pi^*,p^*}(\rho; \lambda^*) \tag{complementary slackness} \\
						&\leq \sum_{k=1}^{K'} \mathbf{V}_{\pi_k,p^*}(\rho; \lambda^*)\\
                        &= \sum_{k=1}^{K'}  V_{\pi_k,p^*}(\rho) + \sum_{j=1}^{m} \lambda_j^* V_{\pi_k,p^*}^j(\rho) \\
                        &\leq \sum_{k=1}^{K'}  V_{\pi_k,p_{k-1}^*}(\rho) + \sum_{j=1}^{m} \lambda_j^* V_{\pi_k,p_{k-1}^*}^j(\rho) \tag{$p_{k-1}^*$ maximises the Lagrangian for $\pi_k$}\\
                        &\leq \frac{M}{1-\gamma}  \left(\sum_{k=1}^{K'} V_{\pi_k,p_k}(\rho) + \sum_{j=1}^{m} \lambda_j^*  V_{\pi_k,p_{k}}^{j}(\rho)  \right) \tag{Lemma~\ref{lem: performance ratio}}\\
						&\leq \frac{M}{1-\gamma}  \left(\cO(1) +  \sum_{k=1}^{K'}  V_{\pi_k,p_k}(\rho) + \sum_{j=1}^{m} \lambda_j^* \left( \frac{1}{\eta_{\lambda}}\lambda_{K',j} - \sum_{k=1}^{K'} \epsilon_{k,j}\right)\right) \tag{derivations in Eq.~\ref{eq: constraint-cost derivations}}\\
                        & \leq  \frac{M}{1-\gamma} \left(\cO(1) +\sum_{j=1}^{m} \lambda_j^* \frac{1}{\eta_{\lambda}}\lambda_{K',j} - \sum_{k=1}^{K}\sum_{j=1}^{m} \lambda_j^*\epsilon_{k,j} \right. \nonumber \\
                        & + \sum_{k=1}^{K'} \frac{\alpha}{1-\gamma} \left( B_{d_{\rho}^{\pi^*,p_{k-1}}}(\pi^*, \pi_{k-1}) -  B_{d_{\rho}^{\pi^*,p_{k-1}}}(\pi^*, \pi_{k}) \right) +  \Delta_k \nonumber \\
                        &\left. + \frac{\eta_{\lambda}}{2} \left( \norm{V_{\pi_{k},p_{k-1}}^{1:m}(\rho)}{}^2 - \norm{V_{\pi_{k-1},p_{k-1}}^{1:m}(\rho)}{}^2 \right) + \frac{1}{2\eta_{\lambda}} \left( \norm{\lambda_{k-1}}{}^2 - \norm{\lambda_{k}}{}^2 \right) \right) \,. \tag{following Eq.~\ref{eq: regret V_k}} 
\end{align*}
Applying Lemma~\ref{lem: bounded lambda*} to $p^*$, it follows that $\norm{\lambda^*}{} = \cO(1)$ such that $\sum_{j=1}^{m} \lambda_j^*  \cO(\epsilon) = \cO(\epsilon)$. Note then that via Lemma~\ref{lem: bregman term}, $\frac{\alpha}{1-\gamma} \sum_{k \in [K]} \left( B_{d_{\rho}^{\pi^*,p_{k-1}}}(\pi^*, \pi_{k-1}) -  B_{d_{\rho}^{\pi^*,p_{k-1}}}(\pi^*, \pi_{k}) \right) \leq \frac{\alpha}{1-\gamma} \left(\log(A) + 2\log(\frac{1}{U})\right)$. Moreover, the preconditions for $\alpha$ in the further derivations in Lemma~21 of \cite{Liu2021a} are satisfied. The remainder of the proof removes the constant $ \frac{M}{1-\gamma} = \cO(1)$ and then follows the derivations in Lemma~21 of \cite{Liu2021a}.
\end{proof}

The lemma below bounds the error induced by the dual variable taking suboptimal values during policy optimisation and LTMA. The result bounds the instantaneous additional error by a constant, and the average error as $\cO(\epsilon)$.
\begin{lemma}[Bound on dual variable induced error]
\label{lem: dual variable induced error}
Let $k = 1,\dots,K$ define a sequence of macro-iterations with the preconditions of Theorem~\ref{th: sample complexity}. Let $(p_{k-1}^*,\lambda_{k-1}^*)$ be the optimal solutions for $\lambda$ and $p$ in the problem 
\begin{align*}
\max_{\lambda \geq 0} \sup_{p \in \cP} \left(V_{\pi_k,p}(\rho)  + \sum_{j=1}^{m} \lambda_{j}V_{\pi_k,p}^{j}(\rho)\right)\,.    
\end{align*}
Then the error induced by the dual variable, as defined in Lemma~\ref{lem: robust-objective regret} according to $\Delta_{\lambda_k} = \sum_{j=1}^{m} (\lambda_{k-1,j}^* - \lambda_{k-1,j})V_{\pi_k,p_{k-1}^*}^{j}(\rho)$ satisfies
\begin{align}
\frac{1}{K} \sum_{k=1}^{K} \Delta_{\lambda_k} = \cO(\epsilon) \,   
\end{align}
\end{lemma}
\begin{proof}
Note that
\begin{align*}
\frac{1}{K} \sum_{k=1}^{K} \Delta_{\lambda_k} &= \frac{1}{K} \sum_{k=1}^{K} \sum_{j=1}^{m} (\lambda_{k-1,j}^* - \lambda_{k-1,j})V_{\pi_k,p_{k-1}^*}^{j}(\rho)  \\
                   &\leq \frac{1}{K} \sum_{k=1}^{K} \sum_{j=1}^{m} B_{\lambda} V_{\pi_k,p_{k-1}^*}^{j}(\rho) \tag{non-negativity in Lemma~\ref{lem: multipliers} and Assumption~\ref{ass: bounded dual variable}}   \\
                   &\leq \frac{M}{1-\gamma} B_{\lambda} \frac{1}{K} \sum_{k=1}^{K} \sum_{j=1}^{m} V_{\pi_k,p_{k}}^{j}(\rho)   \tag{Lemma~\ref{lem: performance ratio}} \\
                   &= \cO(\epsilon) \,. \tag{Theorem~\ref{th: sample complexity}\textbf{b)}} 
\end{align*}
\end{proof}

\subsubsection{Non-rectangular uncertainty sets}
The global performance guarantee of CPI can be expressed in terms of the degree of non-rectangularity \citep{Li2023}. Note that we slightly modify the original lemma by removing the irreducibility assumption since the mismatch coefficient can also be limited using Assumption~\ref{ass: exploration}.
\begin{lemma}[Global performance guarantee of CPI, Theorem~3.8 in \cite{Li2023}]
\label{lem: CPI}
Let $\epsilon > 0$, let $\pi \in \Pi$ and let $V: \cS \to \bR$ be a value function. Then Algorithm~\ref{alg: CPI} returns within $\cO(1/\epsilon^2)$ iterations a solution $\hat{p}$ that satisfies
\begin{equation}
\label{eq: guarantee CPI}
V_{\pi,p^*}(\rho) - V_{\pi,\hat{p}}(\rho) \leq M (2 \epsilon + \delta_{\cP}) \,,
\end{equation}
where $\delta_{\cP}$ is the degree of non-rectangularity defined in Eq~\ref{eq: degree}.
\end{lemma}

\section{Algorithm implementation details}
\label{app: implementation}
All the algorithms are implemented in Pytorch. The code for RMCPMD is based on the original implementation, which can be found at \url{https://github.com/JerrisonWang/JMLR-DRPMD}. The code for PPO is taken from the StableBaselines3 class. Both are modified to fit our purposes by allowing to turn off and on robust training and constraint-satisfaction. Similar to PPO, MDPO is also implemented following StableBaselines3 class conventions. All the environments are implemented in Gymnasium. For simplicity, the updates with LTMA are based on the original TMA code in \url{https://github.com/JerrisonWang/JMLR-DRPMD}, which performs pure Monte Carlo based TMA rather than additional function approximation as proposed in our Approximate TMA algorithm. For MDPO and PPO experiments, we use four parallel environments with four CPUs. Our longest experiments typically take no more than two hours to complete. The remainder of the section describes domain-specific hyperparameter settings.
\subsection{Hyperparameters for Cartpole experiments}
Hyperparameters settings for the Carpole experiments can be found in \ref{tab: hyperparameters Cartpole}. Settings for robust optimisation, including the policy architecture, transition kernel architecture, and TMA learning rate are taken from \cite{Wang2023} with the exception that we formulate a more challenging uncertainty set with a 5 times larger range. The policy learning rate and GAE lambda is typical for PPO based methods so we apply these for PPO and MDPO methods. The number of policy epochs and early stopping KL target is based for PPO methods on the standard repository for PPO-Lag (\url{https://github.com/openai/safety-starter-agents/}) and for MDPO it is based on the original paper's settings \citep{Tomar2022}. PPO obtained better results without entropy regularisation and value function clipping on initial experiments, so for simplicity we disabled these for all algorithms. The batch size, multiplier initialisation, and multiplier learning rate and scaling are obtained via a limited tuning procedure. For the tuning, the batch size tried was in $\{100,200,400,1000,2000\}$ and the multiplier learning rate was set to $\{1\mathrm{e}^{-1},1\mathrm{e}^{-2},1\mathrm{e}^{-3},1\mathrm{e}^{-4}\}$ on partial runs. Then the use of a softplus transformation of the multiplier was compared to the direct (linear) multiplier, and for the linear multiplier we tested initialisations in $\{1,5,10\}$. 
\begin{table}[htbp!]
    \centering
        \caption{Hyperparameter settings for the Cartpole experiments.}
    \label{tab: hyperparameters Cartpole}
    \begin{tabular}{l|p{12cm}}
    \toprule
    \textbf{Hyperparameter} & \textbf{Setting} \\
    \midrule
    Policy architecture            & MLP  4 Inputs -- Linear(128) -- Dropout(0.6) -- Linear(128) -- Softmax(2)    \\
     Critic architecture            &  MLP 4 Inputs --  Linear(128) -- Dropout(0.6) -- ReLU  -- Linear(1+m)   \\
    Policy learning rate ($\eta$)  &  $3\mathrm{e}^{-4}$ \\
    Policy optimiser & Adam \\
    GAE lambda ($\lambda_{\text{GAE}}$) & 0.95 \\
    Discount factor ($\gamma$) & 0.99 \\
    Batch and minibatch size &  PPO/MDPO policy update: 400 $\times$ 4 time steps per batch, minibatch 32, episode steps at most 100    \\
                             &  PPO/MDPO multiplier update: 100 $\times$ 4 time steps per batch, minibatch 32, episode steps at most 100  \\
                             & LTMA: 10 $\times$ factor\footnote{This factor corrects for the number of training data in different algorithms. For MC the factor is one while for PPO and MDPO it is defined as the number of number of time steps in between LTMA under PPO/MDPO and that of MCPMD.} Monte Carlo updates, episode steps at most 10 \\
    Policy epochs  & MCPMD:  1  \\
                      & PPO: 50, early stopping with KL target 0.01    \\
                      & MDPO: 5 \\ 
    \hline \\
    Transition kernel architecture &  multi-variate Gaussian with parametrised mean in $(1+\delta) \mu_c(s)$ \\
                                   &  where $\delta \in (\pm 0.005, \pm 0.05, \pm 0.005, \pm 0.05)$     \\ 
                                   &  and covariance $\sigma \bI$ where $\sigma = 1\mathrm{e}^{-7}$ \\
    LTMA learning rate ($\eta_\xi$) &     $1\mathrm{e}^{-7}$    \\
    \hline \\
    Dual learning rate ($\eta_{\lambda}$)  &   $1\mathrm{e}^{-3}$ \\
    Dual epochs     & MCPMD: 1 \\
                      & PPO: 50 with early stopping based on target-kl 0.01 \\
                      & MDPO: 5 \\
    Multiplier & initialise to 5, linear, clipping to $\lambda_{\text{max}}=50$ for non-augmented algorithms \\
    \bottomrule
    \end{tabular}
\end{table}

\clearpage

\subsection{Hyperparameters for Inventory Management experiments}
Hyperparameters settings for the unidimensional Inventory Management (see Table~\ref{tab: hyperparameters IM}) are similar to the Cartpole experiments. The key differences are the output layer, the discount factor, the transition kernel, and the TMA parameters, which follow settings of the domain in \cite{Wangd}. Based on initial tuning experiments with PPO and MDPO, the learning rates are set to $\eta = 1\mathrm{e}^{-3}$ and $\eta_{\lambda} = 1\mathrm{e}^{-2}$, a lower GAE lambda of 0.50 was chosen and PPO is implemented without early stopping and with the same number of (5) epochs as MDPO.

\begin{table}[htbp!]
    \centering
        \caption{Hyperparameter settings for Inventory Management experiments.}
    \label{tab: hyperparameters IM}
    \begin{tabular}{l|p{12cm}}
    \toprule
    \textbf{Hyperparameter} & \textbf{Setting} \\
    \midrule
    Policy architecture            & MLP  1 Input  -- Linear(64) -- Dropout(0.6) -- ReLU --  Softmax(4)    \\
   Critic architecture            &  MLP 3 Inputs --  Linear(128) -- Dropout(0.6) -- ReLU  -- Linear(1+m)   \\
    Policy learning rate ($\eta$)  &  $1\mathrm{e}^{-3}$ \\
    Policy optimiser & Adam \\
    GAE lambda ($\lambda_{\text{GAE}}$) & 0.50 \\
    Discount factor ($\gamma$) & 0.95 \\
    Batch and minibatch size &  PPO/MDPO policy update: 400 $\times$ 4 time steps per batch, minibatch 32, episode steps at most 80    \\
                             &  PPO/MDPO multiplier update: 100 $\times$ 4 time steps per batch, minibatch 32, episode steps at most 80  \\
                             & LTMA: 20 $\times$ factor Monte Carlo updates, episode steps at most 40 \\
    Policy epochs  & MCPMD:  1  \\
                      & PPO: 5    \\
                      & MDPO: 5 \\ 
    \hline \\
    Transition kernel architecture &  Radial features with Gaussian mixture parametrisation\\
    LTMA learning rate ($\eta_\xi$) &     $1\mathrm{e}^{-1}$   \\
    \hline \\
    Dual learning rate ($\eta_{\lambda}$)  &   $1\mathrm{e}^{-2}$ \\
    Dual epochs     & MCPMD: 1 \\
                      & PPO and MDPO: 5 \\
    Multiplier & initialise to 5, linear, clipping to $\lambda_{\text{max}}=500$ for non-augmented algorithms\\
    \bottomrule
    \end{tabular}
\end{table}

\clearpage

\subsection{Hyperparameters for 3-D Inventory Management experiments}
Hyperparameters settings for the 3-D Inventory Management (see Table~\ref{tab: hyperparameters IM}) are the same as in the IM domain with a few exceptions. The policy architecture is now a Gaussian MLP. The standard deviation of the policy is set to the default with Log std init equal to 0.0 albeit after some tuning effort. The  optimiser is set to SGD instead of Adam based on improved preliminary results.

\begin{table}[htbp!]
    \centering
        \caption{Hyperparameter settings for Inventory Management experiments.}
    \label{tab: hyperparameters 3d-IM}
    \begin{tabular}{l|p{12cm}}
    \toprule
    \textbf{Hyperparameter} & \textbf{Setting} \\
    \midrule
    Policy architecture            & Gaussian MLP  3 Inputs -- Linear(128) -- Dropout(0.6) -- ReLU -- Linear(128) -- ReLU -- Linear(3 $\times$ 2)    \\
    Critic architecture            &  MLP 3 Inputs -- Linear(128) -- Dropout(0.6) -- ReLU  -- Linear(1+m)   \\
    Policy learning rate ($\eta$)  & $1\mathrm{e}^{-3}$ \\
    Policy optimiser & SGD \\
    Log std init  & 0.0 \\ 
    GAE lambda ($\lambda_{\text{GAE}}$) & 0.50 \\
    Discount factor ($\gamma$) & 0.95 \\
    Batch and minibatch size &  PPO/MDPO policy update: 400 $\times$ 4 time steps per batch, minibatch 32, episode steps at most 100    \\
                             &  PPO/MDPO multiplier update: 100 $\times$ 4 time steps per batch, minibatch 32, episode steps at most 100  \\
                             & LTMA: 20 $\times$ factor Monte Carlo updates, episode steps at most 40 \\
    Policy epochs  & MCPMD:  1  \\
                      & PPO: 5    \\
                      & MDPO: 5 \\ 
    \hline \\
    Transition kernel architecture &  Radial features with Gaussian mixture parametrisation\\
    LTMA learning rate ($\eta_\xi$) &     $1\mathrm{e}^{-1}$     \\
    \hline \\
    Dual learning rate ($\eta_{\lambda}$)  &   $1\mathrm{e}^{-2}$ \\
    Dual epochs     & MCPMD: 1 \\
                      & PPO and MDPO: 5 \\
    Multiplier &  initialise to 5, linear, clipping to $\lambda_{\text{max}}=500$ for non-augmented algorithms \\
    \bottomrule
    \end{tabular}
\end{table}

\clearpage

\section{Training performance plots}
\label{app: training}
While the experiments with small and large training time steps were run independently, we report here the training development under the large training data regime (i.e. between 200,000 and 400,000 time steps) since this gives a view of both the early and late stages of training.
\subsection{Cartpole}
\label{app: training Cartpole}

\begin{figure}[htbp!]
    \centering
    \subfloat[Reward]{\includegraphics[width=0.33\linewidth]{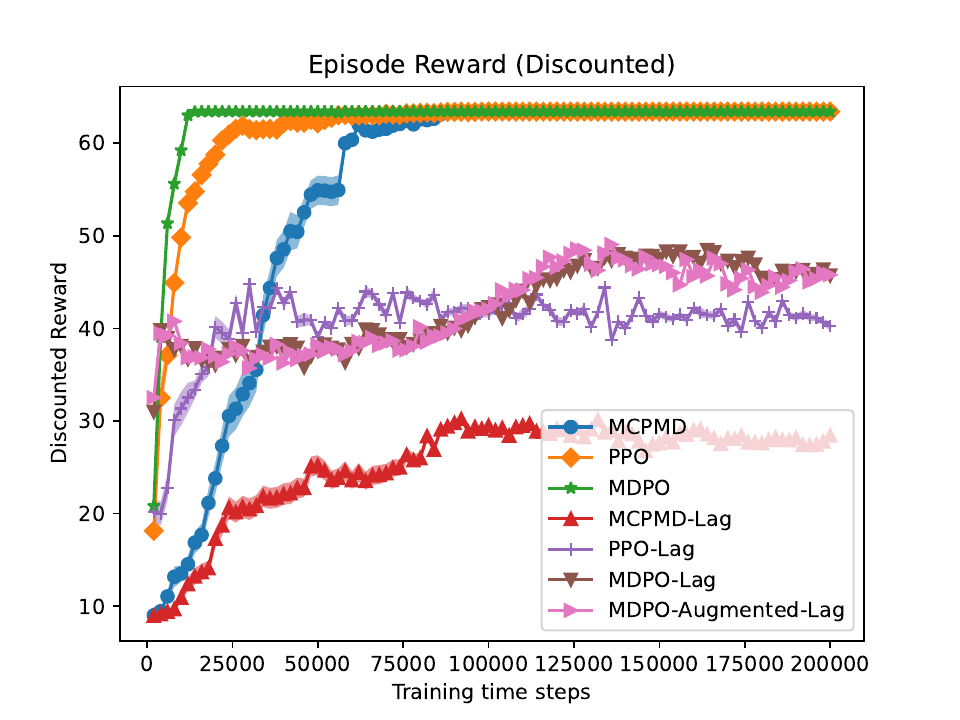}}
    \subfloat[Constraint-cost]{\includegraphics[width=0.33\linewidth]{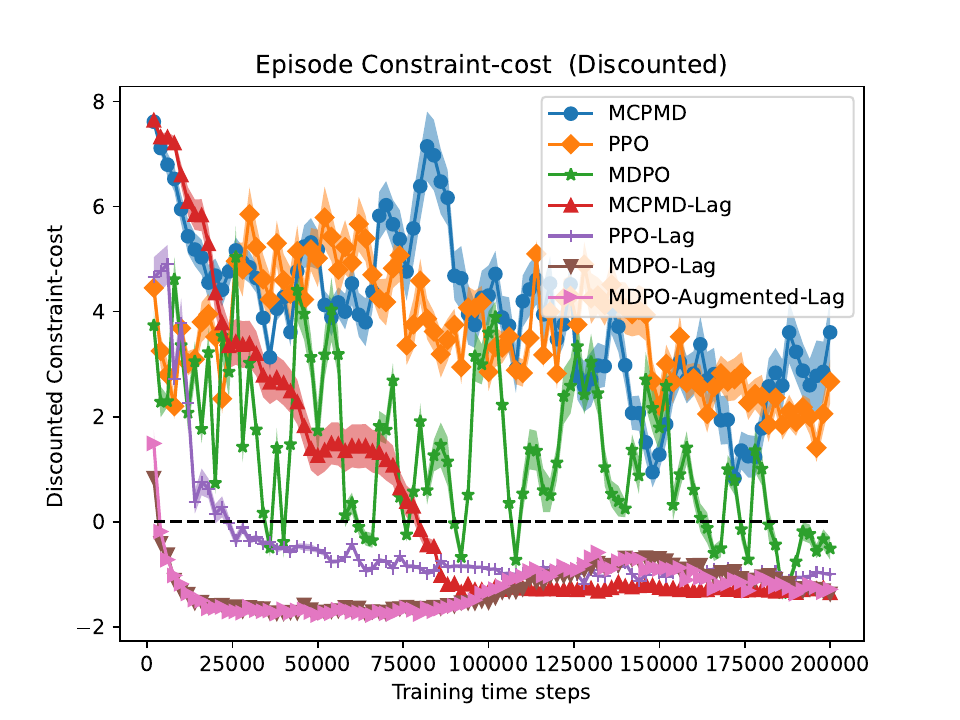}}
    \caption{Constrained MDP training development plots in the Cartpole domain, where each sample in the plot is based on 20 evaluations of the deterministic policy. The line and shaded area represent the mean and standard error across 10 seeds.}
    \label{fig: CMDP training}
\end{figure}

\begin{figure}[htbp!]
    \centering
     \subfloat[Reward]{\includegraphics[width=0.33\linewidth]{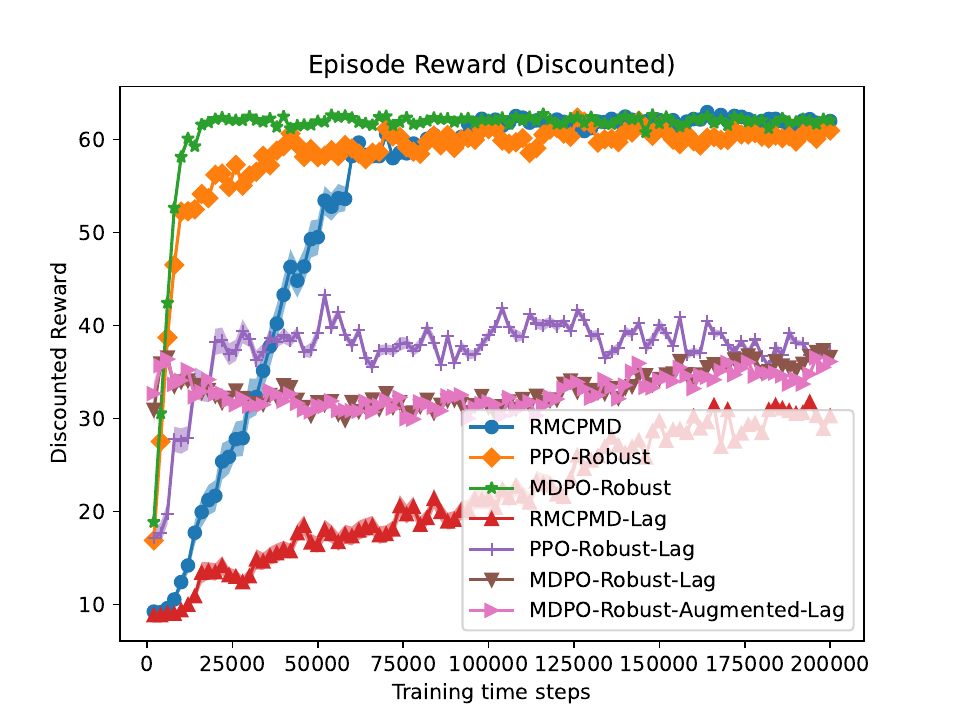}}
        \subfloat[Constraint-cost]{\includegraphics[width=0.33\linewidth]{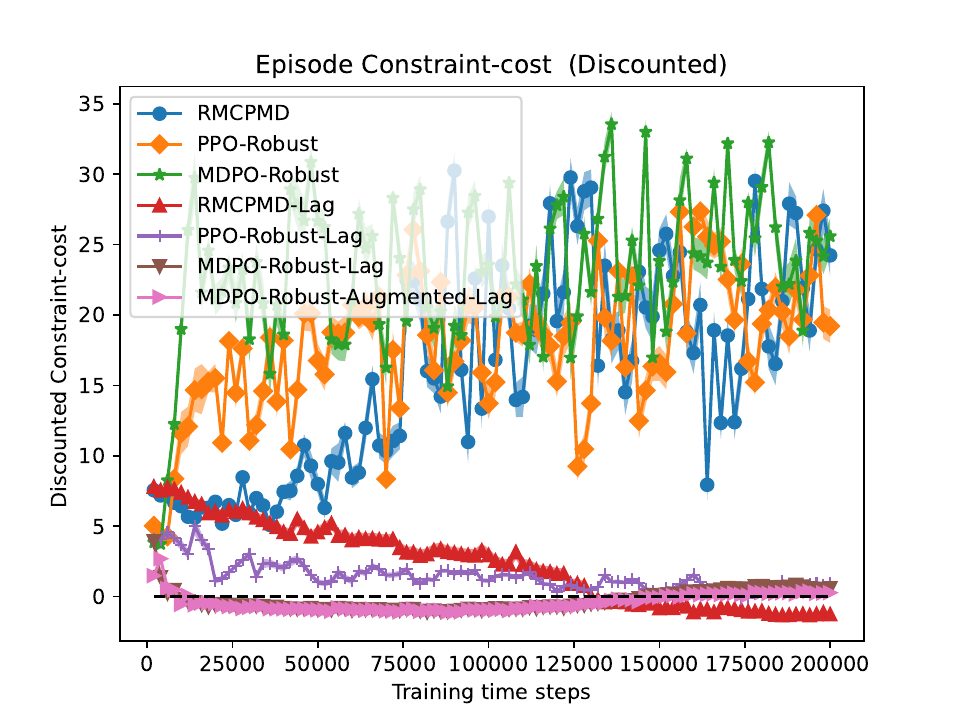}}
        \caption{RCMDP training development plots in the Cartpole domain, where each sample in the plot is based on 20 evaluations of the deterministic policy as it interacts with the adversarial environment from that iteration. The line and shaded area represent the mean and standard error across 10 seeds.}
    \label{fig: RCMDP training}
\end{figure}

\clearpage 

\subsection{Inventory Management}
\label{app: training IM}

\begin{figure}[htbp!]
    \centering
    \subfloat[Reward]{\includegraphics[width=0.33\linewidth]{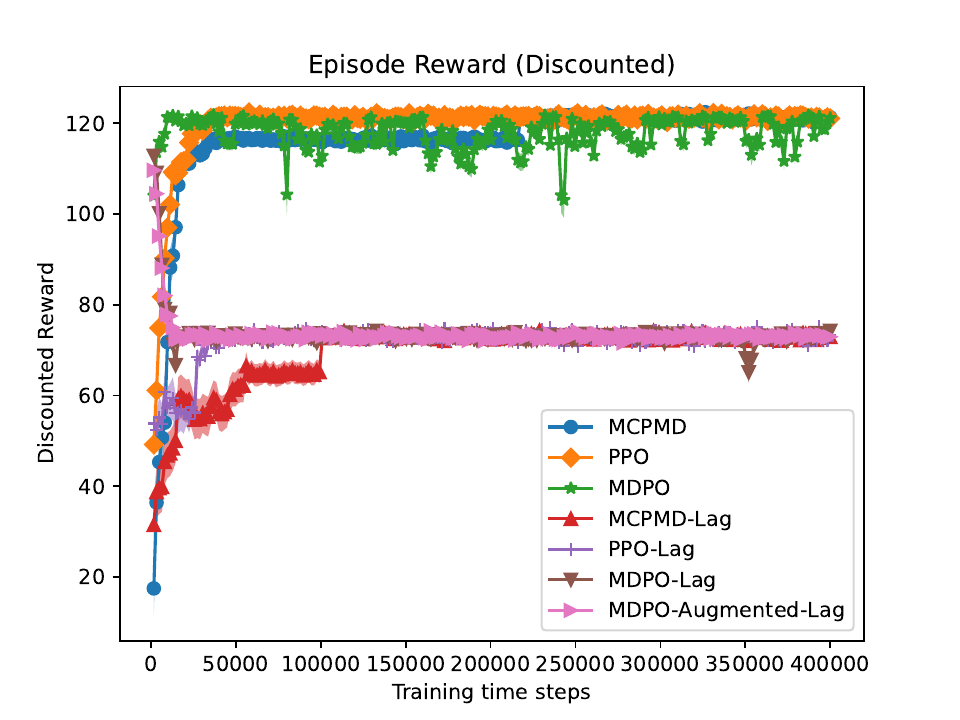}}
    \subfloat[Constraint-cost]{\includegraphics[width=0.33\linewidth]{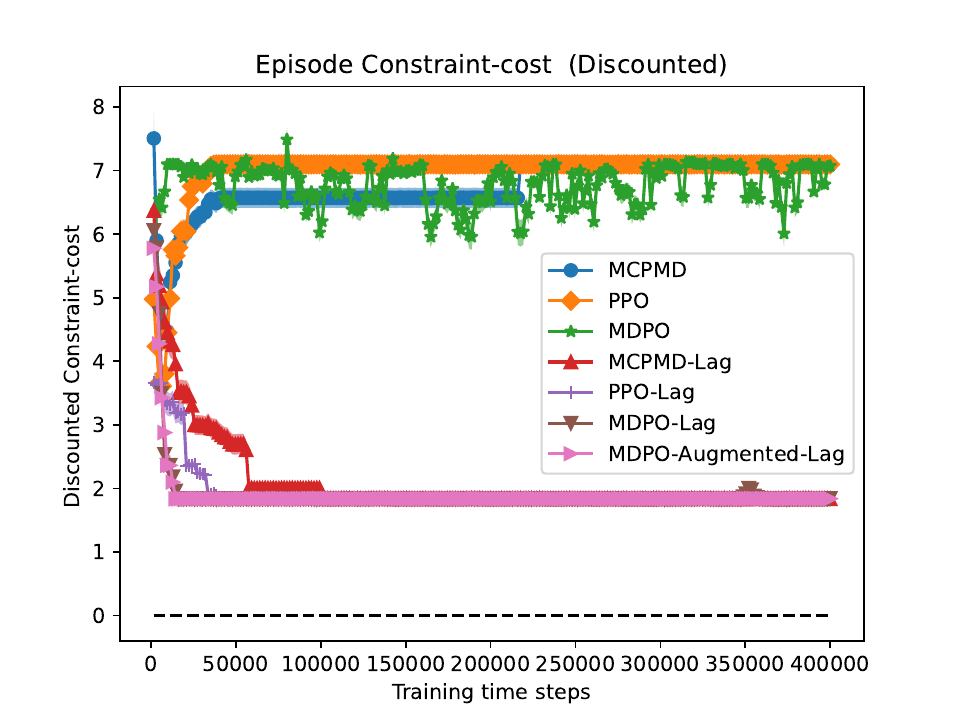}}
    \caption{CMDP training development plots in the Inventory Management domain, where each sample in the plot is based on 20 evaluations of the deterministic policy. The line and shaded area represent the mean and standard error across 10 seeds.}
    \label{fig: CMDP training IM}
\end{figure}

\begin{figure}[htbp!]
    \centering
     \subfloat[Reward]{\includegraphics[width=0.33\linewidth]{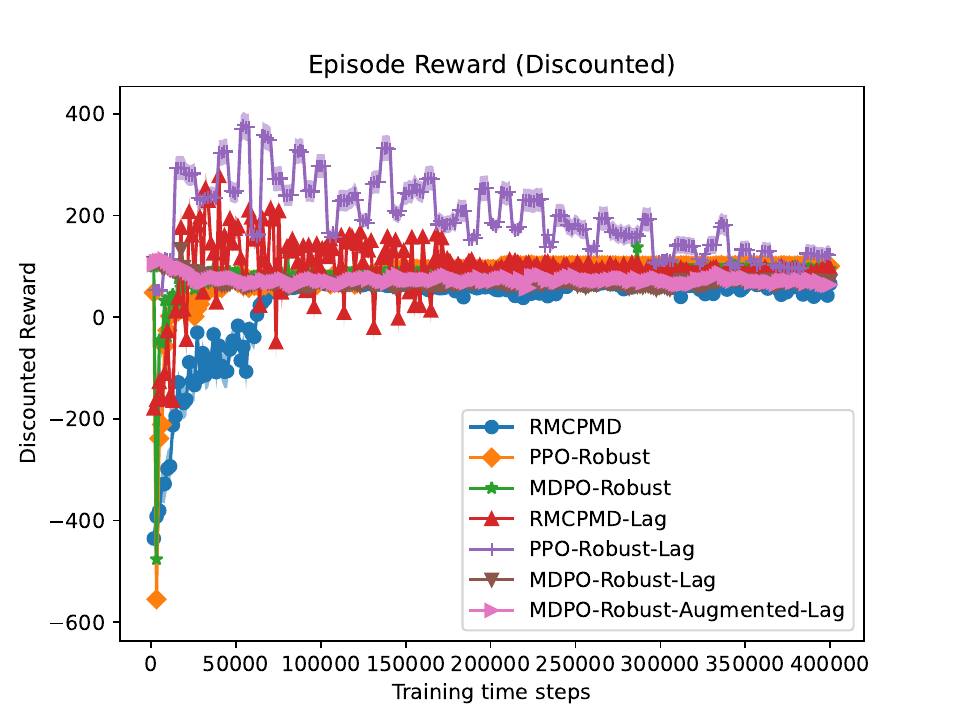}}
        \subfloat[Constraint-cost]{\includegraphics[width=0.33\linewidth]{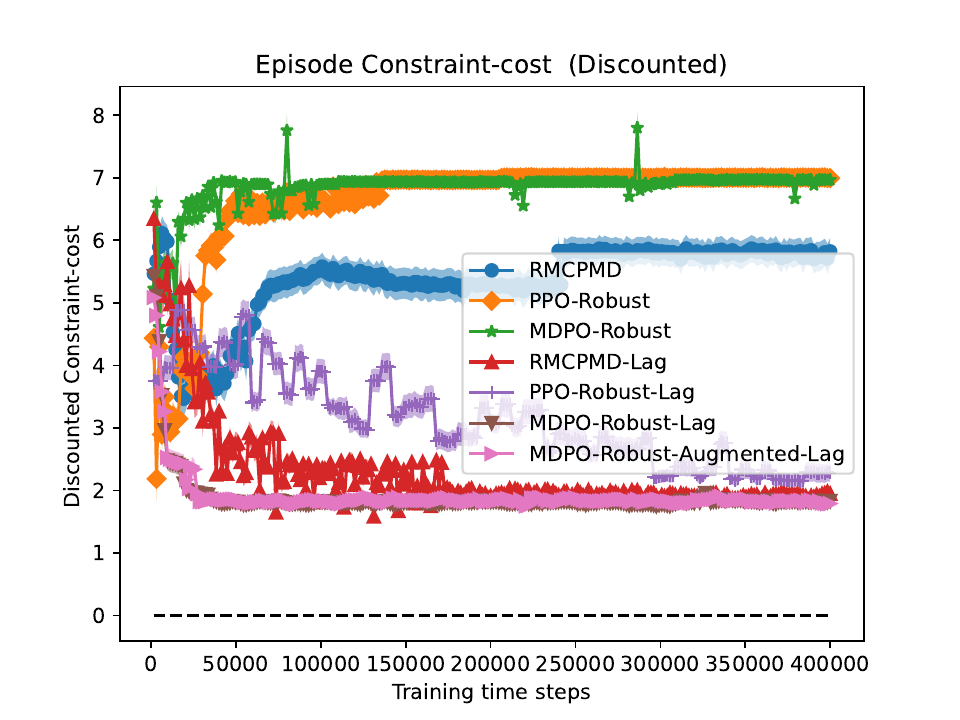}}
    \caption{RCMDP training development plots in the Inventory Management domain, where each sample in the plot is based on 20 evaluations of the deterministic policy as it interacts with the adversarial environment from that iteration. The line and shaded area represent the mean and standard error across 10 seeds.}
    \label{fig: RCMDP training IM}
\end{figure}

\clearpage 

\subsection{3-D Inventory Management}
\label{app: training 3d-IM}
\begin{figure}[htbp!]
    \centering
    \subfloat[Reward]{\includegraphics[width=0.24\linewidth]{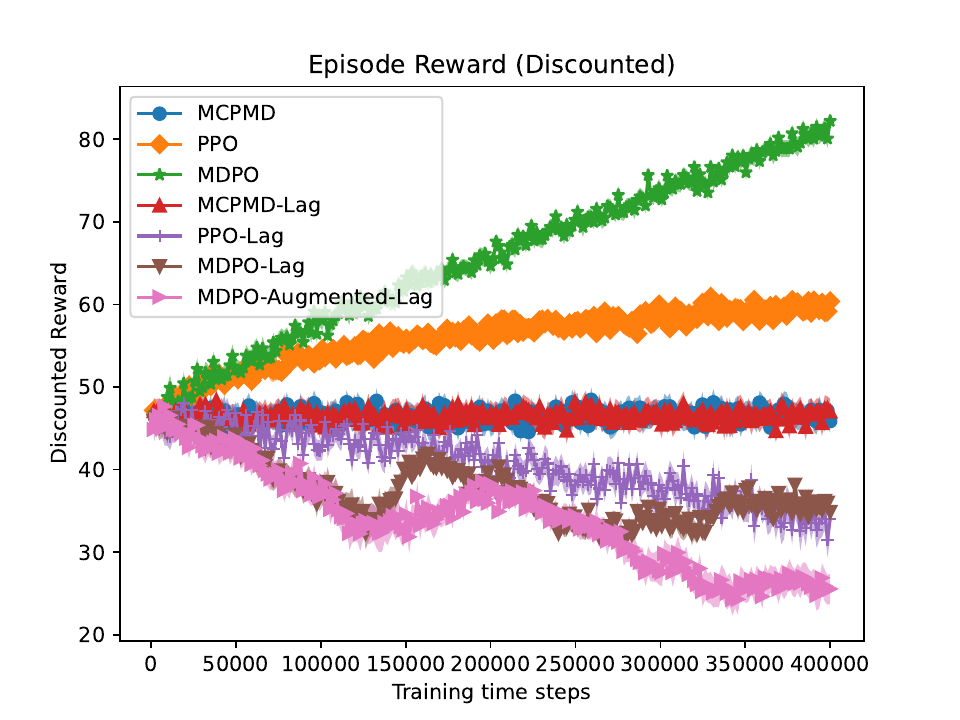}}
    \subfloat[Constraint-cost 1]{\includegraphics[width=0.24\linewidth]{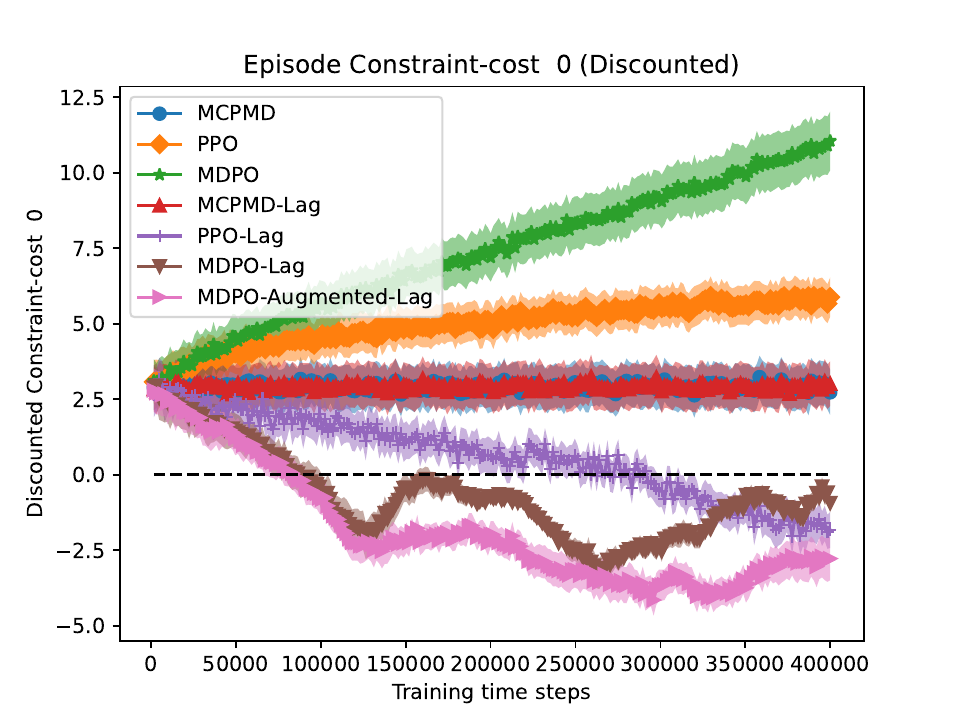}}
    \subfloat[Constraint-cost 2 ]{\includegraphics[width=0.24\linewidth]{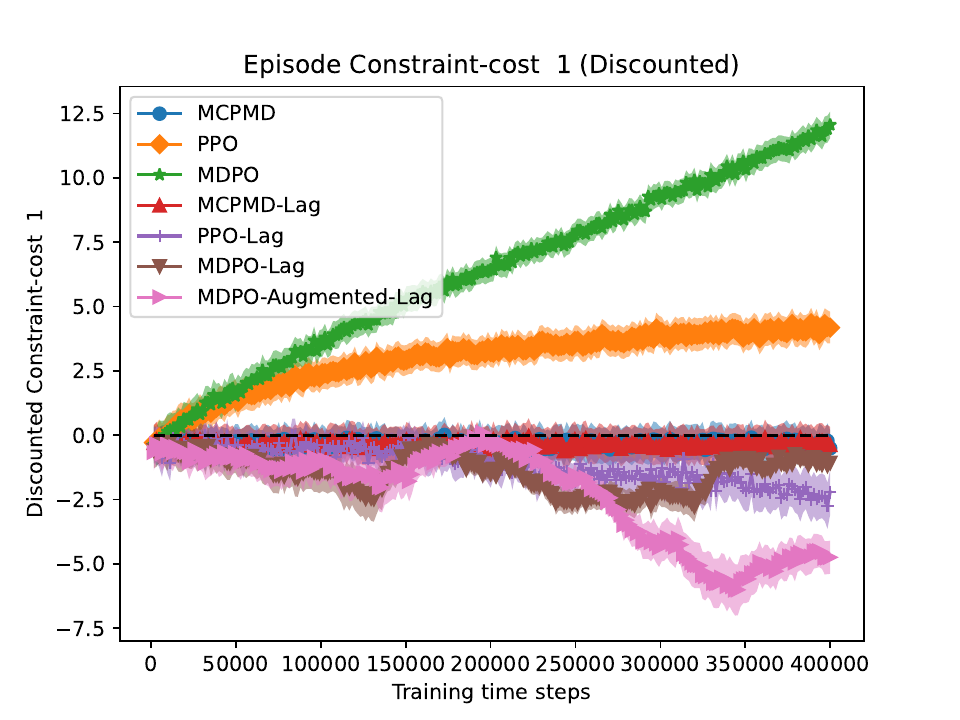}}
    \subfloat[Constraint-cost 3 ]{\includegraphics[width=0.24\linewidth]{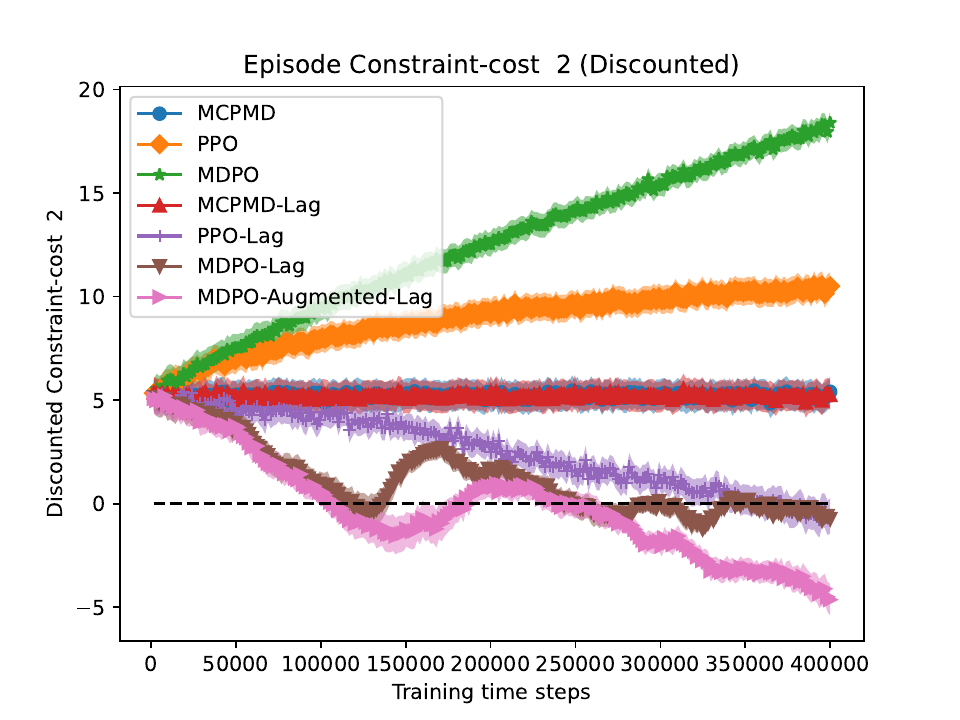}}
    \caption{MDP training development plots in the 3-D Inventory Management domain, where each sample in the plot is based on 20 evaluations of the deterministic policy. The line and shaded area represent the mean and standard error across 10 seeds.}
    \label{fig: CMDP training 3d-IM}
\end{figure}

\begin{figure}[htbp!]
    \centering
     \subfloat[Reward]{\includegraphics[width=0.24\linewidth]{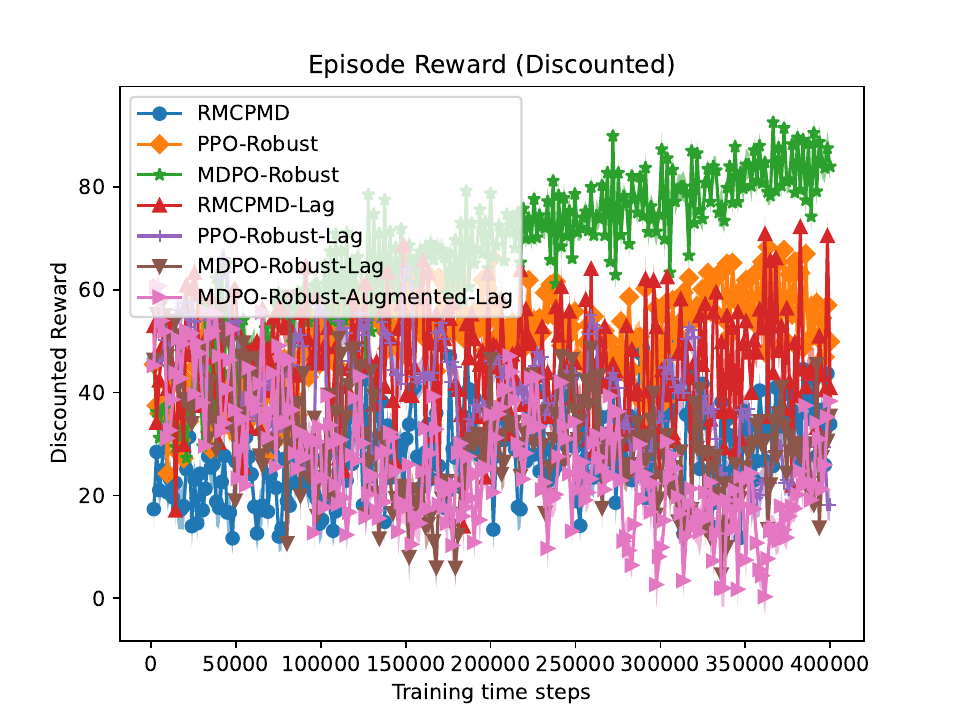}}
            \subfloat[Constraint-cost 1]{\includegraphics[width=0.24\linewidth]{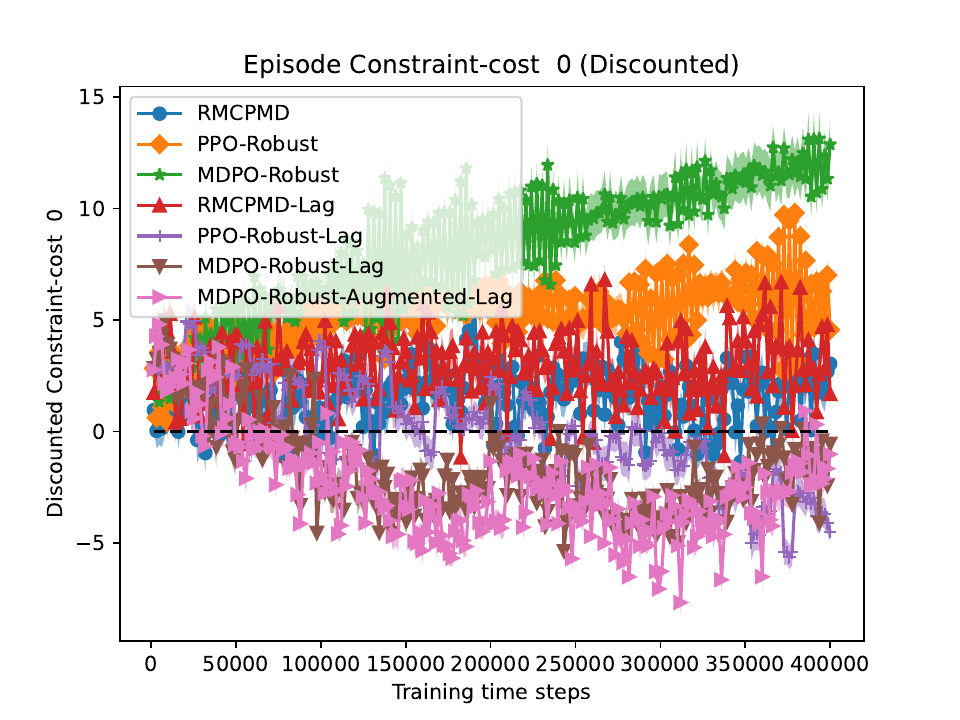}}
    \subfloat[Constraint-cost 2 ]{\includegraphics[width=0.24\linewidth]{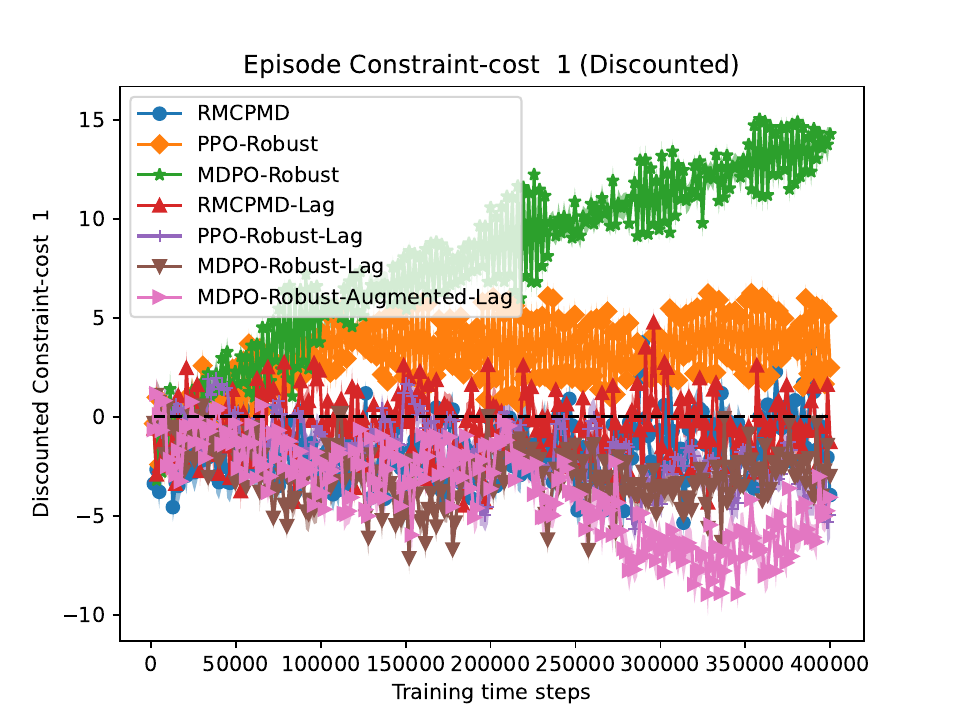}}
    \subfloat[Constraint-cost 3 ]{\includegraphics[width=0.24\linewidth]{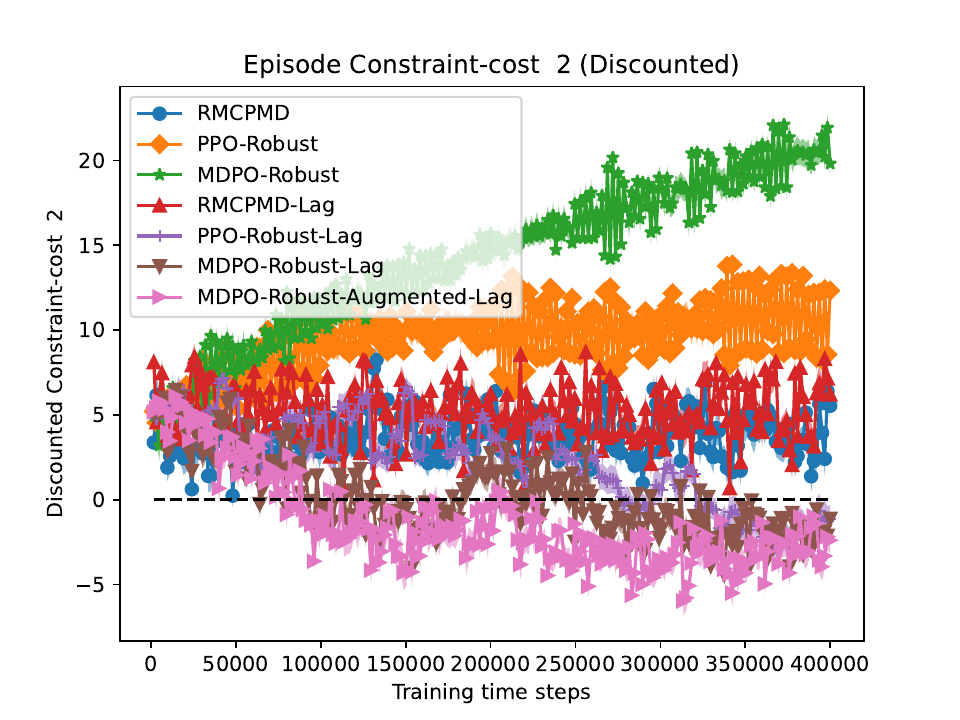}}
        \caption{RCMDP training development plots in the 3-D Inventory Management domain, where each sample in the plot is based on 20 evaluations of the deterministic policy as it interacts with the adversarial environment from that iteration. The line and shaded area represent the mean and standard error across 10 seeds.}
    \label{fig: RCMDP training 3d-IM}
\end{figure}

\clearpage

\section{Test performance plots with large sample budget}
\label{app: test}
While the main text presents the test performance plots with small sample budgets, between 16,000 and 50,000 time steps, the section below presents the test performance plots after the larger sample budget of 200,000 to 500,000 time steps.
\subsection{Cartpole}
\label{app: test Cartpole}
\begin{figure}[htbp!]
    \centering
     \subfloat[Reward]{\includegraphics[width=0.33\linewidth]{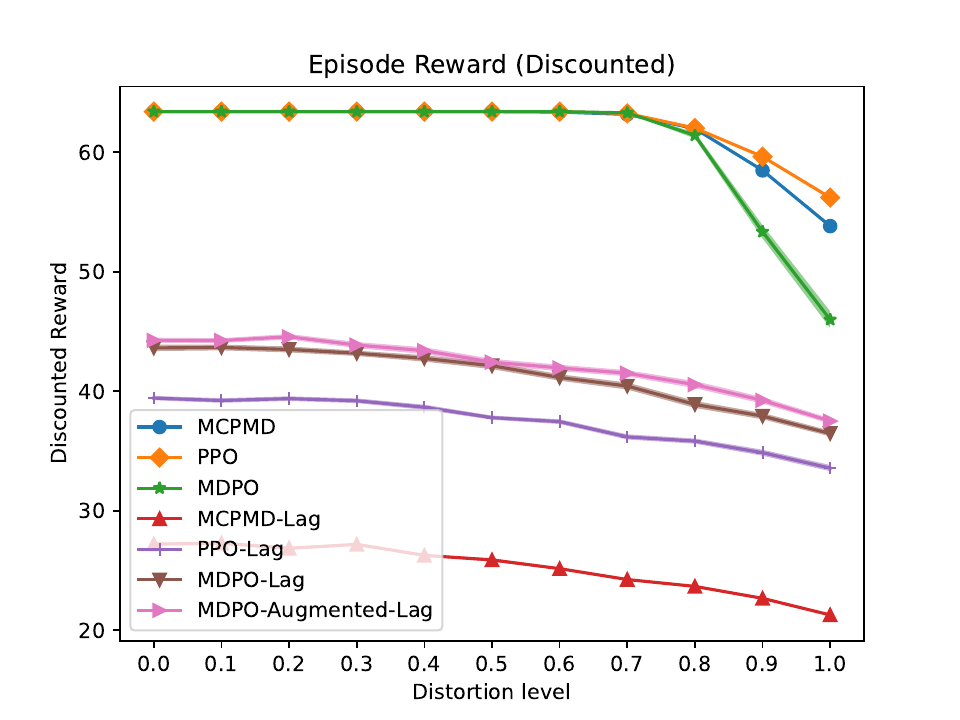}}
       \subfloat[Constraint-cost]{ \includegraphics[width=0.33\linewidth]{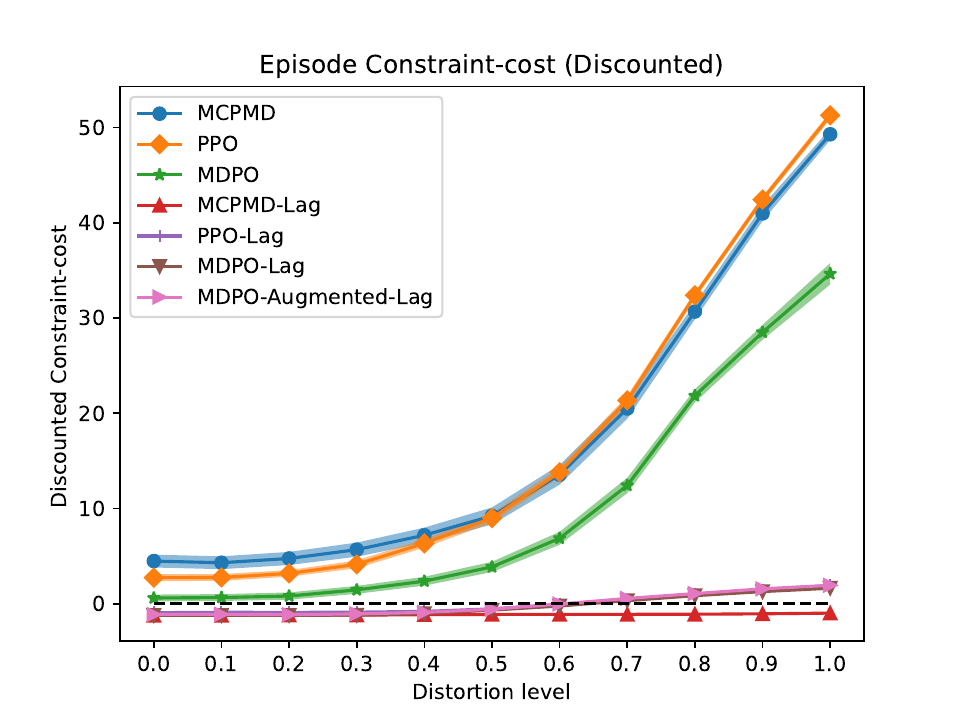}}
   \caption{Test performance of MDP and CMDP algorithms obtained by applying the learned deterministic policy from the Cartpole domain after 200,000 time steps of training. The line and shaded area represent the mean and standard error across the perturbations for the particular distortion level.}
    \label{fig: non-robust test long run}
\end{figure}

\begin{figure}[htbp!]
    \centering
     \subfloat[Reward]{\includegraphics[width=0.33\linewidth]{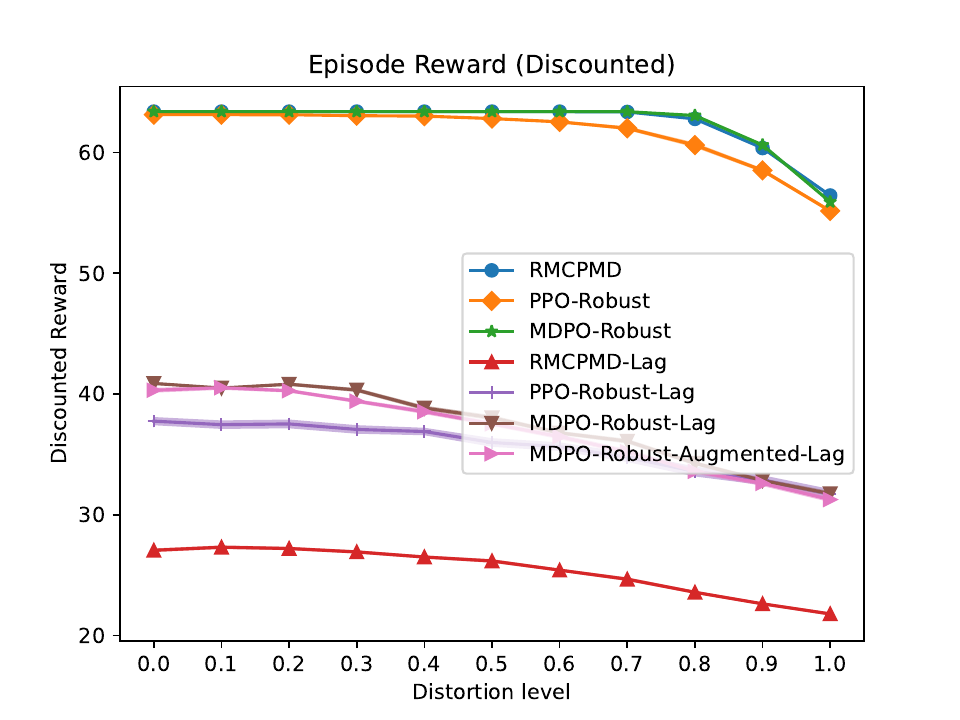}}
       \subfloat[Constraint-cost]{ \includegraphics[width=0.33\linewidth]{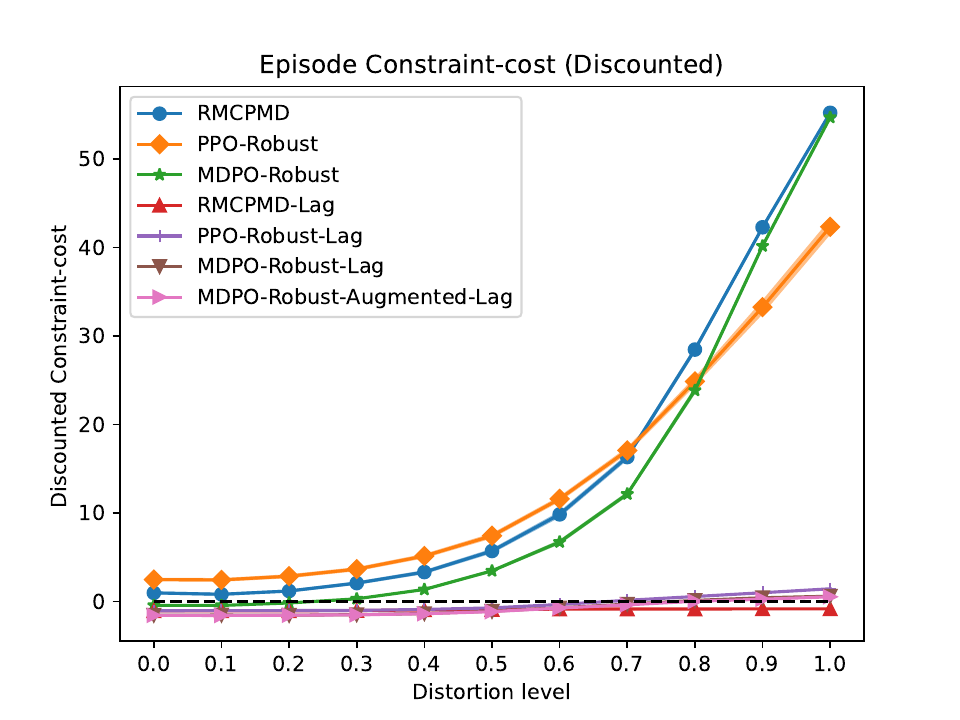}}
   \caption{Test performance of RMDP and RCMDP algorithms obtained by applying the learned deterministic policy from the Cartpole domain after 200,000 time steps of training. The line and shaded area represent the mean and standard error across the perturbations for the particular distortion level.}
    \label{fig: robust test long run}
\end{figure}
\clearpage 
\subsection{Inventory Management}
\label{app: test IM}
\begin{figure}[htbp!]
    \centering
     \subfloat[Reward]{\includegraphics[width=0.33\linewidth]{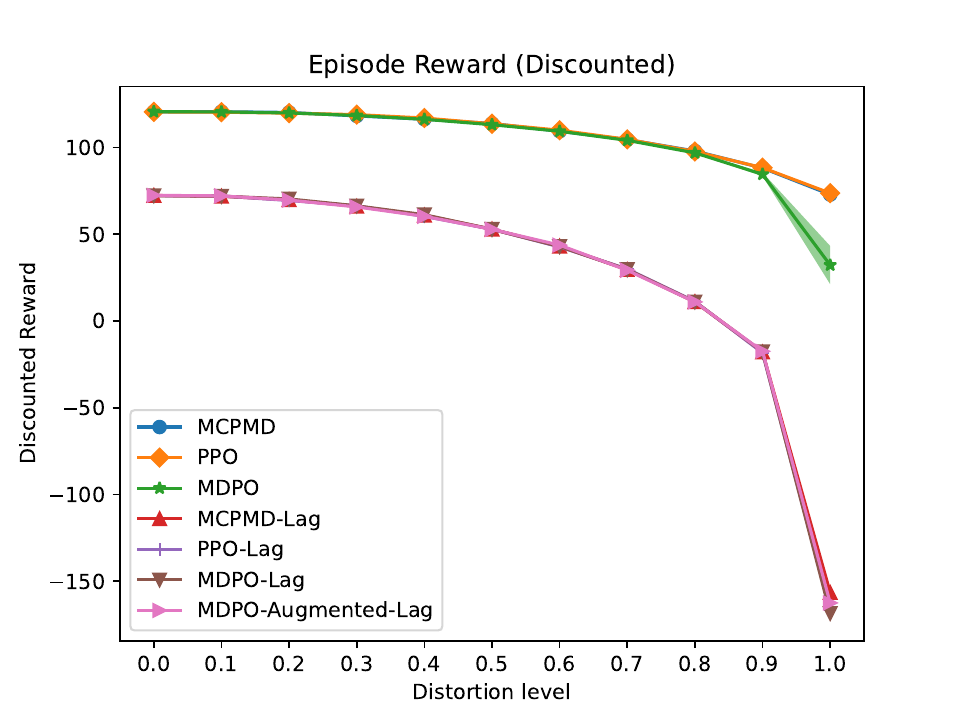}}
       \subfloat[Constraint-cost]{ \includegraphics[width=0.33\linewidth]{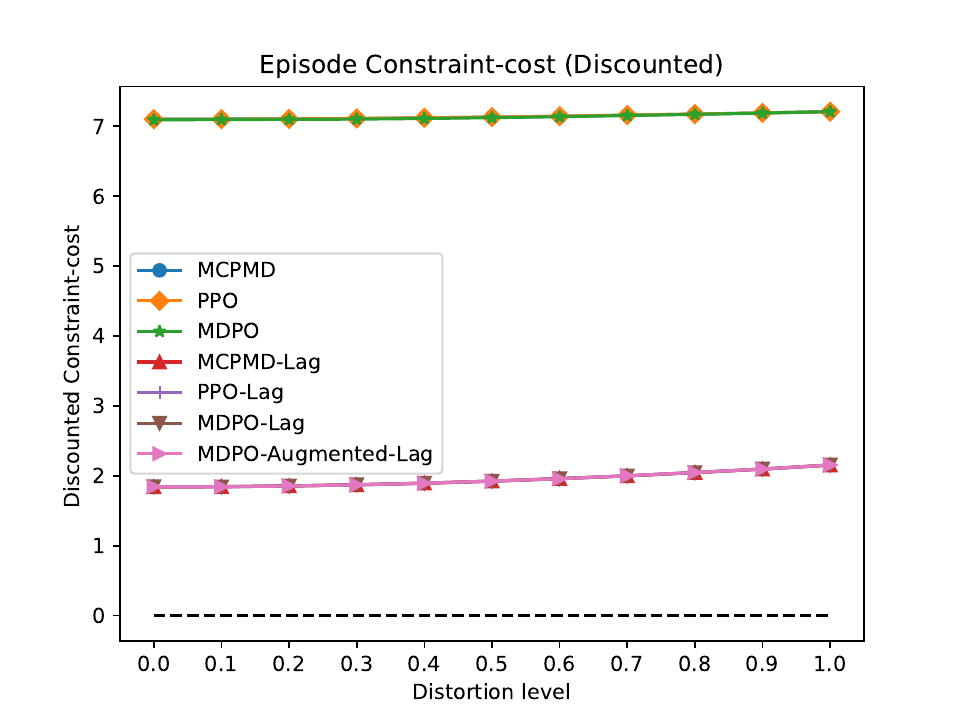}}
   \caption{Test performance of MDP and CMDP algorithms obtained by applying the learned deterministic policy from the Inventory Management domain after 400,000 time steps of training. The line and shaded area represent the mean and standard error across the perturbations for the particular distortion level.}
    \label{fig: non-robust test IM long run}
\end{figure}

\begin{figure}[htbp!]
    \centering
     \subfloat[Reward]{\includegraphics[width=0.33\linewidth]{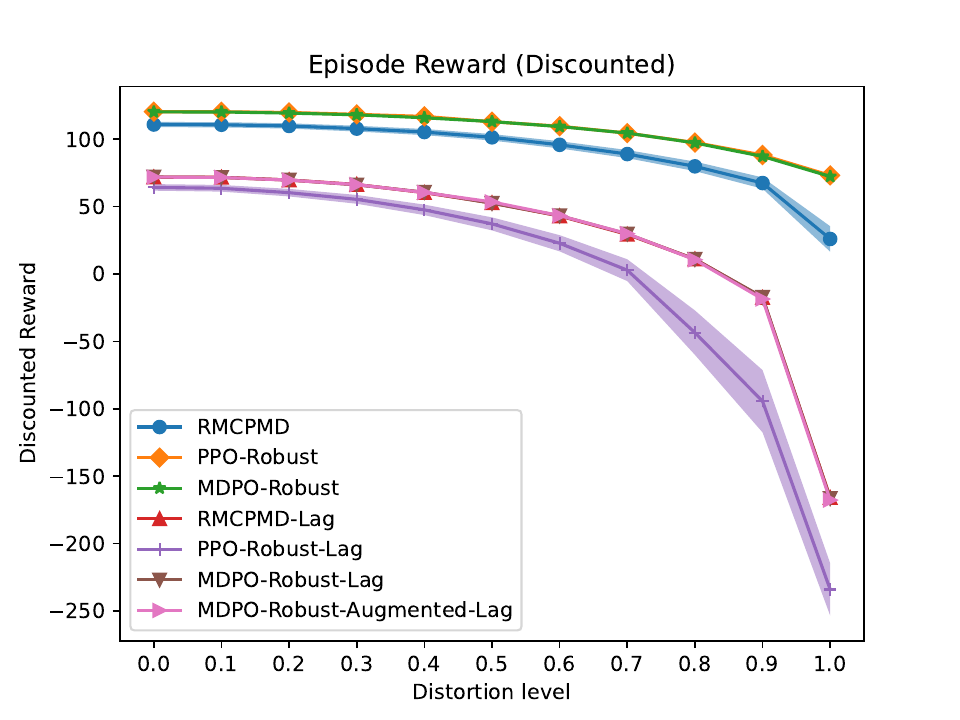}}
       \subfloat[Constraint-cost]{ \includegraphics[width=0.33\linewidth]{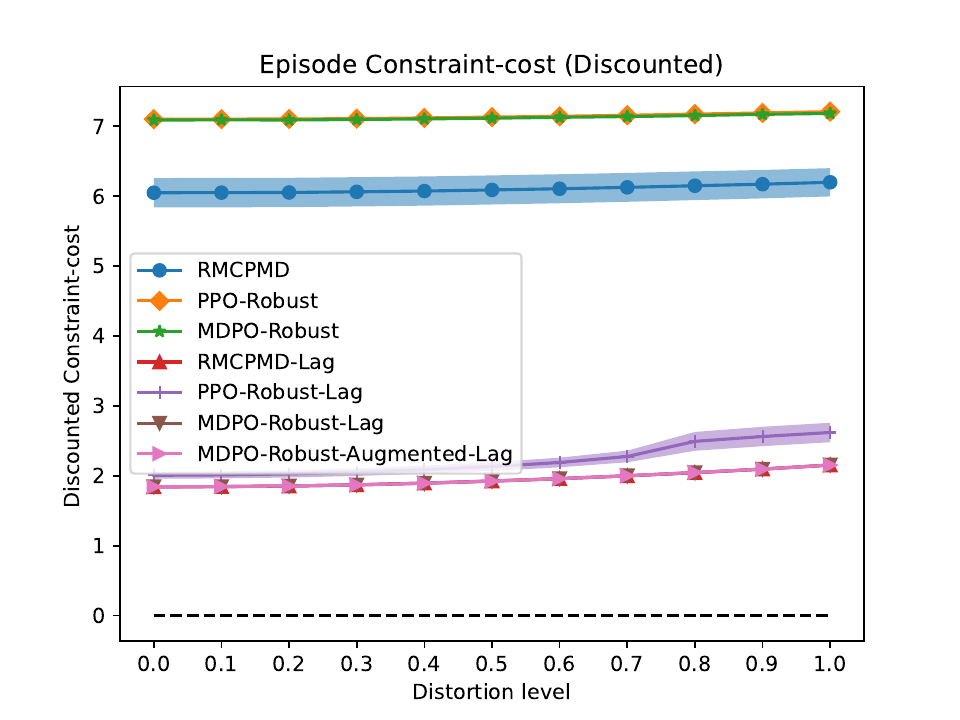}}
   \caption{Test performance of RMDP and RCMDP algorithms obtained by applying the learned deterministic policy from the Inventory Management domain after 400,000 time steps of training. The line and shaded area represent the mean and standard error across the perturbations for the particular distortion level.}
    \label{fig: robust test IM long run}
\end{figure}

\clearpage 
\subsection{3-D Inventory Management}
\label{app: test 3d-IM}
\begin{figure}[htbp!]
    \centering
     \subfloat[Reward]{\includegraphics[width=0.24\linewidth]{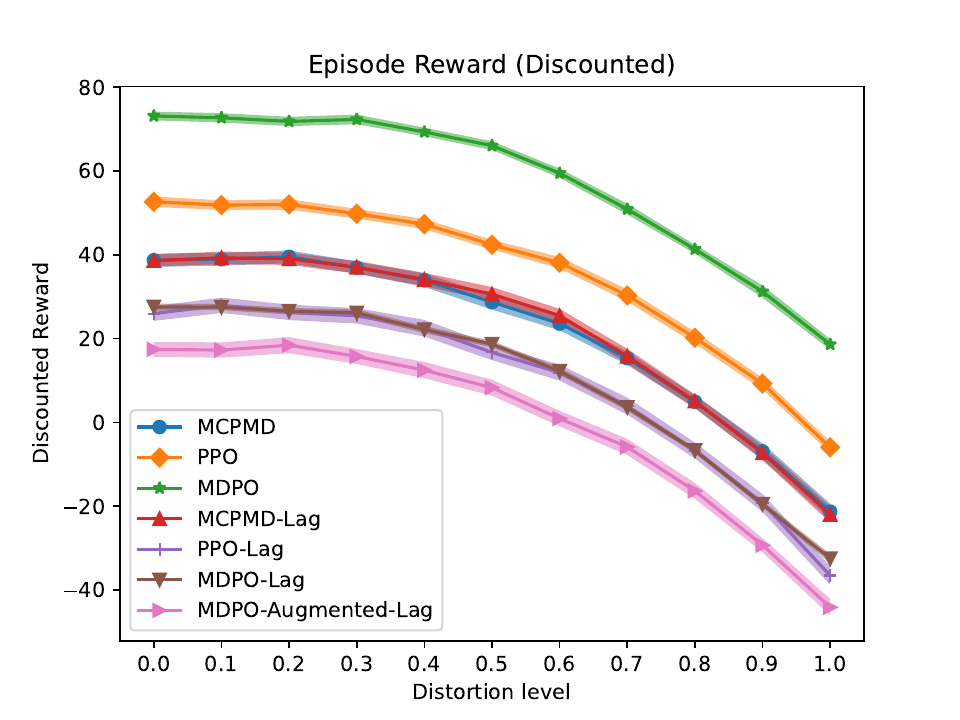}}
       \subfloat[Constraint-cost 1]{ \includegraphics[width=0.24\linewidth]{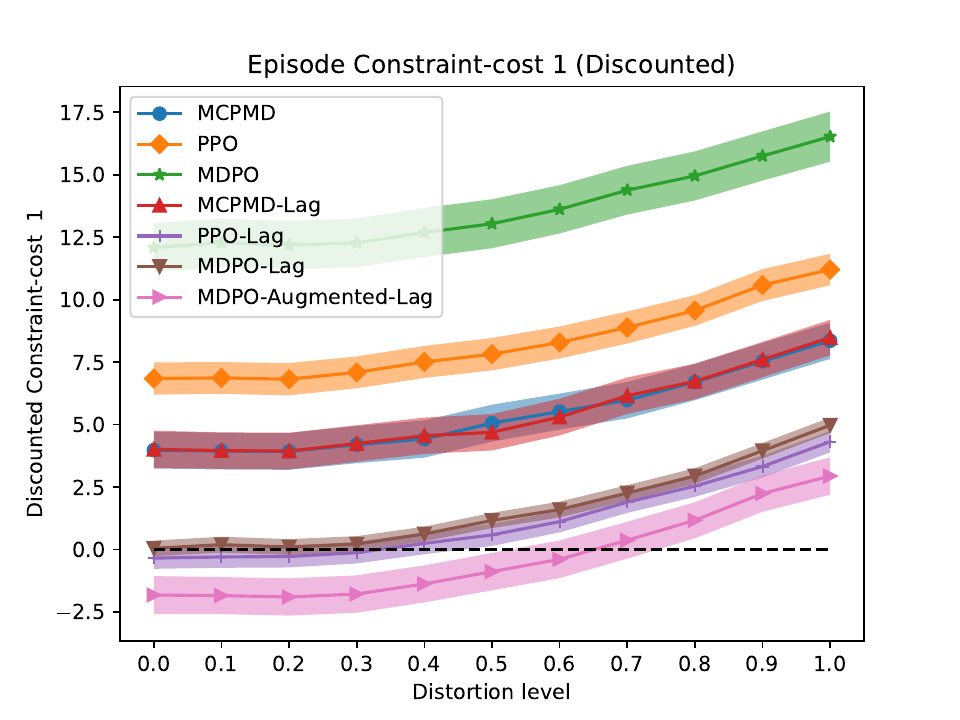}}
       \subfloat[Constraint-cost 2]{ \includegraphics[width=0.24\linewidth]{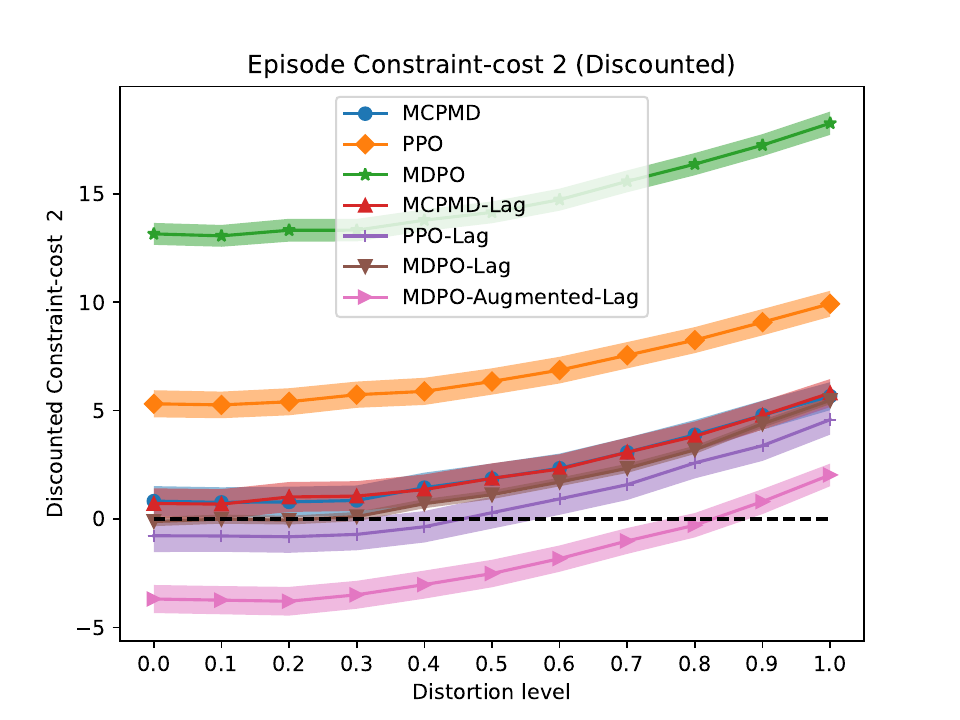}}
       \subfloat[Constraint-cost 3]{ \includegraphics[width=0.24\linewidth]{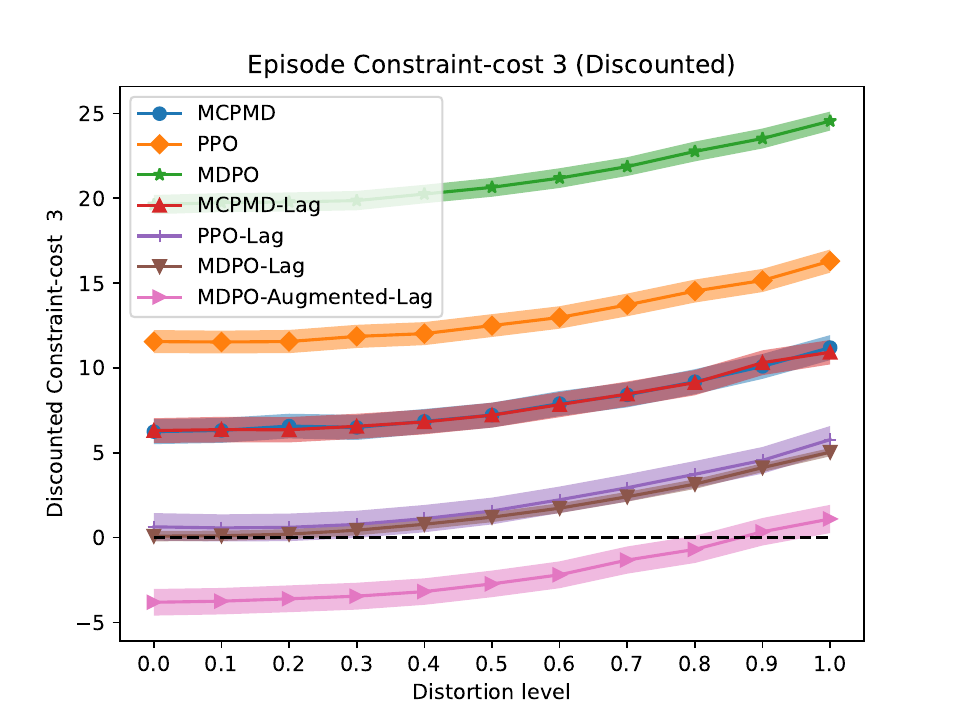}}
   \caption{Test performance of RMDP and RCMDP algorithms obtained by applying the learned deterministic policy from the 3-D Inventory Management domain after 400,000 time steps of training. The line and shaded area represent the mean and standard error across the perturbations for the particular distortion level.}
    \label{fig: non-robust test 3d-IM  long run} 
\end{figure}

\begin{figure}[htbp!]
    \centering
     \subfloat[Reward]{\includegraphics[width=0.24\linewidth]{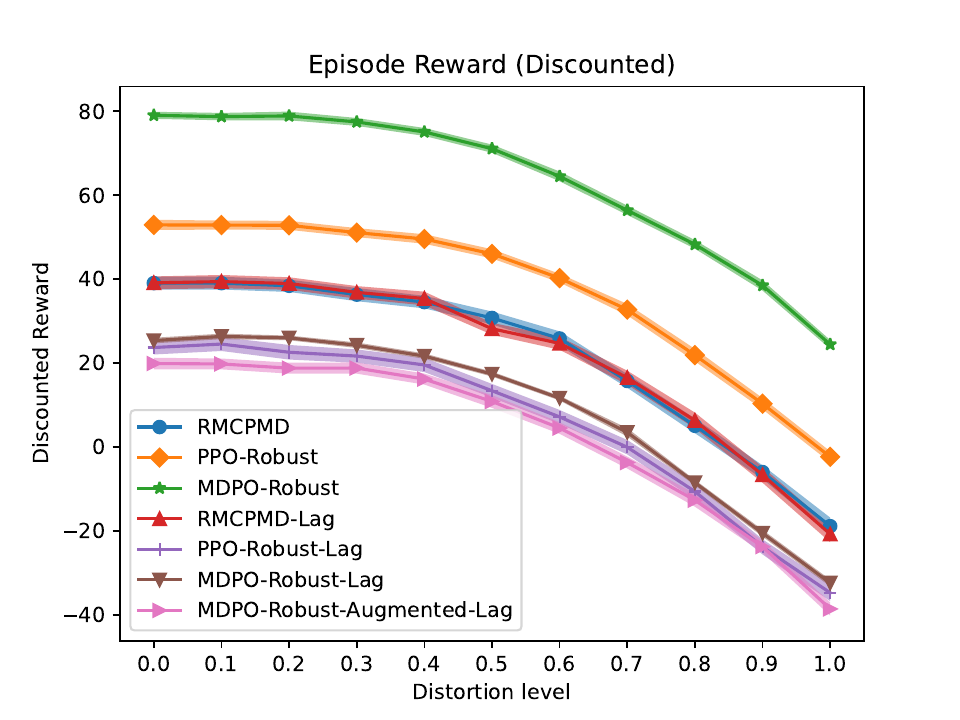}}
       \subfloat[Constraint-cost 1]{ \includegraphics[width=0.24\linewidth]{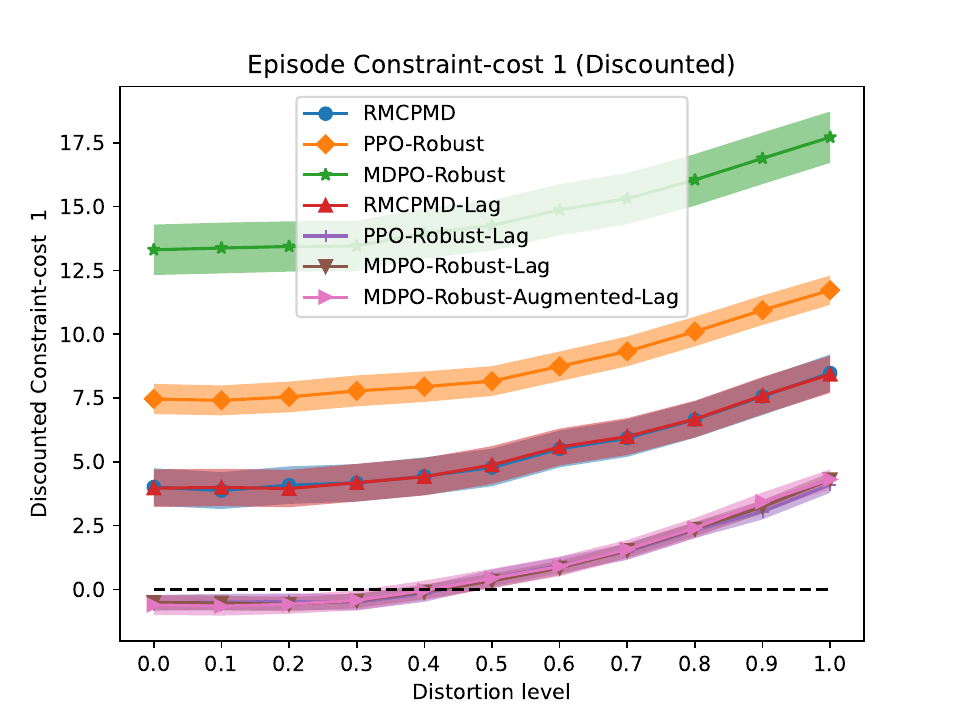}}
       \subfloat[Constraint-cost 2]{ \includegraphics[width=0.24\linewidth]{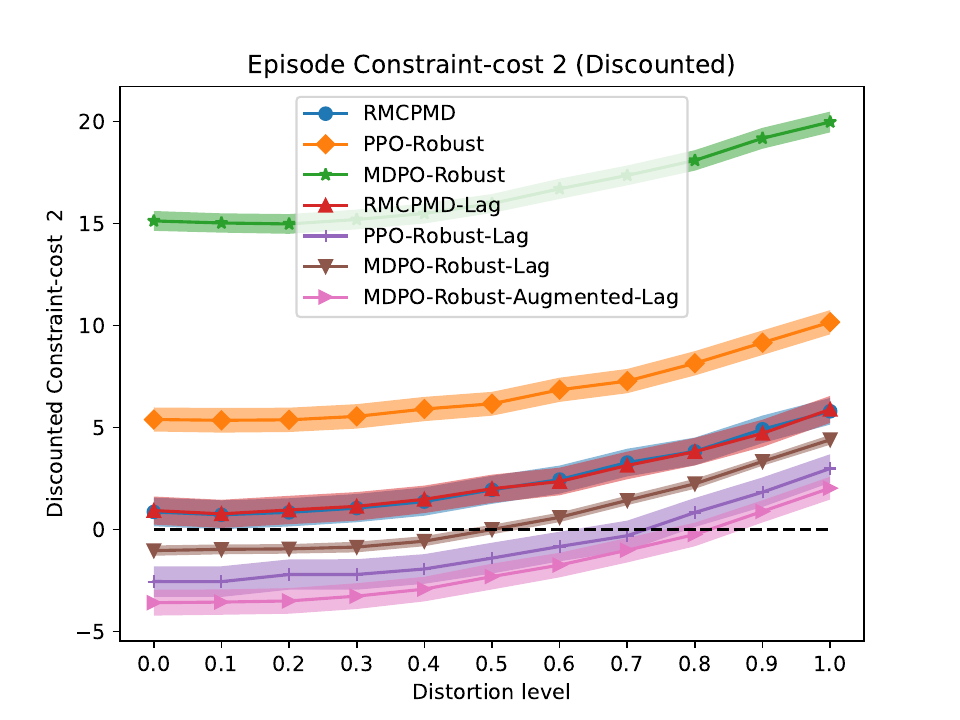}}
       \subfloat[Constraint-cost 3]{ \includegraphics[width=0.24\linewidth]{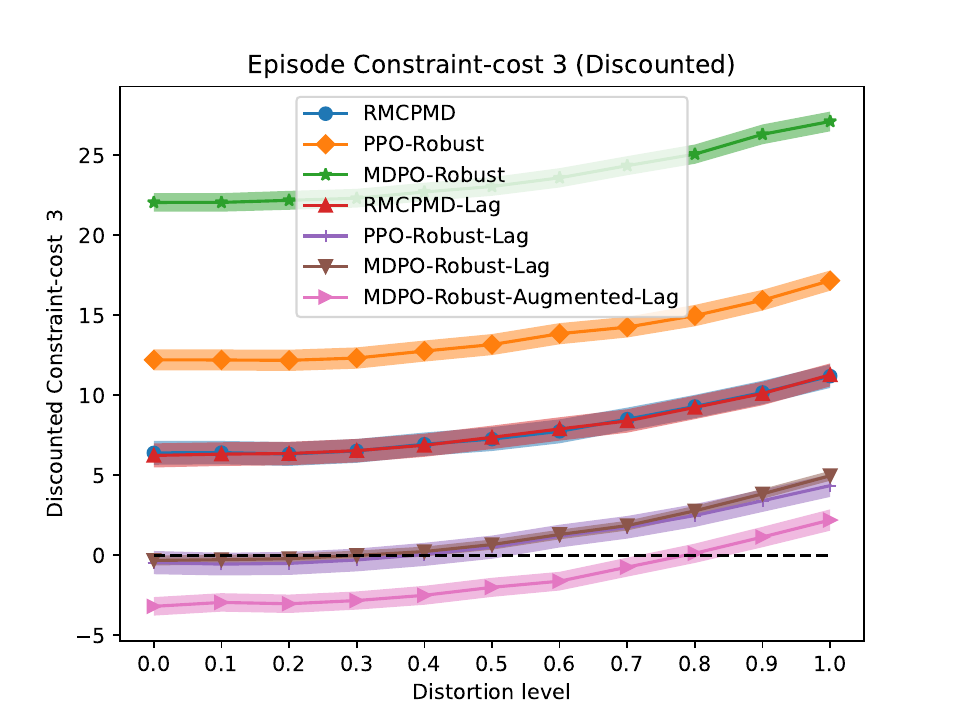}}
   \caption{Test performance of RMDP and RCMDP algorithms obtained by applying the learned deterministic policy from the 3-D Inventory Management domain after 400,000 time steps of training. The line and shaded area represent the mean and standard error across the perturbations for the particular distortion level.}
    \label{fig: robust test 3d-IM long run}
\end{figure}

\clearpage 
\section{Comparison of schedules}
\label{app: schedules}
\cite{Tomar2022} propose either a fixed $\alpha$ or a linear increase across iterations to give a larger penalty at the end of the iterations (inspired by theoretical works, e.g. \cite{Beck2003}). A potential problem with such a linear scheme is that if the LTMA update is strong and the policy cannot take sufficiently large learning steps, it may sometimes lead to a gradual loss of performance and premature convergence, especially in the context of contrained optimisation where the objective is often subject to large changes due to constraints becoming active or inactive. As an alternative, we also consider a schedule where the penalty parameter schedule is restarted every time the LTMA update to $\xi$ is larger than a particular threshold. The schedule then follows the process $\alpha_k = \frac{1}{1 - (k - k') / (K - k')}$, where $k'$ indicates the restart time and $K$ is the final time. In the experiments, the restart is done every time any dimension $i$ of the update is larger than half the maximal distance from nominal, i.e. larger than $0.5 \max_{\xi_i \in \cU_{\xi_i}} \vert \xi_i - \bar{\xi}_i \vert$. A last included schedule is the geometrically decreasing schedule $\alpha_k = \alpha_{k-1}* \gamma$ based on the discount factor $\gamma$ as proposed in earlier work \citep{Xiao2022}. We experiment with both epoch-based and batch-based schedules. While in the batch-based schedules we additionally experimented with the more aggressive update rule $\alpha_k = \alpha_{k-1} \frac{1-\gamma}{M}$ proposed in \cite{Wangd}. However, the results under the scheme were poor, and so these results are not included.

Table~\ref{tab: restarts} and \ref{tab: restarts2} summarise the experiments comparing  fixed schedules, restart schedules, linear schedules, and the geometric schedules for MDPO-Robust and MDPO-Robust-Lag variants. For the fixed schedule, the parameter is set to $\alpha=2.0$. For the geometric schedule, the starting parameter is set to $\alpha = 5.0$ for epoch-based scheduling and to $\alpha=2.0$ for the less frequent batch-based scheduling. The fixed setting works reasonably well while in particular cases there may be benefits from time-varying schedules. The best overall performance for robust-constrained RL is obtained by the batch-based restart schedule with MDPO-Robust-Augmented-Lag. As a second top performer, the geometric schedule with MDPO-Robust-Lag also performs consistently high.

\begin{table}[htbp!]
\caption{Return and penalised return statistics comparing \textbf{per-epoch} schedules for the penalty parameter $\alpha$ under the short runs (200-500 episodes $\times$ maximal number of time steps) and the long runs (2000-5000 episodes $\times$ maximal number of time steps).}
\label{tab: restarts}
\centering
\resizebox{0.75\textwidth}{!}{ %
\begin{tabular}{l | l l l l l l}
\toprule
Algorithm & Return &   & $R_{\text{pen}}^{\pm}$ (signed) &   &  $R_{\text{pen}}$ (positive) &  \\ 
 \midrule
& Mean $\pm$ SE & Min & Mean $\pm$ SE & Min  & Mean $\pm$ SE & Min \\ 
 \hline
\multicolumn{7}{l}{\textbf{Cartpole (short runs)}} \\
\hline
MDPO-Robust (fixed) &  63.1 \pmt{0.1} & 56.4 &  -229.7 \pmt{179.6} &  -2610.6 &  -294.1 \pmt{171.4} &  -2610.8 \\ 
MDPO-Robust (linear) &  62.9 \pmt{0.1} & 55.0 &  -482.5 \pmt{182.3} &  -2487.1 &  -514.6 \pmt{177.4} &  -2487.1 \\ 
MDPO-Robust (restart) &  63.1 \pmt{0.1} & 55.2 &  -252.8 \pmt{179.1} &  -2520.5 &  -304.8 \pmt{172.0} &  -2520.6 \\ 
MDPO-Robust (geometric) &  63.1 \pmt{0.1} & 55.8 &  -148.0 \pmt{173.5} &  -2480.3 &  -219.3 \pmt{164.7} &  -2481.5 \\ 
MDPO-Robust-Lag (fixed) &  34.4 \pmt{0.3} & 24.9 &  108.4 \pmt{4.4} &  65.9 &  34.2 \pmt{1.2} &  24.9 \\ 
MDPO-Robust-Lag (linear) &  35.3 \pmt{0.3} & 25.2 &  107.1 \pmt{5.0} &  53.0 &  34.5 \pmt{1.2} &  25.0 \\ 
MDPO-Robust-Lag (restart) &  34.9 \pmt{0.3} & 25.7 &  107.2 \pmt{4.9} &  53.1 &  34.3 \pmt{1.1} &  25.4 \\ 
MDPO-Robust-Lag (geometric) &  34.8 \pmt{0.3} & 25.0 &  109.4 \pmt{4.8} &  60.2 &  34.5 \pmt{1.2} &  25.0 \\ 
MDPO-Robust-Augmented-Lag (fixed) &  34.6 \pmt{0.3} & 24.7 &  110.8 \pmt{4.9} &  61.6 &  34.6 \pmt{1.2} &  24.7 \\ 
MDPO-Robust-Augmented-Lag (linear) &  35.2 \pmt{0.3} & 25.4 &  106.6 \pmt{5.0} &  52.6 &  34.2 \pmt{1.2} &  22.9 \\ 
MDPO-Robust-Augmented-Lag (restart) &  35.4 \pmt{0.3} & 25.7 &  107.1 \pmt{5.4} &  45.6 &  34.4 \pmt{1.3} &  21.7 \\ 
MDPO-Robust-Augmented-Lag (geometric) &  35.1 \pmt{0.3} & 24.6 &  109.7 \pmt{5.1} &  56.4 &  34.7 \pmt{1.2} &  24.6 \\ 
\hline
\multicolumn{7}{l}{\textbf{Cartpole (long runs)}}\\
\hline
MDPO-Robust (fixed) &  63.2 \pmt{0.1} & 56.1 &  -109.7 \pmt{176.0} &  -2647.1 &  -185.5 \pmt{167.1} &  -2648.3 \\ 
MDPO-Robust (linear) &  33.7 \pmt{0.1} & 29.2 &  -145.3 \pmt{24.1} &  -526.4 &  -165.0 \pmt{23.0} &  -541.6 \\ 
MDPO-Robust (restart) &  63.1 \pmt{0.1} & 55.8 &  -173.6 \pmt{174.9} &  -2537.4 &  -237.1 \pmt{167.0} &  -2538.1 \\ 
MDPO-Robust (geometric) &  63.2 \pmt{0.1} & 56.1 &  -120.4 \pmt{179.5} &  -2662.1 &  -193.7 \pmt{170.7} &  -2662.7 \\ 
MDPO-Robust-Lag (fixed) &  41.6 \pmt{0.2} & 32.1 &  105.1 \pmt{9.5} &  9.6 &  38.6 \pmt{2.5} &  2.6 \\ 
MDPO-Robust-Lag (linear) &  32.3 \pmt{0.3} & 22.9 &  72.0 \pmt{4.1} &  38.6 &  15.4 \pmt{1.8} &  -4.8 \\ 
MDPO-Robust-Lag (restart) &  41.8 \pmt{0.2} & 32.2 &  107.6 \pmt{9.3} &  14.5 &  39.2 \pmt{2.3} &  5.8 \\ 
MDPO-Robust-Lag (geometric) &  41.4 \pmt{0.2} & 31.7 &  103.7 \pmt{9.4} &  8.0 &  38.6 \pmt{2.5} &  1.3 \\ 
MDPO-Robust-Augmented-Lag (fixed) &  41.2 \pmt{0.2} & 31.4 &  106.2 \pmt{9.1} &  13.7 &  38.6 \pmt{2.2} &  5.0 \\ 
MDPO-Robust-Augmented-Lag (linear) &  30.0 \pmt{0.3} & 21.5 &  32.8 \pmt{2.8} &  12.2 &  -18.8 \pmt{1.3} &  -34.6 \\ 
MDPO-Robust-Augmented-Lag (restart) &  40.5 \pmt{0.2} & 30.5 &  108.7 \pmt{8.2} &  24.1 &  38.3 \pmt{1.9} &  10.7 \\ 
MDPO-Robust-Augmented-Lag (geometric) &  40.9 \pmt{0.2} & 31.4 &  102.1 \pmt{8.9} &  9.2 &  37.6 \pmt{2.7} &  -2.1 \\  
 \hline
\multicolumn{7}{l}{\textbf{Inventory Management (short runs)}} \\
\hline
MDPO-Robust (fixed) &  118.0 \pmt{1.8} & 72.3 &  -3306.4 \pmt{9.7} &  -3350.0 &  -3306.4 \pmt{9.7} &  -3350.0 \\ 
MPDO-Robust (linear) &  98.4 \pmt{5.7} & -59.6 &  -2764.0 \pmt{20.5} &  -2808.5 &  -2764.2 \pmt{20.2} &  -2808.5 \\ 
MPDO-Robust (restart) &  105.1 \pmt{6.3} & -20.2 &  -3404.8 \pmt{32.9} &  -3634.3 &  -3404.8 \pmt{32.9} &  -3634.3 \\ 
MPDO-Robust (geometric) &  114.6 \pmt{2.3} & 49.7 &  -3166.4 \pmt{6.9} &  -3207.6 &  -3166.5 \pmt{6.9} &  -3207.6 \\ 
MPDO-Robust-Lag (fixed) &  60.7 \pmt{10.7} & -228.5 &  -930.9 \pmt{32.3} &  -1118.2 &  -932.2 \pmt{30.8} &  -1118.2 \\ 
MPDO-Robust-Lag (linear) &  93.2 \pmt{4.7} & -55.0 &  -2186.8 \pmt{6.9} &  -2248.8 &  -2187.3 \pmt{6.4} &  -2248.8 \\ 
MPDO-Robust-Lag (restart) &  88.3 \pmt{4.8} & -61.3 &  -1880.7 \pmt{16.5} &  -2003.7 &  -1881.1 \pmt{16.1} &  -2003.7 \\ 
MPDO-Robust-Lag (geometric) &  67.2 \pmt{9.0} & -156.8 &  -1046.3 \pmt{10.2} &  -1122.4 &  -1048.1 \pmt{8.0} &  -1122.4 \\ 
MPDO-Robust-Augmented-Lag (fixed) &  60.9 \pmt{10.6} & -221.4 &  -938.6 \pmt{34.4} &  -1172.2 &  -940.1 \pmt{32.9} &  -1172.2 \\ 
MPDO-Robust-Augmented-Lag (linear) &  91.8 \pmt{4.4} & -47.8 &  -2033.0 \pmt{8.0} &  -2054.6 &  -2033.3 \pmt{7.6} &  -2054.6 \\ 
MPDO-Robust-Augmented-Lag (restart) &  85.6 \pmt{5.9} & -94.9 &  -1902.5 \pmt{6.5} &  -1954.2 &  -1902.7 \pmt{6.2} &  -1954.2 \\ 
MPDO-Robust-Augmented-Lag (geometric) &  60.0 \pmt{9.3} & -206.9 &  -1095.9 \pmt{18.3} &  -1128.7 &  -1097.2 \pmt{16.3} &  -1128.7 \\ 
 \hline
\multicolumn{7}{l}{\textbf{Inventory Management (long runs)}} \\ 
\hline
MDPO-Robust (fixed) &  119.7 \pmt{1.8} & 72.4 &  -3416.7 \pmt{4.1} &  -3429.7 &  -3416.7 \pmt{4.1} &  -3429.7 \\ 
MPDO-Robust (linear) &  114.8 \pmt{2.3} & 53.6 &  -3166.7 \pmt{6.8} &  -3208.2 &  -3166.7 \pmt{6.8} &  -3208.2 \\ 
MPDO-Robust (restart) &  119.9 \pmt{1.8} & 74.4 &  -3425.2 \pmt{5.4} &  -3460.2 &  -3425.2 \pmt{5.4} &  -3460.2 \\ 
MPDO-Robust (geometric) &  114.7 \pmt{2.3} & 51.0 &  -3166.5 \pmt{6.9} &  -3207.6 &  -3166.6 \pmt{6.9} &  -3207.6 \\ 
MPDO-Robust-Lag (fixed) &  66.9 \pmt{7.1} & -166.5 &  -837.4 \pmt{21.4} &  -939.3 &  -838.0 \pmt{20.4} &  -939.3 \\ 
MPDO-Robust-Lag (linear) &  66.8 \pmt{7.1} & -165.7 &  -837.3 \pmt{21.4} &  -939.1 &  -838.0 \pmt{20.3} &  -939.1 \\ 
MPDO-Robust-Lag (restart) &  66.8 \pmt{7.2} & -169.1 &  -837.3 \pmt{21.6} &  -939.8 &  -838.0 \pmt{20.5} &  -939.8 \\ 
MPDO-Robust-Lag (geometric) &  66.8 \pmt{7.2} & -168.9 &  -837.3 \pmt{21.6} &  -939.7 &  -837.9 \pmt{20.6} &  -939.7 \\ 
MPDO-Robust-Augmented-Lag (fixed) &  66.8 \pmt{7.1} & -167.7 &  -837.2 \pmt{21.5} &  -939.1 &  -837.9 \pmt{20.5} &  -939.1 \\ 
MPDO-Robust-Augmented-Lag (linear) &  66.8 \pmt{7.1} & -163.4 &  -837.4 \pmt{21.3} &  -939.2 &  -838.0 \pmt{20.3} &  -939.2 \\ 
MPDO-Robust-Augmented-Lag (restart) &  66.8 \pmt{7.1} & -162.8 &  -837.4 \pmt{21.2} &  -939.0 &  -837.9 \pmt{20.4} &  -939.0 \\ 
MPDO-Robust-Augmented-Lag (geometric) &  66.9 \pmt{7.1} & -162.6 &  -837.4 \pmt{21.2} &  -939.2 &  -838.1 \pmt{20.2} &  -939.2 \\ 
\hline
\multicolumn{7}{l}{\textbf{3-D Inventory Management (short runs)}} \\
\hline
MDPO-Robust (fixed) &  54.9 \pmt{0.4} & -9.6 &  -7956.9 \pmt{457.5} &  -14747.8 &  -9376.8 \pmt{330.2} &  -14836.3 \\ 
MPDO-Robust (linear) &  56.5 \pmt{0.4} & -8.1 &  -8716.0 \pmt{458.1} &  -15404.0 &  -9918.5 \pmt{343.1} &  -15469.0 \\ 
MPDO-Robust (restart) &  56.3 \pmt{0.4} & -8.7 &  -8693.9 \pmt{455.4} &  -15503.7 &  -9866.2 \pmt{333.9} &  -15531.6 \\ 
MPDO-Robust (geometric) &  53.6 \pmt{0.4} & -11.5 &  -7337.6 \pmt{459.1} &  -14074.2 &  -8846.6 \pmt{315.8} &  -14173.9 \\ 
MPDO-Robust-Lag (fixed) &  41.4 \pmt{0.5} & -24.6 &  -1416.9 \pmt{446.1} &  -7726.9 &  -5088.9 \pmt{221.7} &  -8560.6 \\ 
MPDO-Robust-Lag (linear) &  41.8 \pmt{0.5} & -25.5 &  -1584.6 \pmt{447.5} &  -8169.0 &  -5178.0 \pmt{222.3} &  -8982.4 \\ 
MPDO-Robust-Lag (restart) &  41.9 \pmt{0.5} & -24.9 &  -1648.0 \pmt{446.1} &  -8138.0 &  -5308.1 \pmt{223.1} &  -8995.2 \\ 
MPDO-Robust-Lag (geometric) &  41.1 \pmt{0.5} & -28.2 &  -1278.3 \pmt{449.6} &  -7571.7 &  -5060.9 \pmt{219.1} &  -8659.8 \\ 
MPDO-Robust-Augmented-Lag (fixed) &  39.5 \pmt{0.5} & -29.6 &  -525.0 \pmt{441.2} &  -6861.9 &  -4957.0 \pmt{211.2} &  -8289.3 \\ 
MPDO-Robust-Augmented-Lag (linear) &  42.4 \pmt{0.5} & -23.7 &  -1929.8 \pmt{446.9} &  -8420.5 &  -5311.8 \pmt{228.7} &  -9203.4 \\ 
MPDO-Robust-Augmented-Lag (restart) &  40.7 \pmt{0.5} & -26.8 &  -1070.1 \pmt{441.9} &  -7347.8 &  -5155.6 \pmt{216.3} &  -8671.2 \\ 
MPDO-Robust-Augmented-Lag (geometric) &  42.8 \pmt{0.5} & -25.1 &  -2123.5 \pmt{450.2} &  -8567.3 &  -5507.8 \pmt{228.9} &  -9303.0 \\ 
\hline
\multicolumn{7}{l}{\textbf{3-D Inventory Management (long runs)}} \\
\hline
MDPO-Robust (fixed) &  87.0 \pmt{0.4} & 24.4 &  -23312.6 \pmt{525.6} &  -31314.1 &  -23459.7 \pmt{499.3} &  -31314.1 \\ 
MPDO-Robust (linear) &  96.3 \pmt{0.4} & 36.0 &  -27752.5 \pmt{547.7} &  -35936.5 &  -27902.6 \pmt{525.3} &  -35936.5 \\ 
MPDO-Robust (restart) &  90.8 \pmt{0.4} & 27.4 &  -25169.7 \pmt{523.8} &  -32979.5 &  -25265.0 \pmt{504.2} &  -32979.5 \\ 
MPDO-Robust (geometric) &  82.2 \pmt{0.4} & 20.0 &  -21022.5 \pmt{487.2} &  -28373.0 &  -21098.2 \pmt{469.0} &  -28373.0 \\ 
MPDO-Robust-Lag (fixed) &  32.8 \pmt{0.5} & -32.4 &  2617.9 \pmt{378.6} &  -2601.7 &  -1011.4 \pmt{169.9} &  -3767.4 \\ 
MPDO-Robust-Lag (linear) &  33.2 \pmt{0.5} & -31.9 &  2538.8 \pmt{368.0} &  -2793.7 &  -1353.1 \pmt{197.6} &  -4804.2 \\ 
MPDO-Robust-Lag (restart) &  34.0 \pmt{0.4} & -31.2 &  2062.3 \pmt{375.3} &  -3333.9 &  -1317.1 \pmt{191.5} &  -4687.1 \\ 
MPDO-Robust-Lag (geometric) &  31.7 \pmt{0.5} & -34.9 &  3269.5 \pmt{371.3} &  -2036.2 &  -1097.3 \pmt{173.3} &  -4299.3 \\ 
MPDO-Robust-Augmented-Lag (fixed) &  26.8 \pmt{0.4} & -38.6 &  5312.6 \pmt{340.2} &  758.7 &  -1219.2 \pmt{150.4} &  -3867.7 \\ 
MPDO-Robust-Augmented-Lag (linear) &  23.5 \pmt{0.4} & -41.1 &  6993.3 \pmt{325.6} &  2682.2 &  -963.4 \pmt{122.0} &  -3066.4 \\ 
MPDO-Robust-Augmented-Lag (restart) &  24.2 \pmt{0.5} & -41.3 &  6811.0 \pmt{333.5} &  2581.0 &  -1043.3 \pmt{159.2} &  -3775.2 \\ 
MPDO-Robust-Augmented-Lag (geometric) &  26.5 \pmt{0.5} & -40.3 &  5494.4 \pmt{348.4} &  824.2 &  -1644.2 \pmt{167.2} &  -4586.5 \\ 
\end{tabular}
}
\end{table}

\begin{table}[htbp!]
\caption{Return and penalised return statistics comparing \textbf{per-batch} schedules for the penalty parameter $\alpha$ under the short runs (200-500 episodes $\times$ maximal number of time steps) and the long runs (2000-5000 episodes $\times$ maximal number of time steps).}
\label{tab: restarts2}
\centering
\resizebox{0.75\textwidth}{!}{%
\begin{tabular}{l | l l l l l l}
\toprule
Algorithm & Return &   & $R_{\text{pen}}^{\pm}$ (signed) &   &  $R_{\text{pen}}$ (positive) &  \\ 
 \midrule
&  Mean $\pm$ SE & Min & Mean $\pm$ SE & Min  & Mean $\pm$ SE & Min \\ 
 \hline
\multicolumn{7}{l}{\textbf{Cartpole (short runs)}} \\
\hline
MDPO-Robust (fixed) &  63.1 \pmt{0.1} & 56.4 &  -229.7 \pmt{179.6} &  -2610.6 &  -294.1 \pmt{171.4} &  -2610.8 \\
MDPO-Robust (linear) &  50.7 \pmt{0.1} & 44.4 &  -183.8 \pmt{58.8} &  -831.2 &  -214.9 \pmt{54.7} &  -833.9 \\ 
MDPO-Robust (restart) &  62.5 \pmt{0.1} & 55.6 &  -440.3 \pmt{173.8} &  -2450.3 &  -471.9 \pmt{169.0} &  -2450.5 \\ 
MDPO-Robust (geometric) &  63.1 \pmt{0.1} & 55.6 &  -286.5 \pmt{178.9} &  -2504.6 &  -339.3 \pmt{171.8} &  -2504.8 \\ 
MDPO-Robust-Lag (fixed) &  34.4 \pmt{0.3} & 24.9 &  108.4 \pmt{4.4} &  65.9 &  34.2 \pmt{1.2} &  24.9 \\ 
MDPO-Robust-Lag (linear) &  35.3 \pmt{0.3} & 25.2 &  107.1 \pmt{5.0} &  53.0 &  34.5 \pmt{1.2} &  25.0 \\ 
MDPO-Robust-Lag (linear) &  34.7 \pmt{0.3} & 24.7 &  108.7 \pmt{4.7} &  62.3 &  34.4 \pmt{1.2} &  24.7 \\ 
MDPO-Robust-Lag (restart) &  35.4 \pmt{0.3} & 25.1 &  111.0 \pmt{5.1} &  59.1 &  35.1 \pmt{1.2} &  25.1 \\ 
MDPO-Robust-Lag (geometric) &  34.6 \pmt{0.3} & 24.3 &  109.4 \pmt{4.7} &  64.0 &  34.3 \pmt{1.1} &  24.3 \\ 
MDPO-Robust-Augmented-Lag (fixed) &  34.6 \pmt{0.3} & 24.7 &  110.8 \pmt{4.9} &  61.6 &  34.6 \pmt{1.2} &  24.7 \\ 
MDPO-Robust-Augmented-Lag (linear) &  34.5 \pmt{0.3} & 24.4 &  110.2 \pmt{4.7} &  64.7 &  34.4 \pmt{1.2} &  24.4 \\ 
MDPO-Robust-Augmented-Lag (restart) &  34.4 \pmt{0.3} & 24.6 &  108.2 \pmt{4.4} &  67.6 &  34.0 \pmt{1.2} &  24.6 \\ 
MDPO-Robust-Augmented-Lag (geometric) &  35.0 \pmt{0.3} & 25.0 &  109.6 \pmt{5.1} &  57.1 &  34.6 \pmt{1.2} &  25.0 \\ 
\hline
\multicolumn{7}{l}{\textbf{Cartpole (long runs)}} \\
\hline
MDPO-Robust (fixed) &  63.2 \pmt{0.1} & 56.1 &  -109.7 \pmt{176.0} &  -2647.1 &  -185.5 \pmt{167.1} &  -2648.3 \\ 
MDPO-Robust (linear) &  18.6 \pmt{0.1} & 15.6 &  -275.5 \pmt{30.8} &  -630.8 &  -282.6 \pmt{30.8} &  -640.4 \\ 
MDPO-Robust (restart) &  63.0 \pmt{0.1} & 55.2 &  -246.4 \pmt{174.0} &  -2510.7 &  -298.2 \pmt{167.1} &  -2510.9 \\ 
MDPO-Robust (geometric) &  63.1 \pmt{0.1} & 56.2 &  -151.9 \pmt{176.3} &  -2639.6 &  -219.7 \pmt{168.1} &  -2640.4 \\ 
MDPO-Robust-Lag (fixed) &  41.6 \pmt{0.2} & 32.1 &  105.1 \pmt{9.5} &  9.6 &  38.6 \pmt{2.5} &  2.6 \\ 
MDPO-Robust-Lag (linear) &  22.2 \pmt{0.2} & 16.4 &  -104.5 \pmt{5.2} &  -165.9 &  -137.5 \pmt{5.1} &  -199.0 \\ 
MDPO-Robust-Lag (restart) &  41.1 \pmt{0.2} & 30.9 &  112.4 \pmt{8.6} &  30.1 &  39.6 \pmt{1.7} &  16.6 \\ 
MDPO-Robust-Lag (geometric) &  41.0 \pmt{0.2} & 30.9 &  106.3 \pmt{8.8} &  16.2 &  38.5 \pmt{2.2} &  7.1 \\ 
MDPO-Robust-Augmented-Lag (fixed) &  41.2 \pmt{0.2} & 31.4 &  106.2 \pmt{9.1} &  13.7 &  38.6 \pmt{2.2} &  5.0 \\ 
MDPO-Robust-Augmented-Lag (linear) &  25.8 \pmt{0.2} & 19.2 &  -21.6 \pmt{3.3} &  -46.8 &  -66.9 \pmt{2.8} &  -102.0 \\ 
MDPO-Robust-Augmented-Lag (restart) &  40.3 \pmt{0.2} & 30.3 &  111.7 \pmt{8.0} &  33.5 &  39.0 \pmt{1.5} &  20.0 \\ 
MDPO-Robust-Augmented-Lag (geometric) &  40.9 \pmt{0.2} & 31.1 &  105.6 \pmt{8.8} &  16.5 &  38.2 \pmt{2.2} &  6.3 \\ 
 \hline
\multicolumn{7}{l}{\textbf{Inventory Management (short runs) }} \\
\hline
MDPO-Robust (fixed) &  118.0 \pmt{1.8} & 72.3 &  -3306.4 \pmt{9.7} &  -3350.0 &  -3306.4 \pmt{9.7} &  -3350.0 \\ 
MDPO-Robust (linear) &  120.0 \pmt{1.8} & 73.6 &  -3425.4 \pmt{5.4} &  -3460.1 &  -3425.4 \pmt{5.4} &  -3460.1 \\ 
MDPO-Robust (restart) &  116.4 \pmt{2.1} & 55.3 &  -3240.2 \pmt{9.3} &  -3293.8 &  -3240.2 \pmt{9.2} &  -3293.8 \\ 
MDPO-Robust (geometric) &  119.0 \pmt{1.8} & 72.3 &  -3361.8 \pmt{7.1} &  -3382.9 &  -3361.8 \pmt{7.1} &  -3382.9 \\ 
MPDO-Robust-Lag (fixed) &  60.7 \pmt{10.7} & -228.5 &  -930.9 \pmt{32.3} &  -1118.2 &  -932.2 \pmt{30.8} &  -1118.2 \\ 
MDPO-Robust-Lag (linear) &  56.3 \pmt{9.8} & -226.8 &  -939.1 \pmt{35.3} &  -1175.6 &  -940.5 \pmt{33.8} &  -1175.6 \\ 
MDPO-Robust-Lag (restart) &  68.2 \pmt{8.8} & -168.2 &  -964.1 \pmt{19.3} &  -1117.6 &  -964.5 \pmt{18.8} &  -1117.6 \\ 
MDPO-Robust-Lag (geometric) &  61.9 \pmt{8.2} & -198.7 &  -885.0 \pmt{24.6} &  -981.9 &  -886.9 \pmt{21.7} &  -981.9 \\ 
MPDO-Robust-Augmented-Lag (fixed) &  60.9 \pmt{10.6} & -221.4 &  -938.6 \pmt{34.4} &  -1172.2 &  -940.1 \pmt{32.9} &  -1172.2 \\ 
MDPO-Robust-Augmented-Lag (linear) &  68.9 \pmt{8.9} & -180.3 &  -932.6 \pmt{28.3} &  -1118.2 &  -934.2 \pmt{26.7} &  -1118.2 \\ 
MDPO-Robust-Augmented-Lag (restart) &  66.9 \pmt{7.1} & -165.4 &  -837.6 \pmt{21.1} &  -939.1 &  -838.2 \pmt{20.2} &  -939.1 \\ 
MDPO-Robust-Augmented-Lag (geometric) &  66.8 \pmt{7.1} & -166.0 &  -837.3 \pmt{21.4} &  -938.6 &  -837.9 \pmt{20.4} &  -938.6 \\ 
 \hline
\multicolumn{7}{l}{\textbf{Inventory Management (long runs)}} \\
\hline
MDPO-Robust (fixed) &  119.7 \pmt{1.8} & 72.4 &  -3416.7 \pmt{4.1} &  -3429.7 &  -3416.7 \pmt{4.1} &  -3429.7 \\ 
MDPO-Robust (linear) &  109.2 \pmt{2.9} & 21.5 &  -2907.0 \pmt{8.5} &  -2950.6 &  -2907.1 \pmt{8.3} &  -2950.6 \\ 
MDPO-Robust (restart) &  120.0 \pmt{1.8} & 73.0 &  -3421.4 \pmt{4.7} &  -3441.4 &  -3421.4 \pmt{4.7} &  -3441.4 \\ 
MDPO-Robust (geometric) &  114.0 \pmt{2.3} & 50.3 &  -3121.5 \pmt{5.2} &  -3134.9 &  -3121.5 \pmt{5.2} &  -3134.9 \\ 
MPDO-Robust-Lag (fixed) &  66.9 \pmt{7.1} & -166.5 &  -837.4 \pmt{21.4} &  -939.3 &  -838.0 \pmt{20.4} &  -939.3 \\ 
MDPO-Robust-Lag (linear) &  60.7 \pmt{10.8} & -234.4 &  -930.9 \pmt{32.6} &  -1118.1 &  -932.4 \pmt{30.9} &  -1118.1 \\ 
MDPO-Robust-Lag (restart) &  66.8 \pmt{7.1} & -162.6 &  -837.4 \pmt{21.2} &  -939.0 &  -837.9 \pmt{20.4} &  -939.0 \\ 
MDPO-Robust-Lag (geometric) &  66.8 \pmt{7.1} & -163.7 &  -837.4 \pmt{21.3} &  -939.4 &  -837.9 \pmt{20.5} &  -939.4 \\ 
MPDO-Robust-Augmented-Lag (fixed) &  66.8 \pmt{7.1} & -167.7 &  -837.2 \pmt{21.5} &  -939.1 &  -837.9 \pmt{20.5} &  -939.1 \\ 
MDPO-Robust-Augmented-Lag (linear) &  66.8 \pmt{7.2} & -168.8 &  -837.3 \pmt{21.5} &  -939.1 &  -837.8 \pmt{20.6} &  -939.1 \\ 
MDPO-Robust-Augmented-Lag (restart) &  66.7 \pmt{7.2} & -170.1 &  -837.1 \pmt{21.6} &  -939.5 &  -837.9 \pmt{20.5} &  -939.5 \\ 
MDPO-Robust-Augmented-Lag (geometric) &  66.7 \pmt{7.2} & -168.4 &  -837.2 \pmt{21.5} &  -939.6 &  -837.9 \pmt{20.5} &  -939.6 \\ 
\hline
\multicolumn{7}{l}{\textbf{3-D Inventory Management (short runs)}} \\
\hline
MDPO-Robust (fixed) &  54.9 \pmt{0.4} & -9.6 &  -7956.9 \pmt{457.5} &  -14747.8 &  -9376.8 \pmt{330.2} &  -14836.3 \\ 
MDPO-Robust (linear) &  55.1 \pmt{0.4} & -9.5 &  -8112.0 \pmt{455.7} &  -14726.5 &  -9507.9 \pmt{328.3} &  -14830.6 \\ 
MDPO-Robust (restart) &  54.5 \pmt{0.4} & -8.4 &  -7809.5 \pmt{453.2} &  -14454.0 &  -9408.3 \pmt{318.5} &  -14561.6 \\ 
MDPO-Robust (geometric) &  55.4 \pmt{0.4} & -9.3 &  -8215.7 \pmt{454.4} &  -14973.8 &  -9588.4 \pmt{332.3} &  -15084.5 \\ 
MPDO-Robust-Lag (fixed) &  41.4 \pmt{0.5} & -24.6 &  -1416.9 \pmt{446.1} &  -7726.9 &  -5088.9 \pmt{221.7} &  -8560.6 \\ 
MDPO-Robust-Lag (linear) &  41.7 \pmt{0.5} & -26.1 &  -1573.9 \pmt{446.2} &  -8108.3 &  -5164.3 \pmt{222.7} &  -8973.5 \\ 
MDPO-Robust-Lag (restart) &  41.0 \pmt{0.5} & -27.1 &  -1239.8 \pmt{440.8} &  -7650.5 &  -5350.6 \pmt{217.9} &  -8969.6 \\ 
MDPO-Robust-Lag (geometric) &  40.2 \pmt{0.5} & -26.7 &  -778.5 \pmt{441.4} &  -7152.1 &  -5115.9 \pmt{212.1} &  -8602.1 \\ 
MPDO-Robust-Augmented-Lag (fixed) &  39.5 \pmt{0.5} & -29.6 &  -525.0 \pmt{441.2} &  -6861.9 &  -4957.0 \pmt{211.2} &  -8289.3 \\ 
MDPO-Robust-Augmented-Lag (linear) &  41.5 \pmt{0.5} & -25.6 &  -1474.0 \pmt{443.6} &  -7706.7 &  -5299.4 \pmt{218.6} &  -8798.9 \\ 
MDPO-Robust-Augmented-Lag (restart) &  40.0 \pmt{0.5} & -28.7 &  -754.8 \pmt{438.8} &  -7087.5 &  -4950.3 \pmt{210.5} &  -8389.0 \\ 
MDPO-Robust-Augmented-Lag (geometric) &  41.6 \pmt{0.5} & -24.6 &  -1582.8 \pmt{445.8} &  -8124.9 &  -5316.9 \pmt{218.7} &  -9080.0 \\ 
\hline
\multicolumn{7}{l}{\textbf{3-D Inventory Management (long runs)}} \\
\hline
MDPO-Robust (fixed) &  87.0 \pmt{0.4} & 24.4 &  -23312.6 \pmt{525.6} &  -31314.1 &  -23459.7 \pmt{499.3} &  -31314.1 \\ 
MDPO-Robust (linear) &  100.1 \pmt{0.4} & 41.1 &  -29655.3 \pmt{540.5} &  -37776.0 &  -29778.8 \pmt{520.6} &  -37776.0 \\ 
MDPO-Robust (restart) &  93.9 \pmt{0.4} & 32.7 &  -26709.0 \pmt{534.6} &  -34493.9 &  -26815.8 \pmt{514.0} &  -34493.9 \\ 
MDPO-Robust (geometric) &  89.8 \pmt{0.4} & 28.8 &  -24688.1 \pmt{520.7} &  -32646.2 &  -24771.7 \pmt{503.7} &  -32648.9 \\ 
MPDO-Robust-Lag (fixed) &  32.8 \pmt{0.5} & -32.4 &  2617.9 \pmt{378.6} &  -2601.7 &  -1011.4 \pmt{169.9} &  -3767.4 \\ 
MDPO-Robust-Lag (linear) &  39.3 \pmt{0.4} & -24.4 &  -427.1 \pmt{372.9} &  -5798.5 &  -2588.9 \pmt{233.2} &  -6431.7 \\ 
MDPO-Robust-Lag (restart) &  39.1 \pmt{0.4} & -27.4 &  -274.3 \pmt{379.8} &  -5788.0 &  -2356.2 \pmt{228.9} &  -6272.1 \\ 
MDPO-Robust-Lag (geometric) &  36.7 \pmt{0.4} & -29.0 &  602.2 \pmt{373.5} &  -4823.9 &  -2245.8 \pmt{207.3} &  -5945.4 \\ 
MPDO-Robust-Augmented-Lag (fixed) &  26.8 \pmt{0.4} & -38.6 &  5312.6 \pmt{340.2} &  758.7 &  -1219.2 \pmt{150.4} &  -3867.7 \\ 
MDPO-Robust-Augmented-Lag (linear) &  22.8 \pmt{0.5} & -43.5 &  7297.2 \pmt{334.4} &  3001.8 &  -997.1 \pmt{133.8} &  -3293.2 \\ 
MDPO-Robust-Augmented-Lag (restart) &  22.8 \pmt{0.5} & -42.8 &  7290.6 \pmt{347.8} &  2672.1 &  -679.9 \pmt{136.7} &  -3116.2 \\ 
MDPO-Robust-Augmented-Lag (geometric) &  30.8 \pmt{0.4} & -33.8 &  3642.3 \pmt{357.3} &  -1105.7 &  -1898.7 \pmt{200.9} &  -5429.6 \\ 
\end{tabular}
}
\end{table}

\end{document}